\algnewcommand{\Initialize}[1]{%
  \State \textbf{Initialize:}
  \State \hspace*{\algorithmicindent}\parbox[t]{0.8\linewidth}{\raggedright #1}
}
\newtheorem{dfn}{Definition}
\newtheorem{thm}{Theorem}
\newcommand{\vm}[1]{\boldsymbol{#1}}
\begin{document}

\title[Prediction Errors for Penalized Regressions
based on GAMP]
{Prediction Errors for Penalized Regressions
based on Generalized Approximate Message Passing}

\author{Ayaka Sakata$^{1,2,3}$}

\address{$^1$Department of Statistical Modeling, The Institute of Statistical Mathematics, Midori-cho, Tachikawa, Tokyo 190-8562, Japan}
\address{$^2$Department of Statistical Science, The Graduate University for Advanced Science (SOKENDAI), Hayama-cho, Kanagawa 240-0193, Japan}
\address{$^3$JST PRESTO, 4-1-8 Honcho, Kawaguchi, Saitama 332-0012, Japan}
\ead{ayaka@ism.ac.jp}
\vspace{10pt}	
%\begin{indented}
%\item[]February 2014
%\end{indented}

\begin{abstract}
%around 200 words.
We discuss the prediction accuracy of assumed statistical models
in terms of prediction errors for the generalized linear model and 
penalized maximum likelihood methods.
We derive the forms of estimators for the prediction errors, such as
$C_p$ criterion, information criteria, and leave-one-out cross validation (LOOCV) error, 
using the generalized approximate message passing (GAMP) 
algorithm and replica method.
These estimators coincide with each other when the 
number of model parameters is sufficiently small; however,
there is a discrepancy between them in particular 
in the parameter region
%overparametrized region
where the number of model parameters is larger than the data dimension.
In this paper, we review the prediction errors and corresponding estimators, and 
discuss their differences.
In the framework of GAMP, we show that the
information criteria can be 
expressed by using the variance of the estimates.
Further, we demonstrate how to approach LOOCV error from the information criteria by 
utilizing the expression provided by GAMP.
\end{abstract}

% Uncomment for PACS numbers
%\pacs{00.00, 20.00, 42.10}
%
% Uncomment for keywords
%\vspace{2pc}
%\noindent{\it Keywords}: XXXXXX, YYYYYYYY, ZZZZZZZZZ
%
% Uncomment for Submitted to journal title message
%\submitto{\JSTAT}
%
% Uncomment if a separate title page is required
%\maketitle
% 
% For two-column output uncomment the next line and choose [10pt] rather than [12pt] in the \documentclass declaration
%\ioptwocol
%

\section{Introduction}

Over the last decade, there has been a considerable increase in the use of statistical methods and related machine learning techniques in various research fields.
This trend, which can be called the `data science 
revolution' \cite{Taddy}, is driven by qualitative and quantitative 
changes in data, increase in computational power,
and development of algorithms.
%which is a dynamical process toward
%achieving equilibrium states described by statistical theory.
Further, physics has also been part of this trend; for example, the reconstruction of black hole 
images in the event horizon telescope collaboration using sparse estimation is a recent 
cooperative achievement between astrophysics and statistics \cite{EHT}. 

Natural science is the quest for physical laws as the ground truth, 
and hence, statistical inference in natural science aims at the inference of the truth. 
However, all data do not necessarily possess the ground truth. 
After industrialization, data from
non-natural science fields
have become diversified \cite{Shewhart}, and 
the statistical inference for such non-nature data can be considered practical science for the human society. 
For example, industrial data are used for the reasonable 
determination, control, and creation of values. 
In such cases, it is irrelevant if there are true physical laws, 
or if our statistical models are different from these laws. 
The statistical model is considered acceptable if practical operations based on this model do not result in any problems.

Once the purpose of statistical inference becomes people-oriented, the goodness of a statistical 
model becomes subjective, as the justification by nature is impossible. 
Mathematical 
schemes are required for the construction of statistical models to make statistical inference a scientific procedure. The Akaike information 
criterion (AIC) \cite{AIC} and subsequent cross validation (CV) method \cite{Stone1974}
are schemes where the statistical model is 
objectively selected using data among the set of models prepared subjectively. 
In these methods, the goodness of the model is quantified by its predictive performance, 
i.e. based on whether the model built by the given data can describe the unknown data, 
which are expected to be observed in future. 
The key idea for the mathematical formalism of the prediction is that the predictive 
performance of the model is defined by the distributions rather than the value of a certain 
statistic. 
This idea led to the geometrical understanding of 
the maximum likelihood method and associated model selection
in terms of information theory as a projection of the real to the model space
\cite{Amari}.

Conventional statistics focuses on the construction of statistical models that take into account empirical knowledge on the system, sometimes it requires experts' wisdom. 
In contrast to such approach, 
deep learning,
which is a central topic in the recent research on statistics and 
associated machine learning,
provides a framework to construct the generative model with less manual tuning. 
The development of deep learning does not indicate the decline of the traditional statistical inference.
%The methods in deep learning seems to be different 
%from existing statistical inference, 
%but it is implied that its theory is related to the topics in statistics
%such as sparse estimation \cite{Huang2008}
%and kernel methods \cite{Suzuki2020a,Suzuki2020b}.
%Furthermore, 
There have been attempts to understand the properties of deep learning using statistical methodologies such as understanding the double descent 
phenomena in an overparametrized region using ridge regression \cite{surprise}, 
and characterizing the flat minima \cite{flat_minima} in deep neural networks using 
information criteria \cite{Thomas}. 
The new paradigm offered by deep learning gives conventional statistics 
a role as a simplified system for understanding the picture behind complicated 
inference problems. This relationship will resonate with physicists,
who are trying to extract essentials of physical
phenomena using idealized theoretical frameworks or simplified experimental systems.
In addition, 
there exists problems wherein it is difficult to acquire large amounts 
of datasets because of the operating cost and technical difficulties. 
In such cases, conventional
statistical modeling remains an important approach.

%% Explanation of this paper and our contribution
In this paper, we review model quantifications in terms of prediction performance,
and explain the physical consideration for statistical methodologies.
We introduce a statistical-physics-based approach for the mathematical 
analysis of the prediction accuracy of the model under consideration.
The mathematical formalism for statistical inference has similarities
with statistical physics, in particular with the theory of spin-glass systems, 
as the mathematical treatment of the randomness between spins' interactions can be 
regarded as that of data
\cite{finite_temperature,Iba1999,Nishimori_book}. 
Utilizing the similarity, statistical physics-based methods have been applied to 
the analyses of the inference problems as a mathematical tool \cite{CS_replica,Krzakala2012}.
The statistical physics methods are effective even in parameter regions where 
asymptotic normality does not hold; this is key in classical statistics. 
This paper indicates the complementarity between traditional statistical 
methods and statistical physics methods.

% Detailed explanation for problems discussed in this paper
We consider regression problems on the 
generalized linear model under penalty functions.
We focus on the estimator of the generalization gap, which is the
difference between the prediction and training errors \cite{ESL,Efron2004}.
As estimators for the prediction errors and corresponding generalization gaps, 
we analyze the $C_p$ criterion \cite{Mallows},
Widely Applicable Information Criterion (WAIC) \cite{WAIC}
or Takeuchi Information Criterion (TIC) \cite{Konishi_Kitagawa}
and CV method.
%Each of them assumes a different data generative process and estimation method; 
%$C_p$ is for point estimates under Gaussian noise, and WAIC 
%is for Bayesian inference and is applicable to singular models. 
We compare these estimators in the same setting of the penalized maximum likelihood method, i.e. the zero-temperature limit of the Bayesian inference in the terminology of the statistical physics. 
The generalization gaps are quantified by the generalized message passing (GAMP) algorithm.
We show that the generalization gap obtained using the $C_p$ criterion can be expressed by the variance of the estimated parameters. 
This relationship resembles the fluctuation-response relationship in statistical physics, 
in the sense that the response of the error 
%to the
%perturbations in the data 
is proportional to 
the variance of the estimate at the equilibrium \cite{Watanabe_FDT}.
%Further, the expression by GAMP indicates the 
%equivalence between Takeuchi information criterion (TIC) and 
%WAIC \cite{Thomas, Konishi_Kitagawa}, 
%and relationship between $C_p$ and WAIC.
We also show that both $C_p$ and WAIC or TIC are
insufficient for describing the 
parameter region where the number of parameter is 
sufficiently large.
%double descent phenomena in the model setting discussed in this paper, 
%although the CV error shows a double descend.
We correct TIC towards CV by removing the assumption 
introduced for the derivation of TIC.
%by utilizing the asymptotic equivalence between 
%TIC and CV as a first order approximation.
The resultant form reproduces the CV error with high accuracy 
even in the non-asymptotic limit in statistics,
%overparametrized region,
and one time estimation is sufficient under full data to obtain the form.

%The correlations between predictors are ignored in the GAMP, and therefore, GAMP-based estimators are not accurate for cases where the 
%predictors are non-i.i.d.
%We show a method to heuristically consider 
%correlations in the predictors 
%and demonstrate it for the correlated and rank-deficient predictors.

%% Contents of the paper

The remainder of the paper is organized as follows.
Sec. \ref{sec:overview} provides a brief overview of the prediction errors and their 
estimators.
In sec. \ref{sec:problem_setting}, we explain the problem setting 
discussed in this paper.
We explain the GAMP algorithm in sec. \ref{sec:GAMP},
and we apply the algorithm for the calculation of the
prediction error and corresponding generalization gap.
In sec. \ref{sec:FV_correct},
we present the difference between the information criteria and CV error,
and we explain how to approach the CV error from TIC by utilizing the 
expression obtained using GAMP.
We derive the analytical form of the prediction error for random predictors where each component is independently and identically distributed according to Gaussian distribution 
in sec. \ref{sec:replica}.
Finally, sec. \ref{sec:summary} presents the summary and the conclusion.

\section{Prediction errors and their estimators}
\label{sec:overview}

We focus on a parametric model $f(z|\bm{\theta})$
with parameter $\bm{\theta}\in\bm{\Theta}\subset\mathbb{R}^M$
where $\bm{\Theta}$ represents a parameter space. 
The model parameter $\bm{\theta}$ is estimated under the model $f(\cdot|\bm{\theta})$
and training 
data $\bm{y}\in\mathbb{R}^{M}$
to describe the data generative process.
Here, we consider regression problems wherein the parameter $\bm{\theta}$ is
described by 
$\bm{\theta}(\bm{F},\bm{x})=\bm{F}\bm{x}\slash\sqrt{N}$,
where $\bm{F}\in\mathbb{R}^{M\times N}$ and $\bm{x}\in\mathbb{R}^N$
are the predictor matrix and the regression coefficient to be estimated,
respectively.
Therefore,
parameter space $\Theta$ is specified by the 
distributions of the predictors and regression coefficients.
The data set consists of $\bm{y}$ and $\bm{F}$, hence
we denote ${\cal D}=\{\bm{y},\bm{F}\}$.
In the following discussion,
we assume that each component of the training data 
$y_\mu~(\mu=1,\cdots,M)$ is independently and identically generated according to 
the predetermined rule.
For mathematical convenience, we denote the true rule as the distribution 
$q(y_\mu|\bm{F})$.
Furthermore, we consider a case in which
the columns of the predictors are standardized
such that $\mathbb{E}_{\bm{F}}[\sum_{\mu=1}^MF_{\mu i}^2]=M$
for any $i$.
Let us prepare a set of candidate models 
${\cal M}=\{f_1(z|\bm{\theta}(\bm{F},\widehat{\bm{x}}_1)),\cdots,f_m(z|\bm{\theta}(\bm{F},\widehat{\bm{x}}_m))\}$
for the true distribution, 
where %$\widehat{\bm{\theta}}_k=\frac{1}{\sqrt{N}}\bm{F}\widehat{\bm{x}}_k({\cal D})$, 
%and 
$\widehat{\bm{x}}_k({\cal D})$
represents the estimated regression coefficient under the $k$-th model $f_k$
using training data ${\cal D}$. 
The model selection is the problem of adopting a model based on a certain criterion
from the set of candidate models.

We consider the penalized maximum likelihood method
to construct the estimate $\widehat{\bm{x}}({\cal D})$ as
\begin{eqnarray}
\widehat{\bm{x}}({\cal D})=\mathop{\mathrm{argmax}}_{\bm{x}}\varphi({\cal D},\bm{x}),
\end{eqnarray}
where 
\begin{eqnarray}
\varphi({\cal D},\bm{x})=\ln f(\bm{y}|\bm{\theta}(\bm{F},\bm{x}))-h(\bm{x})
\end{eqnarray}
and $f(\bm{y}|\bm{\theta}(\bm{F},\bm{x}))$ and
$h(\bm{x})$ are 
the likelihood and penalty terms, respectively.
In this paper, we focus on separable likelihoods
\begin{eqnarray}
\ln f(\bm{y}|\bm{\theta}(\bm{F},\bm{x}))=\sum_{\mu=1}^M\ln f(y_\mu|\theta_\mu(\mathbf{f}_\mu,\bm{x})),
\end{eqnarray}
where $\mathbf{f}_\mu\in\mathbb{R}^{1\times N}$ represents the $\mu$-th row vector of the predictor matrix,
and $\theta_\mu(\mathbf{f}_\mu,\bm{x})=\frac{1}{\sqrt{N}}\mathbf{f}_\mu\bm{x}$.
%${\cal D}_\mu=\{y_\mu,\mathbf{f}_\mu\}$ represents the $\mu$-th sample
%and 
Here, $h(\bm{x})$ contains the smoothing or 
regularization parameter introduced to avoid overfitting \cite{ESL, goodd1971}.
Models with varying regularization parameter
constitute the model space;
the determination of the regularization parameter 
corresponds to a model-selection problem.

%One of the representative model set 
%consists of those that have different combination of predictors, 
%which are prepared based on the empirical knowledge of the 
%data generative process.
%For instance, in case of the well-known Boston house price dataset,
%we know candidates for the predictors
%that likely to influence the house price, such as
%accessibility to roads, 
%crime rate, distance to river and so on \cite{Boston}.
%The data set contains 13 such predictors, and
%the models with different combination of the predictors
%can constitute the model space.

In general, a model is constructed to represent the given training data sample ${\cal D}$
accurately. The accuracy of representing the training data can 
be quantified by the training error defined by 
\begin{eqnarray}
\mathrm{err}_{\mathrm{train}}({\cal D})=-\frac{1}{M}\sum_{\mu=1}^M\ln 
 f(y_\mu|\theta_\mu(\mathbf{f}_\mu,\widehat{\bm{x}}({\cal D}))).
\end{eqnarray}
The adequacy of the models with 
the parameter $\bm{\theta}(\bm{F},\widehat{\bm{x}}({\cal D}))$
is quantified by the prediction error,
which indicates
% can be used as a guide for evaluating the adequacy of the models under consideration
%and for selecting a model among them.
%Meanwhile,
%the prediction error quantifies 
the goodness of the model 
for the description of the test data not used for training. 
The high prediction error implies the overfitting,
and therefore, a model with a small prediction error is required in practice.

\subsection{Extra-sample and in-sample prediction errors}

\begin{figure}
\centering
\includegraphics[width=5in]{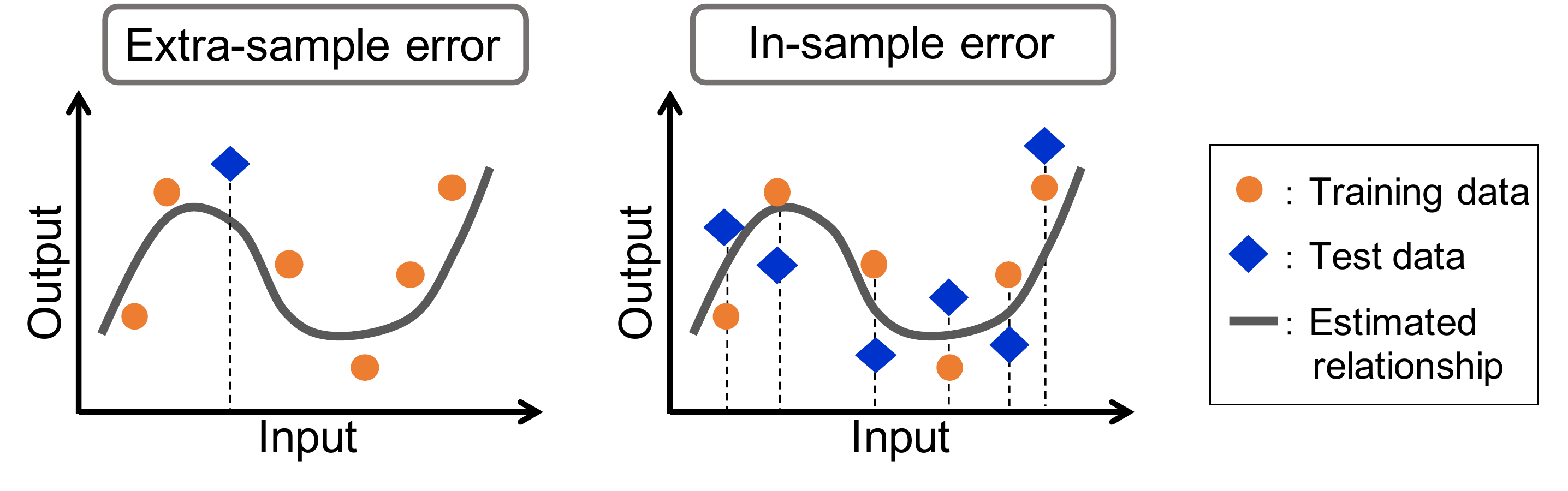}
\caption{Definitions of extra-sample and in-sample prediction errors.
A test sample whose position is not in the training data is used for the evaluation of the extra-sample prediction error.
Test samples whose positions are the same as that of the 
training sample are used for the in-sample prediction error.}
\label{fig:extra_and_in}
\end{figure}

There are two definitions of the prediction error:
extra-sample prediction error and in-sample prediction error, 
which are considering different generative processes of the test data 
used for evaluating the prediction error \cite{ESL}.
Their definitions are given as follows.
\begin{dfn}[Extra-sample prediction error]
Let us denote 
${\bf f}_{\mathrm{new}}\in\mathbb{R}^{1\times N}$
as a predictor value that is not included in the training sample ${\cal D}$.
The extra-sample prediction error for the penalized maximum likelihood method
is defined by 
\begin{eqnarray}
\mathrm{err}_{\mathrm{pre}}^{(\mathrm{ex})}({\cal D})=-\mathbb{E}_{z_{\mathrm{new}},{\bf f}_{\mathrm{new}}}
\left[\ln f\left(z_{\mathrm{new}}\Big|\frac{1}{\sqrt{N}}{\bf f}_{\mathrm{new}}\widehat{\bm{x}}({\cal D})\right)\right],
\label{eq:def_ex_beta0}
\end{eqnarray}
where $z_{\mathrm{new}}\in\mathbb{R}$ represents the test data associated with the predictor 
${\bf f}_{\mathrm{new}}$, and 
$\mathbb{E}_{z_{\mathrm{new}},{\bf f}_{\mathrm{new}}}[\cdot]$ 
denotes the expectation with respect to 
$z_{\mathrm{new}}$ and ${\bf f}_{\mathrm{new}}$.

%In case of Bayesian inference,
%the extra-sample error is defined by using posterior distribution as 
%\begin{eqnarray}
%\mathrm{err}_{\mathrm{pre}}^{(\mathrm{ex})}({\cal D};\beta)=
%-\mathbb{E}_{z_{\mathrm{new}},{\bf f}_{\mathrm{new}}}\left[\int d\bm{x} p_{\beta}(\bm{x}|{\cal D})\ln f\left(z_{\mathrm{new}}\Big|\frac{1}{\sqrt{N}}\mathbf{f}_{\mathrm{new}}\bm{x}\right)\right]
%\label{eq:extra_Bayes}
%\end{eqnarray}
%and (\ref{eq:extra_Bayes}) coincides with (\ref{eq:def_ex_beta0})
%at $\beta\to\infty$.
\end{dfn}

\begin{dfn}[In-sample prediction error]
The in-sample prediction error for the penalized maximum likelihood method is given by
\begin{eqnarray}
\mathrm{err}_{\mathrm{pre}}^{(\mathrm{in})}({\cal D})= -\frac{1}{M}
\sum_{\mu=1}^M\mathbb{E}_{z_\mu\sim q(z|{\bf f}_\mu)}[\ln{f(z_\mu|\theta_\mu({\bf f}_\mu,\widehat{\bm{x}}({\cal D})))}],
\label{eq:def_in}
\end{eqnarray}
where $z_\mu$ represents the new response variable (test data) observed at 
$\mu$-th training point.
%$\bm{z}\in\mathbb{R}^M$ represents the test data.

%In case of the Bayesian inference,
%\begin{eqnarray}
%\mathrm{err}_{\mathrm{pre}}^{(\mathrm{in})}(\bm{y};\widehat{\bm{\theta}}(\cdot))= 
%-\frac{1}{M}\mathbb{E}_{\bm{z}\sim q(\bm{z})}\left[\int d\bm{x}
%p_\beta(\bm{x}|{\cal D})\ln{f(\bm{z}|\bm{\theta}({\cal D}))}\right],
%\end{eqnarray}
\end{dfn}
%In the statistics literatures,
%the factor $\frac{1}{M}$ of (\ref{eq:def_in}) is not introduced for the definition of the 
%prediction error,
%but here we introduce it following the statistical physics convention.

Fig.\ref{fig:extra_and_in} schematically shows the difference between 
the test data for 
the extra-sample and in-sample prediction errors.
The terms `extra' and `in' indicate 
whether the position of the test data overlaps the training set.
The test data is on the location not covered by training samples in the case of the extra-sample error, hence the extra-sample error 
describes the prediction error for varying the predictors.
The in-sample prediction error considers the test data generated by
repeated observation on the same position.
The in-sample prediction error sometimes does not match the 
actual problem settings; however, it yields a convenient comparison method between models and 
leads to an understanding of the model complexity as explained later.
In the parameter region where the conventional statistics focused on,
the tendencies of the two prediction errors 
are considered to be similar with each other \cite{ESL}.
%For instance, in case of regression problems,
%the values of the predictors can depend on each data,
%namely the input can vary depending on data samples.

The exact values of both prediction errors cannot be obtained because we
do not know the true generative process of the data.
Instead, estimators for the prediction errors are 
used for evaluating the prediction accuracy.

\subsection{Cross-validation error}

The cross-validation error is an estimator for the extra-sample prediction error
\cite{Stone1974}.
\begin{dfn}[Leave-one-out cross validation error]
Let us denote a leave-one-out (LOO) sample ${\cal D}_{\setminus\mu}$
where the $\mu$-th data sample ${\cal D}_\mu\equiv\{y_\mu,\mathbf{f}_\mu\}$
is discarded from the data set ${\cal D}$.
The estimate under the LOO sample, which is denoted by $\widehat{\bm{x}}({\cal D}_{\setminus\mu})$,
is defined as
\begin{eqnarray}
\widehat{\bm{x}}({\cal D}_{\setminus\mu})=\mathop{\mathrm{argmin}}_{\bm{x}}\sum_{\nu\in\bm{M}\setminus\mu}\ln f(y_\nu|\theta_\nu(\mathbf{f}_\nu,\widehat{\bm{x}}({\cal D}_\nu)))-h(\bm{x}),
\end{eqnarray}
where 
$\bm{M}=\{1,2,\ldots,M\}$ and $\bm{M}\setminus\mu$ 
denotes $\bm{M}$ except $\mu$.
The leave-one-out cross validation (LOOCV) error is defined as
\begin{eqnarray}
\mathrm{err}_{\mathrm{LOOCV}}({\cal D})=-\frac{1}{M}\sum_{\mu=1}^M
\ln
f\left(y_\mu\Big|\frac{1}{\sqrt{N}}{\bf f}_\mu\widehat{\bm{x}}({\cal D}_{\setminus\mu})\right).
\label{eq:def_CV}
\end{eqnarray}
%For the Bayesian inference,
\end{dfn}
Intuitively, for a sufficiently large $M$,
%the difference between $\widehat{\bm{x}}({\cal D}_{\setminus\mu})$
%and $\widehat{\bm{x}}({\cal D})$ is expected to be negligible
%and 
the empirical mean with respect to the data sample is assumed to converge to the 
expectation value
with respect to the true distribution. Therefore, the LOOCV error is considered an adequate estimator of the extra-sample prediction error.

\subsubsection{Special form of LOOCV error for the linear estimation rule}

For calculating the LOOCV error,
we need to compute the estimate $M$ times
under $M$-LOO-samples ${\cal D}_{\setminus\mu}$
for $\mu\in\bm{M}$.
There is a long history of analytical and numerical 
attempts to reduce the computational effort involved in evaluating the LOOCV error.
Here, we focus our discussion on the linear estimation rule.
%For the linear estimator, the computation of the 
%LOOCV error is reduced to a one-time calculation of the estimate 
%under full training data ${\cal D}$.
%Here, we define the linear estimation rule as follows.
\begin{dfn}[Linear estimation rule]
In the linear estimation rule, 
the output $\bm{y}$ is estimated as $\widehat{\bm{y}}=\bm{H}\bm{y}$
on the assumed model with an arbitrary matrix $\bm{H}\in\mathbb{R}^{M\times M}$.
\end{dfn}
The ordinary least square (OLS) method is one of the
One of the linear estimation rules,
wherein the regression coefficient is given as
$\widehat{\bm{x}}({\cal D})=\frac{1}{\sqrt{N}}(\frac{1}{N}\bm{F}^{\top}\bm{F})^{-1}\bm{F}^{\top}\bm{y}$
at $M>N$,
then $\widehat{\bm{y}}=\bm{H}\bm{y}$ with $\bm{H}=\frac{1}{N}\bm{F}(\bm{F}^{\top}\bm{F})^{-1}\bm{F}^{\top}$.
Here, the estimated output corresponds to the 
projection of the actual output onto the column space of the predictor matrix,
and the matrix $\bm{H}$ for the OLS case is known as the hat matrix \cite{hat_matrix}.

In the linear estimation methods,
the computation of the 
LOOCV error is reduced to a one-time calculation of the estimate 
under full training data ${\cal D}$ and
%Here, we define the linear estimation rule as follows.
%the LOOCV error 
is expressed using the matrix $\bm{H}$.
The expression for the Gaussian likelihood
shown in the following theorem
is known as the predicted residual sum of squares (PRESS) 
statistics \cite{Allen1974},
and was derived by \cite{Cook1977,Cook1979}.
\begin{thm}
For the linear estimation rule where $\widehat{\bm{y}}=\bm{H}\bm{y}$ holds,
the LOOCV error 
under the Gaussian likelihood
is given as
\begin{eqnarray}
\mathrm{err}_{\mathrm{LOOCV}}({\cal D})=\frac{1}{2M}\sum_{\mu=1}^M
\left(\frac{y_\mu-\widehat{y}_\mu({\cal D})}{1-H_{\mu\mu}}\right)^2.
\label{eq:LOOCV_linear}
\end{eqnarray}
\end{thm}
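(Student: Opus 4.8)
The plan is to establish the PRESS identity by relating the leave-one-out estimate $\widehat{\bm{y}}({\cal D}_{\setminus\mu})$ to the full-data estimate $\widehat{\bm{y}}({\cal D})$ purely through the hat matrix $\bm{H}$, exploiting the linearity of the estimation rule. The core observation is that for a linear rule, $\widehat{\bm y} = \bm H \bm y$ means each fitted value is $\widehat{y}_\nu = \sum_{\rho} H_{\nu\rho} y_\rho$, and the leave-one-out fit at index $\mu$ is obtained by the \emph{same} linear map applied to a modified response vector in which the $\mu$-th entry has been replaced by its own leave-one-out prediction. This self-consistency is what collapses the $M$ separate refits into one formula.

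First I would introduce, for a fixed $\mu$, the auxiliary response vector $\bm{y}^{(\mu)}$ defined by $y^{(\mu)}_\nu = y_\nu$ for $\nu \neq \mu$ and $y^{(\mu)}_\mu = \widehat{y}_\mu({\cal D}_{\setminus\mu})$, i.e. we plug the LOO prediction back in at the held-out location. The key lemma is that applying the linear rule to $\bm{y}^{(\mu)}$ reproduces the LOO fit everywhere: $\bm{H}\bm{y}^{(\mu)} = \widehat{\bm{y}}({\cal D}_{\setminus\mu})$ in the relevant component, because the objective being minimized (Gaussian negative log-likelihood plus penalty) sees the pair $(\widehat{y}_\mu({\cal D}_{\setminus\mu}), \widehat{y}_\mu({\cal D}_{\setminus\mu}))$ as contributing zero residual at $\mu$, so $\bm{x}$ minimizing over $\bm M\setminus\mu$ coincides with $\bm{x}$ minimizing over all of $\bm M$ with data $\bm{y}^{(\mu)}$. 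Reading off the $\mu$-th component gives
\begin{eqnarray}
\widehat{y}_\mu({\cal D}_{\setminus\mu}) = \sum_{\nu\neq\mu} H_{\mu\nu} y_\nu + H_{\mu\mu}\,\widehat{y}_\mu({\cal D}_{\setminus\mu}).
\end{eqnarray}
Then I would add and subtract $H_{\mu\mu} y_\mu$ on the right to recognize $\sum_\nu H_{\mu\nu} y_\nu = \widehat{y}_\mu({\cal D})$, yielding $\widehat{y}_\mu({\cal D}_{\setminus\mu})(1 - H_{\mu\mu}) = \widehat{y}_\mu({\cal D}) - H_{\mu\mu} y_\mu$, and hence the residual identity
\begin{eqnarray}
y_\mu - \widehat{y}_\mu({\cal D}_{\setminus\mu}) = \frac{y_\mu - \widehat{y}_\mu({\cal D})}{1 - H_{\mu\mu}}.
\end{eqnarray}
Substituting this into the definition (\ref{eq:def_CV}) of $\mathrm{err}_{\mathrm{LOOCV}}$ with the Gaussian log-likelihood $-\ln f(y_\mu|\cdot) = \frac{1}{2}(y_\mu - \widehat{y}_\mu)^2$ (up to an additive constant absorbed by convention) gives (\ref{eq:LOOCV_linear}) directly, summing over $\mu$ and dividing by $M$.

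\textbf{The main obstacle} is justifying the self-consistency lemma cleanly in the penalized setting: one must argue that the minimizer of $\sum_{\nu\in\bm M\setminus\mu}\frac{1}{2}(y_\nu - \theta_\nu)^2 + h(\bm x)$ equals the minimizer of $\sum_{\nu\in\bm M}\frac{1}{2}(y^{(\mu)}_\nu - \theta_\nu)^2 + h(\bm x)$, which holds because the extra term $\frac12(y^{(\mu)}_\mu - \theta_\mu(\mathbf f_\mu,\bm x))^2$ is minimized to zero precisely at the LOO optimizer by construction of $y^{(\mu)}_\mu$, so it does not perturb the argmin. This is immediate when $\bm H$ genuinely represents the linear action of the rule on the response (as for OLS, ridge, or any quadratic-penalty method), but for a general arbitrary $\bm H$ not arising from such an optimization the statement should be read as a definitional one about linear rules; I would phrase the proof to cover the quadratic-penalty case and note that the formula extends to any rule satisfying the stated linearity together with the plug-in invariance. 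A secondary point to handle carefully is the normalization constant in the Gaussian likelihood, which cancels between $\mathrm{err}_{\mathrm{LOOCV}}$ terms or is conventionally dropped, so that only the quadratic piece with the $1/(2M)$ prefactor survives as in (\ref{eq:LOOCV_linear}).
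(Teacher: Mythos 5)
Your proposal is correct and follows essentially the same route as the paper: the auxiliary response vector $\bm{y}^{(\mu)}$ with the LOO prediction plugged back in at position $\mu$ is exactly the paper's augmented sample ${\cal D}_{\setminus\mu^+}$, and your self-consistency lemma (the extra residual term vanishes at the LOO optimizer, so the argmin is unchanged) is precisely the inequality argument the paper uses to show $\widehat{\bm{x}}({\cal D}_{\setminus\mu^+})=\widehat{\bm{x}}({\cal D}_{\setminus\mu})$. The subsequent reading-off of the $\mu$-th component of $\bm{H}\bm{y}^{(\mu)}$ and the algebra leading to $y_\mu-\widehat{y}_\mu({\cal D}_{\setminus\mu})=(y_\mu-\widehat{y}_\mu({\cal D}))/(1-H_{\mu\mu})$ match the paper's derivation step for step.
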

\begin{proof}
We prepare a data set ${\cal D}_{\setminus\mu^+}=\{\bm{y}_{\setminus\mu^+},\bm{F}\}$,
where 
$\bm{y}_{\setminus\mu^+}=\{y_1,\cdots,y_{\mu-1},\widehat{y}_\mu({\cal D}_{\setminus\mu}),y_{\mu+1},\cdots,y_M\}$ and 
$\widehat{y}_\mu({\cal D}_{\setminus\mu})=\mathbf{f}_\mu\widehat{\bm{x}}({\cal D}_{\setminus\mu})\slash\sqrt{N}$, as shown in Fig.\ref{fig:cavity_plus}.
The estimate under the data ${\cal D}_{\setminus\mu^+}$ is defined by
\begin{eqnarray}
\nonumber
\widehat{\bm{x}}({\cal D}_{\setminus\mu^+})&=\mathop{\mathrm{argmax}}_{\bm{x}}
\Big\{\sum_{\nu\in\bm{M}\setminus\mu}\ln f(y_\nu|\theta_\nu(\mathbf{f}_\nu,\bm{x}))\\
&\hspace{3.0cm}+
\ln f(\widehat{y}_\mu({\cal D}_{\setminus\mu})|\theta_\mu(\mathbf{f}_\mu,\bm{x}))-h(\bm{x})\Big\}.
\label{eq:estimator_add}
\end{eqnarray}
We can show that $\widehat{\bm{x}}({\cal D}_{\setminus\mu})=\widehat{\bm{x}}({\cal D}_{\setminus\mu^+})$
holds for any $\mu\in\bm{M}$. 
By definition, the following inequality holds for any $\bm{x}$; i.e.
\begin{eqnarray}
\nonumber
\sum_{\nu\in\bm{M}\setminus\mu}\ln f(y_\nu|\theta_\nu(\widehat{\bm{x}}(\mathbf{f}_\nu,{\cal D}_{\setminus\mu}))
&-h(\widehat{\bm{x}}({\cal D}_{\setminus\mu}))\\
&\geq 
\sum_{\nu\in\bm{M}\setminus\mu}\!\!\ln f(y_\nu|\theta_\nu(\mathbf{f}_\nu,\bm{x}))-h(\bm{x}).
\label{eq:cavity_by_def}
\end{eqnarray}
Further, it is natural to consider that the following relationship holds for any $\theta$
\begin{eqnarray}
\ln f(y_\mu=\theta_\mu|\theta_\mu)\geq \ln f(y_\mu=\theta_\mu|\theta).
\label{eq:f_trivial}
\end{eqnarray}
Combining (\ref{eq:cavity_by_def}) with (\ref{eq:f_trivial}),
we obtain
\begin{eqnarray}
\nonumber
\varphi({\cal D}_{\setminus\mu^+},\bm{x})&=
\sum_{\nu\in\bm{M}\setminus\mu}\ln f(y_\nu|\theta_\nu(\mathbf{f}_\nu,\bm{x}))
+\ln f(\theta_\mu(\mathbf{f}_\mu,\widehat{\bm{x}}({\cal D}_{\setminus\mu}))|\theta_\mu(\mathbf{f}_\mu,\bm{x}))-h(\bm{x})\\
\nonumber
&\leq\sum_{\nu\in\bm{M}\setminus\mu}\ln f(y_\nu|\theta_\nu(\mathbf{f}_\nu,\widehat{\bm{x}}({\cal D}_{\setminus\mu})))-h(\widehat{\bm{x}}({\cal D}_{\setminus\mu}))\\
\nonumber
&\hspace{3.0cm}+\ln f(\theta_\mu(\mathbf{f}_\mu,\widehat{\bm{x}}({\cal D}_{\setminus\mu}))|\theta_\mu(\mathbf{f}_\mu,\widehat{\bm{x}}({\cal D}_{\setminus\mu})))\\
&=\varphi({\cal D}_{\setminus\mu^+},\widehat{\bm{x}}({\cal D}_{\setminus\mu}))\end{eqnarray}
hence $\widehat{\bm{x}}({\cal D}_{\setminus\mu^+})=\widehat{\bm{x}}({\cal D}_{\setminus\mu})$ is one of the solutions of (\ref{eq:estimator_add}), and 
for the linear estimator, the following relationship holds:
\begin{eqnarray}
\frac{1}{\sqrt{N}}\sum_{i=1}^NF_{\nu i}\widehat{x}_i({\cal D}_{\setminus\mu^+})=\sum_{\eta\in\bm{M}\setminus\mu}H_{\nu\eta}y_{\eta}+H_{\nu\mu}\widehat{y}_{\mu}({\cal D}_{\setminus\mu}).
\end{eqnarray}
Meanwhile, by utilizing the relationship $\widehat{\bm{x}}({\cal D}_{\setminus\mu^+})=\widehat{\bm{x}}({\cal D}_{\setminus\mu})$,
we obtain
\begin{eqnarray}
\nonumber
\widehat{y}_{\mu}({\cal D}_{\setminus\mu})&=\frac{1}{\sqrt{N}}\sum_{i=1}^NF_{\mu i}\widehat{x}_i({\cal D}_{\setminus\mu})=\frac{1}{\sqrt{N}}\sum_{i=1}^NF_{\mu i}\widehat{x}_i({\cal D}_{\setminus\mu^+})\\
\nonumber
&=H_{\mu\mu}\widehat{y}_{\mu}({\cal D}_{\setminus\mu})+\sum_{\eta\in\bm{M}\setminus\mu}H_{\mu\eta}y_{\eta}\\
&=H_{\mu\mu}\widehat{y}_{\mu}({\cal D}_{\setminus\mu})+\widehat{y}_\mu({\cal D})-H_{\mu\mu}y_\mu.
\label{eq:linear_CV_proof}
\end{eqnarray}
For the Gaussian likelihood,
the LOOCV error corresponds to $\frac{1}{2M}\sum_{\mu=1}^M(y_\mu-\widehat{y}_\mu({\cal D}_{\setminus\mu}))$, and using (\ref{eq:linear_CV_proof}), 
we obtain (\ref{eq:LOOCV_linear}).
\end{proof}
%hence
%\begin{eqnarray}
%\Delta_{\mathrm{LOOCV}}=\frac{1}{2M}\sum_{\mu=1}^M(y_\mu-\widehat{\theta}_\mu)^2\left(\frac{1}{(1-H_{\mu\mu})^2}-1\right)
%\label{eq:LOOCV_linear}
%\end{eqnarray}
Eq.(\ref{eq:linear_CV_proof}) is valid for 
the linear estimation rule,
wherein one can compute the LOOCV error
by substituting 
$\hat{y}_\mu({\cal D}_{\setminus\mu})$
with the likelihood under consideration.

\begin{figure}
\centering
\includegraphics[width=5in]{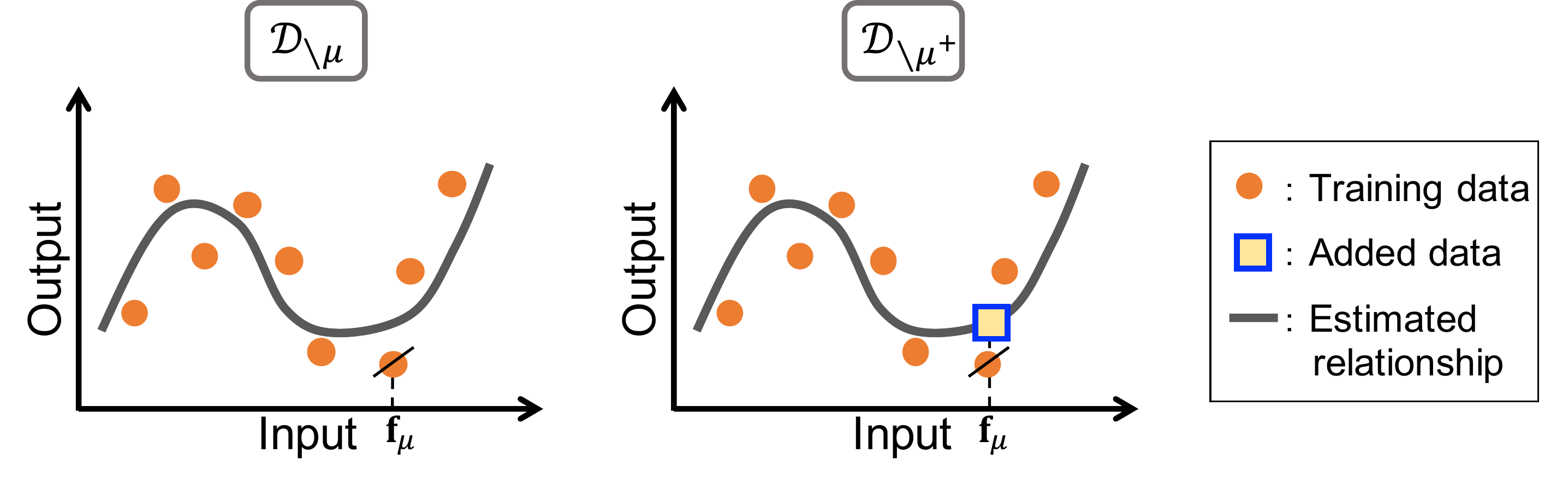}
\caption{Schematic of data samples ${\cal D}_{\setminus\mu}$ and 
${\cal D}_{\setminus\mu^+}$. 
The data set ${\cal D}_{\setminus\mu}$ comprises the input--output relationship 
except the $\mu$-th sample denoted by $\varnothing$.
The point $\square$ of ${\cal D}_{\setminus\mu^+}$ is 
at $\{\mathbf{f}_\mu,\theta_\mu(\widehat{\bm{x}}({\cal D}_{\setminus\mu}))\}$,
where $\widehat{\bm{x}}({\cal D}_{\setminus\mu})$ denotes the 
estimate under ${\cal D}_{\setminus\mu}$, and 
${\cal D}_{\setminus\mu^+}=\{{\cal D}_{\setminus\mu}\cup\square\}$.}
\label{fig:cavity_plus}
\end{figure}

\subsubsection{Sherman--Morrison Formula for the LOOCV error}

The same expression as (\ref{eq:LOOCV_linear}) can be 
obtained using the 
Sherman--Morrison formula 
\cite{sherman1950adjustment} for linear estimators,
as shown in \cite{Seber_and_Lee}.
We define the predictor matrix without the $\nu$-th row vector as $\bm{F}_{\setminus\nu}$.
The Sherman--Morrison formula is given as
\begin{eqnarray}
\left(\frac{1}{N}{\bm{F}_{\setminus\nu}}^{\top}\bm{F}_{\setminus\nu}\right)^{-1}
&\!\!\!=\!\left(\frac{1}{N}\bm{F}^{\top}\bm{F}\right)^{-1}\!\!\!+\!\frac{\displaystyle\left(\frac{1}{N}\bm{F}^{\top}\bm{F}\right)^{-1}\!\!\!\frac{1}{N}{\bf f}_\nu^\top{\bf f}_\nu\left(\frac{1}{N}\bm{F}^{\top}\bm{F}\right)^{-1}}
{1-\displaystyle\frac{1}{N}{\bf f}_\nu\left(\frac{1}{N}\bm{F}^{\top}\bm{F}\right)^{-1}{\bf f}_\nu^\top}.
\label{eq:Sherman-Morrison}
\end{eqnarray}
The estimates under the LOO sample ${\cal D}_{\setminus\mu}$
is given by $\widehat{\bm{x}}({\cal D}_{\setminus\mu})=\frac{1}{\sqrt{N}}(\frac{1}{N}\bm{F}_{\setminus\mu}^\top\bm{F}_{\setminus\mu})^{-1}\bm{F}_{\setminus\mu}^\top$,
and hence, $\widehat{\bm{y}}({\cal D}_{\setminus\mu})=\bm{H}^{\setminus\mu}\bm{y}$, where $\bm{H}^{\setminus\mu}=\frac{1}{N}\bm{F}(\frac{1}{N}\bm{F}_{\setminus\mu}^\top\bm{F}_{\setminus\mu})^{-1}\bm{F}_{\setminus\mu}^{\top}$.
Multiplying $\bm{F}\slash\sqrt{N}$ and $\bm{F}_{\setminus\mu}^{\top}\slash\sqrt{N}$ from the left and right
of both sides of (\ref{eq:Sherman-Morrison}), respectively,
we get 
\begin{eqnarray}
\widehat{y}_\mu({\cal D}_{\setminus\mu})=\widehat{y}_\mu({\cal D})-H_{\mu\mu}y_\mu+\frac{H_{\mu\mu}(\widehat{y}_\mu({\cal D})-H_{\mu\mu}y_\mu)}{1-H_{\mu\mu}},
\label{eq:SM_CV}
\end{eqnarray}
Then, transforming (\ref{eq:SM_CV}), we obtain the same expression as 
(\ref{eq:LOOCV_linear}).

\subsection{
Information criteria as estimators of the in-sample prediction error}

Information criteria are known as the
estimators of the in-sample prediction error.
These criteria were originally
introduced to evaluate the quality of the statistical model
constructed with the maximum likelihood method
using the Kullback--Leibler (KL) divergence
defined by
\begin{eqnarray}
{\rm KL}(q:f)=\mathbb{E}_{\bm{z}\sim q(\bm{z})}
\left[\ln{q(\bm{z})}\right]-
\mathbb{E}_{\bm{z}\sim q(\bm{z})}[\ln{f(\bm{z}|\bm{\theta}(\bm{F},\widehat{\bm{x}}_{\rm ML}({\cal D}))}],
\label{eq:def_KL}
\end{eqnarray}
%KL divergence measures the closeness between
%the true distribution $q(z)$
%and the assumed distribution $f(z|\widehat{\bm{\theta}}_{\rm ML}(\bm{y}))$,
where $\widehat{\bm{x}}_{\rm ML}({\cal D})$ represents the maximum likelihood estimator 
under the training sample ${\cal D}$.
A statistical model with smaller KL divergence to the true distribution is
expected to be appropriate for the expression of given data.
The second term of KL divergence 
(\ref{eq:def_KL}) corresponds to the in-sample prediction error
under the ML estimator,
and the first term is a constant.
Therefore, the minimization of KL divergence corresponds to that of 
the in-sample prediction error.
%dependency on the model appears only in the second term of \eref{eq:def_KL}.
%We denote the second term divided by $M$ as in-sample prediction error
%as the minimization of the in-sample prediction error corresponds to that of 
%the KL divergence.
%However, we do not know the true distribution $q(z)$,
%hence the evaluation of the prediction error is generally impossible.

Takeuchi information criterion (TIC) is an estimator of the in-sample prediction error
and in particular under the maximum likelihood method,
TIC corresponds to an unbiased estimator of the in-sample prediction error 
\cite{Konishi_Kitagawa}.
\begin{dfn}[Takeuchi information criterion]
The TIC is defined as
\begin{eqnarray}
\mathrm{TIC}({\cal D})=2\left\{\mathrm{err}_{\mathrm{train}}({\cal D})+\frac{1}{M}\mathrm{Tr}\left(\widehat{\cal I}(\widehat{\bm{x}}({\cal D}))\widehat{\cal J}^{-1}(\widehat{\bm{x}}({\cal D}))\right)\right\},
\label{eq:def_TIC}
\end{eqnarray}
where $\widehat{\cal I}$ and $\widehat{\cal J}$ represent the estimator of the 
Fisher information matrix
$\overline{\cal I}$
and the expected Hessian $\overline{\cal J}$
given as 
\begin{eqnarray}
\overline{\cal I}(\widehat{\bm{x}})&=\frac{1}{M}\mathbb{E}_{\bm{z}\sim q(\bm{z})}\left[\frac{\partial\ln f(\bm{z}|\bm{\theta}(\bm{F},\bm{x}))}{\partial\bm{x}}\frac{\partial\ln f(\bm{z}|\bm{\theta}(\bm{F},\bm{x}))}{\partial\bm{x}^{\top}}\Big|_{\bm{x}=\widehat{\bm{x}}}\right]\\
\overline{\cal J}(\widehat{\bm{x}})&=-\frac{1}{M}\mathbb{E}_{\bm{z}\sim q(\bm{z})}\left[\frac{\partial^2\ln f(\bm{z}|\bm{\theta}(\bm{F},\bm{x}))}{\partial\bm{x}\partial\bm{x}^{\top}}\Big|_{\bm{x}=\widehat{\bm{x}}}\right],
\end{eqnarray}
respectively.
\end{dfn}
The factor 2 in the definition of TIC comes from
the convention that TIC for the Gaussian likelihood corresponds to the 
estimator for the squared error.
As the estimators of Fisher information matrix and 
the expected Hessian,
sample-mean of the (non-centered) covariance of gradient
${\cal I}(\bm{x})$ 
and that of the negative Hessian ${\cal J}(\bm{x})$ 
are used, where
%In general, 
%At a sufficiently large $M$, 
%サンプルで...
%${\cal J}(\bm{x})$ and ${\cal I}(\bm{x})$ defined by (\ref{eq:def_J}) and (\ref{eq:def_I})
%are expected to converge to $\overline{\cal I}$ and $\overline{\cal J}$
%from the law of large numbers.
\begin{eqnarray}
{\cal I}_{ij}(\widehat{\bm{x}})&=\frac{1}{M}\sum_{\mu=1}^M\!\frac{\partial}{\partial x_i}\!\ln \!f(y_\mu|\theta_\mu(\mathbf{f}_\mu,\bm{x}))\Big|_{\bm{x}
=\widehat{\bm{x}}}
\!\frac{\partial}{\partial x_j}\!\ln \!f(y_\mu|\theta_\mu(\mathbf{f}_\mu,\bm{x}))\Big|_{\bm{x}
=\widehat{\bm{x}}},\label{eq:def_I}\\
{\cal J}_{ij}(\widehat{\bm{x}})&=-\frac{1}{M}\frac{\partial^2}{\partial x_i\partial x_j}
\sum_{\mu=1}^M
f(y_\mu|\theta_\mu(\bm{F},\bm{x})\Big|_{\bm{x}=\widehat{\bm{x}}}.
\label{eq:def_J}
\end{eqnarray}
with the expectation that they converge to $\overline{\cal I}$ and $\overline{\cal J}$
from the law of large numbers.

For the special case where the model space contains the true model,
where a regression coefficient $\bm{x}^*$
exists such that $q(\bm{z})=f(\bm{z}|\frac{1}{\sqrt{N}}\bm{F}\bm{x}^*)$,
it is shown that
TIC is reduced to Akaike's information criterion (AIC) \cite{AIC},
which is defined as
\begin{eqnarray}
\mathrm{AIC}({\cal D})=2\left(\mathrm{err}_{\mathrm{train}}({\cal D})+\frac{N}{M}\right).
\end{eqnarray}
%where $N$ represents the dimension of the regression coefficient $\bm{x}^*$.
This is the consequence of 
$\overline{\cal I}\overline{\cal J}^{-1}=\bm{I}_N$
that holds in the special case \cite{Cox_Hinkley},
where $\bm{I}_N$ represents the $N$-dimensional identity matrix.

The difference between the prediction and training error is termed the 
generalization gap.
Based on the correspondence between 
the generalization gap obtained by AIC and the dimension of the parameters,
the generalization gap is considered for expressing the model complexity.
The model can express various distributions with an increase in the complexity of the model, and therefore, there is a decrease in the training error. 
However, the models with large parameters 
are prone to overfitting, which hampers the prediction of unknown data,
because these models can fit the randomness in given training data.
The form of AIC is given by the training error
penalized by the number of model parameters.
Therefore, model selection based on the minimization of AIC 
represents a tradeoff between fitting accuracy to training data and model complexity.
%\textcolor{red}{To determine the parameter with other learning strategies, we focus on
%maximum likelihood estimation under regularization, where the GDF facilitates the 
%extension of the information criterion \cite{Hastie_Tibshirani1990}.
%}

%For instance,
%the distributions of covariates in the 
%test samples can be differ from that in the training samples.
%such as active learning where the distribution of the training samples 
%are controlled for better prediction.
%This kind of situation is called covariate shift \cite{Shimodaira2000}.
%For such cases, the model selection criterion be directly applied.
%using weighted density ratio \cite{Sugiyama2005,Sugiyama2007}.

We note that the meaning of AIC as an unbiased estimator of the 
in-sample prediction error
holds in the case where the 
the maximum likelihood estimator is considered,
and the model candidates include true distribution.
In the case where the latter condition does not hold,
TIC is theoretically adequate for evaluating the in-sample prediction error.
This does not mean, however, that the AIC should not be used, although 
theoretical justification is difficult.
In fact, AIC is, at times, superior to TIC 
in terms of low computational cost and the small variance.
To evaluate TIC, we need to compute the inverse of Hessian,
whose computational cost is $O(N^3)$,
and the fluctuation of the generalization gap
of TIC tends to be large, in particular when 
eigenvalues close to zero exist \cite{Ripley2007}.

\subsection{Asymptotic equivalence between the LOOCV error and information criterion}

The LOOCV error has an asymptotic equivalence to the information criterion,
% explained later 
%under some assumptions, 
as proved in \cite{Stone1977}
with the following assumptions:
%It was shown that AIC and LOOCV error are equivalent to each other
%at the limit $M\to\infty$ in the maximum likelihood estimation
%In this proof, we start with the assumption \textcolor{red}{following \cite{Stone1977}}:
\begin{description}
\item[A1] 
The differences in log-likelihoods  under full sample and LOO sample can be expressed 
by the first order of the difference between the corresponding estimates,
under the consideration that the difference $\widehat{x}_i({\cal D}_{\setminus\mu})-
\widehat{x}_i({\cal D})$ is sufficiently small for any $i\in\bm{N}$ and $\mu\in\bm{M}$,
\end{description}
Here, we set $\bm{N}=\{1,\ldots,N\}$.
Under assumption {\bf A1}, 
we expand the difference in log-likelihoods for 
${\cal D}$ and ${\cal D}_{\setminus\mu}$
up to the first order
of $\widehat{\bm{x}}({\cal D})-\widehat{\bm{x}}({\cal D}_{\setminus\mu})$ as
\begin{eqnarray}
\nonumber
\ln f(y_\mu&|\theta_\mu(\mathbf{f}_\mu,\widehat{\bm{x}}({\cal D})))-\ln f(y_\mu|\theta_\mu(\mathbf{f}_\mu,\widehat{\bm{x}}({\cal D}_{\setminus\mu})))\\
&=\sum_{i=1}^N\frac{\partial}{\partial x_i}\ln f(y_\mu|\theta_\mu(\mathbf{f}_\mu,\bm{x}))\Big|
_{\bm{x}=\widehat{\bm{x}}({\cal D}_{\setminus\mu})}
(\widehat{x}_i({\cal D})-\widehat{x}_i({\cal D}_{\setminus\mu})).
\label{eq:lnf_expand}
\end{eqnarray}
%As with the in-sample error, we
%define the generalization gap for LOOCV error as 
%$\Delta_{\mathrm{LOOCV}}=\mathrm{err}_{\mathrm{LOOCV}}-\mathrm{err}_{\mathrm{train}}$
%\begin{eqnarray}
%\Delta_{\mathrm{LOOCV}}=\frac{1}{M}\sum_{\mu=1}^M\sum_{i=1}^N\frac{\partial}{\partial x_i}\ln f(y_\mu|\widehat{\bm{x}}({\cal D}))\Big|
%_{\bm{x}=\widehat{\bm{x}}({\cal D}_{\setminus\mu})}
%(\widehat{x}_i({\cal D})-\widehat{x}_i({\cal D}_{\setminus\mu}))
%\label{eq:LOO_expand}
%\end{eqnarray}
Following the procedure explained in \ref{sec:app_CV}
under assumption {\bf A1},
we obtain the relationship
\begin{eqnarray}
\sum_{j=1}^N{\cal J}_{ij}^{\varphi}(\widehat{\bm{x}}({\cal D}_{\setminus\nu}))(\widehat{x}_i({\cal D})-\widehat{x}_i({\cal D}_{\setminus\nu}))=
\frac{1}{M}\frac{\partial}{\partial x_i}\ln f(y_\nu|\theta_\nu(\mathbf{f}_\nu,\bm{x}))\Big|_{\bm{x}=\widehat{\bm{x}}({\cal D}_{\setminus\nu})},
\label{eq:diff_expand}
\end{eqnarray}
where
\begin{eqnarray}
{\cal J}_{ij}^{\varphi}(\widehat{\bm{x}})&=-\frac{1}{M}\frac{\partial^2}{\partial x_i\partial x_j}
\varphi({\cal D},\bm{x})\Big|_{\bm{x}=\widehat{\bm{x}}}.
\label{eq:def_J_general}
\end{eqnarray}
Here, we introduce the assumption:
\begin{description}
\item[A2] ${\cal J}^{\varphi}_{ij}(\widehat{\bm{x}}({\cal D}_{\setminus\nu}))={\cal J}^{\varphi}_{ij}(\widehat{\bm{x}}({\cal D}))$ holds
for any $\nu\in\bm{M}$ and $i,j\in\bm{N}$.
%, where
%\begin{eqnarray}
%\nonumber
%{\cal J}_{ij}&=-\frac{1}{M}\sum_{\mu=1}^M\frac{\partial}{\partial x_i\partial x_j}\ln f(y_\mu|\bm{\theta})\Big|_{\bm{\theta}=\widehat{\bm{\theta}}({\cal D})}\\
%&\sim
%\mathbb{E}_{y\sim q(y)}\left[-\frac{\partial}{\partial x_i\partial x_j}\ln f(y|\bm{\theta})\Big|_{\bm{\theta}=\widehat{\bm{\theta}}({\cal D})}\right]
%\label{eq:def_J}
%\end{eqnarray}
\end{description}
Under assumption {\bf A2},
we obtain
\begin{eqnarray}
\widehat{\bm{x}}({\cal D})-\widehat{\bm{x}}({\cal D}_{\setminus\nu})=
\frac{1}{M}({\cal J}^\varphi(\widehat{\bm{x}}({\cal D})))^{-1}\frac{\partial}{\partial\bm{x}}\ln f(y_\nu|\theta_\nu(\mathbf{f}_\nu,\bm{x}))\Big|_{\bm{x}=\widehat{\bm{x}}({\cal D}_{\setminus\nu})},
\label{eq:diff_cavity_final}
\end{eqnarray}
where we implicitly assume that the Hessian (\ref{eq:def_J_general}) is invertible.

We introduce another assumption as follows: 
\begin{description}
\item[A3] ${\cal I}^{\setminus\cdot}(\{\widehat{\bm{x}}({\cal D}_{\setminus\nu})\})={\cal I}(\widehat{\bm{x}}({\cal D}))$ holds, where
\begin{eqnarray}
\nonumber
&{\cal I}_{ij}^{\setminus\cdot}(\{\widehat{\bm{x}}({\cal D}_{\setminus\nu})\})=\frac{1}{M}\sum_{\mu=1}^M\frac{\partial}{\partial x_i}\ln f(y_\mu|\theta_\mu(\mathbf{f}_\mu,\bm{x}))\Big|_{\bm{x}
=\widehat{\bm{x}}({\cal D}_{\setminus\mu})}\\
&\hspace{4.0cm}\times\frac{\partial}{\partial x_j}\ln f(y_\mu|\theta_\mu(\mathbf{f}_\mu,\bm{x}))\Big|_{\bm{x}=\widehat{\bm{x}}({\cal D}_{\setminus\mu})}.
\label{eq:def_I_cavity}
\end{eqnarray}
\end{description}
Under assumption {\bf A3}, substituting (\ref{eq:diff_cavity_final}) into (\ref{eq:lnf_expand}) yields
\begin{eqnarray}
\mathrm{err}_{\mathrm{LOOCV}}({\cal D})=\mathrm{err}_{\mathrm{train}}({\cal D})+\frac{1}{M}\mathrm{Tr}\left({\cal I}(\widehat{\bm{x}}({\cal D}))
({\cal J}^{\varphi}(\widehat{\bm{x}}({\cal D})))^{-1}\right),
\label{eq:gap_LOOCV}
\end{eqnarray}
where $\mathrm{Tr}(\cdot)$ denotes the summation of the diagonal components.
The expression (\ref{eq:gap_LOOCV}) is equivalent to 
TIC (\ref{eq:def_TIC}) for the ML estimator 
where ${\cal J}^{\varphi}={\cal J}$.

%Hence, we define the second term of (\ref{eq:gap_LOOCV}) as the 
%generalization gap by TIC and denote as 
%\begin{eqnarray}
%\Delta_{\mathrm{TIC}}=\frac{1}{M}\mathrm{Tr}\left({\cal I}(\widehat{\bm{x}}({\cal D}))
%{\cal J}^{-1}(\widehat{\bm{x}}({\cal D}))\right).
%\end{eqnarray}

The equivalence between LOOCV error and TIC
under the assumptions {\bf A1}-{\bf A3} is 
called asymptotic equivalence in the terminology of statistics.
We mention the case where these assumptions are valid.
The assumption {\bf A1} holds when $N$ is sufficiently small. 
At $N\to\infty$, the higher order terms with respect to the difference
$\widehat{x}_i({\cal D})-\widehat{x}_i({\cal D}_{\setminus\nu})$
have a contribution 
onto the difference in the log-likelihoods,
as shown in \ref{sec:app_expansion_JtoK},
and the correction of (\ref{eq:diff_expand}) is required.
The assumptions {\bf A2} and {\bf A3} holds when $M$ is sufficiently large 
compared with $N$.
In addition, for the derivation of (\ref{eq:diff_cavity_final})
under the assumption {\bf A2}, 
the matrix ${\cal J}^\varphi$ needs to be invertible.
This condition is violated in the singular models
such as mixture models, matrix factorization, and deep neural networks.
For such problems, TIC cannot be applied in the original form,
but it is suggested that TIC discarding zero-eigenmodes well characterize the
generalization property of the deep neural network \cite{Thomas}.

\subsection{Mallows' $C_p$ and model complexity in linear models}

The model selection criteria explained thus far are for models constructed 
using the maximum likelihood method. 
A general expression of the estimator for the in-sample generalization error
is derived from
another model selection criterion, that is, Mallows' $C_p$ \cite{Mallows}.
We consider a linear estimation rule where 
$\widehat{\bm{y}}=\bm{H}\bm{y}$ holds with a matrix $\bm{H}$,
and Gaussian likelihood with variance $\sigma^2$.
\begin{dfn}[$C_p$ criterion and degrees of freedom]
$C_p$ criterion is an estimator for the in-sample prediction error defined by
\begin{eqnarray}
{c_p}(\bm{y})=2\sigma^2{\rm err}_{\rm
 train}(\bm{y})+2{\sigma^2}\mathrm{df}.
 \label{eq:cp_def}
\end{eqnarray}
The term df is called {\it degrees of freedom} (DF) \cite{Hastie_Tibshirani1990,ESL}
defined by
\begin{eqnarray}
\mathrm{df}=\frac{1}{M}\mathrm{Tr}(\bm{H}).
\label{eq:df_def}
\end{eqnarray}
\end{dfn}

Under the known variance assumption,
$C_p$ criterion divided by $\sigma^2$
is reduced to AIC.
To demonstrate this, let us apply a singular value decomposition to $\bm{F}$
as $\bm{F}=\bm{U}\bm{D}\bm{V}^{\top}$, where
$\bm{U}\in\mathbb{R}^{M\times N}$ and $\bm{V}\in\mathbb{R}^{N\times N}$ 
represent the left and right singular vectors, respectively,
and $\bm{D}$ represents the $N\times N$ dimensional diagonal matrix whose $(i,i)$-component
corresponds to the $i$-th singular value $d_i$.
We obtain $\mathrm{Tr}(\bm{H})=\sum_{i=1}^Nd_i^2\slash d_i^2=N$,
which indicates the equivalence of the $C_p$ criterion and AIC.

$C_p$ can be extended to general estimators $\widehat{\bm{x}}({\cal D})$
in regression problems.
The following theorem shown in \cite{Efron2004} is the basis for the extension:
%\textcolor{red}{
%Mallows \cite{Mallows} showed that 
%$C_p$ criterion is an unbiased estimator of the in-sample prediction error
%for the linear estimator in
%when the 
%The unbiasedness of $C_p$ criterion is guaranteed %in a certain model setting.  
%\textcolor{red}{in the linear estimators \cite{Mallows}.
%Following the discussion \cite{Efron2004},
%we can drop the linearity assumption on the estimator.}
\begin{thm}
%For the linear estimator, 
Let us consider the case where the following conditions hold:
\begin{itemize}
\item The data generative process is given by Gaussian distribution
$\bm{y}\sim{\cal N}(\bm{\mu},\sigma^2\bm{I}_M)$,
where $\bm{I}_M$ represents the $M$-dimensional identity matrix.
\item The likelihood is given by Gaussian distribution as
\begin{eqnarray}
f(\bm{y}|\bm{\theta})\propto\exp\left(-\frac{||\bm{y}-\bm{\theta}||_2^2}{2\sigma^2}\right)
\end{eqnarray}
\item The variance $\sigma^2$ is known.
\end{itemize}
The following estimator
\begin{eqnarray}
\widehat{\mathrm{Err}}({\cal D})=2\sigma^2\mathrm{err}_{\mathrm{train}}({\cal D})+\frac{2}{M}\widehat{\mathrm{Cov}}(\bm{y},\widehat{\bm{y}}({\cal D}))
\label{eq:Efron_estimator}
\end{eqnarray}
is an unbiased estimator of the in-sample prediction error
with the multiplying factor $2\sigma^2$
when $\widehat{\mathrm{Cov}}(\bm{y},\widehat{\bm{y}}({\cal D}))$
is an unbiased estimator of the covariance 
$\mathrm{Cov}_{\bm{y}}(\bm{y},\widehat{\bm{y}}({\cal D}))
=\mathbb{E}_{\bm{y}\sim q(\bm{y})}[\bm{y}^{\top}\widehat{\bm{y}}({\cal D})]-
\mathbb{E}_{\bm{y}\sim q(\bm{y})}[\bm{y}]^{\top}\mathbb{E}_{\bm{y}\sim q(\bm{y})}
[\widehat{\bm{y}}({\cal D})]$.
%$C_p$ criterion is unbiased 
%when estimator of the 
%in-sample prediction error 
%when the following conditions hold \cite{Efron2004}.
\label{thm:Cp}
\end{thm}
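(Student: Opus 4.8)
The plan is to reduce the statement to elementary second-moment computations for Gaussians, exploiting that a freshly drawn test response is independent of the training sample while $y_\mu$ is reused in forming $\widehat{y}_\mu$. First I would make the Gaussian likelihood explicit: since $-\ln f(z\mid\theta)=\frac{(z-\theta)^2}{2\sigma^2}+\frac12\ln(2\pi\sigma^2)$, writing $\widehat{y}_\mu({\cal D})=\theta_\mu(\mathbf{f}_\mu,\widehat{\bm{x}}({\cal D}))=\frac{1}{\sqrt N}\mathbf{f}_\mu\widehat{\bm{x}}({\cal D})$ one has
\[
2\sigma^2\,\mathrm{err}_{\mathrm{train}}({\cal D})=\frac1M\sum_{\mu=1}^M\bigl(y_\mu-\widehat{y}_\mu({\cal D})\bigr)^2+\sigma^2\ln(2\pi\sigma^2),
\]
and, with $z_\mu\sim q(z\mid\mathbf{f}_\mu)={\cal N}(\mu_\mu,\sigma^2)$ an independent copy of $y_\mu$,
\[
2\sigma^2\,\mathrm{err}_{\mathrm{pre}}^{(\mathrm{in})}({\cal D})=\frac1M\sum_{\mu=1}^M\mathbb{E}_{z_\mu}\bigl[(z_\mu-\widehat{y}_\mu({\cal D}))^2\bigr]+\sigma^2\ln(2\pi\sigma^2).
\]
The additive constant $\sigma^2\ln(2\pi\sigma^2)$ is the same in both expressions (and enters $\widehat{\mathrm{Err}}({\cal D})$ only through $\mathrm{err}_{\mathrm{train}}$), so it cancels in every comparison and can be dropped.

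Next I would take the expectation over $\bm{y}\sim{\cal N}(\bm{\mu},\sigma^2\bm{I}_M)$ of the per-sample difference $\mathbb{E}_{z_\mu}[(z_\mu-\widehat{y}_\mu)^2]-(y_\mu-\widehat{y}_\mu)^2$. Expanding the squares, the $\widehat{y}_\mu^2$ terms cancel exactly; since $z_\mu$ has the same distribution as $y_\mu$, $\mathbb{E}_{\bm{y}}[\mathbb{E}_{z_\mu}[z_\mu^2]-y_\mu^2]=0$; and the remaining cross terms leave $\mathbb{E}_{\bm{y}}\bigl[2(y_\mu-\mu_\mu)\widehat{y}_\mu({\cal D})\bigr]=2\bigl(\mathbb{E}[y_\mu\widehat{y}_\mu]-\mathbb{E}[y_\mu]\mathbb{E}[\widehat{y}_\mu]\bigr)=2\,\mathrm{Cov}(y_\mu,\widehat{y}_\mu({\cal D}))$, where I used $\mathbb{E}_{z_\mu}[z_\mu]=\mu_\mu$ and that $z_\mu$ is independent of $\bm{y}$, hence of $\widehat{y}_\mu({\cal D})$. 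Summing over $\mu$, dividing by $M$, and recognizing $\sum_\mu\mathrm{Cov}(y_\mu,\widehat{y}_\mu({\cal D}))=\mathrm{Cov}_{\bm{y}}(\bm{y},\widehat{\bm{y}}({\cal D}))$ from the definition in the statement gives
\[
\mathbb{E}_{\bm{y}}\bigl[2\sigma^2\,\mathrm{err}_{\mathrm{pre}}^{(\mathrm{in})}({\cal D})\bigr]=\mathbb{E}_{\bm{y}}\bigl[2\sigma^2\,\mathrm{err}_{\mathrm{train}}({\cal D})\bigr]+\frac{2}{M}\mathrm{Cov}_{\bm{y}}(\bm{y},\widehat{\bm{y}}({\cal D})).
\]
Finally, since $\widehat{\mathrm{Cov}}(\bm{y},\widehat{\bm{y}}({\cal D}))$ is by hypothesis an unbiased estimator of $\mathrm{Cov}_{\bm{y}}(\bm{y},\widehat{\bm{y}}({\cal D}))$, substituting it for the covariance in the last display leaves the expectation unchanged, so $\mathbb{E}_{\bm{y}}[\widehat{\mathrm{Err}}({\cal D})]=2\sigma^2\,\mathbb{E}_{\bm{y}}[\mathrm{err}_{\mathrm{pre}}^{(\mathrm{in})}({\cal D})]$, which is the assertion.

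I do not expect a genuine obstacle. The one point that needs care is the modeling of the test responses: each $z_\mu$ must be treated as drawn fresh and independent of the whole training vector $\bm{y}$ — and therefore of $\widehat{\bm{y}}({\cal D})$, which may depend on $\bm{y}$ arbitrarily and nonlinearly — whereas $y_\mu$ itself enters $\widehat{y}_\mu({\cal D})$; it is exactly this reuse of $y_\mu$ that generates the covariance (``optimism'') term, so keeping the two roles distinct throughout the expansion is the crux. A minor bookkeeping check is that the Gaussian normalization constant $\sigma^2\ln(2\pi\sigma^2)$ cancels between $\mathrm{err}_{\mathrm{train}}$ and $\mathrm{err}_{\mathrm{pre}}^{(\mathrm{in})}$ so that it cannot spoil unbiasedness.
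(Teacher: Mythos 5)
Your proposal is correct and follows essentially the same route as the paper: both reduce the claim to the covariance-penalty identity, i.e. that the gap between the in-sample prediction error and the training error equals $\frac{1}{\sigma^2 M}\mathrm{Cov}_{\bm{y}}(\bm{y},\widehat{\bm{y}}({\cal D}))$, obtained by expanding the two quadratic forms and using that the fresh test responses share the mean of $\bm{y}$ but are independent of $\widehat{\bm{y}}({\cal D})$. Your version is in fact slightly more careful than the paper's in making explicit that this identity holds only after taking the expectation over $\bm{y}$ (the paper writes it as if pointwise in ${\cal D}$) and in checking that the Gaussian normalization constant cancels between the two errors.
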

\begin{proof}
In the conditions considered in Theorem \ref{thm:Cp},
the in-sample prediction and training errors are respectively given by
\begin{eqnarray}
\mathrm{err}_{\mathrm{pre}}^{(\mathrm{in})}({\cal D})&=\frac{1}{2\sigma^2M}\mathbb{E}_{\bm{z}\sim q(\bm{z})}[||\bm{z}-\bm{\theta}(\bm{F},\widehat{\bm{x}}({\cal D}))||_2^2],\\
\mathrm{err}_{\mathrm{train}}({\cal D})&=\frac{1}{2\sigma^2M}||\bm{y}-\bm{\theta}(\bm{F},\widehat{\bm{x}}({\cal D}))||_2^2.
\end{eqnarray}
Using the relationship $\mathbb{E}_{\bm{z}\sim q(\bm{z})}[\bm{z}]=\mathbb{E}_{\bm{y}\sim q(\bm{y})}[\bm{y}]$,
the generalization gap is given by the form known as
covariance penalty \cite{Efron2004} 
\begin{eqnarray}
\Delta^{(\mathrm{in})}=\frac{1}{\sigma^2M}\mathrm{Cov}_{\bm{y}}\left(\bm{y},\widehat{\bm{y}}({\cal D})\right).
\label{eq:def_cov}
\end{eqnarray}
Therefore, we obtain
\begin{eqnarray}
2\sigma^2\mathrm{err}_{\mathrm{pre}}^{(\mathrm{in})}({\cal D})=
2\sigma^2\mathrm{err}_{\mathrm{train}}({\cal D})+\frac{2}{M}\mathrm{Cov}_{\bm{y}}\left(\bm{y},\widehat{\bm{y}}({\cal D})\right),
\end{eqnarray}
and this expression indicates that \eref{eq:Efron_estimator} is 
an unbiased estimator of the in-sample prediction error.
%where we denote
%$\mathrm{Cov}_{\bm{y}}({\cal A}(\bm{y}),{\cal B}(\bm{y}))=\mathbb{E}_{\bm{y}\sim q(\bm{y})}[{\cal A}(\bm{y}){\cal B}(\bm{y})]-\mathbb{E}_{\bm{y}\sim q(\bm{y})}[{\cal A}(\bm{y})]\mathbb{E}_{\bm{y}\sim q(\bm{y})}[{\cal B}(\bm{y})]$.
\end{proof}

For constructing the estimator for the covariance penalty,
the following identity that holds when 
$\bm{y}\sim{\cal N}(\bm{\mu},\sigma^2\bm{I}_M)$
is convenient, known as Stein's lemma \cite{SURE};
\begin{eqnarray}
\frac{1}{\sigma^2M}\mathrm{Cov}_{\bm{y}}\left(\bm{y},\widehat{\bm{y}}({\cal D})\right)=\frac{1}{M}\sum_{\mu=1}^M\mathbb{E}_{\bm{y}}\left[\frac{\partial\widehat{y}_\mu({\cal D})}{\partial y_\mu}\right].
\label{eq:cov_relationship}
\end{eqnarray}
For the linear estimation rule, we obtain
$\partial\widehat{y}_\mu({\cal D})\slash\partial y_\mu=H_{\mu\mu}$;
therefore, DF is an unbiased estimator of the covariance penalty.
We can obtain the estimator of the generalization gap
even when the analytical form of $\widehat{\bm{y}}$
cannot be derived using the relationship (\ref{eq:cov_relationship}).
Stein's unbiased risk estimate (SURE)
of the mean squared error \cite{SURE}
is an estimator of $2\sigma^2\mathrm{err}_{\mathrm{pre}}^{(\mathrm{in})}$ for the 
Gaussian likelihood applicable to
general estimators of $\widehat{\bm{y}}$.
\begin{dfn}[SURE and generalized degrees of freedom]
SURE is an estimator of the in-sample prediction error defined as
\begin{eqnarray}
\mathrm{SURE}({\cal D})=2\sigma^2\mathrm{err}_{\mathrm{train}}({\cal D})+2\sigma^2\mathrm{gdf}({\cal D}),
\end{eqnarray}
where $\mathrm{gdf}(\bm{y})$ is termed the 
generalized degree of freedom (GDF) \cite{Ye1998},
which is defined as
\begin{eqnarray}
\mathrm{gdf}({\cal D})=\frac{1}{M}\sum_{\mu=1}^M\frac{\partial\widehat{y}_\mu({\cal D})}{\partial y_\mu}.
\label{eq:GDF_def}
\end{eqnarray}
\end{dfn}
GDF coincides with DF for the linear estimator, as we have already observed.
The expression (\ref{eq:GDF_def})
is applicable to various models; however, 
the exact correspondence between GDF and the number of 
estimated variables in the model does not hold in general.
The functional form of $\widehat{y}_\mu({\cal D})$
depends on the true generative process of the data,
assumed model, and estimation method; therefore, 
GDF depends on these factors.

\subsubsection{Shrinkage estimators reduce model complexity}

A well-known tendency of GDF is that 
it decreases under shrinking estimators.
As an example, let us consider ridge regression
\begin{eqnarray}
\widehat{\bm{x}}({\cal D})=\mathop{\mathrm{argmax}}_{\bm{x}}\left\{-\frac{1}{2}\Big|\Big|\bm{y}-\frac{1}{\sqrt{N}}\bm{F}\bm{x}\Big|\Big|_2^2-\frac{\lambda}{2}||\bm{x}||_2^2\right\}.
\label{eq:ridge_def}
\end{eqnarray}
The estimate $\widehat{\bm{x}}$ and the
expression of data using the estimate $\widehat{\bm{y}}$ are provided by
\begin{eqnarray}
\widehat{\bm{x}}&=\frac{1}{\sqrt{N}}\left(\frac{1}{N}\bm{F}^{\top}\bm{F}+\lambda\bm{I}\right)^{-1}\bm{F}^{\top}\bm{y}\\
\widehat{\bm{y}}&=\bm{H}\bm{y}
\end{eqnarray}
where $\bm{H}=\frac{1}{N}\bm{F}\left(\frac{1}{N}\bm{F}^{\top}\bm{F}+\lambda\bm{I}\right)^{-1}\bm{F}^{\top}$.
Applying the singular value decomposition $\frac{1}{\sqrt{N}}\bm{F}=\bm{U}\bm{D}\bm{V}^{\top}$,
DF is given by
\begin{eqnarray}
\mathrm{df}=\frac{1}{M}\sum_{i=1}^N\frac{d_i^2}{d_i^2+\lambda},
\end{eqnarray} 
where $d_i$ represents the $i$-th singular value.
Therefore, 
DF decrease with an increase in the regularization parameter $\lambda$.

However, in case of the $\ell_1$-penalized maximum likelihood method,
GDF corresponds to the number of parameters in the model
although the estimator is shrinking from ordinary least square estimator 
\cite{L1_GDF,sparse_book,Sakata2016}.
This reason is considered that 
the selection of the non-zero components increase model complexity, which is an increase balanced with the decrease of the model complexity attributed to shrinkage.

\subsection{Importance sampling cross validation and functional variance}
\label{sec:ISCV_and_FV}

We introduce another expression of the LOOCV error
in the framework of the Bayesian inference; however, we consider the penalized maximum likelihood method in this paper.
In our problem setting, posterior distribution is defined by
\begin{eqnarray}
p_\beta(\bm{x}|{\cal D})=\frac{1}{Z_\beta}\exp(\beta\varphi({\cal D},\bm{x})),
\end{eqnarray}
where $Z_\beta = \int d\bm{x}\exp(\beta\varphi({\cal D},\bm{x}))$ and $\beta$ is the inverse temperature.
In the usual Bayesian inference, $\beta=1$ is considered.
This parameter connects the Bayesian inference and the 
penalized maximum likelihood method as 
\begin{eqnarray}
\widehat{x_i}({\cal D})=\lim_{\beta\to\infty}\int d\bm{x} x_ip_\beta(\bm{x}|{\cal D}).
\end{eqnarray}
We introduce the importance sampling cross validation (ISCV) error as follows.
\begin{dfn}[Importance sampling cross-validation error]
Under the posterior distribution $p_{\beta}(\bm{x}|{\cal D})$,
ISCV error is defined by
\begin{eqnarray}
\mathrm{err}_{\mathrm{ISCV}}({\cal D};\beta)=\frac{1}{M}\sum_{\mu=1}^M\ln\int d\bm{x}\frac{1}{f^\beta(y_\mu|\theta_\mu(\bm{x},\mathbf{f}_\mu))}p_{\beta}(\bm{x}|{\cal D}).
\end{eqnarray}
\end{dfn}
\begin{thm}
ISCV error is equivalent to the LOOCV error
for the Bayesian inference at any $\beta$ \cite{Gelfand_etal,Aki}.
\end{thm}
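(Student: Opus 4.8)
The plan is to show that the leave-one-out posterior predictive density at the $\mu$-th point can be rewritten as a reweighted integral against the \emph{full}-data posterior $p_\beta(\bm{x}|{\cal D})$, with the reweighting factor being precisely $f^{-\beta}(y_\mu|\theta_\mu(\mathbf{f}_\mu,\bm{x}))$. First I would write down the leave-one-out posterior explicitly. Since the log-likelihood is separable, $\varphi({\cal D},\bm{x}) = \varphi({\cal D}_{\setminus\mu},\bm{x}) + \ln f(y_\mu|\theta_\mu(\mathbf{f}_\mu,\bm{x}))$, so
\begin{eqnarray}
p_\beta(\bm{x}|{\cal D}_{\setminus\mu})
= \frac{\exp(\beta\varphi({\cal D}_{\setminus\mu},\bm{x}))}{Z_\beta^{\setminus\mu}}
= \frac{f^{-\beta}(y_\mu|\theta_\mu(\mathbf{f}_\mu,\bm{x}))\,\exp(\beta\varphi({\cal D},\bm{x}))}{Z_\beta^{\setminus\mu}}.
\end{eqnarray}
Dividing numerator and denominator by $Z_\beta$ and recognizing the ratio $Z_\beta^{\setminus\mu}/Z_\beta = \int d\bm{x}\, f^{-\beta}(y_\mu|\theta_\mu(\mathbf{f}_\mu,\bm{x}))\,p_\beta(\bm{x}|{\cal D})$, one gets the importance-sampling identity
\begin{eqnarray}
p_\beta(\bm{x}|{\cal D}_{\setminus\mu})
= \frac{f^{-\beta}(y_\mu|\theta_\mu(\mathbf{f}_\mu,\bm{x}))\,p_\beta(\bm{x}|{\cal D})}
{\int d\bm{x}'\, f^{-\beta}(y_\mu|\theta_\mu(\mathbf{f}_\mu,\bm{x}'))\,p_\beta(\bm{x}'|{\cal D})}.
\end{eqnarray}

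Next I would take the Bayesian leave-one-out predictive density to be $\int d\bm{x}\, f^\beta(y_\mu|\theta_\mu(\mathbf{f}_\mu,\bm{x}))\,p_\beta(\bm{x}|{\cal D}_{\setminus\mu})$ — this is the natural $\beta$-deformed predictive distribution, consistent with the posterior $p_\beta$ carrying the likelihood to the power $\beta$ — and substitute the identity above. The $f^\beta$ from the predictive and the $f^{-\beta}$ from the reweighting do \emph{not} cancel; rather, the LOO predictive density becomes $\big(\int d\bm{x}\, f^{-\beta}(y_\mu|\theta_\mu(\mathbf{f}_\mu,\bm{x}))\,p_\beta(\bm{x}|{\cal D})\big)^{-1}$ once one notes $f^\beta \cdot f^{-\beta} = 1$ inside the numerator integral (so the numerator integrates $p_\beta(\bm{x}|{\cal D})$ to $1$). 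Taking $-\tfrac1M\sum_\mu$ of the log — the sign and normalization matching the LOOCV definition \eqref{eq:def_CV} after the appropriate $\beta$-deformation of the scoring rule — reproduces exactly $\mathrm{err}_{\mathrm{ISCV}}({\cal D};\beta)$ as defined just above. Thus the equality of the two errors holds identically in $\beta$, with no asymptotics.

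The main obstacle is bookkeeping rather than depth: one must be careful and consistent about \emph{which} object is being called the ``LOOCV error'' at finite $\beta$. The definition \eqref{eq:def_CV} is stated for the point estimate (the $\beta\to\infty$ limit), so the $\beta$-finite statement requires first fixing the $\beta$-deformed analogue — i.e. scoring the held-out point by $-\tfrac1M\sum_\mu \tfrac1\beta\ln\!\big(\int d\bm{x}\, f^\beta(y_\mu|\cdots)\,p_\beta(\bm{x}|{\cal D}_{\setminus\mu})\big)$ or the un-scaled version — and checking that the ISCV definition uses the matching convention (it uses $\ln\int d\bm{x}\, f^{-\beta}(\cdots)p_\beta(\bm{x}|{\cal D})$ without a $1/\beta$, so the theorem as phrased identifies these un-scaled quantities). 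A secondary technical point is integrability: one needs $\int d\bm{x}\, f^{-\beta}(y_\mu|\theta_\mu(\mathbf{f}_\mu,\bm{x}))\,p_\beta(\bm{x}|{\cal D}) < \infty$ for every $\mu$, equivalently $Z_\beta^{\setminus\mu} < \infty$, which is exactly the condition that the LOO posterior is well defined; I would state this as a standing assumption. Finally I would remark that taking $\beta\to\infty$ recovers the point-estimate LOOCV error via Laplace's method, tying the result back to the penalized maximum likelihood setting that is the focus of the paper, and cite \cite{Gelfand_etal,Aki} for the $\beta=1$ case.
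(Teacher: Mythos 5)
Your proof is correct and is essentially the paper's own argument: both reduce to the single identity $Z_\beta^{\setminus\mu}/Z_\beta=\int d\bm{x}\,f^{-\beta}(y_\mu|\theta_\mu(\mathbf{f}_\mu,\bm{x}))\,p_\beta(\bm{x}|{\cal D})$, which you package as an importance-sampling reweighting of the leave-one-out posterior and the paper writes directly as a ratio of the two partition functions. Your explicit remark that the held-out point must be scored with $f^\beta$ rather than $f$ for the identity to hold exactly at finite $\beta$ addresses a convention that the paper's definition (\ref{eq:def_CV_Bayes}) glosses over but its subsequent manipulation implicitly assumes, so you are right to fix it.
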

\begin{proof}
We start with the LOOCV error for the Bayesian inference
\begin{eqnarray}
\mathrm{err}_{\mathrm{LOOCV}}({\cal D};\beta)=-\frac{1}{M}
\sum_{\mu=1}^M\ln\int d\bm{x}f(y_\mu|\theta_\mu(\bm{x},\mathbf{f}_\mu))p_\beta(\bm{x}|{\cal D}_{\setminus\mu}),
\label{eq:def_CV_Bayes}
\end{eqnarray}
which is reduced to (\ref{eq:def_CV}) at $\beta\to\infty$.
Setting $\phi(\bm{x})=\exp(-h(\bm{x}))$,
(\ref{eq:def_CV_Bayes}) is transformed as 
\begin{eqnarray}
\nonumber
\mathrm{err}_{\mathrm{LOOCV}}({\cal D};\beta)&=\frac{1}{M}\sum_{\mu=1}^M\ln\frac{\int d\bm{x}\prod_{\nu\in\bm{M}\setminus\mu}f^\beta(y_\nu|\theta_\nu(\mathbf{f}_\nu,\bm{x}))\phi^\beta(\bm{x})}{\int d\bm{x}\prod_{\nu\in\bm{M}}f^\beta(y_\nu|\theta_\nu(\mathbf{f}_\nu,\bm{x}))\phi^\beta(\bm{x})}\\
&=\frac{1}{M}\sum_{\mu=1}^M\ln\int d\bm{x}\frac{1}{f^\beta(y_\mu|\theta_\mu(\mathbf{f}_\mu,\bm{x}))}
p_{\beta}(\bm{x}|{\cal D}),
\end{eqnarray}
and the R.H.S. corresponds to ISCV error.
\end{proof}
For the sake of generality, we define the 
posterior expectation of the $\eta$-powered likelihood as 
\begin{eqnarray}
{\cal T}(\eta)=\lim_{\beta\to\infty}\frac{1}{M\beta}\sum_{\mu=1}^M\ln\int d\bm{x} p_{\beta}(\bm{x}|{\cal D}) f^{\eta\beta}(y_\mu|\theta_\mu(\mathbf{f}_\mu,\bm{x})).
\end{eqnarray}
The training and ISCV errors correspond to $-{\cal T}(1)$ and ${\cal T}(-1)$, respectively.
Applying Maclaurin expansion to ${\cal T}(\eta)$ as
\begin{eqnarray}
{\cal T}(\eta)=\sum_{k=0}^\infty\frac{1}{k!}\frac{\partial^k{\cal T}(\eta)}{\partial \eta^k}\Big|_{\eta=0}\eta^k,
\label{eq:T_expansion}
\end{eqnarray}
and setting $\eta=\pm1$, we obtain
\begin{eqnarray}
\mathrm{err}_{\mathrm{train}}({\cal D})&=-\frac{\partial{\cal T}(\eta)}{\partial\eta}\Big|_{\eta=0}-\frac{1}{2}\frac{\partial^2{\cal T}(\eta)}{\partial\eta^2}\Big|_{\alpha=0}
-\frac{1}{6}\frac{\partial^3{\cal T}(\eta)}{\partial\eta^3}\Big|_{\alpha=0}+O(\eta^4)\\
\mathrm{err}_{\mathrm{ISCV}}({\cal D})&=-
\frac{\partial{\cal T}(\eta)}{\partial\eta}\Big|_{\eta=0}+
\frac{1}{2}\frac{\partial^2{\cal T}(\eta)}{\partial\eta^2}\Big|_{\alpha=0}
-\frac{1}{6}\frac{\partial^3{\cal T}(\eta)}{\partial\eta^3}\Big|_{\alpha=0}+O(\eta^4),
\end{eqnarray}
where ${\cal T}(0)=0$.
Thus, the following relationship holds:
\begin{eqnarray}
\mathrm{err}_{\mathrm{ISCV}}({\cal D})=\mathrm{err}_{\mathrm{train}}({\cal D})+
\frac{\partial^2{\cal T}(\eta)}{\partial\eta^2}\Big|_{\eta=0}+O(\eta^4).
\end{eqnarray}
The second derivative of ${\cal T}$ is given by
\begin{eqnarray}
%{\cal T}(\alpha)=\frac{1}{\beta}\sum_{\mu=1}^M\ln\mathbb{E}_{\mathrm{post}}\left[\exp(\alpha\beta\ln f(y_\mu|\theta_\mu))\right]\\
%\frac{\partial}{\partial\alpha}{\cal T}(\alpha)=\sum_{\mu=1}^M
%\frac{\mathbb{E}_{\mathrm{post}}\left[\ln f(y_\mu|\theta_\mu)\exp(\alpha\beta\ln f(y_\mu|\theta_\mu))\right]}{\mathbb{E}_{\mathrm{post}}\left[\exp(\alpha\beta\ln f(y_\mu|\theta_\mu))\right]}\\
\nonumber
\frac{\partial^2}{\partial\eta^2}{\cal T}(\eta)&=\beta\sum_{\mu=1}^M
\Big[
\frac{\mathbb{E}_{\mathrm{post}}\left[\left(\ln f(y_\mu|\theta_\mu)\right)^2\exp(\eta\beta\ln f(y_\mu|\theta_\mu))\right]}{\mathbb{E}_{\mathrm{post}}\left[\exp(\eta\beta\ln f(y_\mu|\theta_\mu))\right]}\\
&-\left\{\frac{\mathbb{E}_{\mathrm{post}}\left[\ln f(y_\mu|\theta_\mu)\exp(\eta\beta\ln f(y_\mu|\theta_\mu))\right]}{\mathbb{E}_{\mathrm{post}}\left[\exp(\eta\beta\ln f(y_\mu|\theta_\mu))\right]}\right\}^2
\Big],
\end{eqnarray}
which is called {\it functional variance} at $\eta = 0$ \cite{WAIC}.
\begin{dfn}[Functional variance and widely applicable information criterion]
The functional variance (FV) is defined by the variance of the component-wise
log-likelihood as
\begin{eqnarray}
\nonumber
\mathrm{FV}({\cal D})=\frac{\beta}{M}\sum_{\mu=1}^M
\Big\{\int d\bm{x} & p_\beta(\bm{x}|{\cal D})\left(\ln f(y_\mu|\theta_\mu(\mathbf{f}_\mu,\bm{x}))\right)^2\\
&-
\left(\int d\bm{x} p_\beta(\bm{x}|{\cal D})\ln f(y_\mu|{\theta}_\mu(\mathbf{f}_\mu,\bm{x}))\right)^2
\Big\}.
\end{eqnarray}
The estimator of the ISCV using the functional variance is
known as the widely applicable information criterion (WAIC)
\begin{eqnarray}
\mathrm{WAIC}({\cal D})=\mathrm{err}_{\mathrm{train}}({\cal D})+\mathrm{FV}({\cal D}).
\end{eqnarray}
\end{dfn}
%Analytically, FV can be expressed by 
%introducing auxiliary variable $\bm{c}\in\mathbb{R}^{M\times 1}$
%and the extended partition function $Z_c$ as
%\begin{eqnarray}
%\mathrm{FV}=\frac{1}{\beta M}\sum_{\mu=1}^M\frac{\partial^2}{\partial c_\mu^2}\ln Z_c({\cal D};\beta,\bm{c})\Big|_{\bm{c}= \bm{1}_M},
%\end{eqnarray}
%where $\bm{1}_M$ is the $M$-dimensional vector whose components are all 1,
%and the extended partition function is defined by
%\begin{eqnarray}
%Z_c({\cal D};\beta,\bm{c})=\int d\bm{x}\prod_{\mu=1}^Mf^{c_\mu\beta}(y_\mu|\bm{x})\phi^\beta(\bm{x}).
%\end{eqnarray}
%一般に、
%\begin{eqnarray}
%\frac{\partial^k{\cal T}(\alpha)}{\partial\alpha^k}=\frac{1}{\beta M}\sum_{\mu=1}^M\frac{\partial^k}{\partial c_\mu^k}\ln Z_c({\cal D};\beta,\bm{c})
%\end{eqnarray}
The terms of $O(\eta^3)$ have a 
negligible contribution in the asymptotic limit,
and hence, WAIC is asymptotically equivalent to the LOOCV 
or ISCV error \cite{Watanabe_CV}.
However, for cases with a large number of parameters,
the convergence of the series expansion (\ref{eq:T_expansion})
should be considered,
which immediately concerns the adequacy of WAIC.

For the understanding of the FV,
we introduce the following form of FV;
%The form of FV can be represented by
\begin{eqnarray}
\mathrm{FV}({\cal D})=\frac{1}{M}\sum_{\mu=1}^M\frac{\partial^2}{\partial \gamma_\mu^2}\ln Z_{\beta,\bm{\gamma}}({\cal D})\Big|_{\bm{\gamma}=1},
\label{eq:FV_derivative}
\end{eqnarray}
where 
\begin{eqnarray}
Z_{\beta,\bm{\gamma}}({\cal D})=\int d\bm{x}f^{\beta \gamma_\mu}(y_\mu|\theta_\mu(\mathbf{f}_\mu,\bm{x}))
\phi^\beta(\bm{x})
\end{eqnarray}
is the extended partition function with an auxiliary variable $\bm{\gamma}\in\mathbb{R}^M$ \cite{Iba-Yano2022}.
This form is useful for the derivation of FV in the framework of GAMP later.
% expression of (\ref{eq:FV_derivative})
%indicates that FV corresponds to the mean of the
%fluctuation with respect to the component-wise log-likelihood,
%which is associated to the perturbation on 
%the temperature $\beta \gamma_\mu$ defined for each component.

\subsubsection{Equivalence of TIC and WAIC}

The derivations of TIC and WAIC from the LOOCV error differ from each other.
TIC is obtained by removing one observation and assuming that the
difference in estimates due to the remove is sufficiently small.
%the second-order term with respect to 
%$\widehat{\bm{x}}({\cal D}_{\setminus\mu})-\widehat{\bm{x}}({\cal D})$,
Meanwhile, WAIC considers
component-wise variance of the log-likelihood.
%fluctuation of the log-likelihood conjugates to the 
%perturbation on the observation-wise inverse temperature.
However, it can be shown that TIC and WAIC are equivalent to each other,
as explained in \cite{watanabe2010equations},
when the Laplace approximation is valid.
By expanding the log-likelihood around the MAP estimate 
$\widehat{\bm{x}}({\cal D})$, 
we obtain
\begin{eqnarray}
\nonumber
\ln f(y_\mu|\theta_\mu&(\mathbf{f}_\mu,\bm{x}))=\ln f(y_\mu|\theta_\mu(\mathbf{f}_\mu,\widehat{\bm{x}}({\cal D})))\\
&+\sum_{i=1}^N\frac{\partial}{\partial x_i}\ln f(y_\mu|\theta_\mu(\mathbf{f}_\mu,\bm{x}))\Big|_{\bm{x}=\widehat{\bm{x}}({\cal D})}(x_i-\widehat{x}_{i}({\cal D})).
\label{eq:TIC_WAIC_1}
\end{eqnarray}
The function to be minimized is expanded as
\begin{eqnarray}
\nonumber
\varphi({\cal D},\bm{x})&=\varphi({\cal D},\widehat{\bm{x}}({\cal D}))\\
&+\frac{1}{2}\sum_{i,j}\frac{\partial^2}{\partial x_i\partial x_j}
\varphi({\cal D},\bm{x})\Big|_{\bm{x}=\widehat{\bm{x}}({\cal D})}
(x_i-\widehat{x}_i({\cal D}))(x_j-\widehat{x}_j({\cal D})),
\end{eqnarray}
where we consider that the gradient at $\widehat{\bm{x}}$ is zero.
Hence, we obtain the Laplace approximation of the posterior distribution as
\begin{eqnarray}
p_{\beta}(\bm{x}|{\cal D})\propto\exp\left(-\frac{\beta M}{2}(\bm{x}-\widehat{\bm{x}}({\cal D})){\cal J}^{\varphi}(\widehat{\bm{x}}({\cal D}))(\bm{x}-\widehat{\bm{x}}({\cal D}))^{\mathrm{T}}\right),
\label{eq:TIC_WAIC_2}
\end{eqnarray}
where ${\cal J}^{\varphi}(\widehat{\bm{x}})$ is given by (\ref{eq:def_J}).
Using (\ref{eq:TIC_WAIC_1}) and (\ref{eq:TIC_WAIC_2}), we obtain
\begin{eqnarray}
\nonumber
\mathrm{FV}({\cal D})&=\frac{\beta}{M}\sum_{\mu=1}^M\sum_{ij}\int d\bm{x} p_{\beta}(\bm{x}|{\cal D})\frac{\partial}{\partial x_i}\ln f(y_\mu|\theta_\mu(\mathbf{f}_\mu,\bm{x}))\Big|_{\bm{x}=\widehat{\bm{x}}({\cal D})}\\
\nonumber
&\hspace{1.0cm}\times\frac{\partial}{\partial x_j}\ln f(y_\mu|\theta_\mu(\mathbf{f}_\mu,\bm{x}))\Big|_{\bm{x}=\widehat{\bm{x}}({\cal D})}(x_i-\widehat{x}_i({\cal D}))(x_j-\widehat{x}_j({\cal D}))\\
&=\frac{1}{M}\mathrm{Tr}\left({\cal I}(\widehat{\bm{x}}({\cal D})){\cal J}^{-1}(\widehat{\bm{x}}({\cal D}))\right),
\end{eqnarray}
which corresponds to the bias term of TIC.

%\section{Extra-sample prediction error and cross validation}

%\input{texts/CV.tex}

%%% problem setting
\section{Our purpose and problem setting}
\label{sec:problem_setting}

In the conventional problem settings in statistics,
$M\to\infty$ with sufficiently small $N$ are considered,
hence the difference between the in-sample prediction error and 
the extra-sample prediction error has not been focused on.
Recently, characteristic phenomena known as double descent phenomena,
coined by \cite{Belkin2019},
wherein the extra-sample prediction error exhibits two minima:
one each in the underparameterized and overparametrized regions,
has been studied \cite{surprise}.
These phenomena were reported in 
\cite{Opper1990,Krogh1992,Opper1996} for linear models,
and have since been reported \cite{Advani2020,Geiger2019}
for neural networks, attracting
considerable attention.
For a broader understanding of these phenomena,
some studies are attempted on the random features model
\cite{Mei-Montanari,Gerace2020,DoubleTrouble,Loureiro2022}.
In general, the modern neural network considers the overparametrized region,
and some studies attempted to extend the conventional statistical 
methods to the overparametrized region \cite{Thomas}.
Following such direction,
we focus on the differences between the estimator of the prediction errors,
namely,
$C_p$, TIC or WAIC, and  LOOCV error,
in the parameter region $M\sim O(N)$, including the 
regime where the number of parameters is larger than the data dimension.
%overparametrized region.
In the subsequent sections, we derive 
their forms using the generalized approximate message passing (GAMP)
algorithm \cite{GAMP,Sakata2018},
and discuss their behavior in the $M\sim O(N)$ region.

Hereafter, we denote $\alpha=M\slash N$.
For the dense regression coefficient,
$\alpha$ corresponds to the ratio of data dimension to 
the number of parameters, and 
$\alpha\gg 1$ is the parameter region where the 
conventional studies of statistics focused on.
%hence $\alpha\gg 1$ is the parameter region where the 
%conventional statistics focused on, and 
%$\alpha<1$ indicates the overparametrized region.
We focus on the finite $\alpha$ region where $\alpha\sim O(1)$.
In this paper, we consider the generalized linear model (GLM),
and demonstrate the results for the penalized linear regression
and penalized logistic regression.
The forms shown in this study can be applied to general
exponential family distributions.

We here summarize our contribution shown in the following sections.
\begin{itemize}
\item We derive the forms of the estimators for the generalization gaps using GAMP,
which is applicable to any $\alpha$ region 
as long as the local stability condition of GAMP is not violated.
\item We show that GDF and FV are related to the variance of the 
estimated parameter in the framework of GAMP.
\item Using the expression by GAMP,
it is shown that the convergence condition of the
series expansion (\ref{eq:T_expansion}), which is the basis of WAIC,
is related to GDF in case of the Gaussian likelihood.
\item The resulting forms of the estimators for the in-sample prediction error
have differences with the extra-sample prediction error 
in the finite $\alpha$ region.
We derive the differences in the finite $\alpha$ region
using the expression by GAMP.
\end{itemize}

\subsection{Problem setting}

We consider GLM as the likelihood where 
the assumed model is provided by the natural exponential family 
distribution;
\begin{eqnarray}
\ln f(\bm{y}|\bm{\theta})=\bm{\theta}^{\top}\bm{y}-a(\bm{\theta})+b(\bm{y}),
\end{eqnarray}
where %$\top$ in the superscript denotes matrix transpose, and
$a(\bm{\theta})=\sum_{\mu=1}^Ma(\theta_\mu)$
and $b(\bm{y})=\sum_{\mu=1}^Mb(y_\mu)$.
In the natural exponential family,
$\bm{\theta}$ relates to the statistics of the distribution as 
\begin{eqnarray}
\mathbb{E}_{{\cal L}(\bm{\theta})}[y_\mu]&=\frac{\partial}{\partial\theta_\mu}a(\theta_\mu),~~~
\mathbb{V}_{{\cal L}(\bm{\theta})}[y_\mu]&=\frac{\partial^2}{\partial^2\theta_\mu}a(\theta_\mu),
\end{eqnarray}
where $\mathbb{E}_{{\cal L}(\bm{\theta})}[\cdot]$ and $\mathbb{V}_{{\cal L}(\bm{\theta})}[\cdot]$
denote the mean and variance according to the likelihood $f$
with the parameter $\bm{\theta}$. 
In GLM, the model parameter and the mean of the distribution are related to each other
through link function $\ell$ as
\begin{eqnarray}
\ell(\mathbb{E}_{{\cal L}(\bm{\theta})}[y_\mu])=\theta_\mu,~~~
\ell^{-1}(\theta_\mu)=\mathbb{E}_{{\cal L}(\bm{\theta})}[y_\mu].
\end{eqnarray}
%which relationship allows $\bm{A}^{\mathrm{T}}\bm{T}$ to be a sufficient statistics 
%for $\bm{x}$. 
The distribution is specified by the functions $a(\cdot)$ and $b(\cdot)$.
For example, the linear regression with Gaussian likelihood 
corresponds to 
\begin{eqnarray}
a(\bm{\theta})&=\frac{1}{2}||\bm{\theta}||_2^2,~~~
b(\bm{y})&=-\frac{1}{2}||\bm{y}||_2^2,
\end{eqnarray}
and the link function is given by the identity function $\ell(\theta)=\theta$.
Other representative GLMs are 
presented in \ref{sec:app_GLM}.
The generalization gap of the in-sample prediction error 
for GLM
is written in the same form as that of (\ref{eq:def_cov}):
\begin{eqnarray}
\nonumber
\mathrm{err}_{\mathrm{train}}({\cal D})&=-\frac{1}{M}\ln f(\bm{y}|\bm{\theta}(\bm{F},\widehat{\bm{x}}({\cal D})))\\
&=\frac{1}{M}\left\{-\bm{\theta}(\bm{F},\widehat{\bm{x}}({\cal D}))^{\top}\bm{y}+a(\bm{\theta}(\bm{F},\widehat{\bm{x}}({\cal D})))-b(\bm{y})\right\}\\
\nonumber
{\rm err}_{\rm pre}^{(\mathrm{in})}({\cal D})&=-\frac{1}{M}\mathbb{E}_{\bm{z}\sim q(\bm{z})}[\ln f(\bm{z}|\bm{\theta}(\bm{F},\widehat{\bm{x}}({\cal D})))]\\
\nonumber
&=\frac{1}{M}\Big\{-\bm{\theta}(\bm{F},\widehat{\bm{x}}({\cal D}))^{\top}\mathbb{E}_{\bm{z}\sim q(\bm{z})}[\bm{z}]+a(\bm{\theta}(\bm{F},\widehat{\bm{x}}({\cal D})))\\
&\hspace{5.0cm}-\mathbb{E}_{\bm{z}\sim q(\bm{z})}[b(\bm{z})]\Big\},
\end{eqnarray}
hence
\begin{eqnarray}
\Delta^{(\mathrm{in})}({\cal D})=\frac{1}{M}\mathrm{Cov}_{\bm{y}}(\bm{y},|\bm{\theta}(\bm{F},\widehat{\bm{x}}({\cal D}))).
\end{eqnarray}

For the penalty term, we restrict our discussion to the separable 
penalties, where $h(\bm{x})=\sum_{i=1}^Nh(x_i)$.
We focus on the 
%$\ell_2$-penalty
%\begin{eqnarray}
%h(x)=\frac{\lambda}{2}x^2,
%\end{eqnarray}
%and 
elastic net penalty
\begin{eqnarray}
h(x)=\lambda_1|x|+\frac{\lambda_2}{2}x^2,
\label{eq:def_EN_penalty}
\end{eqnarray}
where $\lambda_1$ and $\lambda_2$ are regularization
parameters for controlling the intensity of the penalty.
The elastic net penalty at $\lambda_1=0$ and $\lambda_2=0$ correspond to
$\ell_2$ and $\ell_1$ norm penalty, respectively.

\section{GAMP for GLM}
\label{sec:GAMP}

Message passing is an algorithm used to approximately calculate the 
marginal posterior distribution.
The algorithm is based on the 
graphical representation of the 
conditional independency of the variables according to the 
probability distribution %known as
%a graphical model such as a Bayesian network and a Markov random field 
%that 
%can express conditional independency between variables
\cite{Pearl,PGM}.
Graphical models provide visual support to draw inferences and 
construct efficient algorithms.
In practice, message passing shows efficient performances
as a decoding algorithm for
low density parity check (LDPC) code \cite{Richardson_Urbanke},
code division multiple access (CDMA) \cite{Kabashima},
and others.

Message passing with a Gaussian approximation is known as
approximate message passing (AMP) \cite{DMM} that reduces the
computational cost required for the inference.
The AMP was proposed in the study on compressed sensing
and it was extended to a generalized approximate message passing (GAMP) \cite{GAMP} to deal with general separable likelihood and prior.
The AMP has a similarity between 
the cavity approach \cite{beyond} and Thouless--Anderson--Palmer (TAP)
equation \cite{TAP} in statistical physics for spin glasses.
This relationship induces a statistical-physics based study on the
information science and machine learning \cite{Mezard2009} such as
compressed sensing \cite{Krzakala2012},
regression \cite{Gerbelot},  and matrix factorization \cite{MF}.
The details of the derivation of the GAMP for the 
regression problem of the separable likelihoods and priors are 
shown in \ref{sec:app_AMP}.

The message passing procedure provides the exact marginal distribution when the 
graph consists of tree structures.
Our problem setting corresponds to the case where the graph has loops, hence
the marginal distribution calculated by the message passing is 
an approximation.
More precisely, the resulting marginal posterior distributions are 
locally consistent marginals, and 
do not necessarily constitute an overall posterior distribution \cite{Mezard2009}.
Furthermore, it is known that 
GAMP with random initialization
only provides an accurate approximation of the posterior distribution 
only in the case in which the global minima of the free energy is the closest minima to zero overlap \cite{zdeborova2016statistical,Loureiro2021}.
Although a lack of the exactness and limitations exist in
accurate approximation,
GAMP is useful for calculating the generalization gap.
In the estimation of the in-sample prediction error,
analytical expression of the degrees of freedom 
for general regression problems are limited, and 
specific expressions are studied for LASSO \cite{L1_GDF},
group lasso, fused lasso \cite{Kato2009},
and minimax concave penalty \cite{Mazumder}.
Numerically, the bootstrap method is useful for evaluating the generalization gap 
\cite{Efron2004};
however, its computational cost is expensive, and it 
often leads unstable estimates.
Extension of the path seeking algorithm \cite{Friedman2012}
is another numerical approach that
is applied for estimating the generalization gap \cite{Hirose2013},
which requires additional matrix operation.
Compared with these numerical approaches,
GAMP has several advantages as an estimation method of
the generalization gap for the in-sample prediction error.
First, GAMP provides a general expression of the 
generalization gap for the in-sample prediction error.
This expression leads to a unified interpretation of the 
prediction as a
relationship between the fluctuation and response,
as discussed in sec.\ref{sec:GDF_fluctuation}.
Second, GAMP does not require additional computation
for evaluating the generalization gap for the in-sample prediction error,
as the generalization gap can be expressed 
using variables that constitute GAMP.

In addition to the advantages of evaluating the in-sample prediction error,
we can obtain LOOCV error using GAMP by
assuming that the cavity distribution in GAMP corresponds to the 
posterior distribution under the LOO sample.
While the verification of the assumption is required, 
it greatly reduces the computational cost required to estimate the
LOOCV error, as discussed in sec.\ref{sec:correspondence_cavity}.

\subsection{General form of message passing algorithm}

\begin{figure}
\centering
\includegraphics[width=4in]{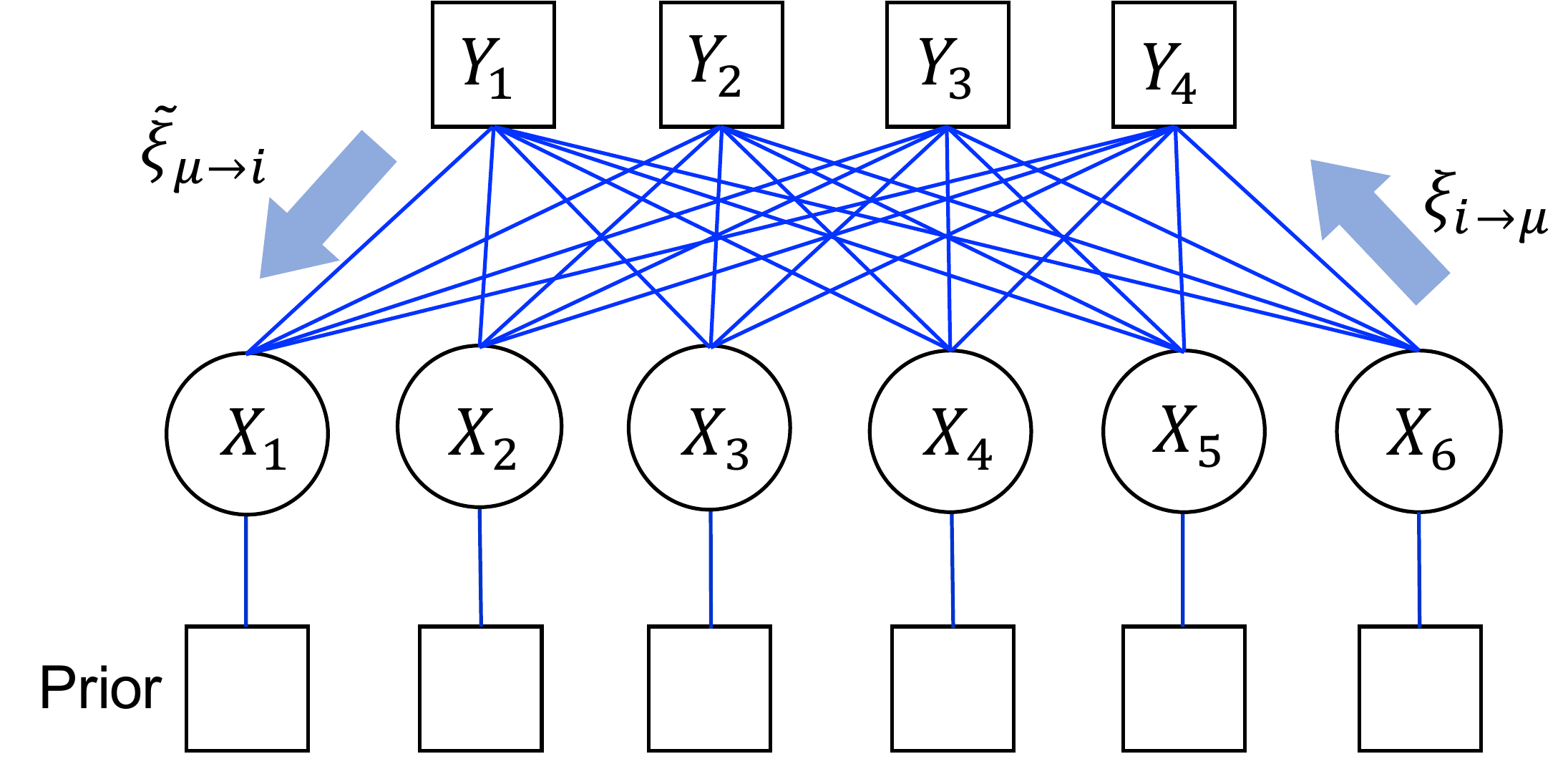}
\caption{Graphical representation of regression problem
and messages defined on the edges}
\label{fig:graphical_representation}
\end{figure}

The graphical representation of the regression problems considered here is shown in
\Fref{fig:graphical_representation}.
Factor nodes $\square$, variable nodes $\bigcirc$, and 
the edges represent the data $\bm{y}$, regression coefficient $\bm{x}$, and 
predictor matrix $\bm{F}$, respectively.
The message passing procedure is 
described by following two messages defined on the edges
between the factor nodes and the variable nodes:
\begin{itemize}
\item ${\xi}_{i\to \mu}(x_{i})$: Message from the variable $x_{i}$ to
data $y_{\mu}$ (output message)
\item $\widetilde{\xi}_{\mu \to i}(x_{i})$: Message from 
data $y_{\mu}$ to the variable $x_{i}$ (input message)
\end{itemize}
Intuitively, output and input messages are the
probability distributions before and after the observation of $\mu$-th data.
Hence, in particular, the output message ${\xi}_{i\to \mu}(x_{i})$ 
is considered to describe the probability distribution
under the LOO sample ${\cal D}_{\setminus\mu}$
%where $\mu$-th training data $(y_\mu,\mathbf{f}_\mu)$ 
%are discarded from the training sample 
\cite{Opper_Winther_GP}.
We term the graphical model without $\mu$-th data as $\mu$-cavity system.
%To distinguish the subscript of the messages, 
%we call ${\xi}_{i\to \mu}(x_{i})$ and $\widetilde{\xi}_{\mu\to i}(x_{i})$ 
 %$\mu$-cavity distribution
 %and $i$-cavity distribution,

For general graphs,
these messages need to be recursively updated
to obtain the posterior distribution.
Here, we place the superscript $(t)$ on the messages and related quantities 
to indicate that their values are the results of the $t$-times update.
The messages at the $t$-th and $(t+1)$-th step are related as
\begin{eqnarray}
\widetilde{\xi}^{(t)}_{\mu\to i}(x_{i})&\propto
\int d\bm{x}_{\backslash i}
f^\beta(y_{\mu}|\bm{\theta})\prod_{j\in\bm{N}\setminus i}
{\xi}^{(t)}_{j\to\mu}(x_{j})\label{eq:def_input}\\
{\xi}_{i\to \mu}^{(t+1)}(x_{i})&\propto
\phi^\beta(x_{i})\prod_{\nu\in\bm{M}\setminus \mu}
\widetilde{\xi}_{\nu\to i}^{(t)}(x_{i}),
\label{eq:def_output}
\end{eqnarray}
where $\bm{N}=\{1,\cdots,N\}$,
$\bm{M}=\{1,\cdots,M\}$, and 
$\bm{N}\setminus i$ denote $\bm{N}$ except $i$.
Using the input messages,
the marginal distribution is given by
\begin{eqnarray}
p^{(t)}_{\beta,i}(x_{i})\propto
\phi^\beta(x_{i})\prod_{\nu\in\bm{M}}
\widetilde{\xi}^{(t)}_{\nu\to i}(x_{i}),
\end{eqnarray}
and the $i$-th component of the penalized maximum likelihood 
estimate is given by
\begin{eqnarray}
\widehat{x}^{(t)}_{i}&=\lim_{\beta\to\infty}\int d{x_i}~x_ip_{\beta,i}^{(t)}(x_{i}).
\end{eqnarray}

%The theoretical correspondence between GAMP and 
%the replica method in statistical physics
%are revealed for many problems \cite{Kabashima_CDMA,Krzakala,Sakata-Xu}.

\subsection{Approximate message passing}

The updating rule (\ref{eq:def_input}) and (\ref{eq:def_output})
consists of $2MN$ messages.
For simplification, we introduce the following assumptions for sufficiently large $M$ and $N$.
\begin{description}
\item[\bf{a1:}] 
All components of the predictor matrix are $O(1)$.

\item[\bf{a2:}]
The correlation between predictors is negligible:
the off-diagonal components of $\frac{1}{N}\bm{F}^\top\bm{F}$ are $O(N^{-1\slash 2})$.
\end{description}
%When $N$ is sufficiently large,
%the order of the components of $\bm{F}$ are $O(1)$,
%and the correlation between the components of $\bm{F}$ is negligible
%compared with $N$,
Under assumptions {\bf a1} and {\bf a2},
the input messages $\{\widetilde{\xi}_{\mu\to i}\}$
can be considered Gaussian distributions, as shown in \ref{sec:app_AMP}.
Therefore, it is sufficient to consider the means and variances of the messages; we denote them as
\begin{eqnarray}
\widehat{x}_{i\to\mu}^{(t)}&=\int dx_i x_i\xi_{i\to\mu}^{(t)}(x_i)\\
s_{i\to\mu}^{(t)}&=\beta\int dx_i (x_i-\widehat{x}_{i\to\mu}^{(t)})^2\xi_{i\to\mu}^{(t)}(x_i)
\label{eq:s_def}\\
\widetilde{x}_{\mu\to i}^{(t)}&=\int dx_i x_i\widehat{\xi}_{\mu\to i}^{(t)}(x_i)\\
\widetilde{s}_{\mu\to i}^{(t)}&=\beta\int dx_i (x_i-\widetilde{x}_{\mu\to i}^{(t)})^2\widetilde{\xi}_{\mu\to i}^{(t)}(x_i).\label{eq:s_tilde_def}
\end{eqnarray}
An algorithm for updating these means and variances
under the arbitrary likelihood and prior distribution
is called generalized approximate message passing (GAMP).
In GAMP, the marginalized distribution $p_{\beta,i}(x_i)$
is represented by
\begin{eqnarray}
p_{\beta,i}^{(t)}(x_i)={\cal P}_\beta(x_i;\Sigma_i^{(t)},\mathrm{m}_i^{(t)}),
\label{eq:eff_marginal}
\end{eqnarray}
where ${\cal P}_\beta$ is the distribution defined by
\begin{eqnarray}
{\cal P}_\beta(x;\Sigma,\mathrm{m})=\frac{1}{{\cal Z}}\phi^\beta(x)
\exp\left\{-\frac{\beta(x-\mathrm{m})^2}{2\Sigma}\right\},
\label{eq:posterior_AMP}
\end{eqnarray}
with the normalization constant ${\cal Z}$.
We define the mean and variance of (\ref{eq:posterior_AMP})
as $\mathbb{M}_\beta(\Sigma,\mathrm{m})$ and $\beta^{-1}\mathbb{V}_\beta(\Sigma,\mathrm{m})$,
respectively, and denote
\begin{eqnarray}
\widehat{x}_i^{(t)}&=\int dx_i x_i{\cal P}_\beta(x;\Sigma_i^{(t)},\mathrm{m}_i^{(t)})=\mathbb{M}_\beta(\Sigma_i^{(t)},\mathrm{m}_i^{(t)})\\
s_i^{(t)}&=\beta\int dx_i (x_i-\widehat{x}_i^{(t)})^2{\cal P}_\beta(x;\Sigma_i^{(t)},\mathrm{m}_i^{(t)})=\mathbb{V}_\beta(\Sigma_i^{(t)},\mathrm{m}_i^{(t)}),
\end{eqnarray}
where $\widehat{x}_i^{(t)}$ corresponds to the penalized maximum likelihood estimate
of step $t$ at $\beta\to\infty$,
which limit will be taken later.
Here, the following relationship holds.
\begin{eqnarray}
\mathbb{V}_{\beta}(\Sigma,\mathrm{m})=\beta^{-1}\Sigma\frac{\partial}{\partial \mathrm{m}}\mathbb{M}_{\beta}(\Sigma,\mathrm{m}).
\label{eq:VtoM}
\end{eqnarray}

The form of (\ref{eq:posterior_AMP}) indicates that 
the posterior distribution is given by the prior distribution for $x_i$ 
and the Gaussian distribution with mean $\mathrm{m}_i$ and 
variance $\Sigma_i$, which represents a contribution from other variables.
Here, we set 
\begin{eqnarray}
\widehat{\theta}_\mu^{\setminus\mu(t)}&=\frac{1}{\sqrt{N}}\sum_{i=1}^NF_{\mu i}\widehat{x}_{i\to\mu}^{(t)},\label{eq:GAMP_theta_cavity}\\
{s_{\theta}}_\mu^{\setminus\mu(t)}&=\frac{1}{N}\sum_{i=1}^NF_{\mu i}^2s_{i\to\mu}^{(t)},
\end{eqnarray}
where (\ref{eq:GAMP_theta_cavity}) corresponds to the 
estimated model parameter in the $\mu$-cavity system.
After some calculations shown in \ref{sec:app_AMP}, 
we derive $\mathrm{m}_i$ and $\Sigma_i$ as 
\begin{eqnarray}
\nonumber
\mathrm{m}_{i}^{(t)}&={\Sigma}_{i}^{(t)}\sum_{\mu=1}^M\Big\{
\frac{1}{\sqrt{N}}F_{\mu i}\beta^{-1}g_{\mathrm{out}}({s_{\theta}}_\mu^{\setminus\mu(t)},\widehat{\theta}_\mu^{\setminus\mu(t)},y_\mu)\\
&\hspace{3.0cm}-\frac{1}{N}F_{\mu i}^{2}\widehat{x}_{i}^{(t)}
\beta^{-1}\partial_{\widehat{\theta}} g_{\mathrm{out}}({s_{\theta}}_\mu^{\setminus\mu(t)},\widehat{\theta}_\mu^{\setminus\mu(t)},y_\mu)\Big\}\\
\frac{1}{{\Sigma}_{i}^{(t)}}&=-\frac{1}{N}\sum_{\mu=1}^MF^{2}_{\mu i}\beta^{-1}\partial_{\widehat{\theta}}g_{\mathrm{out}}({s_{\theta}}_\mu^{\setminus\mu(t)},\widehat{\theta}_\mu^{\setminus\mu(t)},y_\mu),
\end{eqnarray}
where 
\begin{eqnarray}
g_{\mathrm{out}}({s_{\theta}}_\mu^{\setminus\mu(t)},\widehat{\theta}_\mu^{\setminus\mu(t)},y_\mu)&=\Big\langle\frac{\beta(\theta_\mu-\widehat{\theta}_\mu^{\setminus\mu(t)})}{{s_{\theta}}_\mu^{\setminus\mu(t)}}\Big\rangle_{\theta_\mu}
\label{eq:g_out_def}\\
\partial_{\widehat{\theta}}g_{\mathrm{out}}({s_{\theta}}_\mu^{\setminus\mu(t)},\widehat{\theta}_\mu^{\setminus\mu(t)},y_\mu)
&=\frac{\partial}{\partial\widehat{\theta}_\mu^{\setminus\mu(t)}}g_{\mathrm{out}}({s_{\theta}}_\mu^{\setminus\mu(t)},\widehat{\theta}_\mu^{\setminus\mu(t)},y_\mu).
\end{eqnarray}
The expectation $\langle\cdot\rangle_{\theta_\mu}$ in (\ref{eq:g_out_def})
denotes the expectation with respect to $\theta_\mu$
based on the distribution
\begin{eqnarray}
p_{\mathrm{c},\mu}^{(t)}(\theta_\mu|y_\mu)=\displaystyle\frac{f^\beta(y_\mu|\theta_\mu)\psi^\beta_{\mathrm{c}}(\theta_\mu|{s_\theta}_{\mu}^{\setminus\mu(t)},\widehat{\theta}_{\mu}^{\setminus\mu(t)})}
{\int d\theta_\mu f^\beta(y_\mu|\theta_\mu)\psi^\beta_{\mathrm{c}}
(\theta_\mu|{s_\theta}_{\mu}^{\setminus\mu(t)},\widehat{\theta}_{\mu}^{\setminus\mu^{(t)}})},
\label{eq:eff_dist}
\end{eqnarray}
where
\begin{eqnarray}
\psi_{\mathrm{c}}(\theta_{\mu}|{s_\theta}_{\mu}^{\setminus\mu},\widehat{\theta}^{\setminus\mu}_{\mu})=\exp\left(-\frac{(\theta_{\mu}-\widehat{\theta}_{\mu}^{\setminus\mu})^2}{2{s_\theta}_{\mu}^{\setminus\mu}}\right).
\end{eqnarray}
Defining the following quantities
\begin{eqnarray}
\widehat{\theta}^{(t)}_\mu&=\frac{1}{\sqrt{N}}\sum_{i=1}^NF_{\mu i}\widehat{x}_i^{(t)}\label{eq:theta_hat_def}\\
{s_{\theta}}_\mu^{(t)}&=\frac{1}{N}\sum_{j=1}^NF_{\mu j}^2s_j^{(t)},
\end{eqnarray}
where (\ref{eq:theta_hat_def}) corresponds to the estimated model
parameter, their values are related to those for cavity distribution as 
\begin{eqnarray}
\widehat{\theta}_\mu^{(t)}&=\widehat{\theta}_{\mu}^{\setminus\mu(t)}
+
\beta^{-1}(g_{\mathrm{out}}^{(t-1)})_{\mu}{s_\theta}_{\mu}^{\setminus\mu(t)}
\label{eq:theta_hat}\\
{s_\theta}_{\mu}^{(t)}&={s_\theta}_{\mu}^{\setminus\mu(t)},
\label{eq:s_theta}
\end{eqnarray}
where the second term of (\ref{eq:theta_hat}) is known as Onsager reaction term \cite{TAP}.

\subsection{$\beta\to\infty$ limit}

At $\beta\to\infty$,
the saddle point method can be used to calculate the integrals with respect to $x$ and $\theta$ based on the 
distributions (\ref{eq:eff_marginal})
and (\ref{eq:eff_dist}).
First, the mean of the distribution (\ref{eq:eff_marginal})
denoted by $\widehat{x}$
is given as the solution of the maximization problem
\begin{eqnarray}
\widehat{x}_i=\mathop{\mathrm{argmax}}_xf_\xi(x;\mathrm{m}_i,\Sigma_i)
%\left\{\ln\phi(x)-\frac{(x-m)^2}{2\Sigma}\right\}.
\label{eq:GAMP_x_max}
\end{eqnarray}
where
\begin{eqnarray}
f_\xi(x;\mathrm{m},\Sigma)=\ln\phi(x)-\frac{(x-\mathrm{m})^2}{2\Sigma}.
\end{eqnarray}
The solution depends on the prior distribution.
Here, we show the solution under the elastic net penalty as
\begin{eqnarray}
\widehat{x}_i&=\frac{\mathrm{m}_i-\mathrm{sgn}(\mathrm{m}_i)\lambda_1\Sigma_i}{\lambda_2\Sigma_i+1}
\mathbb{I}(|\mathrm{m}_i|>\lambda_1\Sigma_i),\\
s_i&=\frac{\Sigma_i}{\lambda_2\Sigma_i+1}
\mathbb{I}(|\mathrm{m}_i|>\lambda_1\Sigma_i),
\end{eqnarray}
where $\mathbb{I}(a)$ represents the indicate function returns 1 if $a$ is true, otherwise it is 0.
Setting $\lambda_1=0$ and $\lambda_2=0$, we obtain the 
solution under the $\ell_2$ penalty and $\ell_1$ penalty, respectively.

Next, the mean (\ref{eq:eff_dist})
at $\beta\to\infty$, denoted by 
$\theta_\mu^*$, is given by
\begin{eqnarray}
\theta_\mu^*&=\mathop{\mathrm{argmax}}_{\theta}
\widehat{f}_\xi (\theta;y_\mu,\widehat{\theta}_\mu^{\setminus\mu},{s_{\theta}}_\mu^{\setminus\mu}),
\label{eq:theta_max}
\end{eqnarray}
with the function $\widehat{f}_\xi$ defined as
\begin{eqnarray}
\widehat{f}_\xi(\theta;y,\widehat{\theta},s)=\ln f(y|\theta)-\frac{(\theta-\widehat{\theta})^2}{2s}.
\label{eq:opt_theta_func}
\end{eqnarray}
The maximizer $\theta_\mu^*$ in \eref{eq:theta_max} satisfies
\begin{eqnarray}
\frac{\theta_\mu^*-\widehat{\theta}_\mu^{\setminus\mu}}{{s_\theta}_\mu^{\setminus\mu}}=y_\mu-a^\prime(\theta_\mu^*),
\label{eq:theta_mu}
\end{eqnarray}
where $a^\prime(\theta_\mu^*)=\frac{\partial}{\partial\theta}a(\theta)|_{\theta=\theta_\mu^*}$ corresponds to the
mean of $y_\mu$ expressed by the assumed model,
i.e. $a^\prime(\theta_\mu^*)=\mathbb{E}_{{\cal L}(\bm{\theta}^*)}[y_\mu]=\ell^{-1}(\theta_\mu^*)$. From \eref{eq:theta_mu}, $\partial_{\hat{\theta}}\theta_\mu^*\equiv\partial\theta_\mu^*\slash\partial\widehat{\theta}_\mu^{\setminus\mu}$ satisfies 
\begin{eqnarray}
\frac{\partial_{\hat{\theta}}\theta_\mu^*-1}{{s_\theta}_\mu}=-\partial_{\widehat{\theta}}\theta_\mu^*a^{\prime\prime}(\theta_\mu^*),
\label{eq:partial_theta}
\end{eqnarray}
where $a^{\prime\prime}(\theta_\mu^*)$ corresponds to 
the variance of $y_\mu$ under the assumed model, i.e. 
$a^{\prime\prime}(\theta_\mu^*)=\mathbb{V}_{{\cal L}(\bm{\theta}^*)}[y_\mu]$.
Solving \eref{eq:partial_theta},
we obtain
\begin{eqnarray}
\frac{\partial_{\widehat{\theta}}\theta_\mu^*-1}{{s_\theta}_\mu}=-\frac{\mathbb{V}_{{\cal L}(\theta^*_\mu)}[y_\mu]}{1+{s_\theta}_\mu\mathbb{V}_{{\cal L}(\theta^*_\mu)}[y_\mu]}.
\end{eqnarray}
For convenience, we introduce rescaled variables
\begin{eqnarray}
\beta^{-1}(g_{\mathrm{out}})_\mu&=(\check{g}_{\mathrm{out}})_\mu,~~~
\beta^{-1}(\partial_{\hat{\theta}} g_{\mathrm{out}})_\mu&=(\check{\partial}_\theta g_{\mathrm{out}})_\mu,\label{eq:g_rescale}
\end{eqnarray}
and they are given by 
\begin{eqnarray}
(\check{g}_{\rm out})_\mu=\frac{\theta_\mu^*-\widehat{\theta}_\mu^{\setminus\mu}}{{s_\theta}_\mu},~~~(\check{\partial}_{\hat{\theta}}g_{\rm out})_\mu=\frac{\partial_{\hat{\theta}}\theta_\mu^*-1}{{s_\theta}_\mu},\label{eq:g_out_theta}
\end{eqnarray}
at $\beta\to\infty$. 
Based on scaling with respect to $\beta$,
(\ref{eq:s_def}), (\ref{eq:s_tilde_def}) and (\ref{eq:g_rescale}),
the variables in GAMP stay finite 
even at the limit $\beta\to\infty$.
This scaling can be understood through 
Laplace approximation of the posterior distribution
around the MAP estimates. 
From eqs.(\ref{eq:theta_mu}) and (\ref{eq:partial_theta}), we obtain
\begin{eqnarray}
(\check{g}_{\mathrm{out}})_\mu&=y_\mu-\mathbb{E}_{{\cal L}}(\theta_\mu^*)[y_\mu],\label{eq:check_g}\\
(\check{\partial}_{\widehat{\theta}}g_{\mathrm{out}})_\mu&=-\frac{\mathbb{V}_{{\cal L}(\theta^*_\mu)}[y_\mu]}{1+{s_\theta}_\mu\mathbb{V}_{{\cal L}
(\theta^*_\mu)}[y_\mu]}.
\label{eq:_check_g_out}
\end{eqnarray}
The solution $\theta^*_\mu$ of the maximization problem (\ref{eq:theta_max})
is equivalent to the estimate for the parameter $\widehat{\theta}_\mu$
defined by (\ref{eq:theta_hat_def}),
which can be confirmed by substituting (\ref{eq:check_g}) to (\ref{eq:theta_hat}).

The analytical form of the estimate $\bm{\theta}^*$ and related
quantities depend on the likelihood.
For Gaussian likelihood, the estimate is given by
\begin{eqnarray}
\theta_\mu^*=\frac{{s_\theta}_\mu y_\mu}{1+{s_\theta}_\mu}+
\frac{\widehat{\theta}_\mu^{\setminus\mu}}{1+{s_\theta}_\mu}.
\end{eqnarray}
Further, $\mathbb{E}_{{\cal L}}(\theta_\mu^*)[y_\mu]=\theta_\mu^*$ and 
$\mathbb{V}_{{\cal L}}(\theta_\mu^*)[y_\mu]=1$, and hence,
\begin{eqnarray}
(\check{g}_{\mathrm{out}})_\mu&=\frac{y_\mu-\widehat{\theta}_\mu^{\setminus\mu}}{1+{s_\theta}_\mu},~~~
(\check{\partial}_{\widehat{\theta}}g_{\mathrm{out}})_\mu&=-\frac{1}{1+{s_\theta}_\mu}.
\label{eq:_check_g_out_Gauss}
\end{eqnarray}
We summarise the GAMP solutions 
for representative distributions in the exponential family in
\ref{sec:app_GLM}.
The procedure of GAMP for general likelihoods and 
priors is summarised in Algorithm \ref{alg:AMP},
where $\circ$ and $\oslash$ denote
the component-wise product and division, respectively,
and $\bm{0}_{N\times 1}$ denotes the $N$-dimensional vector
whose components are zero.

\begin{algorithm}[h]
\caption{GAMP for penalized regression problems}
\label{alg:AMP}
\begin{algorithmic}[1]
\Require {${\cal D}=\{\bm{y},\bm{F}\}$}
\Ensure {$\widehat{\bm{x}}$}
\State{$\widehat{x}^{(-1)}\gets \bm{0}_{N\times 1}$, $\bm{g}^{(-1)}_{\mathrm{out}}\gets\bm{0}_{M\times 1}$}
\State{$\widehat{\bm{x}}^{(0)},~\bm{s}^{(0)}\gets$ Initial values from $\mathbb{R}^{N}$} 
\For{$t = 0 \, \ldots \, T_{\max}$}
%%% mean of theta
\State{$\widehat{\bm{\theta}}^{(t)}\gets\frac{1}{\sqrt{N}}\bm{F}\widehat{\bm{x}}^{(t)}-\frac{1}{N}\bm{g}_{\mathrm{out}}^{(t-1)}\circ\left((\bm{F}\circ\bm{F})\bm{s}^{(t)}\right)$}
%%% variance of theta
\State{$\bm{s}_\theta^{(t)}\gets\frac{1}{N}
(\bm{F}\circ\bm{F})\bm{s}^{(t)}$}
%%% g_out
\State{$\bm{g}_{\mathrm{out}}^{(t)}\gets \bm{g}_{\mathrm{out}}(\bm{s}_\theta^{(t)},\widehat{\bm{\theta}}^{(t)};\bm{y})$}
%%% partial g_out
\State{$\partial_{\hat{\theta}}\bm{g}_{\mathrm{out}}^{(t)}\gets \partial_{\widehat{\theta}}\bm{g}_{\mathrm{out}}(\bm{s}_\theta^{(t)},\widehat{\bm{\theta}}^{(t)};\bm{y})$}
%%% Sigma
\State{$\bm{\Sigma}^{(t)}\gets -\bm{1}_{N\times 1}\oslash
\left(\frac{1}{N}(\bm{F}\circ\bm{F})^{\top}\partial_{\hat{\theta}} \bm{g}_{\mathrm{out}}^{(t)}\right)$}
%%%\mu_x
\State{$\bm{\mathrm{m}}^{(t)}\gets\bm{\Sigma}^{(t)}\circ
\left(\frac{1}{\sqrt{N}}\bm{F}^{\top}\bm{g}_{\mathrm{out}}^{(t)}-\frac{1}{N}\widehat{\bm{x}}^{(t)}\circ
((\bm{F}\circ\bm{F})^{\top}\partial_{\hat{\theta}} \bm{g}_{\mathrm{out}}^{(t)})\right)$}
%%% update x
\State{$\widehat{\bm{x}}^{(t+1)}\gets \mathbb{M}_\beta(\bm{\Sigma}^{(t)},\bm{\mathrm{m}}^{(t)})$}
\State{$\bm{s}^{(t+1)}\gets \mathbb{V}_\beta(\bm{\Sigma}^{(t)},\bm{\mathrm{m}}^{(t)})$}
\EndFor
\end{algorithmic}
\end{algorithm}

\subsubsection{Neglecting a part of site-dependency for random predictors}

For random predictors with i.i.d. random entries of zero mean and variance 1,
${s_\theta}_\mu$
converges to the averaged value of $s_i$
\begin{eqnarray}
{s_\theta}_\mu=\frac{1}{N}\sum_{i=1}^NF_{\mu i}^2s_i \to s_{\theta}=\frac{1}{N}\sum_{i=1}^Ns_i
\end{eqnarray}
at sufficiently large $N$ because of the law of large numbers.
Further, the site dependency of $\Sigma_i$ can be ignored as
\begin{eqnarray}
\frac{1}{{\Sigma}^{(t)}}&=-\frac{\alpha}{M}\sum_{\mu=1}^M\check{\partial}_{\widehat{\theta}}g_{\mathrm{out}}({s_{\theta}}_\mu^{\setminus\mu(t)},\widehat{\theta}_\mu^{\setminus\mu(t)},y_\mu).
\end{eqnarray}

\subsubsection{Comparison of GAMP with the linear algebraic expression}

Now, we compare the expression of the LOOCV error obtained using GAMP and that obtained from the linear algebraic form given by (\ref{eq:LOOCV_linear}) for the Gaussian likelihood.
In GAMP, we denote the model expression of data 
under the $\mu$-cavity system as
$\widehat{y}_\mu^{\setminus\mu}=\frac{1}{\sqrt{N}}F_{\mu i}\widehat{x}_{i\to\mu}$; this 
corresponds to $\widehat{\theta}_\mu^{\setminus\mu}$.
Hence, we obtain 
\begin{eqnarray}
y_\mu-\widehat{y}_\mu^{\setminus\mu}=(y_\mu-\widehat{y}_\mu)(1+{s_\theta}_\mu).
\end{eqnarray}
Meanwhile, from (\ref{eq:linear_CV_proof}), we obtain
\begin{eqnarray}
y_\mu-\widehat{y}_\mu({\cal D}_{\setminus\mu})=\frac{y_\mu-\widehat{y}_\mu({\cal D})}{1-H_{\mu\mu}}.
\end{eqnarray}
Assuming the equivalence between $\widehat{y}_\mu^{\setminus\mu}$ in GAMP
and $\widehat{y}_\mu({\cal D}_{\setminus\mu})$ in the exact calculation,
the correspondence between ${s_\theta}_\mu$ in GAMP
and the matrix $\bm{H}$ is given as
\begin{eqnarray}
H_{\nu\nu}\leftrightarrow\frac{{s_\theta}_\nu}{1+{s_\theta}_\nu}%\\
%\frac{1}{1-H_{\nu\nu}}\leftrightarrow 1+{s_\theta}_\nu
\label{eq:hat_vs_GAMP}
\end{eqnarray}
for any $\nu\in\bm{M}$.
For computing the hat matrix in the linear estimator, 
we need to calculate the matrix inverse.
An advantage of GAMP is that there is no need to compute the inverse of the matrix
to obtain the estimate, although a recursive update is required.
Further, the R.H.S of (\ref{eq:hat_vs_GAMP})
can be calculated for any estimator,
even when the linear estimation rule is not considered and 
the matrix $\bm{H}$ is not defined;
hence, (\ref{eq:hat_vs_GAMP})
can be regarded as the extension of the matrix $\bm{H}$ 
for other estimators.

\subsubsection{Correspondence
between $\mu$-cavity system and LOO sample}
\label{sec:correspondence_cavity}

Here, we 
discuss the correspondence between 
$\widehat{x}_{i\to\mu}$ in GAMP
and exact $\widehat{x}_i({\cal D}_\mu)$.
When these estimates are equivalent to each other at any $i$ and $\mu$,
the parameters 
$\widehat{\theta}_\mu^{\setminus\mu}$
and $\widehat{\theta}({\cal D}_{\setminus\mu})$ coincide.
In the case where the graph structure consists of tree structures,
the exactness of the output message $\xi_{i\to\mu}(x_i)$ as 
the posterior distribution under the LOO sample ${\cal D}_{\setminus\mu}$
can be mathematically proved \cite{Mezard2009},
hence, $\widehat{x}_{i\to\mu}=\widehat{x}({\cal D}_{\setminus\mu})$
holds.
In the current problem setting,
the graph contains loops, and the correspondence does not hold in general.
Hereafter, we use $\widehat{\theta}_{\mu}^{\setminus\mu}$
as an estimated value of $\widehat{\theta}({\cal D}_{\setminus\mu})$,
and the related generalization gap of LOOCV error given by the estimated value 
is denoted as 
$\widehat{\Delta}_{\mathrm{LOOCV}}$.

%\begin{eqnarray}
%\xi_{i\to\mu}(x_i|{\cal D})\propto\phi(x_i)\prod_{\nu\in\bm{M}\setminus\mu}\widetilde{\xi}_{\nu\to i}(x_i|{\cal D})
%\end{eqnarray}
%\begin{eqnarray}
%\widehat{p}_\beta(x_i|{\cal D}_{\setminus\mu})\propto\phi(x_i)\prod_{\nu\in\bm{M}\setminus\mu}\widetilde{\xi}_{\nu\to i}(x_i|{\cal D}_{\setminus\mu})
%\end{eqnarray}

\subsubsection{Local stability condition of GAMP}

The GAMP algorithm sometimes fails to converge 
depending on the model parameters $N$, $M$, and regularization parameters.
The instability condition can be derived by 
considering the linear stability analysis around the GAMP's fixed point.
We consider the case that $\widehat{\bm{x}}$ is at a fixed point with
the corresponding values of $\widehat{\theta}_\mu^{\setminus\mu}$ for $\mu\in\bm{M}$.
When $\widehat{\bm{x}}$ is slightly deviated from the value at the fixed point, 
the deviation propagates to $\widehat{\theta}_\mu^{\setminus\mu}$ by GAMP, 
and back to $\widehat{\bm{x}}$.
If estimates under the small deviance are moved away from the fixed point
by the GAMP update,
the fixed point can be regarded as locally unstable.

We consider the difference in $\widehat{\theta}_\mu^{\setminus\mu}$
induced by the small deviance in $\widehat{\bm{x}}$ as 
\begin{eqnarray}
\nonumber
\delta\widehat{\theta}_\mu^{\setminus\mu}\equiv\widehat{\theta}_\mu^{\setminus\mu}(\{\widehat{x}_i+\delta\widehat{x}_i\})
-\widehat{\theta}_\mu^{\setminus\mu}(\{\widehat{x}_i\})&=
\sum_{i=1}^N\frac{\partial\widehat{\theta}_\mu^{\setminus\mu}}{\partial\widehat{x}_i}\delta\widehat{x}_i\\
&=\frac{1}{\sqrt{N}}\sum_{i=1}^NF_{\mu i}\delta\widehat{x}_i,
\label{eq:theta_diff_AT}
\end{eqnarray}
where $\widehat{\theta}_\mu^{\setminus\mu}(\widehat{\bm{x}})$ denotes 
the value of $\widehat{\theta}_\mu^{\setminus\mu}$
under $\widehat{\bm{x}}$.
We restrict our discussion to the predictors whose
entries are i.i.d. with mean 0 and variance 1.
%For predictors satisfying assumptions {\bf a1} and {\bf a2},
%and 
For such predictors,
the summation of L.H.S. in (\ref{eq:theta_diff_AT})
is reduced to zero at sufficiently large $N$, hence
we consider the squared sum as
\begin{eqnarray}
{\delta\widehat{\theta}_\mu^{\setminus\mu}}^2\simeq \frac{1}{N}\sum_{i=1}^N\delta\widehat{x}_i^2.
\label{eq:theta_diff_AT_sq}
\end{eqnarray}
Meanwhile, we consider the deviation of $\widehat{\bm{x}}$ attributed to the 
slight deviance of $\theta_\mu^{\setminus\mu}$ defined by
\begin{eqnarray}
\delta\widehat{x}_i\equiv\widehat{x}_{i}(\{\widehat{\theta}_\mu^{\setminus\mu}+\delta\widehat{\theta}_\mu^{\setminus\mu}\})-\widehat{x}_{i}(\{\widehat{\theta}_\mu^{\setminus\mu}\})=\sum_{\mu=1}^M\frac{\partial\widehat{x}_i}{\partial\widehat{\theta}_\mu^{\setminus\mu}}\delta\widehat{\theta}_\mu^{\setminus\mu},
\end{eqnarray}
where $\widehat{x}_{i}(\{\widehat{\theta}_\mu^{\setminus\mu}\})$
denotes the estimates under $\{\widehat{\theta}_\mu^{\setminus\mu}\}$.
%\begin{eqnarray}
%\widehat{\theta}_\mu
%\end{eqnarray}
The deviation of $\widehat{\theta}_\mu^{\setminus\mu}$ propagates to $\widehat{x}_i$
through $\mathrm{m}_i$, and hence, we obtain
\begin{eqnarray}
\nonumber
\frac{\partial\widehat{x}_i}{\partial\widehat{\theta}_\mu^{\setminus\mu}}&=\frac{\partial \mathrm{m}_i}{\partial \widehat{\theta}_\mu^{\setminus\mu}}
\frac{\partial\widehat{x}_i}{\partial \mathrm{m}_i}\\
&=\Sigma_i\left\{\frac{1}{\sqrt{N}}F_{\mu i}(\check{\partial}_{\hat{\theta}}{g}_{\mathrm{out}})_\mu+O(N^{-1})\right\}s_i\Sigma_i^{-1},
\label{eq:AT_bare}
\end{eqnarray}
where we use the relationship (\ref{eq:VtoM}).
For the random preditors,
the summation of (\ref{eq:AT_bare}) with 
respect to $\mu$ is expected to converge to zero for a sufficiently large $M$, and
hence, we consider the squared summation under the assumptions as
\begin{eqnarray}
\delta\widehat{x}_i^2\simeq\frac{s_i^2}{N}\sum_{\mu=1}^MF_{\mu i}^2(\check{\partial}_{\hat{\theta}}{g}_{\mathrm{out}})_\mu^2
{\delta\widehat{\theta}_\mu^{\setminus\mu}}^2\simeq\frac{\alpha s_i^2}{M}\sum_{\nu=1}^M(\check{\partial}_{\hat{\theta}}{g}_{\mathrm{out}})_\nu^2
{\delta\widehat{\theta}_\mu^{\setminus\mu}}^2.
\end{eqnarray}
Therefore, setting $\overline{\delta\widehat{x}^2}\equiv\frac{1}{N}\sum_{i=1}^N\delta\widehat{x}_i^2$,
we obtain the recursive relationship of $\overline{\delta\widehat{x}^2}$ as
\begin{eqnarray}
\overline{\delta\widehat{x}^2}=\alpha\overline{s^2}
\overline{(\check{\partial}{g}_{\mathrm{out}})^2}
\overline{\delta\widehat{x}^2},
\end{eqnarray}
where
we set $\overline{s^2}\equiv\frac{1}{N}\sum_{i=1}^Ns_i^2$
and $\overline{(\check{\partial}{g}_{\mathrm{out}})^2}\equiv\frac{1}{M}\sum_{\nu=1}^M(\check{\partial}_{\hat{\theta}}{g}_{\mathrm{out}})_\nu^2$.
%for a sufficiently large $M$.
Thus, the solution $\overline{\delta\widehat{x}^2}=0$ is unstable when
\begin{eqnarray}
\alpha\overline{s^2}
\overline{(\check{\partial}{g}_{\mathrm{out}})^2}>1,
\label{eq:AT_GAMP_final}
\end{eqnarray}
which indicates the local instability of the fixed point.
This condition corresponds the instability of the replica symmetric ansatz in the replica analysis
shown in sec.\ref{sec:Replica_AT}.

\subsection{Expression of GDF by GAMP}
\label{sec:GDF_fluctuation}

We start with the 
form (\ref{eq:theta_hat}) for the estimated parameter $\widehat{\theta}_\mu$.
By definition, 
$\widehat{\theta}_\mu^{\setminus\mu}$ and $s_{\theta_\mu}^{\setminus\mu}$
are defined in the $\mu$-cavity system,
and hence, their values are independent from $y_\mu$
when the graphical model does not have any loops including $y_\mu$.
In fact, when the graph contains loops, 
$\widehat{\theta}_\mu^{\setminus\mu}$ and $s_{\theta_\mu}^{\setminus\mu}$ are dependent on $y_\mu$, 
but we assume that their dependency on $y_\mu$ is negligible.
Under this consideration,
the dependency on $y_\mu$ only appears in the Onsager reaction term,
i.e. the term $g_{\mathrm{out}}$, and hence,
\begin{eqnarray}
\frac{\partial\widehat{\theta}_\mu}{\partial y_\mu}={s_{\theta}}_\mu
\frac{\partial(\check{g}_{\mathrm{out}})_{\mu}}{\partial y_\mu}.
\end{eqnarray}
From (\ref{eq:g_out_def}), in the case of GLM,
\begin{eqnarray}
\nonumber
\frac{\partial(\check{g}_{\mathrm{out}})_\mu}{\partial y_\mu}&=\frac{\beta}{{s_{\theta}}_\mu}\left\{\left\langle(\theta_\mu-\widehat{\theta}_\mu^{\setminus\mu})\!\left(\theta_\mu+b^\prime(y_\mu)\right)\right\rangle_{\theta_\mu}\!\!\!\!-\!
\left\langle\theta_\mu-\widehat{\theta}_\mu^{\setminus\mu}\right\rangle_{\theta_\mu}
\!\!\!\!\left\langle\theta_\mu+b^\prime(y_\mu)\right\rangle_{\theta_\mu}\!\right\}\\
&=\frac{{\chi_{\theta}}_\mu}{{s_{\theta}}_\mu},
\end{eqnarray}
where we define the variance of the model parameter as
\begin{eqnarray}
{\chi_{\theta}}_\mu=\beta\left(\langle\theta_\mu^2\rangle_{\theta_\mu}-\langle\theta_\mu\rangle_{\theta_\mu}^2\right).
\label{eq:chi_theta_def}
\end{eqnarray}
Hence, we obtain 
\begin{eqnarray}
\frac{\partial\widehat{\theta}_\mu}{\partial y_\mu}={\chi_{\theta}}_\mu,
\label{eq:diff_fin}
\end{eqnarray}
which holds for the general value of $\beta$, and implies that 
the GAMP evaluates GDF and the generalization gap for the in-sample error
as \cite{Sakata2016}
\begin{eqnarray}
\mathrm{gdf}^{(\mathrm{AMP})}({\cal D})=\frac{1}{M}\sum_{\mu=1}^M{\chi_{\theta}}_\mu
\label{eq:GDF_GAMP}
%\Delta^{(\mathrm{in},\mathrm{AMP})}({\cal D})=\frac{\sigma^2}{M}\sum_{\mu=1}^M{\chi_{\theta}}_\mu.
%\label{eq:FDT_gauss}
\end{eqnarray}
Eq. (\ref{eq:GDF_GAMP}) indicates the proportionality between the
response of the learning error %against data perturbation 
and the variance of the estimated parameter at equilibrium,
which can be regarded as a fluctuation-response relationship described by GAMP 
\cite{Watanabe_FDT}.

The form of ${\chi_\theta}_\mu$ can be derived at the $\beta\to\infty$ limit,
when the following second order approximation is valid.
Expanding the function
$f_\xi(\theta;y_\mu,\widehat{\theta}_\mu^{\setminus\mu},{s_{\theta}}_\mu)$
around ${\theta}_\mu^*$ up to the second order,
we obtain
\begin{eqnarray}
\nonumber
f_\xi(\theta_\mu;y_\mu,\widehat{\theta}_\mu^{\setminus\mu},{s_{\theta}}_\mu)
&=f_\xi(\theta^*_\mu;y_\mu,\widehat{\theta}_\mu^{\setminus\mu},{s_{\theta}}_\mu)\\
&+\frac{(\theta_\mu-\theta_\mu^*)^2}{2}\frac{\partial^2}{\partial\theta^2}f_\xi(\theta;y_\mu,\widehat{\theta}_\mu^{\setminus\mu},{s_{\theta}}_\mu)\Big|_{\theta=\theta_\mu^*}.
\end{eqnarray}
Hence, in case of GLM, we obtain \cite{Sakata2018}
\begin{eqnarray}
{\chi_{\theta}}_\mu=\frac{{s_\theta}_\mu}{1+{s_\theta}_\mu a^{\prime\prime}(\theta_\mu^*)}.
\label{eq:GLM_theta_variance}
\end{eqnarray}
For the Gaussian likelihood, where $a^{\prime\prime}(\theta_\mu^*)=1$,
GDF is given by 
\begin{eqnarray}
\mathrm{gdf}^{(\mathrm{AMP})}({\cal D})=\frac{1}{M}\sum_{\mu=1}^M
\frac{{s_\theta}_\mu}{1+{s_\theta}_\mu}.
\end{eqnarray}
Considering the correspondence between the linear estimation rule and GAMP 
(\ref{eq:hat_vs_GAMP}),
the expression of GDF by GAMP can be regarded as DF
using the extended hat matrix (\ref{eq:hat_vs_GAMP})
for the Gaussian likelihood.

\subsection{Expression of the functional variance by GAMP}

From the relationship (\ref{eq:FV_derivative}),
FV is expressed using the effective distribution by GAMP (\ref{eq:eff_dist}) as
\begin{eqnarray}
\nonumber
\mathrm{FV}^{(\mathrm{AMP})}({\cal D})&=\frac{\beta}{M}\sum_{\mu=1}^M\left\{\left\langle\left(\ln f(y_\mu|\theta_\mu)\right)^2\right\rangle_{\theta_\mu}-\left\langle\ln f(y_\mu|\theta_\mu)\right\rangle_{\theta_\mu}^2\right\}\\
&=\frac{1}{\beta M}\sum_{\mu=1}^M\frac{\partial^2}{\partial\gamma^2}\ln \int d\theta_\mu f^{\gamma\beta}(y_\mu|\theta_\mu)\psi_c^{\beta}(\theta_\mu|{s_\theta}_\mu,\widehat{\theta}_\mu^{\setminus\mu})\Big|_{\gamma=1}
\end{eqnarray}
where $\gamma$ is an auxiliary variable.
At $\beta\to\infty$, FV is given by
\begin{eqnarray}
\mathrm{FV}^{(\mathrm{AMP})}({\cal D})=\frac{1}{M}\sum_{\mu=1}^M\frac{\partial^2}{\partial\gamma^2}{\cal T}_\gamma(\gamma)\Big|_{\gamma=1}
\label{eq:FV_GAMP_beta0}
\end{eqnarray}
where 
\begin{eqnarray}
{\cal T}_\gamma(\gamma)&=\gamma\ln f(y_\mu|\theta_\mu^*(\gamma))-\frac{(\theta_\mu^*(\gamma)-\widehat{\theta}_{\mu})^2}{2{{s_\theta}_\mu}}\\
\theta_\mu^*(\gamma)&=\mathop{\mathrm{argmax}}_{\theta}\left\{\gamma\ln f(y_\mu|\theta)-\frac{(\theta-\widehat{\theta}_\mu^{\setminus\mu})^2}{2{s_\theta}_\mu}\right\}.
\end{eqnarray}
In the case of GLM, the following relationship holds for any $\gamma$ and $\mu\in\bm{M}$:
\begin{eqnarray}
\gamma(y_\mu-a^\prime(\theta_\mu^*(\gamma)))-\frac{\theta_\mu^*(\gamma)-\widehat{\theta}_\mu^{\setminus\mu}}{{s_\theta}_\mu}=0,
\label{eq:FV_GAMP_solution}
\end{eqnarray}
which is reduced to (\ref{eq:theta_mu}) at $\gamma=1$, and 
we denote $\theta_\mu^*(\gamma=1)=\theta_\mu^*$.
Further, by differentiating both sides of (\ref{eq:FV_GAMP_solution})
with respect to $\gamma$,
we obtain
\begin{eqnarray}
y_\mu-a^{\prime}(\theta_\mu^*(\gamma))-\gamma a^{\prime\prime}(\theta_\mu^*(\gamma))\frac{\partial\theta_\mu^*(\gamma)}{\partial\gamma}=\frac{1}{{s_\theta}_\mu}\frac{\partial\theta_\mu^*(\gamma)}{\partial\gamma}.
\label{eq:theta_gamma_diff}
\end{eqnarray}
Hence, at $\gamma = 1$, 
\begin{eqnarray}
\frac{\partial\theta_\mu^*(\gamma)}{\partial\gamma}\Big|_{\gamma=1}=(\check{g}_{\mathrm{out}})_\mu
\frac{{s_\theta}_{\mu}}{1+{s_\theta}_\mu a^{\prime\prime}(\theta_\mu^*)},
\label{eq:diff_theta}
\end{eqnarray}
where we used the relationship (\ref{eq:_check_g_out}).

Utilizing (\ref{eq:FV_GAMP_solution}), 
%the solution $\theta^*_\mu(\gamma)$,
(\ref{eq:FV_GAMP_beta0}) is transformed to 
\begin{eqnarray}
\nonumber
&\mathrm{FV}^{(\mathrm{AMP})}({\cal D})\\
&\hspace{1.0cm}=2(\check{g}_{\mathrm{out}})_\mu\frac{\partial\theta_\mu^*(\gamma)}{\partial\gamma}\Big|_{\gamma=1}
-\left(a^{\prime\prime}(\theta_\mu^*)+\frac{1}{{s_\theta}_\mu}\right)\left(\frac{\partial\theta_\mu^*(\gamma)}{\partial\gamma}\Big|_{\gamma=1}\right)^2,
\label{eq:FV_GAMP_last}
\end{eqnarray}
and substituting (\ref{eq:diff_theta}) to
(\ref{eq:FV_GAMP_last}),
we get 
\begin{eqnarray}
\mathrm{FV}^{(\mathrm{AMP})}({\cal D})=\frac{1}{M}\sum_{\mu=1}^M(\check{g}_{\mathrm{out}})_\mu^2
\frac{{s_\theta}_\mu}{1+{s_\theta}_\mu a^{\prime\prime}(\theta_\mu^*)}.
%\left(\langle\theta_\mu^2\rangle_{\theta_\mu}-\langle\theta_\mu\rangle_{\theta_\mu}^2\right),
\end{eqnarray}
Comparing FV with $\sigma^2\times \mathrm{gdf}$, where
both are estimators for the generalization gap with respect to the in-sample prediction error,
the strength of the noise $\sigma^2$, which is assumed to be known in $C_p$ criterion,
is replaced with 
$(\check{g}_{\mathrm{out}})_\mu^2$ component-wise, in case of FV.
In the actual usage of the $C_p$ criterion, 
an estimator is used instead of the variance,
when its value is not known in advance \cite{Efron2004}.
FV can be regarded as the $C_p$ criterion defined with 
the estimator of the variance.
When the mean of the model $a^\prime(\theta_\mu^*)$
is exactly equal to the data $y_\mu$, FV is reduced to zero
as $(\check{g}_{\mathrm{out}})_\mu^2$ is the squared error between
them given in (\ref{eq:check_g}).

\subsubsection{Convergence of the series expansion}

We consider the convergence condition of the 
series expansion of (\ref{eq:T_expansion})
for the derivation of WAIC using the expression obtained using GAMP.
\begin{thm}
For the Gaussian likelihood,
the radius of convergence for the 
the series expansion with respect to $\eta$ is given by
\begin{eqnarray}
\eta\lim_{n\to\infty}\left|\frac{\sum_\mu(\check{g}_{\mathrm{out}})^2_\mu\left(\displaystyle\frac{{s_\theta}_\mu}{1+{s_\theta}_\mu}\right)^{n}}{\sum_\mu(\check{g}_{\mathrm{out}})^2_\mu\left(\displaystyle\frac{{s_\theta}_\mu}{1+{s_\theta}_\mu}\right)^{n-1}}\right|<1.
\label{eq:Gauss_conv}
\end{eqnarray}
When the components of the predictors are i.i.d. random variables
of mean zero and variance 1,
the radius of the convergence for the series expansion is given by
\begin{eqnarray}
\eta\left|\frac{{s_\theta}}{1+{s_\theta}}\right|<1.
\label{eq:Gauss_conv_rand}
\end{eqnarray}
\end{thm}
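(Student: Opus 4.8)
The plan is to make the $\eta$-dependence of ${\cal T}(\eta)$ fully explicit inside the GAMP representation and then obtain the radius of convergence of the Maclaurin series (\ref{eq:T_expansion}) by the ratio test. First I would repeat the cavity/Laplace reduction that produced (\ref{eq:FV_GAMP_beta0}), but keeping the power of the likelihood free rather than differentiating immediately: at $\beta\to\infty$, $\frac{1}{\beta}\ln\int d\bm{x}\,p_\beta(\bm{x}|{\cal D})f^{\eta\beta}(y_\mu|\theta_\mu)$ reduces, for each datum, to ${\cal T}_\gamma(1+\eta)-{\cal T}_\gamma(1)$ with ${\cal T}_\gamma$ and $\theta_\mu^*(\gamma)$ the per-datum quantities of (\ref{eq:FV_GAMP_beta0}), so that ${\cal T}(\eta)=\frac{1}{M}\sum_\mu[{\cal T}_\gamma(1+\eta)-{\cal T}_\gamma(1)]$. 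For the Gaussian likelihood the stationarity condition (\ref{eq:FV_GAMP_solution}) is linear, giving $\theta_\mu^*(\gamma)=(\gamma{s_\theta}_\mu y_\mu+\widehat{\theta}_\mu^{\setminus\mu})/(1+\gamma{s_\theta}_\mu)$; inserting $y_\mu-\theta_\mu^*(\gamma)=r_\mu/(1+\gamma{s_\theta}_\mu)$ and $\theta_\mu^*(\gamma)-\widehat{\theta}_\mu^{\setminus\mu}=\gamma{s_\theta}_\mu r_\mu/(1+\gamma{s_\theta}_\mu)$, with $r_\mu\equiv y_\mu-\widehat{\theta}_\mu^{\setminus\mu}$, collapses the two terms of ${\cal T}_\gamma$ into the single rational function $-\frac{r_\mu^2}{2}\cdot\frac{\gamma}{1+\gamma{s_\theta}_\mu}$, while (\ref{eq:_check_g_out_Gauss}) gives $r_\mu^2=(\check{g}_{\mathrm{out}})_\mu^2(1+{s_\theta}_\mu)^2$.

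Then I would expand $\gamma\mapsto\gamma/(1+\gamma{s_\theta}_\mu)$ as a geometric series about $\gamma=1$ — its only singularity being the simple pole at $\gamma=-1/{s_\theta}_\mu$, i.e. at distance $(1+{s_\theta}_\mu)/{s_\theta}_\mu$ from $\gamma=1$ — average over $\mu$, and read off that for $k\ge1$
\begin{equation}
\frac{1}{k!}\frac{\partial^k{\cal T}(\eta)}{\partial\eta^k}\Big|_{\eta=0}=\frac{(-1)^k}{2M}\sum_{\mu=1}^M(\check{g}_{\mathrm{out}})_\mu^2\left(\frac{{s_\theta}_\mu}{1+{s_\theta}_\mu}\right)^{k-1}.
\end{equation}
So, up to its $\eta$-independent term, (\ref{eq:T_expansion}) is a power series in $-\eta$ whose coefficients are the nonnegative numbers $\frac{1}{2M}\sum_\mu(\check{g}_{\mathrm{out}})_\mu^2({s_\theta}_\mu/(1+{s_\theta}_\mu))^{k-1}$; since ${s_\theta}_\mu\ge0$ forces ${s_\theta}_\mu/(1+{s_\theta}_\mu)\in[0,1)$ and these coefficients never change sign, the quotient of successive coefficients has a limit as the index tends to infinity, and d'Alembert's criterion gives absolute convergence exactly when $|\eta|$ times that limit is below $1$, i.e. (\ref{eq:Gauss_conv}). (If every $(\check{g}_{\mathrm{out}})_\mu$ vanishes, i.e. the fit is exact, then ${\cal T}$ is constant and the series is trivially entire.)

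Finally, for predictors with i.i.d. entries of mean $0$ and variance $1$ the law of large numbers makes ${s_\theta}_\mu=\frac{1}{N}\sum_i F_{\mu i}^2 s_i$ tend to the common value ${s_\theta}=\frac{1}{N}\sum_i s_i$ for every $\mu$, so every factor ${s_\theta}_\mu/(1+{s_\theta}_\mu)$ equals ${s_\theta}/(1+{s_\theta})$, the sums over $\mu$ in numerator and denominator of the ratio cancel, and (\ref{eq:Gauss_conv}) reduces to $\eta\,|{s_\theta}/(1+{s_\theta})|<1$, which is (\ref{eq:Gauss_conv_rand}). I expect the only genuinely substantive step to be the first one — confirming that ${\cal T}(\eta)$ is indeed this rational function of $\eta$, with simple poles at $\eta=-(1+{s_\theta}_\mu)/{s_\theta}_\mu$ and residues proportional to $(\check{g}_{\mathrm{out}})_\mu^2$; after that the radius of convergence is pure bookkeeping plus the ratio test. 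The one point that needs care there is that the d'Alembert ratio converges precisely because those residues are all of the same sign, so no cancellation occurs among the $\mu$-terms.
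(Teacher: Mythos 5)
Your proposal is correct and arrives at exactly the same coefficient formula, $\frac{1}{k!}\partial_\eta^k{\cal T}|_{\eta=0}=\frac{(-1)^k}{2M}\sum_\mu(\check{g}_{\mathrm{out}})_\mu^2\bigl({s_\theta}_\mu/(1+{s_\theta}_\mu)\bigr)^{k-1}$, but by a genuinely different route. The paper proves this formula by induction on $k$: it differentiates ${\cal T}_\gamma(\gamma)$ repeatedly through the implicit dependence of $\theta_\mu^*(\gamma)$ on $\gamma$, checks $k=2$ and $k=3$ explicitly, and then shows the inductive step. You instead exploit the fact that for the Gaussian likelihood the stationarity condition (\ref{eq:FV_GAMP_solution}) is linear, solve for $\theta_\mu^*(\gamma)$ in closed form, and collapse ${\cal T}_\gamma$ into the rational function $-\tfrac{r_\mu^2}{2}\,\gamma/(1+\gamma{s_\theta}_\mu)$, after which all Taylor coefficients at $\gamma=1$ drop out of a single geometric-series expansion. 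Your version buys two things: it makes the analytic structure explicit (simple poles at $\eta=-(1+{s_\theta}_\mu)/{s_\theta}_\mu$ with residues proportional to $(\check{g}_{\mathrm{out}})_\mu^2$, so the radius of convergence is literally the distance to the nearest pole), and it supplies a justification the paper omits, namely that the d'Alembert limit in (\ref{eq:Gauss_conv}) actually exists because the coefficients are sums of same-sign terms with ratios ${s_\theta}_\mu/(1+{s_\theta}_\mu)\in[0,1)$, so no cancellation can spoil the ratio test. The paper's inductive argument, on the other hand, is structured so that it would survive likelihoods for which the stationarity condition cannot be solved explicitly, though the theorem is only stated for the Gaussian case. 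The final specialization to i.i.d.\ predictors via ${s_\theta}_\mu\to s_\theta$ is identical in both arguments.
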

\begin{proof}
We prove the theorem in an inductive manner.
The first-order term of (\ref{eq:T_expansion})
is given by
\begin{eqnarray}
\frac{\partial{\cal T}_\gamma(\gamma)}{\partial\gamma}=\frac{1}{M}\sum_{\mu=1}^M\ln f(y_\mu|\theta_\mu^*(\gamma)).
\end{eqnarray}
We start our proof from the second-order term obtained as
\begin{eqnarray}
\frac{\partial^2{\cal T}_\gamma(\gamma)}{\partial\gamma^2}=\frac{1}{M}\sum_{\mu=1}^M(\check{g}_{\mathrm{out}}(\gamma))^2_\mu\frac{{s_\theta}_\mu}{1+\gamma{s_\theta}_\mu},
\end{eqnarray}
which corresponds to $\mathrm{FV}^{(\mathrm{AMP})}$ at $\gamma=1$, and 
we define
\begin{eqnarray}
(\check{g}_{\mathrm{out}}(\gamma))_\mu=y_\mu-\theta_\mu^*(\gamma)=\frac{\theta_\mu^*(\gamma)-\widehat{\theta}_\mu^{\setminus\mu}}{\gamma{s_\theta}_\mu},
\end{eqnarray}
where $(\check{g}_{\mathrm{out}}(\gamma=1))_\mu=(\check{g}_{\mathrm{out}})_\mu$
For $k = 3$, we obtain
\begin{eqnarray}
\frac{\partial^3{\cal T}_\gamma(\gamma)}{\partial\gamma^3}=-\frac{3}{M}\sum_{\mu=1}^M(\check{g}_{\mathrm{out}}(\gamma))^2_\mu\left(\frac{{s_\theta}_\mu}{1+\gamma{s_\theta}_\mu}\right)^2.
\end{eqnarray}
For general $k$, we can show that 
if the following relationship holds
\begin{eqnarray}
\frac{\partial^k{\cal T}(\gamma)}{\partial\gamma^k}=\frac{k!}{2M}(-1)^k
\sum_{\mu=1}^M(\check{g}_{\mathrm{out}}(\gamma))^2_\mu\left(\frac{{s_\theta}_\mu}{1+\gamma{s_\theta}_\mu}\right)^{k-1},
\label{eq:diff_general}
\end{eqnarray}
then 
\begin{eqnarray}
\frac{\partial^{k+1}{\cal T}(\gamma)}{\partial\gamma^{k+1}}=\frac{(k+1)!}{2M}(-1)^{k+1}\sum_{\mu=1}^M(\check{g}_{\mathrm{out}}(\gamma))^2_\mu\left(\frac{{s_\theta}_\mu}{1+\gamma{s_\theta}_\mu}\right)^{k}.
\end{eqnarray}
Therefore, (\ref{eq:diff_general}) holds for any $k\geq 2$, and we obtain
\begin{eqnarray}
{\cal T}(\eta)=\frac{\eta}{M}\sum_{\mu=1}^M\ln f(y_\mu|\theta_\mu^*)+\frac{1}{2M}\sum_{k=2}^\infty(-\eta)^k\sum_{\mu=1}^M
(\check{g}_{\mathrm{out}})^2_\mu
\left(\frac{{s_\theta}_\mu}{1+{s_\theta}_\mu}\right)^{k-1}.
\end{eqnarray}
This expression indicates that the radius of the convergence is given by (\ref{eq:Gauss_conv}).
For random i.i.d. predictors, ${s_\theta}_\mu$ is reduced ${s_\theta}$
for any $\mu$, and the radius of the convergence is given by (\ref{eq:Gauss_conv_rand}).
\end{proof}
For the Gaussian likelihood with random i.i.d. predictor, the 
radius of the convergence of the series expansion is 
given by GDF, as indicated in (\ref{eq:Gauss_conv_rand}).

\section{Approaching LOOCV error from FV at finite $\alpha$}
\label{sec:FV_correct}

\subsection{Discrepancy between the FV and LOOCV errors in the finite $\alpha$ region}

We show the prediction errors and their estimators for the random predictor $\bm{F}$, whose components are
independently and identically generated by Gaussian distribution
with mean 0 and variance 1.
In the numerical simulation, 
we set the true regression coefficient
$\bm{x}^{(0)}\in\mathbb{R}^N$
and the model parameter as $\bm{\theta}^{(0)}=\frac{1}{\sqrt{N}}\bm{F}\bm{x}^{(0)}$.

%We consider the case where the noise on the data is 
%and Gaussian noise.
%Examples of the results for other noise
%distributions are shown in \ref{sec:app_distributions}.

\subsubsection{Ridge regression under Gaussian noise}

\begin{figure}
\begin{minipage}{0.495\hsize}
\centering
\includegraphics[width=2.5in]{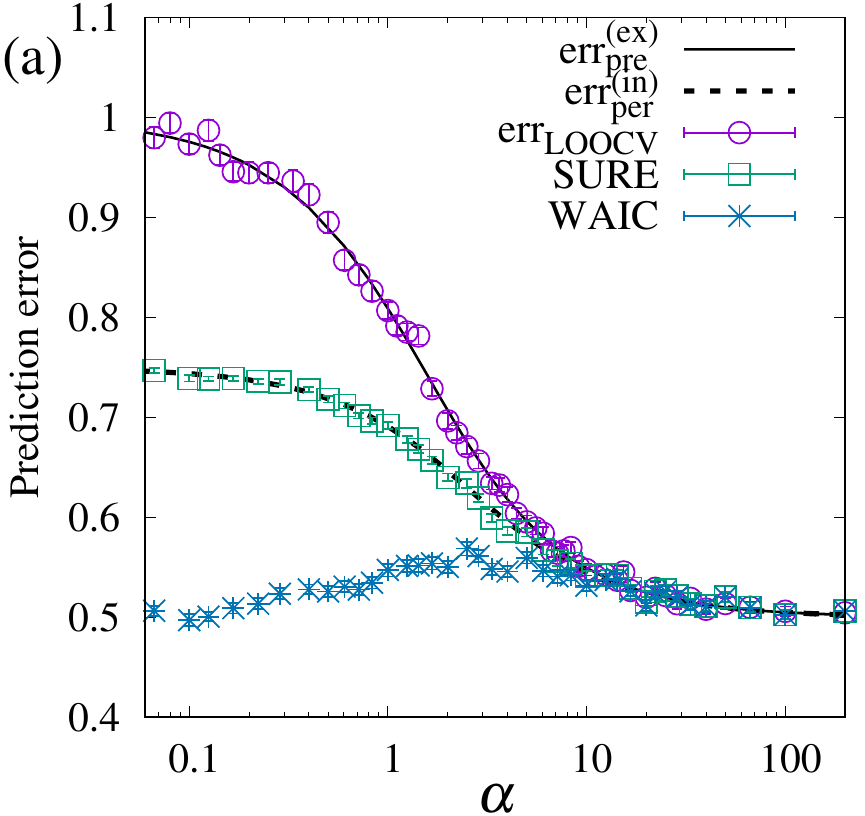}
\end{minipage}
\begin{minipage}{0.495\hsize}
\includegraphics[width=2.5in]{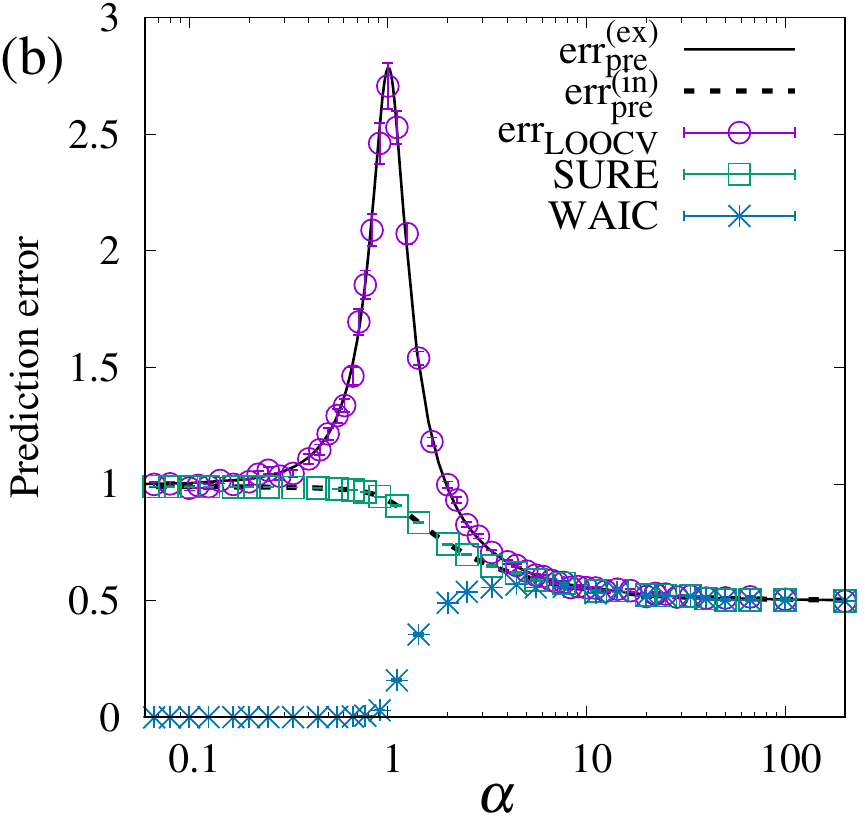}
\end{minipage}
\caption{Prediction errors for the ridge regression at $M = 200$ 
for (a) $\lambda = 1,\sigma = 1$
and (b) $\lambda = 0.01,\sigma = 1$.
$\mathrm{err}_{\mathrm{LOOCV}}$, SURE, and WAIC calculated by GAMP
are denoted by
$\circ$, $\square$, and $*$, respectively.
The value of each point is averaged over 100-times numerical simulations, and the bar on the point indicates standard error.
The solid and dashed lines represent the analytical results obtained using the replica method
for the extra-sample prediction error $\mathrm{err}_{\mathrm{pre}}^{(\mathrm{ex})}$
and in-sample prediction error $\mathrm{err}_{\mathrm{pre}}^{(\mathrm{in})}$, respectively.}
\label{fig:Ridge_pre}
\begin{minipage}{0.495\hsize}
\centering
\includegraphics[width=2.5in]{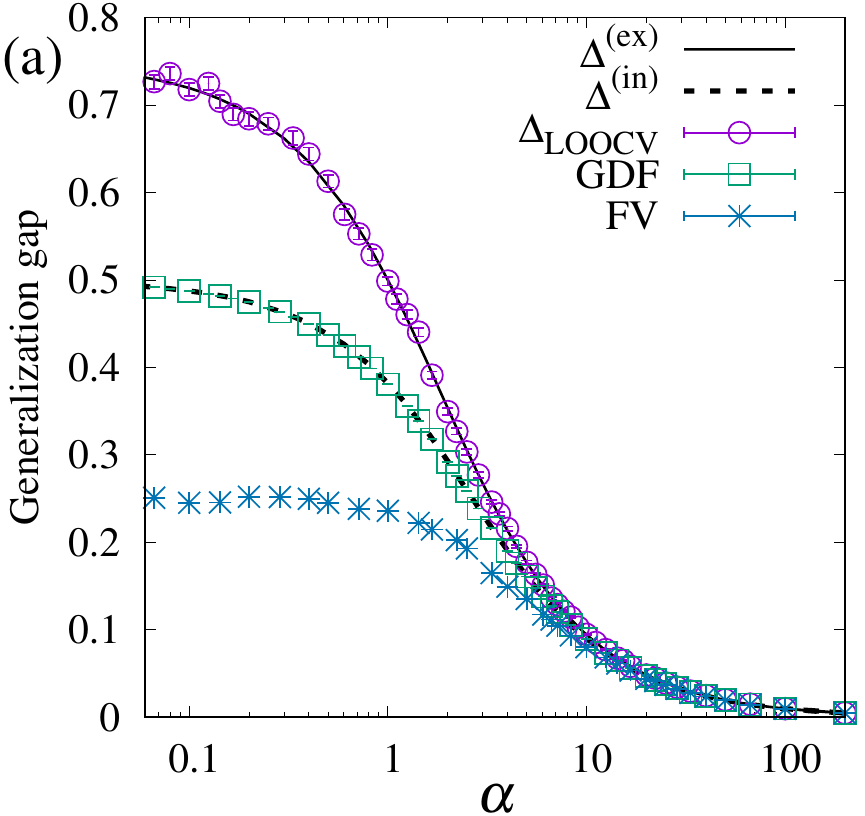}
\end{minipage}
\begin{minipage}{0.495\hsize}
\includegraphics[width=2.5in]{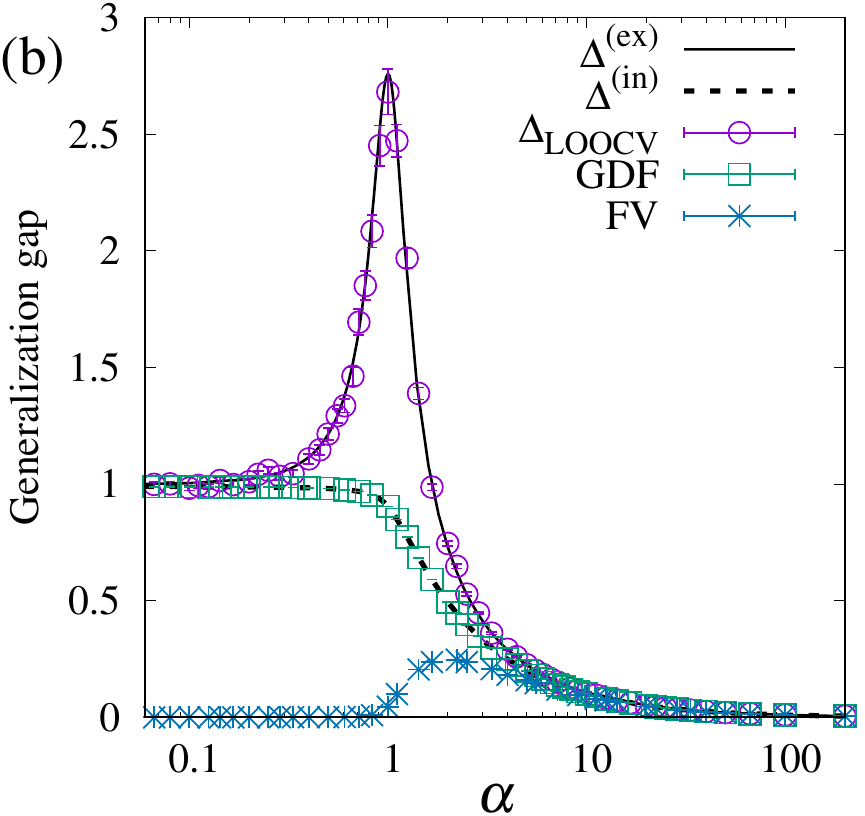}
\end{minipage}
\caption{Generalization gaps for ridge regression at $M = 200$
for (a) $\lambda = 1,\sigma = 1$
and (b) $\lambda = 0.01,\sigma = 1$.
$\Delta_{\mathrm{LOOCV}}$, GDF, and FV calculated by GAMP are denoted by
$\circ$, $\square$, and $*$, respectively.
The value of each point is averaged over 100-times numerical simulations, and the bar on the point indicates standard error.
The solid and dashed lines denote the analytical results obtained using the replica method
for the generalization gap of the extra-sample prediction error $\Delta^{(\mathrm{ex})}$ and the
in-sample prediction error $\Delta^{(\mathrm{in})}$, respectively.}
\label{fig:Ridge_gap}
\end{figure}

There is no need to introduce message passing to obtain the estimate
$\widehat{\bm{x}}({\cal D})$ for the ridge regression (\ref{eq:ridge_def})
because the ridge regression can be solved analytically.
We apply our method for evaluating the FV and LOOCV error 
to ridge regression to clarify its adequacy.
In the numerical simulation for the ridge regression,
we independently generate the components of $\bm{x}^{(0)}$ according 
to the Gaussian distribution with mean 0 and variance 1,
and we generate the data as $y_\mu\sim{\cal N}(\theta_\mu^{(0)},\sigma^2)$.
Figs.\ref{fig:Ridge_pre} and \ref{fig:Ridge_gap}
show prediction errors and generalization gaps 
for the ridge regression at $M = 200$,
where $N$ runs $[1,3000]$; therefore, the region $\alpha\in[0.667,200]$ is shown.
Parameters for Figs.\ref{fig:Ridge_pre} and \ref{fig:Ridge_gap}
are set as (a) $\lambda=1$ and $\sigma=1$,
and (b) $\lambda=0.01$ and $\sigma=1$, respectively.
The LOOCV error, $\mathrm{err}_{\mathrm{LOOCV}}$,
and corresponding generalization gap $\Delta_{\mathrm{LOOCV}}$ are
computed by (\ref{eq:LOOCV_linear}),
and SURE, WAIC, and their corresponding generalization gaps,
namely GDF and FV, are 
calculated by GAMP.
The value of each point is averaged over 100 realization of $\{\bm{y},\bm{F},\bm{x}^{(0)}\}$.
Theoretical results obtained using the replica method (see sec. \ref{sec:replica})
for the extra-sample and 
in-sample prediction errors are shown by solid and dashed lines, respectively.
As shown in the figure, the GDF by GAMP
works well for all region of $\alpha$ as an estimator of the in-sample prediction error.
When the value of $\alpha$ is sufficiently large, 
all prediction errors and all generalization gaps coincide with each other; this means all 
estimators can work as
appropriate estimates of the generalization gap of the extra-sample prediction error.
However, for small $\alpha$, the differences between the 
extra-sample prediction error
and their estimates increase, and the cusp around 
$\alpha=1$ for a sufficiently small $\lambda$,
which is known as the double descent phenomena \cite{surprise}, 
cannot be described by 
WAIC and SURE, as shown in Fig.\ref{fig:Ridge_pre} (b) and Fig.\ref{fig:Ridge_gap} (b).

\subsubsection{Linear regression with elastic net penalty 
under Gaussian noise}

\begin{figure}
\begin{minipage}{0.495\hsize}
\centering
\includegraphics[width=2.5in]{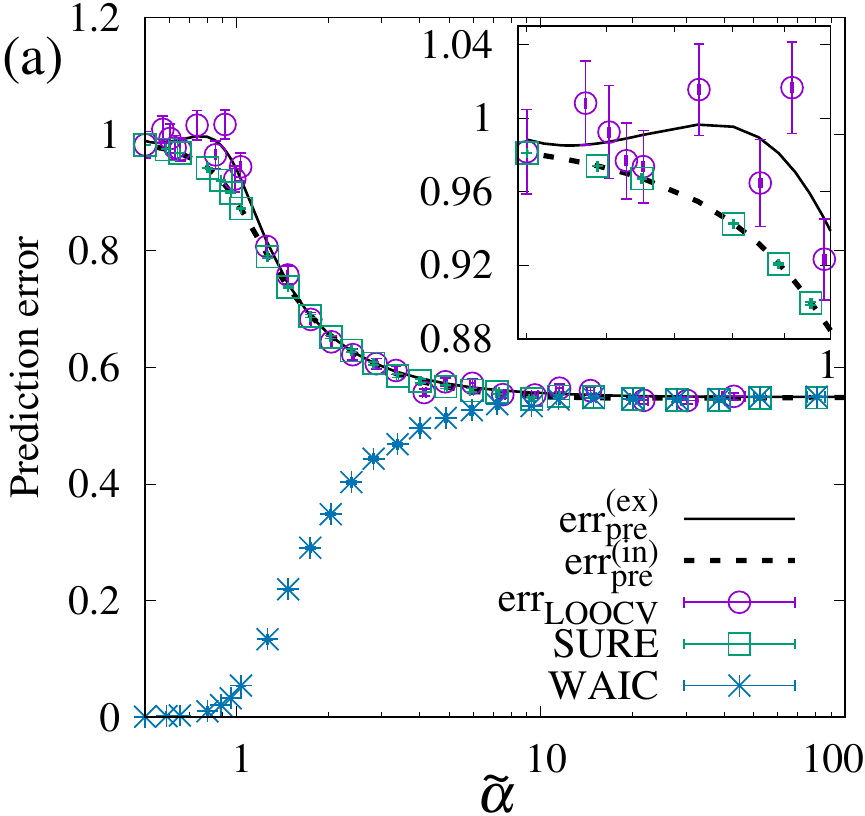}
\end{minipage}
\begin{minipage}{0.495\hsize}
\includegraphics[width=2.5in]{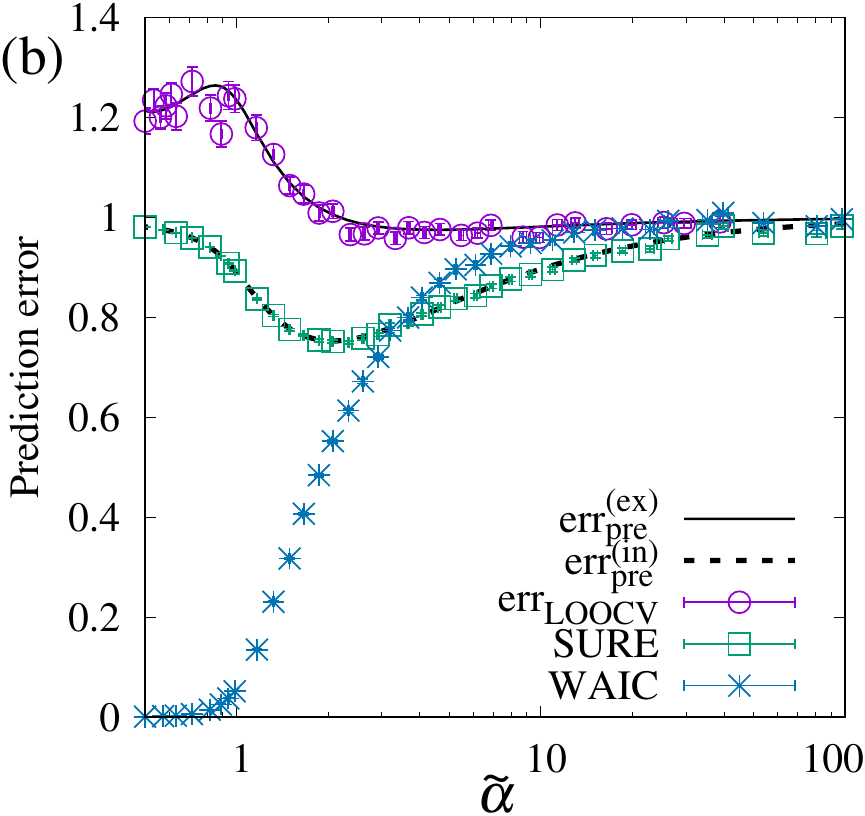}
\end{minipage}
\caption{Prediction errors for linear regression with elastic net penalty
at $M = 200$, $N = 400$, $\lambda_2 = 0.01$, and $\sigma = 1$
for (a) $\rho = 0.1$ and (b) $\rho = 1$;
$\lambda_1$ is controlled to set 
$\widetilde{\alpha}=M\slash ||\widehat{\bm{x}}||_0$ at each point.
The LOOCV error by the CD algorithm,
SURE, and WAIC calculated by GAMP 
are denoted by $\circ$, $\square$, and $*$, respectively.
The value of each point is averaged over 100-times numerical simulations, and 
in particular the bar on the point indicates the standard error.
The solid and dashed lines represent the analytical results using the replica method
for the extra-sample and in-sample prediction errors, respectively.}, 
\label{fig:EN_pre}
\begin{minipage}{0.495\hsize}
\centering
\includegraphics[width=2.5in]{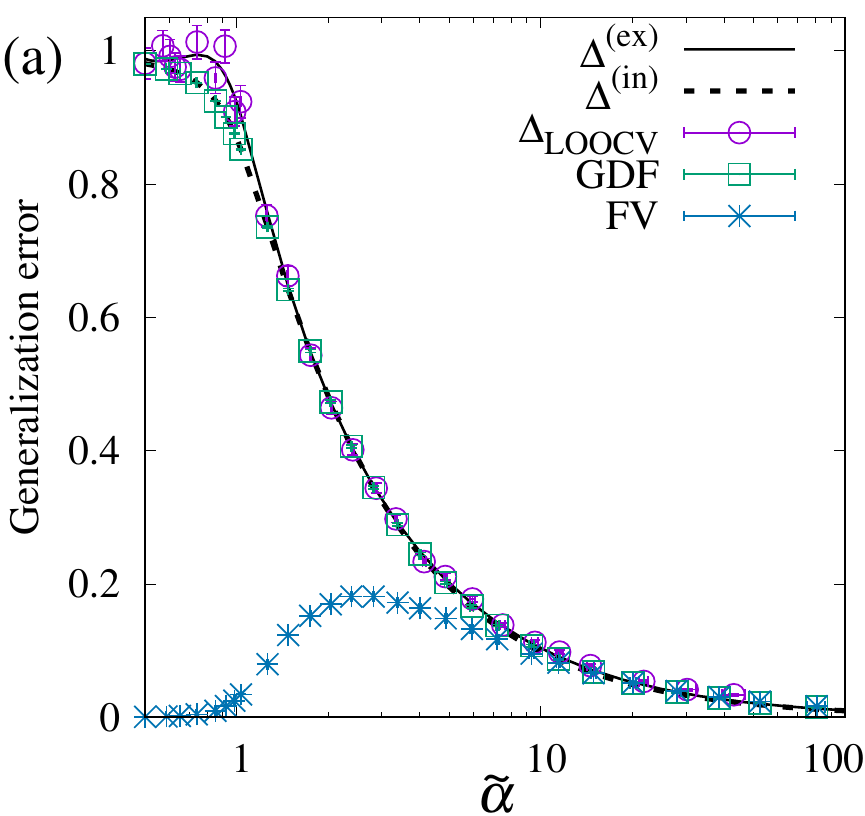}
\end{minipage}
\begin{minipage}{0.495\hsize}
\includegraphics[width=2.5in]{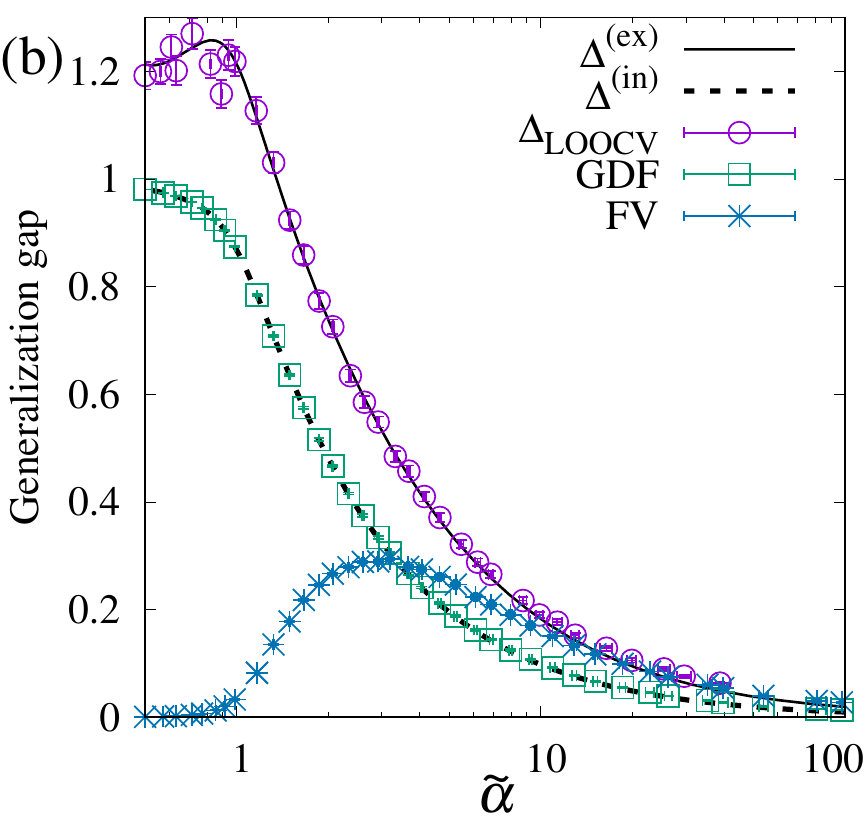}
\end{minipage}
\caption{Generalization gaps for the linear regression with elastic net penalty
at $M = 200$, $N = 400$, $\lambda_2 = 0.01$ and $\sigma = 1$
for (a) $\rho = 0.1$ and (b) $\rho = 1$,
where $\lambda_1$ is controlled to set 
$\widetilde{\alpha}=M\slash ||\widehat{\bm{x}}||_0$ at each point.
$\Delta_{\mathrm{LOOCV}}$ by CD algorithm,
GDF, and FV calculated by GAMP 
are denoted by $\circ$, $\square$, and $*$, respectively.
The value of each point is averaged over a 100-times numerical simulation, and the bar on the point indicates the standard error.
The solid and dashed lines denote the analytical results obtained using the replica method
for the generalization gap of 
the extra-sample and in-sample prediction errors, respectively.}
\label{fig:EN_gap}
\end{figure}

We show the results of GAMP for the linear regression with elastic net penalty
as an example where 
the estimate cannot be described by the hat matrix.
The estimates have zero components induced by the 
regularization parameter $\lambda_1$ of (\ref{eq:def_EN_penalty}).
Hence, 
we prepare sparse coefficient $\bm{x}^{(0)}$ in the numerical simulation.
We identically and independently generate the components in
$\bm{x}^{(0)}\in\mathbb{R}^N$ according to
the Gaussian distribution with mean zero and variance 1, and 
then, we choose $N(1-\rho)$-components and set them to zero, where
$\rho\in[0,1]$.
Using this sparse coefficient $\bm{x}^{(0)}$, 
we set $\bm{\theta}^{(0)}=\frac{1}{\sqrt{N}}\bm{F}\bm{x}^{(0)}$, and 
the components of data $\bm{y}$
are generated as $y_\mu\sim{\cal N}(\theta_\mu^{(0)},\sigma^2)$.
When $\lambda_1>0$, the estimate $\widehat{\bm{x}}({\cal D})$ has zero components,
hence the number of the parameter is given by $||\bm{x}||_0$,
where $||\cdot||_0$ is the $\ell_0$ norm.
Therefore, we set $\tilde{\alpha}=M\slash ||\bm{x}||_0$
distinguishing from $\alpha=M\slash N$, and the 
parameter regime where 
the number of the model parameters is larger than data dimension
%overparametrized region 
corresponds to $\tilde{\alpha}>1$.

For the elastic net regularization, 
we cannot derive the exact form of the estimate
as with the ridge regression.
Therefore, we compare our result with the LOOCV error calculated 
by the coordinate descent (CD) algorithm \cite{sparse_book} considered to lead to reasonable solutions.
For the details of the CD algorithm, see \ref{sec:app_CD}.

Figs. \ref{fig:EN_pre} and \ref{fig:EN_gap} show 
the prediction errors and corresponding generalization gaps 
at $M = 200$, $N = 400$, $\sigma = 1$, and $\lambda_2 = 0.01$, respectively,
for linear regression with 
the elastic net regularization under Gaussian noise.
In the figs., the regularization parameter $\lambda_1$
runs from $[0,2]$ for (a) $\rho = 0.1$ and $[0,3]$ for (b) $\rho = 1$.
The solid and dashed lines represent the analytical result obtained using the replica method for the extra-sample and in-sample prediction errors, respectively.
As with the ridge regression, the FV tends to underestimate the prediction error and the
generalization gap for the small $\widetilde{\alpha}$ region.

\subsection{From FV to the LOOCV error}

We discuss the discrepancy between FV and $\Delta_{\mathrm{LOOCV}}$.
%how approach to CV error from FV.
Here, we start with FV and not GDF
because 
FV corresponds to the first-order approximation
of the generalization gap of LOOCV error,
%with respect to the difference between the estimate under the full sample and 
%that under the LOO sample.
as WIC is equivalent to TIC.
We approach LOOCV error by removing assumptions imposed in the 
derivation of TIC, one by one.

\subsubsection{Differences in metrics}

In deriving TIC, we imposed assumption {\bf A3},
%that ${\cal I}({\cal D}_{\setminus\mu})={\cal I}({\cal D})$ holds for any $\mu\in\bm{M}$,
which corresponds to the ignorance of the difference between
$a^\prime(\bm{\theta}(\bm{F},\bm{x}({\cal D}_{\setminus\mu})))$ and 
$a^\prime(\bm{\theta}(\bm{F},\bm{x}({\cal D})))$
in the case of GLM.
This assumption is supposed to be inadequate in the region
where $N$ is sufficiently large because differences between the 
estimates under the full sample and that under the LOO sample are accumulated 
by $N$-components.
We consider the correction of the 
first-order approximation of the LOOCV error 
in the framework of GAMP by
removing assumption {\bf A3}.
We denote FV without assumption {\bf A3} as 
cFV(corrected FV), which is defined by 
\begin{eqnarray}
\mathrm{cFV}({\cal D})=\frac{1}{M}\mathrm{Tr}\left({\cal J}^{\varphi}(\widehat{\bm{x}}({\cal D}))^{-1}{\cal I}^{\setminus\cdot}
(\{\widehat{\bm{x}}({\cal D}_{\setminus\mu})\})\right).
\end{eqnarray}
Using the expression by GAMP, we obtain
\begin{eqnarray}
{\cal I}_{ij}^{\setminus\cdot}(\{\widehat{\bm{x}}({\cal D}_{\setminus\mu})\})&=\frac{1}{MN}\sum_{\mu=1}^MF_{\mu i}F_{\mu j}
(y_\mu-a^\prime(\widehat{\theta}_\mu^{\setminus\mu}))^2,
\end{eqnarray}
and hence $\mathrm{cFV}$ is given by
\begin{eqnarray}
%\nonumber
\mathrm{cFV}^{(\mathrm{GAMP})}&=%\frac{1}{M}\sum_{i,j}\frac{1}{N}\sum_{\mu=1}^M
%F_{\mu i}F_{\mu j}(y_\mu-a^{\prime}(\widehat{\theta}_\mu^{\setminus\mu}))^2
%\beta\left(\langle x_ix_j\rangle_{\beta}-\langle x_i\rangle_{\beta}\langle x_j\rangle_{\beta}\right)\\
%\nonumber
%&=\frac{1}{M}\sum_{\mu=1}^M
%(y_\mu-a^{\prime}(\widehat{\theta}_\mu^{\setminus\mu}))^2
%\beta\left(\langle \theta_\mu^2\rangle_{\beta}-\langle \theta_\mu\rangle_{\beta}^2\right)\\
&=\frac{1}{M}\sum_{\mu=1}^M(y_\mu-a^{\prime}(\widehat{\theta}_\mu^{\setminus\mu}))^2{\chi_\theta}_\mu.
%\frac{{s_\theta}_\mu}{1+{s_\theta}_\mu a^{\prime\prime}(\widehat{\theta}_\mu)},
\end{eqnarray}
From (\ref{eq:theta_mu}), we know the relationship between 
$\widehat{\theta}_\mu$ and $\widehat{\theta}^{\setminus\mu}_\mu$.
% as
%\begin{eqnarray}
%\widehat{\theta}_\mu^{\setminus\mu}=\widehat{\theta}_\mu-{s_\theta}_\mu(y_\mu-a^{\prime}(\widehat{\theta}_\mu)),
%\label{eq:I_correct}
%\end{eqnarray}
%and the value of $a^{\prime}(\widehat{\theta}_\mu^{\setminus\mu})$
%is obtained from the relationship (\ref{eq:I_correct}).
%where we used the relationships
%(\ref{eq:J_vs_variance}) and (\ref{eq:GLM_theta_variance}).

%\begin{description}
%
%\item[Gaussian likelihood case:]
%We can obtain following expression
%for the Gaussian likelihood
%\begin{eqnarray}
%y_\mu-a^\prime(\widehat{\theta}_\mu^{\setminus\mu})&=(y_\mu-
%a^\prime(\widehat{\theta}_\mu))
%(1+{s_\theta}_\mu),
%\end{eqnarray}
%hence we obtain
%\begin{eqnarray}
%\mathrm{cFV}=\frac{1}{M}\sum_{\mu=1}^M(g_{\mathrm{out}})^2_\mu{s_\theta}_\mu(1+{s_\theta}_\mu),
%\end{eqnarray}
%\end{description}

\begin{figure}
\begin{minipage}{0.495\hsize}
\centering
\includegraphics[width=2.5in]{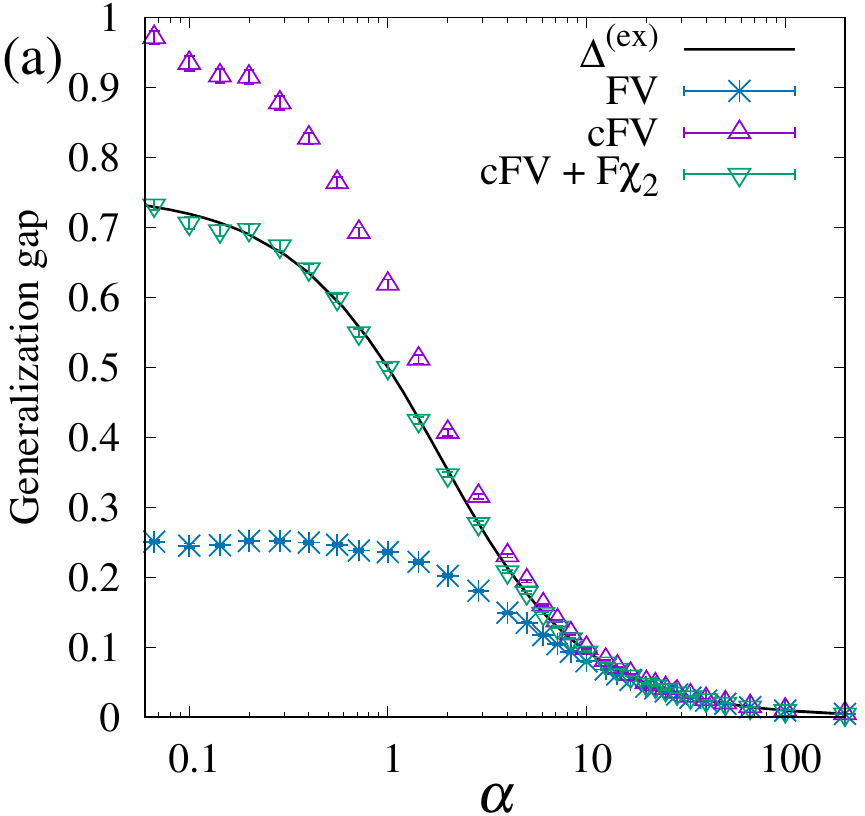}
\end{minipage}
\begin{minipage}{0.495\hsize}
\includegraphics[width=2.5in]{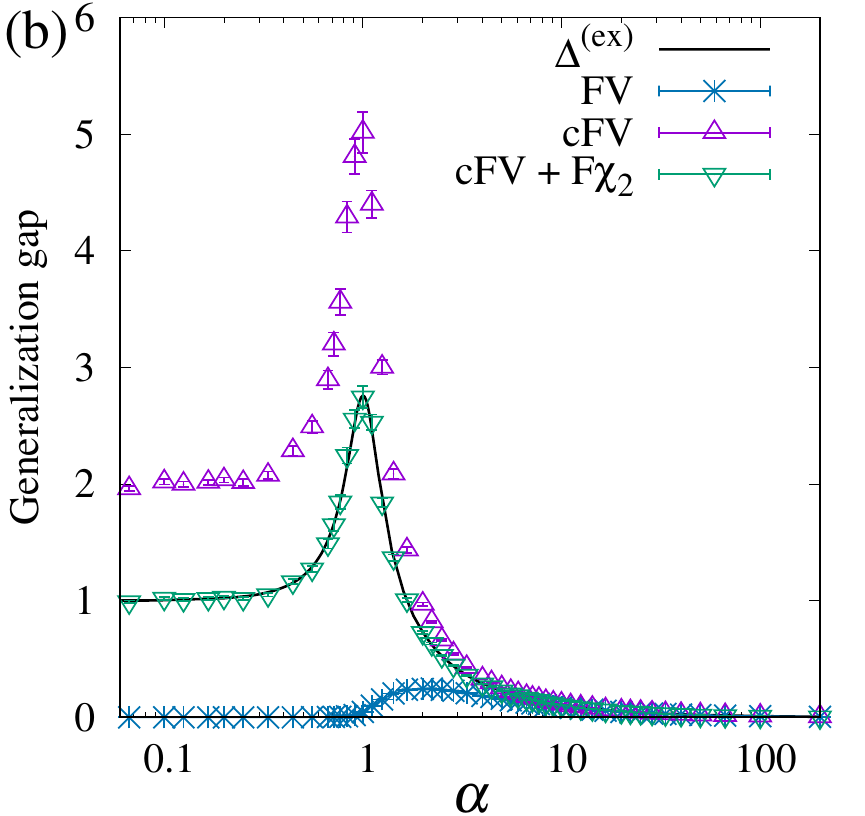}
\end{minipage}
\caption{Generalization gaps for ridge regression 
at $M = 200$ and $\sigma = 1$ for (a) $\lambda = 1$ and (b) $\sigma = 0.01$.
FV ($*$), cFV ($\triangle$), and cFV+F$\chi_2$ ($\triangledown$) by GAMP and $\Delta^{(\mathrm{ex})}$ obtained using the replica method.
The values of the each points are averaged over 100 data samples ${\cal D}$,
and the bars on the points represent standard error.
}
\label{fig:Ridge_correct}
\end{figure}
%%%
\begin{figure}
\begin{minipage}{0.495\hsize}
\centering
\includegraphics[width=2.5in]{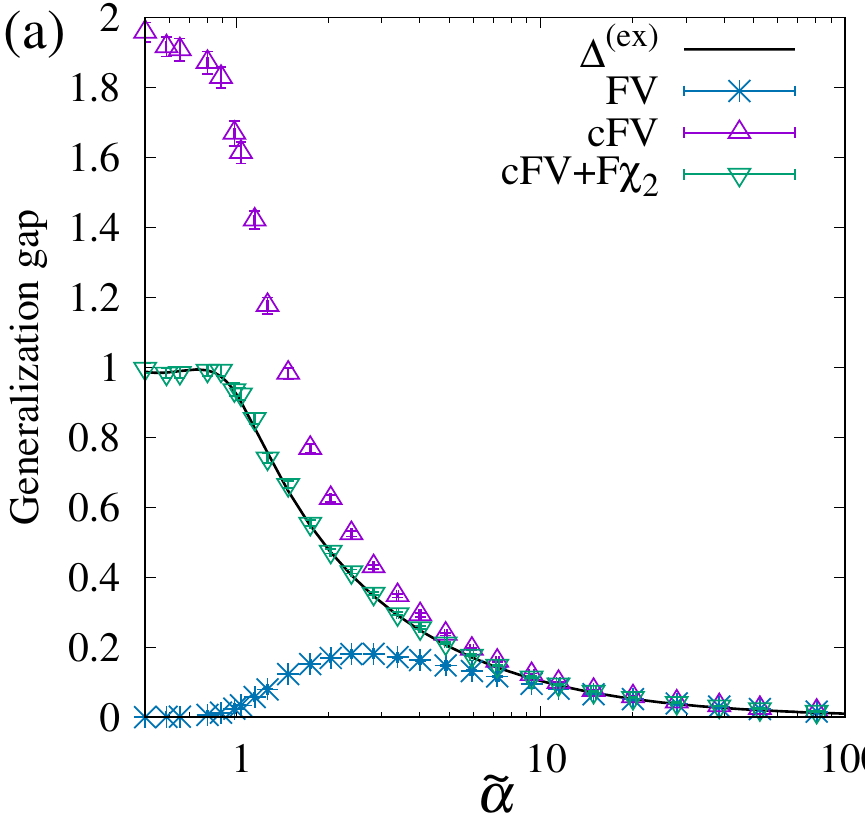}
\end{minipage}
\begin{minipage}{0.495\hsize}
\includegraphics[width=2.5in]{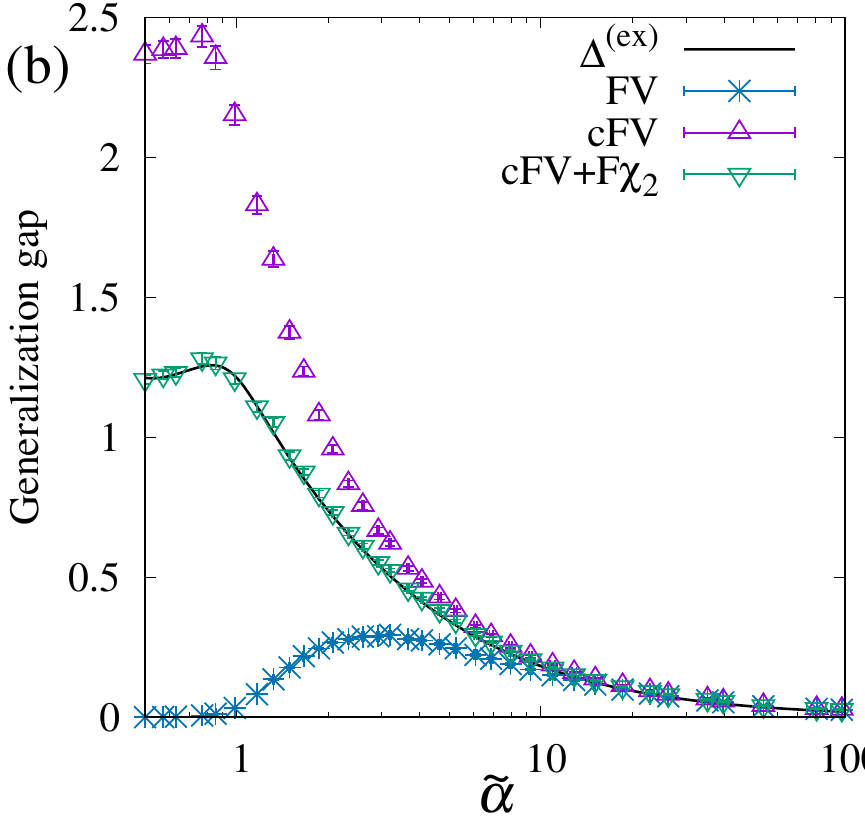}
\end{minipage}
\caption{Generalization gaps for the linear regression with elastic net penalty
at $M = 200$, $N = 400$, $\sigma = 1$, and $\lambda_2 = 0.01$ for (a) $\rho = 0.1$
and (b) $\rho = 1$.
The values of $\lambda_1$ are controlled to satisfy $\widetilde{\alpha}=M\slash 
||\widehat{\bm{x}}||_0$,
and FV ($*$), cFV ($\triangle$), and cFV+F$\chi_2$ ($\triangledown$) by GAMP and $\Delta^{(\mathrm{ex})}$ by replica method are also shown.
The values of each points are averaged over 100 data samples ${\cal D}$,
and the bars on the points denote the standard error.
}
\label{fig:EN_correct}
\end{figure}

In Figs. \ref{fig:Ridge_correct} and \ref{fig:EN_correct},
the behavior of the cFV is shown for ridge regression and linear regression with
the elastic net penalty, respectively.
The setting of the numerical simulation for Figs. \ref{fig:Ridge_correct} and 
\ref{fig:EN_correct} are the same as that explained in the previous subsection.
cFV overestimates $\Delta_{\mathrm{LOOCV}}$ for a sufficiently small $\alpha$; however, 
the cFV qualitatively approaches the behavior of $\Delta_{\mathrm{LOOCV}}$; i.e. 
cFV has a cusp around $\alpha=1$ in Fig.\ref{fig:Ridge_correct}(b),
and the minimum at $\alpha<1$ in \Fref{fig:EN_correct}(b), 
as with $\Delta^{(\mathrm{ex})}$.

\subsubsection{Higher order expansion of cross validation error}

We relax assumption {\bf A1} for further correction,
and consider higher-order terms 
with respect to $\widehat{x}_i({\cal D}_{\setminus\mu})-\widehat{x}_i({\cal D})$.
When we remove the assumption {\bf A1},
the expression (\ref{eq:diff_cavity_final}) 
and (\ref{eq:lnf_expand}) needs to be corrected.
For example, when we
consider the second order of $\widehat{x}_i({\cal D}_{\setminus\mu})-\widehat{x}_i({\cal D})$,
its value 
should be determined to satisfy 
\begin{eqnarray}
\nonumber
&\frac{1}{M}\sum_{k=1}^N\frac{\partial^2}{\partial x_k\partial x_j}\varphi({\cal D},\bm{x})\Big|_{\bm{x}=\widehat{\bm{x}}({\cal D}_{\setminus\nu})}
(\widehat{x}_k({\cal D}_{\setminus\nu})-\widehat{x}_k({\cal D}))\\
\nonumber
&+\frac{1}{2M}\sum_{k\ell}\frac{\partial^3}{\partial x_\ell\partial x_k\partial x_j}
\varphi({\cal D},\bm{x})\Big|_{\bm{x}=\widehat{\bm{x}}({\cal D}_{\setminus\nu})}
(\widehat{x}_k({\cal D}_{\setminus\nu})-\widehat{x}_k({\cal D}))
(\widehat{x}_\ell({\cal D}_{\setminus\nu})-\widehat{x}_\ell({\cal D}))\\
&=\frac{1}{M}\frac{\partial}{\partial x_j}\ln f(y_\nu|\theta_\nu(\mathrm{f}_\nu,\bm{x})),
\label{eq:cavity_expansion_2nd}
\end{eqnarray}
as shown in \ref{sec:app_expansion_JtoK},
and the value differs from that given by (\ref{eq:diff_cavity_final}) in general.

For simplicity,
we impose an assumption in the penalty term as
\begin{description}
\item[A4] Higher-order derivative of the penalty term 
$h(x)$ than the second order is zero for any $x$.
\end{description}
Penalties considered in this paper, i.e. elastic net penalty, 
satisfies the 
condition {\bf A4} for any regularization parameter $\lambda_1$ and $\lambda_2$.
Higher order derivatives of
$\varphi$ than the second order is ignorable when we focus on the Gaussian likelihood with penalty satisfying {\bf A4},
and hence, 
(\ref{eq:cavity_expansion_2nd}) is reduced to  (\ref{eq:diff_cavity_final}).
Therefore, when removing the assumption {\bf A1} under the assumpgion {\bf A4},
the correction of (\ref{eq:diff_cavity_final}) does not need to be considered, 
and only that of (\ref{eq:lnf_expand}) needs to be considered.
By continuing the expansion of (\ref{eq:lnf_expand}) up to the second order, and 
substituting the expression (\ref{eq:diff_cavity_final}),
we obtain the second order term of the generalization gap denoted by $\mathrm{F}\chi_2$ as
\begin{eqnarray}
%\nonumber
\mathrm{F}\chi_2&%=-\frac{1}{2NM}\sum_{\mu=1}^M\sum_{i,j}\Big\{F_{\mu i}F_{\mu j}a^{\prime\prime}(\widehat{\theta}_\mu^{\setminus\mu})\\
%\nonumber
%&\hspace{1.5cm}\times\frac{1}{M}\sum_{k=1}^N
%({\cal J}^{-1})_{ik}\frac{\partial}{\partial x_k}\ln f(y_\mu|\bm{\theta}(\bm{x}))\Big|_{\bm{x}=\widehat{\bm{x}}({\cal D}_{\setminus\mu})}\\
%\nonumber
%&\hspace{1.5cm}\times\frac{1}{M}\sum_{\ell=1}^N
%({\cal J}^{-1})_{j\ell}\frac{\partial}{\partial x_\ell}\ln f(y_\mu|\bm{\theta}(\bm{x}))\Big|_{\bm{x}=\widehat{\bm{x}}({\cal D}_{\setminus\mu})}\Big\}\\
%\nonumber
%&=-\frac{1}{2NM}\sum_{\mu=1}^M\sum_{i,j}\Big\{F_{\mu i}F_{\mu j}a^{\prime\prime}(\widehat{\theta}_\mu^{\setminus\mu})\left(y_\mu-a^\prime(\widehat{\theta}_\mu^{\setminus\mu})\right)^2\\
%\nonumber
%&\hspace{1.5cm}\times\frac{1}{M}\sum_{k=1}^N \frac{F_{\mu k}}{\sqrt{N}}
%({\cal J}^{-1})_{ik}\times\frac{1}{M}\sum_{\ell=1}^N \frac{F_{\mu \ell}}{\sqrt{N}}
%({\cal J}^{-1})_{j\ell}\Big\}\\
&=-\frac{1}{2M}\sum_{\mu=1}^Ma^{\prime\prime}(\widehat{\theta}_\mu^{\setminus\mu})\left(y_\mu-a^\prime(\widehat{\theta}_\mu^{\setminus\mu})\right)^2{\chi_\theta}_\mu^2,
\end{eqnarray}
where we used the relationship 
\begin{eqnarray}
\beta\left(\langle x_ix_j\rangle_{\beta}-\langle x_i\rangle_{\beta}\langle x_j\rangle_{\beta}\right)=\frac{1}{M}\left({{\cal J}^{\varphi}}^{-1}\right)_{ij},
\label{eq:J_vs_variance}
\end{eqnarray}
(\ref{eq:chi_theta_def}),
and (\ref{eq:GLM_theta_variance}).
In the case of Gaussian likelihood with penalties satisfying {\bf A4},
we obtain the GAMP-based estimator of the generalization gap for $\mathrm{err}_{\mathrm{LOOCV}}$ as
\begin{eqnarray}
\widehat{\Delta}_{\mathrm{LOOCV}}=\mathrm{cFV}+\mathrm{F}\chi_2.
\label{eq:gap_CV_Gauss}
\end{eqnarray}
%hence 
%\begin{eqnarray}
%\nonumber
%\frac{1}{MN}\sum_{ik}F_{\mu i}({\cal J}^{-1})_{ik}F_{\mu k}
%&=\frac{\beta}{N}\sum_{ik}F_{\mu i}F_{\mu k}\left(\langle x_ix_k\rangle_{\beta}-
%\langle x_i\rangle_{\beta}\langle x_k\rangle_{\beta}\right)\\
%&={\chi_{\theta}}_\mu
%\end{eqnarray}
Similarly, the $k$-th order term denoted by $\mathrm{F}\chi_k$ 
under the assumptions {\bf A2} and {\bf A4}
is given by
\begin{eqnarray}
\mathrm{F}\chi_k
%&=\frac{1}{3!MN\sqrt{N}}\sum_{\mu=1}^M\sum_{ijk}\Big\{F_{\mu i}F_{\mu j}F_{\mu k}a^{\prime\prime\prime}(\widehat{\theta}_\mu^{\setminus\mu})\\
%&\hspace{1.5cm}\times\frac{1}{M}\sum_{\ell=1}^N
%({\cal J}^{-1})_{i\ell}\frac{\partial}{\partial x_\ell}\ln f(y_\mu|\bm{\theta}(\bm{x}))\Big|_{\bm{x}=\widehat{\bm{x}}({\cal D}_{\setminus\mu})}\\
%&\hspace{1.5cm}\times\frac{1}{M}\sum_{m=1}^N
%({\cal J}^{-1})_{jm}\frac{\partial}{\partial x_m}\ln f(y_\mu|\bm{\theta}(\bm{x}))\Big|_{\bm{x}%=\widehat{\bm{x}}({\cal D}_{\setminus\mu})}\\
%&\hspace{1.5cm}\times\frac{1}{M}\sum_{n=1}^N
%({\cal J}^{-1})_{kn}\frac{\partial}{\partial x_n}\ln f(y_\mu|\bm{\theta}(\bm{x}))\Big|_{\bm{x}=\widehat{\bm{x}}({\cal D}_{\setminus\mu})}\Big\}\\
%&=\frac{1}{3!MN\sqrt{N}}\sum_{\mu=1}^M\sum_{ijk}\Big\{F_{\mu i}F_{\mu j}F_{\mu k}a^{\prime\prime\prime}(\widehat{\theta}_\mu^{\setminus\mu})\left(y_\mu-a^\prime(\widehat{\theta}_\mu^{\setminus\mu})\right)^3\\
%&\hspace{1.5cm}\times\frac{1}{M}\sum_{\ell=1}^N \frac{F_{\mu \ell}}{\sqrt{N}}
%({\cal J}^{-1})_{i\ell}\times\frac{1}{M}\sum_{m=1}^N \frac{F_{\mu m}}{\sqrt{N}}
%({\cal J}^{-1})_{jm}\times\frac{1}{M}\sum_{n=1}^N \frac{F_{\mu n}}{\sqrt{N}}
%({\cal J}^{-1})_{kn}\Big\}\\
&=-\frac{1}{k!M}\sum_{\mu=1}^Ma^{\prime(k)}(\widehat{\theta}_\mu^{\setminus\mu})\left(y_\mu-a^\prime(\widehat{\theta}_\mu^{\setminus\mu})\right)^k{\chi_\theta}_\mu^k,
\label{eq:gap_CV_Gauss_high}
\end{eqnarray}
where $a^{\prime(k)}(\widehat{\theta}_\mu^{\setminus\mu})$
denotes the $k$-th order derivative.
For the Gaussian likelihood, $\mathrm{F}\chi_k$ for $k\geq 3$ are zero
as $a^{\prime(k)}(\theta)$ for $k\geq 3$ are zero for any $\theta$.
%where
%\begin{eqnarray}
%\mathrm{F}\chi_2=-\frac{1}{2M}\sum_{\mu=1}^M(\check{g}_{\mathrm{out}})_\mu^2
%{s_\theta}_\mu^2
%\end{eqnarray}
%where the relationship (\ref{eq:g_out_cavity_gauss}) is used.

The behaviour of $\widehat{\Delta}_{\mathrm{LOOCV}}$ (\ref{eq:gap_CV_Gauss})
by GAMP
is shown in Figs.\ref{fig:Ridge_correct} and \ref{fig:EN_correct}
for the ridge regression and linear regression with elastic net penalty,
respectively.
As the second-order term is negative, $\widehat{\Delta}_{\mathrm{LOOCV}}$
decreases from cFV and well matches with the generalization gap 
for the extra-sample prediction error.

In the discussion so far, we did not mention assumption {\bf A2}.
For Gaussian likelihood, ${\cal J}(\bm{x})$ do not depend on the 
argument, and therefore, we do not need to consider the adequacy of assumption {\bf A2}.

\subsubsection{Correction of the Hessian}

We consider the removal of assumption {\bf A1} and {\bf A2}
for general likelihoods of GLM.
For the case, correction of (\ref{eq:diff_cavity_final})
and (\ref{eq:gap_CV_Gauss_high}) are required.
%and we need to consider the correction of 
%because of the removal of assumption {\bf A2},
%and higher order terms in (\ref{eq:lnf_expand}) than the third order.
As shown in \ref{sec:app_expansion_JtoK},
(\ref{eq:diff_cavity_final}) is corrected by removing assumptions {\bf A1} and  
{\bf A2} as follows:
\begin{eqnarray}
\bm{x}({\cal D}_{\setminus\nu})-\bm{x}({\cal D})
=({\cal K}^{\setminus\nu})^{-1}\frac{1}{M}\frac{\partial}{\partial\bm{x}}\ln f(y_\nu|\theta_\nu(\mathbf{f}_\nu,\bm{x}))\Big|_{\bm{x}=\widehat{\bm{x}}({\cal D}_{\setminus\nu})},
\end{eqnarray}
where ${\cal K}^{\setminus\mu}$ for $\mu\in\bm{M}$
is defined by
\begin{eqnarray}
{\cal K}^{\setminus\nu}=-{\cal J}+\frac{d_\nu\mathbf{f}_\nu^{\top}\mathbf{f}_\nu}{NM}
\end{eqnarray}
with
\begin{eqnarray}
d_\nu=\frac{a^{\prime}(\widehat{\theta}_\nu)-a^{\prime}(\widehat{\theta}_\nu^{\setminus\nu})}{\widehat{\theta}_{\nu}-\widehat{\theta}_{\nu}^{\setminus\nu}}
-a^{\prime\prime}(\widehat{\theta}_\nu).
\end{eqnarray}
When the difference between $\widehat{\theta}_{\nu}$
and $\widehat{\theta}_{\nu}^{\setminus\nu}$ is sufficiently small,
$d_\nu$ corresponds to $a^{\prime\prime}(\widehat{\theta}_\nu^{\setminus\nu})-a^{\prime\prime}(\widehat{\theta}_\nu)$.
Then, for Gaussian likelihood
where $a^{\prime\prime}(\theta)=1$ for any $\theta$,
${\cal K}^{\setminus\mu}={\cal J}$ holds for any 
$\mu\in\bm{M}$.
Using the matrix ${\cal K}^{\setminus\mu}$,
the $k$-th order term of the generalization gap is corrected as
\begin{eqnarray}
\mathrm{F}\chi_k=-\frac{1}{k!M}\sum_{\mu=1}^M
a^{\prime(k)}(\widehat{\theta}_\mu^{\setminus\mu})
\left(y_\mu-a^\prime(\widehat{\theta}_\mu^{\setminus\mu})\right)^k
\left({\chi_\theta}_\mu^{\setminus\mu}\right)^k,
\end{eqnarray}
where we define
\begin{eqnarray}
{\chi_\theta}_\mu^{\setminus\mu}=\frac{\beta}{N}\sum_{i,j}F_{\mu i}F_{\mu j}({{\cal K}^{\setminus\mu}}^{-1})_{ij}.
\end{eqnarray}
%and we obtain the following expression;
%\begin{eqnarray}
%\Delta_{\mathrm{LOOCV}}=\mathrm{cFV}+\sum_{k=2}^\infty \mathrm{F}\chi_k.
%\end{eqnarray}
As explained in \ref{sec:app_GLM_cumulant},
the $k$-th derivative of $a(\theta)$ corresponds to the $k$-th cumulant of the
distribution $f(y|\theta)$ in case of the exponential family distribution.
We denote the $k$-th cumulant of the distribution $f(\cdot|\theta)$ as $\kappa_k[f(\cdot|\theta)]$, and obtain the following expression:
\begin{eqnarray}
\nonumber
%\mathrm{F}\chi\mathrm{s}\equiv
\sum_{k=2}^\infty \mathrm{F}\chi_k
&=-\frac{1}{M}\sum_{\mu=1}^M\sum_{k=2}^\infty\frac{1}{k!}\kappa_k[f(\cdot|\widehat{\theta}_\mu^{\setminus\mu})]\left\{{\chi_\theta}_\mu^{\setminus\mu}\left(y_\mu-a^\prime(\widehat{\theta}_\mu^{\setminus\mu})\right)\right\}^k\\
\nonumber
&=-\frac{1}{M}\sum_{\mu=1}^M{\cal C}_{f(\cdot|\widehat{\theta}_\mu^{\setminus\mu})}\left({\chi_\theta}_\mu^{\setminus\mu}(y_\mu-a^\prime(\widehat{\theta}_\mu^{\setminus\mu}))\right)\\
&\hspace{1.5cm}+\frac{1}{M}\sum_{\mu=1}^Ma^\prime(\widehat{\theta}_\mu^{\setminus\mu})
{\chi_\theta}_\mu^{\setminus\mu}\left(y_\mu-a^\prime(\widehat{\theta}_\mu^{\setminus\mu})\right),
\end{eqnarray}
where 
${\cal C}_{f(\cdot|\theta)}(t)$ is the cumulant generating function;
\begin{eqnarray}
{\cal C}_{f(\cdot|\theta)}(t)&=\ln\int dy \exp(ty)f(y|\theta)
=a(\theta+t)-a(\theta).
\end{eqnarray}
In summarize, we obtain the following form of the 
estimator of the generalization gap for LOOCV error:
\begin{eqnarray}
\widehat{\Delta}_{\mathrm{LOOCV}}=\mathrm{F}\chi \mathrm{s},
%\mathrm{cFV}+\sum_{k=2}^\infty\mathrm{F}\chi_k,
\label{eq:gap_final}
\end{eqnarray}
where
\begin{eqnarray}
\nonumber
\mathrm{F}\chi\mathrm{s}&=\mathrm{cFV}
%\sum_{k=2}^\infty \mathrm{F}\chi_k
%\nonumber
%&=
-\frac{1}{M}\sum_{\mu=1}^M\Big\{a\left(\widehat{\theta}_\mu^{\setminus\mu}+{\chi_\theta}_\mu^{\setminus\mu}(y_\mu-a^\prime(\widehat{\theta}_\mu^{\setminus\mu}))\right)\\
&\hspace{3.0cm}-a(\widehat{\theta}_\mu^{\setminus\mu})-a^\prime(\widehat{\theta}_\mu^{\setminus\mu})
{\chi_\theta}_\mu^{\setminus\mu}\left(y_\mu-a^\prime(\widehat{\theta}_\mu^{\setminus\mu})\right)\Big\}.
\label{eq:Fchi_all_final}
\end{eqnarray}
%hence
%\begin{eqnarray}
%{\cal K}_{f(\cdot|\theta_\mu^{\setminus\mu})}\left(\frac{{s_\theta}_\mu\left(y_\mu-a^\prime(\widehat{\theta}_\mu^{\setminus\mu})\right)}{1+{s_\theta}_\mu a^{\prime\prime}(\widehat{\theta}_\mu)}\right)=a\left(\widehat{\theta}_\mu^{\setminus\mu}+\frac{{s_\theta}_\mu\left(y_\mu-a^\prime(\widehat{\theta}_\mu^{\setminus\mu})\right)}{1+{s_\theta}_\mu a^{\prime\prime}(\widehat{\theta}_\mu)}\right)
%-a(\widehat{\theta}_\mu^{\setminus\mu})
%\end{eqnarray}
The remaining task is the 
computation of ${\cal K}^{\setminus\mu}$ and 
associated ${{\chi}_\theta}_\mu^{\setminus\mu}$ for $\mu\in\bm{M}$.
We skip this $M$-times computation using the Sherman--Morrison formula 
(\ref{eq:Sherman-Morrison}).
Using the formula, we obtain
\begin{eqnarray}
{\chi_\theta}_\mu^{\setminus\mu}=\frac{{\chi_\theta}_\mu}{1+d_\mu{\chi_\theta}_\mu},
\end{eqnarray}
hence we do not need the computation of the inverse of matrix for each $\mu\in\bm{M}$.

\begin{figure}
\begin{minipage}{0.495\hsize}
\includegraphics[width=2.5in]{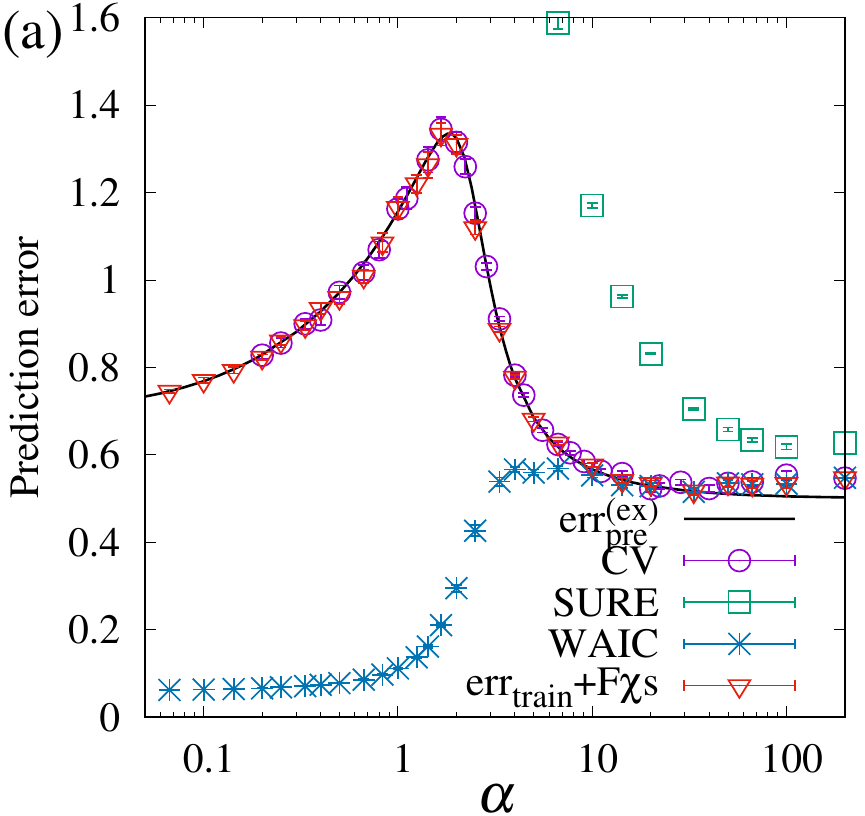}
\end{minipage}
\begin{minipage}{0.495\hsize}
\includegraphics[width=2.5in]{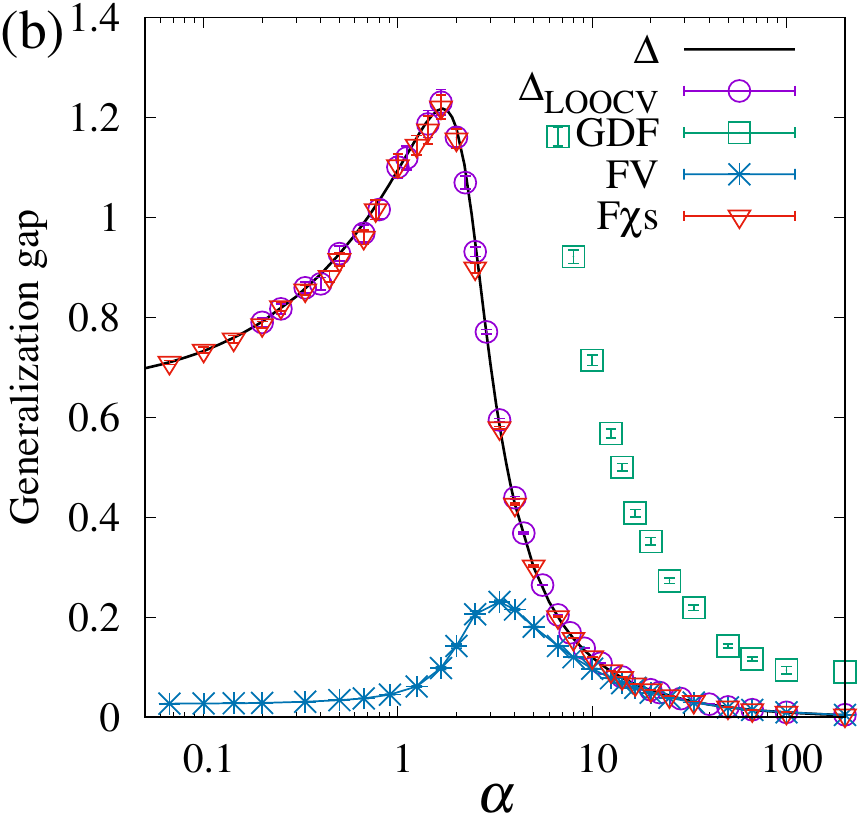}
\end{minipage}
\caption{(a) Prediction errors and (b) generalization gaps for logistic regression
with $\ell_2$ penalty
at $N = 200$, $\lambda = 0.01$, and $\sigma = 1$.
LOOCV error ($\mathrm{err}_{\mathrm{LOOCV}}$) calculated by CD algorithm, 
SURE, WAIC, and $\mathrm{err}_{\mathrm{train}}+\mathrm{F}\chi \mathrm{s}$
calculated by GAMP
%where $\mathrm{F}\chi\mathrm{s}$ 
%corresponds to (\ref{eq:gap_final}),
are denoted by $\bigcirc$, $\square$, $*$ and $\triangledown$ in (a), respectively,
and corresponding generalization gaps are shown in (b) with the same markers.
The solid line denotes the analytically derived results obtained using the 
replica method.}
\label{fig:logi_Gauss}
\end{figure}

In Fig. \ref{fig:logi_Gauss},
the GAMP-based estimators of
(a) prediction error and (b) generalization gap 
are shown for the 
logistic regression with the $\ell_2$ penalty at $N = 200$ and $\lambda = 0.01$,
where $a^{\prime(n)}(\theta)$ for $n\geq 3$ have finite values.
We set the true value of the regression coefficient $\bm{x}^{(0)}$ 
and generate the components according to the Gaussian distribution
with mean 0 and variance 1.
The data $\bm{y}\in[0,1]^M$ is generated as 
$y_\mu=\mathbb{I}(\theta_\mu^{(0)}+\varepsilon_\mu)$,
where $\bm{\theta}^{(0)}=\frac{1}{\sqrt{N}}\bm{F}\bm{x}^{(0)}$ and
$\varepsilon_\mu\sim{\cal N}(0,\sigma^2)$.
For comparison, we show the estimates obtained by CD algorithm denoted by
$\bigcirc$,
and the analytical results of the extra-sample prediction error 
obtained using the replica method by solid line.
CD algorithm for the GLM including logistic regression is explained in \ref{sec:app_CD_GLM}.
As shown in Fig.\ref{fig:logi_Gauss}, 
the GAMP-based generalization gap given by (\ref{eq:gap_final})
shows good agreement with the results by CD algorithm and replica method 
for any $\alpha$.

We summarize the procedure for obtaining the 
estimator for the generalization gap of 
the LOOCV error in Algorithm \ref{alg:AMP_CV},
where the notation $\mathrm{GAMP}({\cal D})$ in the algorithm
denotes to obtain the estimates under given data ${\cal D}$
by running the GAMP of Algorithm \ref{alg:AMP}.
\begin{algorithm}[h]
\caption{Generalization gap for CV by GAMP}
\label{alg:AMP_CV}
\begin{algorithmic}[1]
\Require {${\cal D}=\{\bm{y}$, $\bm{F}\}$, and $\{\widehat{\bm{x}},\widehat{\bm{\theta}},\bm{s}_\theta\}\gets\mathrm{GAMP}({\cal D})$}
\Ensure {$\widehat{\Delta}_{\mathrm{LOOCV}}$}
\For {$\mu=1,\ldots,M$}
\State{$\widehat{\theta}_\mu^{\setminus\mu}\gets\widehat{\theta}_\mu-{s_{\theta}}_\mu(y_\mu-a^\prime(\widehat{\theta}_\mu))$}
\State{${\chi_\theta}_\mu\gets\displaystyle\frac{{s_\theta}_\mu}{1+{s_\theta}_\mu a^{\prime\prime}(\widehat{\theta}_\mu)}$}
\State{$d_\mu\gets\displaystyle\frac{a^{\prime}(\widehat{\theta}_\mu)-a^{\prime}(\widehat{\theta}_\mu^{\setminus\mu})}{\widehat{\theta}_{\mu}-\widehat{\theta}_{\mu}^{\setminus\mu}}
-a^{\prime\prime}(\widehat{\theta}_\mu)$}
\State{${\chi_\theta}_\mu^{\setminus\mu}\gets\displaystyle\frac{{\chi_\theta}_\mu}{1+d_\mu{\chi_\theta}_\mu}$}
\EndFor
\State{$\mathrm{cFV}\gets \displaystyle\frac{1}{M}\sum_{\mu=1}^M
(y_\mu-a^\prime(\widehat{\theta}_\mu^{\setminus\mu}))^2{\chi_\theta}_\mu^{\setminus\mu}$}
\State{$\mathrm{F}\chi\gets-\displaystyle\frac{1}{M}\sum_{\mu=1}^M \left\{a\left(\widehat{\theta}_\mu^{\setminus\mu}+(y_\mu-a^\prime(\widehat{\theta}_\mu^{\setminus\mu})){\chi_\theta}_\mu^{\setminus\mu}
\right)-a(\widehat{\theta}_\mu^{\setminus\mu})
-a^\prime(\widehat{\theta}_\mu^{\setminus\mu})(y_\mu-a^\prime(\widehat{\theta}_\mu^{\setminus\mu})){\chi_\theta}_\mu^{\setminus\mu}
\right\}$} 
\State{$\widehat{\Delta}_{\mathrm{LOOCV}}\gets \mathrm{cFV}+\mathrm{F}\chi$}
\end{algorithmic}
\end{algorithm}

\subsubsection{Calculation of $\mathrm{cFV}$ and $\mathrm{F}\chi\mathrm{s}$
without GAMP}

The problem of GAMP is that it does not converge when the stability 
condition (\ref{eq:AT_GAMP_final}) is not satisfied.
In the parameter region where GAMP does not converge,
there could be a local minima of 
the exponential order with respect to $N$, as indicated by the correspondence with the de Almeida--Thouless instability \cite{AT}. However, even in such cases,
one can obtain the GAMP-inspired estimator without using GAMP
at each minimum obtained by the arbitrary algorithm
that can converge to a local minimum.
The characteristic variables of GAMP are 
$s_i$, ${s_\theta}_\mu$, and $\partial_{\widehat{\theta}}g_{\mathrm{out}}$,
in the sense that general algorithms do not contain these variables.
If we obtain $s_i$ numerically, the other variables 
${s_\theta}_\mu$, ${\chi_\theta}_\mu$, and ${\chi_\theta}_\mu^{\setminus\mu}$
can be obtained, and further, $\partial_{\widehat{\theta}}g_{\mathrm{out}}$
can also be obtained. Thus, to mimic the calculation of $s_i$
is important to extend the GAMP-based estimator for other algorithms.
This calculation of $s_i$ is resolved using the relationship
(\ref{eq:J_vs_variance}).

\begin{algorithm}[h]
\caption{GAMP-inspired generalization gap without GAMP}
\label{alg:AMP_CV_without_AMP}
\begin{algorithmic}[1]
\Require {${\cal D}=\{\bm{y}$, $\bm{F}\}$, and $\{\widehat{\bm{x}},\widehat{\bm{\theta}}\}\gets\mathrm{arbitrary~algorithm}({\cal D})$}
\Ensure {$\widehat{\Delta}_{\mathrm{LOOCV}}$}
\State{$\bm{D}\gets\mathrm{diag}(a^{\prime\prime}(\widehat{\bm{\theta}}))$}
\State{$\bm{\chi}\gets\displaystyle\left(\frac{1}{N}\bm{F}^\top\bm{D}\bm{F}+\frac{\partial^2}{\partial\bm{x}\partial\bm{x}^\top}h(\bm{x})\right)^{-1}$}
\Comment{For sparse estimators, use $\bm{\chi}^{{\cal L}}$.}
\For {$i=1,...,N$}
\State{$s_i\gets \chi_{ii}$}
\Comment{For sparse estimators, $s_i\gets\chi^{\cal L}_{ii}$ for $i\in{\cal L}$, otherwise  $s_i\gets0$.}
\EndFor
\For {$\mu=1,\ldots,M$}
\State{${s_\theta}_\mu\gets\displaystyle\frac{1}{N}\sum_{i=1}^NF_{\mu i}^2s_{i}$}
\State{${\chi_\theta}_\mu\gets\displaystyle\frac{{s_\theta}_\mu}{1+{s_\theta}_\mu a^{\prime\prime}(\theta_\mu)}$}
\State{$\widehat{\theta}_\mu^{\setminus\mu}\gets\widehat{\theta}_\mu-{s_{\theta}}_\mu(y_\mu-a^\prime(\widehat{\theta}_\mu))$}
\State{$d_\mu\gets\displaystyle\frac{a^{\prime}(\widehat{\theta}_\mu)-a^{\prime}(\widehat{\theta}_\mu^{\setminus\mu})}{\widehat{\theta}_{\mu}-\widehat{\theta}_{\mu}^{\setminus\mu}}
-a^{\prime\prime}(\widehat{\theta}_\mu)$}
\State{${\chi_\theta}_\mu^{\setminus\mu}\gets\displaystyle\frac{{\chi_\theta}_\mu}{1+d_\mu{\chi_\theta}_\mu}$}
\EndFor
\State{$\mathrm{cFV}\gets \displaystyle\frac{1}{M}\sum_{\mu=1}^M
(y_\mu-a^\prime(\widehat{\theta}_\mu^{\setminus\mu}))^2{\chi_\theta}_\mu^{\setminus\mu}$}
\State{$\mathrm{F}\chi\gets-\displaystyle\frac{1}{M}\sum_{\mu=1}^M \left\{a\left(\widehat{\theta}_\mu^{\setminus\mu}+(y_\mu-a^\prime(\widehat{\theta}_\mu^{\setminus\mu})){\chi_\theta}_\mu^{\setminus\mu}
\right)-a(\widehat{\theta}_\mu^{\setminus\mu})
-a^\prime(\widehat{\theta}_\mu^{\setminus\mu})(y_\mu-a^\prime(\widehat{\theta}_\mu^{\setminus\mu})){\chi_\theta}_\mu^{\setminus\mu}
\right\}$} 
\State{$\widehat{\Delta}_{\mathrm{LOOCV}}\gets \mathrm{cFV}+\mathrm{F}\chi$}
\end{algorithmic}
\end{algorithm}

In Algorithm \ref{alg:AMP_CV_without_AMP},
we showed the procedure to obtain the GAMP-based estimator without
using GAMP.
Here, we consider that the algorithm returns the estimates 
$\widehat{\bm{x}}$ and $\widehat{\bm{\theta}}$
corresponding to the input data ${\cal D}$.
The calculation of $s_i$ is implemented in the 2nd line of the Algorithm \ref{alg:AMP_CV_without_AMP},
where $\chi_{ii}$ corresponds to $s_i$.
In the case of sparse estimators which contains zero components,
such as the regression under the $\ell_1$ or the elastic net regularization,
it is natural to set $s_i=0$ for $i$ with $\widehat{x}_i=0$,
assuming that the set of zero components do not change
against the small perturbation on data.
We denote the support set as ${\cal L}$,
where $x_i\neq 0$ for $i\in{\cal L}$;
otherwise, $x_i=0$.
For the calculation of $s_i$ for $i\in{\cal L}$,
we define the submatrix of the predictor $\bm{F}$ as $\bm{F}^{{\cal L}}$
that comprises row vectors in ${\cal L}$ \cite{Obuchi-Kabashima}.
We define the Hessian matrix for the support set as
\begin{eqnarray}
{\cal H}^{\cal L}=\frac{\partial^2}{\partial{\bm{x}^{\cal L}}\partial{\bm{x}^{\cal L}}^\top}
\left\{\ln f\left(\bm{y}\Big|\frac{1}{\sqrt{N}}\bm{F}^{{\cal L}}\bm{x}^{\cal L}\right)-\phi(\bm{x}^{\cal L})\right\}
\end{eqnarray}
and the inverse $\bm{\chi}^{\cal L}=({\cal H}^{\cal L})^{-1}$; we
set 
\begin{eqnarray}
s_i=\left\{
\begin{array}{ll}
\chi_{ii}^{\cal L} & i\in{\cal L}\\
0 & \mathrm{otherwise}.
\end{array}
\right.
\end{eqnarray}
%set $\theta_\mu^{{\cal L}}=\frac{1}{\sqrt{N}}\sum_{j\in{\cal L}}F_{\mu j}x_j$
%The problem is 
%\begin{eqnarray}
%\widehat{\bm{x}}^{{\cal L}}=\frac{1}{2}\Big|\Big|\bm{y}-\frac{1}{\sqrt{N}}\bm{F}^{\cal L}\bm{x}^{\cal L}\Big|\Big|_2^2+\frac{\lambda_2}{2}||\bm{x}^{\cal L}||_2^2
%\end{eqnarray}
%hence
%\begin{eqnarray}
%\widehat{\bm{x}}^{\cal L}
%&=\left(\frac{1}{N}{\bm{F}^{\cal L}}^\top\bm{F}^{\cal L}+\lambda\bm{I}_N\right)^{-1}\frac{1}{\sqrt{N}}{\bm{F}^{\cal L}}^\top\bm{y}\\
%\widehat{\bm{y}}&=\bm{H}^{\cal L}\bm{y}
%\end{eqnarray}

\subsection{Correction of cavity-variance for correlated predictors}

The correlation between predictors is neglected in GAMP, 
as shown in assumption {\bf a1} and {\bf a2},
and therefore, there is no mathematical support for the 
accuracy of the estimators for the non-i.i.d. predictors.
Here, we consider two non-i.i.d. predictors
and apply the GAMP-based calculation of the 
estimators for the generalization gap.
\begin{itemize}
\item 
Correlated Gaussian predictor:
The row vectors of the predictor matrix are
generated by ${\cal N}(\bm{0},{\cal V})$
with a covariance matrix ${\cal V}\in\mathbb{R}^{N\times N}$.
We set the component of ${\cal V}$ as ${\cal V}_{ij}=\sigma_d^{-|i-j|}$
with $\sigma_d>0$; i.e., the power law decay in the distance.

\item Rank deficient predictor:
We generate i.i.d. Gaussian predictor
$\bm{G}\in\mathbb{R}^{M\times N}$ and apply 
singular value decomposition as
$\bm{G}=\bm{U}\bm{S}\bm{V}^\top$,
where $\bm{U}\in\mathbb{R}^{M\times r}$ and $\bm{V}\in\mathbb{R}^{r\times N}$ 
are the set of left and right singular vectors, respectively,
and $r=\min(M,N)$.
The matrix $\bm{S}\in\mathbb{R}^{r\times r}$ is a diagonal matrix 
whose $(i,i)$ component is the $i$-th singular value.
We choose singular values with fraction $r\times (1-\rho_F)$,
and set them as 0.
We denote the matrix of singular values with the
zero components as $\widetilde{\bm{S}}$,
and construct the rank-deficient predictor matrix by 
$\bm{F}=\bm{U}\widetilde{\bm{S}}\bm{V}$
with the normalization $\sum_{\mu=1}^MF_{\mu i}^2=M$
for any $i$.
\end{itemize}
Under these predictor matrix, we consider the same generating rule of the data $\bm{y}$
as with the previous sections.

Fig. \ref{fig:lambda001_corrGauss} shows
the generalization gaps at $N = 200$, $\lambda=0.01$ and $\sigma=1$ for 
(a) correlated Gaussian predictors at $\sigma_d = 0.5$ 
and (b) rank-deficient predictors at $\rho_F = 0.9$
under the 
ridge regression.
%The setting of the numerical simulation is the same as those in the previous sections.
As the likelihoods are Gaussian distribution,
$\mathrm{cFV}+\mathrm{F}\chi_2$ is theoretically sufficient for the description of the
generalization gap,
but it overestimates the generalization gap $\Delta_{\mathrm{LOOCV}}$ 
in the small $\alpha$ region.
Fig. \ref{fig:lambda001_EN_corr} shows the generalization gap for the 
linear regression elastic net penalty at $N=400$ and $\lambda_2 = 0.01$.
As with Fig. \ref{fig:lambda001_corrGauss},
the GAMP-based estimator overestimates the actual value of the $\Delta_{\mathrm{LOOCV}}$.

\begin{figure}
\begin{minipage}{0.495\hsize}
\centering
\includegraphics[width=2.5in]{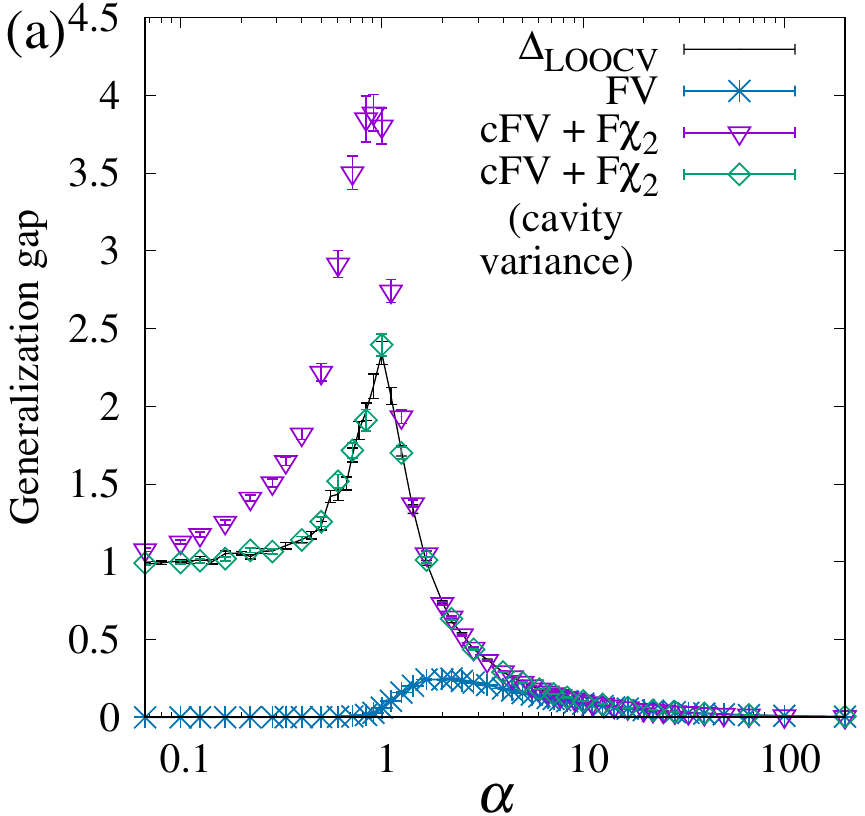}
\end{minipage}
\begin{minipage}{0.495\hsize}
\includegraphics[width=2.5in]{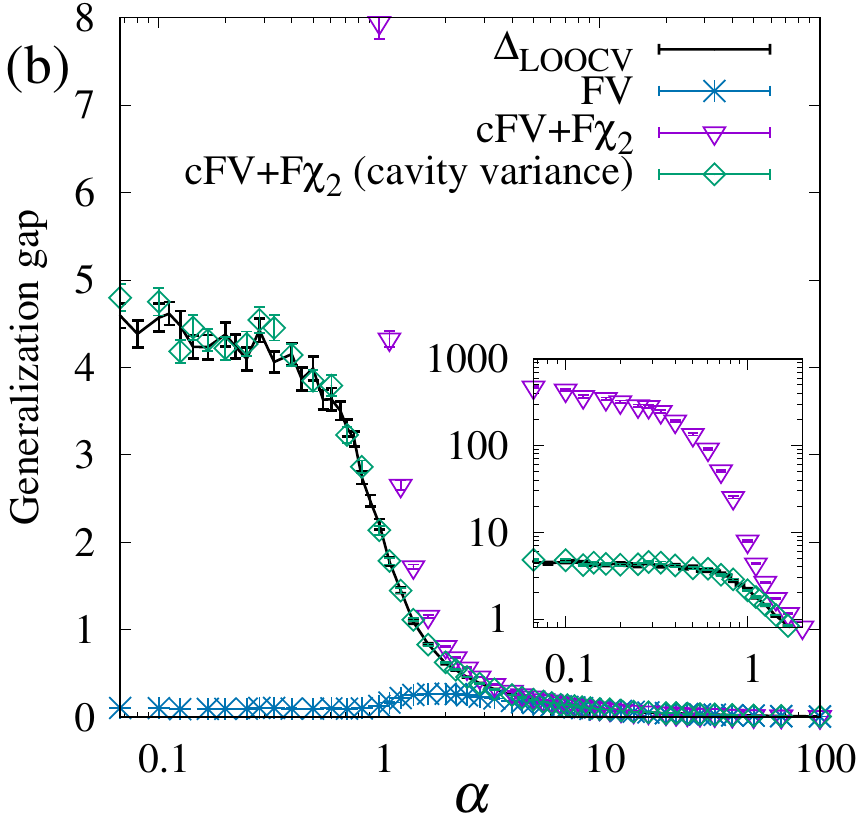}
\end{minipage}
\caption{Generalization gaps under the ridge regression
at $M = 200$, $\sigma = 1$, and $\lambda = 0.01$. 
$\Delta_{\mathrm{LOOCV}}$ 
in the linear algebraic form is denoted by a  
solid line, and FV ($*$), cFV+F$\chi_2$ ($\triangledown$), 
and cFV+cF$\chi_2$ with cavity variance ($\diamond$) calculated by GAMP
are shown.
(a) and (b) are for the correlated Gaussian predictor with $\sigma_d = 0.5$, 
and rank-deficient predictor with $\rho_F = 0.9$, respectively.
Each data point is averaged over 100 samples of ${\cal D}$,
and the bars on the points represent standard errors.}
\label{fig:lambda001_corrGauss}
\end{figure}
%%%
\begin{figure}
\begin{minipage}{0.495\hsize}
\centering
\includegraphics[width=2.5in]{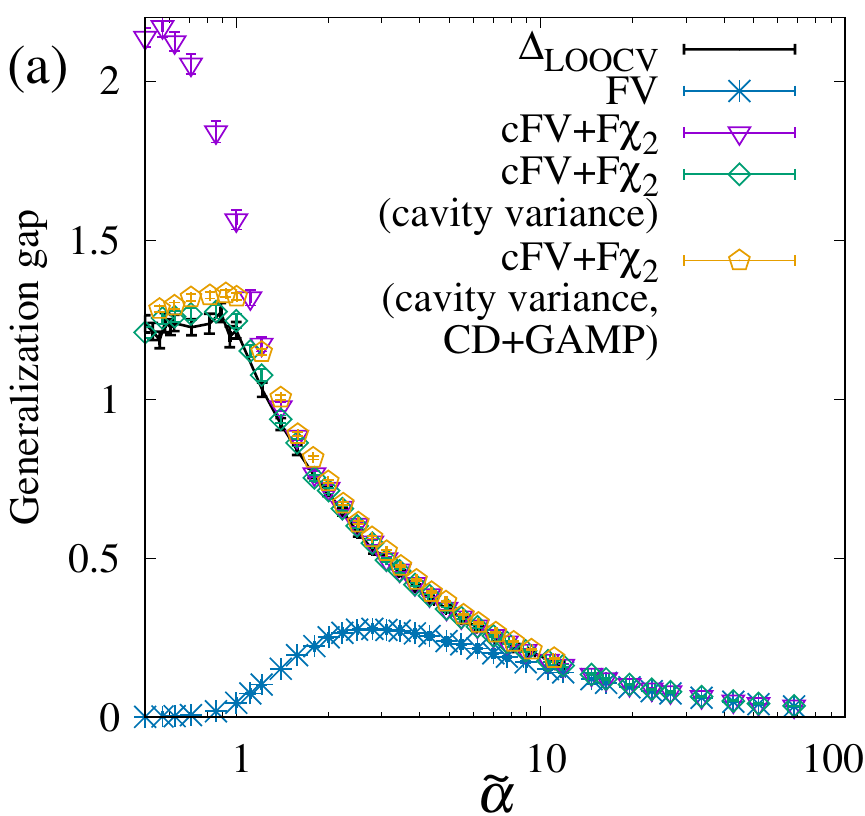}
\end{minipage}
\begin{minipage}{0.495\hsize}
\includegraphics[width=2.5in]{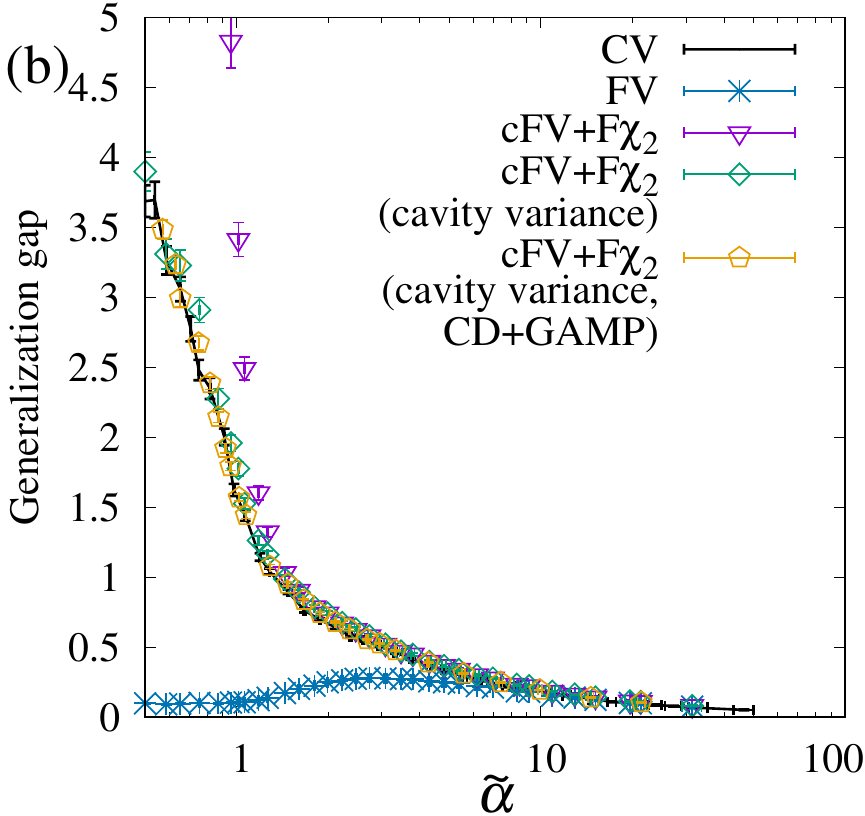}
\end{minipage}
\caption{Generalization gaps under linear regression
with elastic net regularization
at $M = 200$, $N=400$, $\lambda_2 = 0.01$, $\sigma = 1$, and $\rho = 1$.
LOOCV obtained using the CD algorithm is denoted by
the solid line; FV ($*$), cFV+F$\chi_2$ ($\triangledown$), 
and cFV+cF$\chi_2$ with cavity variance ($\diamond$) calculated by GAMP
are shown.
The GAMP-inspired generalization gap using the estimates given by
CD algorithm is denoted by pentagons.
(a) and (b) are for the correlated Gaussian predictor with $\sigma_d = 0.5$, 
and the rank-deficient predictor with $\rho_F = 0.9$, respectively,
and $\lambda_1$ is controlled to set $\widetilde{\alpha}=M\slash||\widehat{\bm{x}}||_0$.
Each data point is averaged over 100 samples of ${\cal D}$,
and bars on the points represent the standard errors.}
\label{fig:lambda001_EN_corr}
\end{figure}

A part of the discrepancies between 
$\widehat{\Delta}_{\mathrm{LOOCV}}$ and $\Delta_{\mathrm{LOOCV}}$
attributed to the non-i.i.d.-ness in predictor matrices
is improved by considering the differences between the variables
${s_\theta}_\mu$ and ${s_\theta}_\mu^{\setminus\mu}$,
which we call the cavity variance, 
and it cannot be determined in the framework of GAMP.
As shown in (\ref{eq:s_theta}),
we ignore their difference as it scales as $O(N^{-1})$ for the 
random predictors satisfying assumptions {\bf a1} and {\bf a2}; 
however, for the correlated predictors, this is not the case.
GAMP do not have closed equations to determine ${s_\theta}^{\setminus\mu}$, and 
therefore, we  heuristically derive the form of ${s_\theta}_\mu^{\setminus\mu}$
using the relationship (\ref{eq:J_vs_variance}).

By definition,
\begin{eqnarray}
{s_\theta}_\mu^{\setminus\mu}=\frac{\beta}{N}\sum_{ij}F_{\mu i}F_{\mu j}
\left(\langle x_ix_j\rangle_{\beta}^{\setminus\mu}-\langle x_i\rangle_{\beta}^{\setminus\mu}\langle x_j\rangle_{\beta}^{\setminus\mu}\right),
\end{eqnarray}
where $\langle\cdot\rangle_\beta^{\setminus\mu}$
denotes the posterior average under the LOO sample ${\cal D}_{\setminus\mu}$,
\begin{eqnarray}
p_\beta(\bm{x}|{\cal D}_{\setminus\mu})
\propto \prod_{\nu\in\bm{M}\neq\mu} f^\beta(y_\nu|\bm{\theta}(\bm{x}))\phi^\beta(\bm{x}).
\end{eqnarray}
As with (\ref{eq:J_vs_variance}), we obtain
\begin{eqnarray}
\beta\left(\langle x_ix_j\rangle^{\setminus\mu}_{\beta}-\langle x_i\rangle^{\setminus\mu}_{\beta}\langle x_j\rangle^{\setminus\mu}_{\beta}\right)=
\frac{1}{M}({{\cal J}^{\varphi\setminus\mu}(\widehat{\bm{x}}({\cal D}_{\setminus\mu}))}^{-1})_{ij},
\end{eqnarray}
%\begin{eqnarray}
%p_\beta(\bm{x}|\bm{y}_{\setminus\mu})
%\propto \exp\left\{-\frac{\beta M}{2}(\bm{x}-\widehat{\bm{x}}^{\setminus\mu}){\cal J}^{\setminus\mu}(\widehat{\bm{x}}({\cal D}_{\setminus\mu}))(\bm{x}-\widehat{\bm{x}}^{\setminus\mu})^{\top}\right\}
%\end{eqnarray}
where
\begin{eqnarray}
{\cal J}_{ij}^{\varphi\setminus\mu}(\widehat{\bm{x}}({\cal D}_{\setminus\mu}))
=-\frac{1}{M}
\frac{\partial^2}{\partial x_i\partial x_j}
\left\{\sum_{\nu\in\bm{M}\setminus\mu}\ln f(y_\nu|\bm{\theta}(\bm{x}))-h(\bm{x})\right\}
\Big|_{\bm{x}=\widehat{\bm{x}}({\cal D}_{\setminus\mu})}.
\end{eqnarray}
We avoid the $\mu$-times computation of ${{\cal J}^{\varphi\setminus\mu}}^{-1}$
for $\mu\in\bm{M}$ by utilizing 
the Sherman--Morrison formula (\ref{eq:Sherman-Morrison}).
Setting $\widetilde{\bm{F}}=\sqrt{\bm{D}(\bm{\theta}^{\setminus\nu})}\bm{F}$,
where $\bm{D}(\bm{\theta}^{\setminus\nu})$ is the diagonal matrix whose $(\mu,\mu)$ component is 
$\sqrt{a^{\prime\prime}(\theta_\mu^{\setminus\nu})}$,
we obtain
\begin{eqnarray}
\nonumber
{s_\theta}_\nu^{\setminus\nu}&={\chi_\theta}_\nu
+\frac{\frac{1}{N}||\widetilde{\textbf{f}}_\nu(\frac{1}{N}\widetilde{\bm{F}}^{\top}\widetilde{\bm{F}}+\lambda\bm{I}_N)^{-1}\textbf{f}_\nu^{\top}||_2^2}
{1-\frac{1}{N}\widetilde{\textbf{f}}_\nu(\frac{1}{N}\widetilde{\bm{F}}^{\top}\widetilde{\bm{F}}+\lambda\bm{I}_N)^{-1}\widetilde{\textbf{f}}_\nu^{\top}}\\
&%={\chi_\theta}_\nu
%+\frac{a^{\prime\prime}(\theta_\nu^{\setminus\nu}){\chi_\theta}_\nu^2}
%{1-a^{\prime\prime}(\theta_\nu^{\setminus\nu}){\chi_\theta}_\nu}
=\frac{{\chi_\theta}_\nu}{1-a^{\prime\prime}(\theta_\nu^{\setminus\nu}){\chi_\theta}_\nu}.
\label{eq:cavity_variance_final}
\end{eqnarray}
When we put 
${\chi_{\theta}}_\nu$ derived GAMP, (\ref{eq:GLM_theta_variance}),
into (\ref{eq:cavity_variance_final}),
we obtain the equality ${s_\theta}_\nu^{\setminus\nu}={s_\theta}_\nu$,
and hence, the computation of the cavity variance should be
based on Algorithm \ref{alg:AMP_CV_without_AMP}.

\begin{figure}
\begin{minipage}{0.495\hsize}
\centering
\includegraphics[width=2.5in]{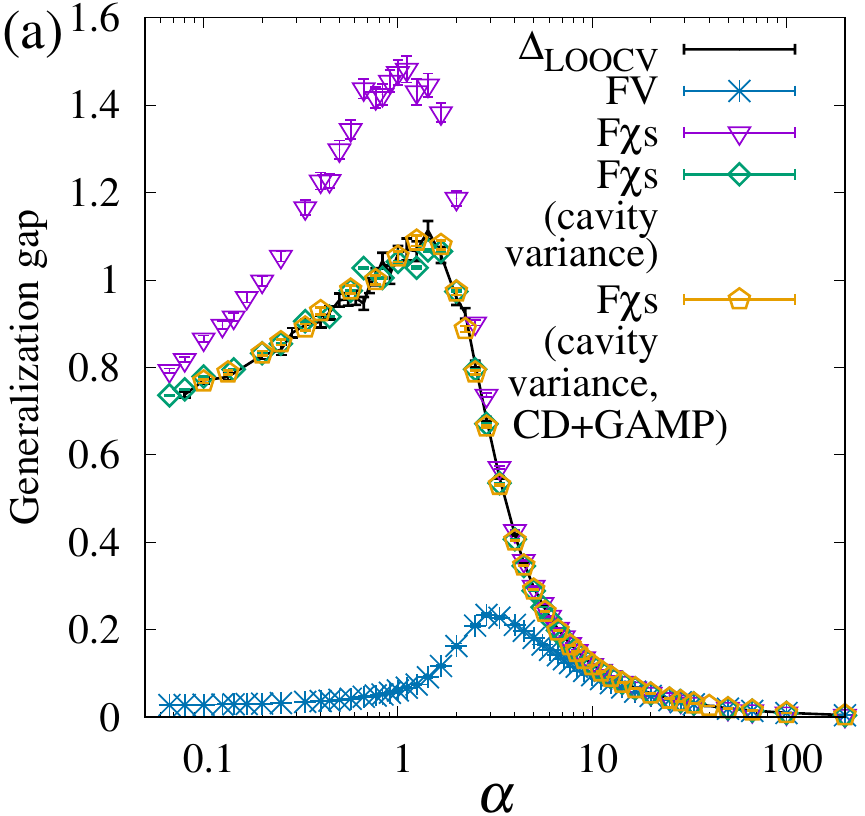}
\end{minipage}
\begin{minipage}{0.495\hsize}
\includegraphics[width=2.5in]{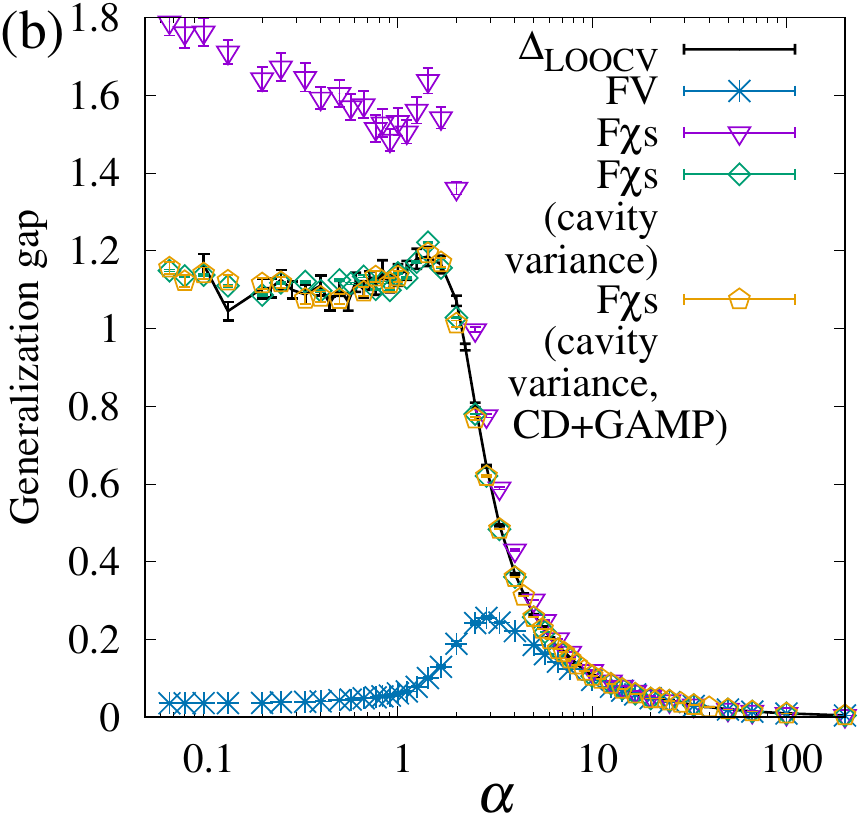}
\end{minipage}
\caption{Generalization gaps under the logistic regression
with $\ell_2$ regularization
at $M = 200$, $\lambda = 0.01$, and $\sigma = 1$.
LOOCV obtained by the CD algorithm is denoted by
a solid line; FV ($*$), F$\chi$s ($\triangledown$), 
and F$\chi$s with cavity variance ($\diamond$) calculated by GAMP
are shown.
The GAMP-inspired generalization gap using the estimates given by CD algorithm
is denoted by pentagons.
(a) and (b) are for the correlated Gaussian predictor with $\sigma_d = 0.5$, 
and the rank-deficient predictor with $\rho_F = 0.9$, respectively.
Each data point is averaged over 100 samples of ${\cal D}$,
and the bars on the points represent standard errors.}
\label{fig:lambda001_Logi_corr}
\end{figure}

The behaviours of the estimators for the generalization gap 
considering the cavity variance at $N = 200$
are shown in Figs. \ref{fig:lambda001_corrGauss},
\ref{fig:lambda001_EN_corr},
and \ref{fig:lambda001_Logi_corr}
for the ridge regression, linear regression with elastic net penalty,
and logistic regression with $\ell_2$ penalty, respectively.
In these examples, the 
accuracy of the estimators are significantly improved by
considering the cavity variance.

The procedure to obtain the generalization gap 
considering the cavity variance is summarised in Algorithm \ref{alg:AMP_CV_without_AMP_cavity}.

\begin{algorithm}[h]
\caption{GAMP-inspired generalization gap without GAMP (with cavity variance)}
\label{alg:AMP_CV_without_AMP_cavity}
\begin{algorithmic}[1]
\Require {${\cal D}=\{\bm{y}$, $\bm{F}\}$, and $\{\widehat{\bm{x}},\widehat{\bm{\theta}}\}\gets\mathrm{arbitrary~algorithm}({\cal D})$}
\Ensure {$\widehat{\Delta}_{\mathrm{LOOCV}}$}
\State{$\bm{D}\gets\mathrm{diag}(a^{\prime\prime}(\bm{\theta}))$}
\State{$\bm{\chi}\gets\displaystyle\left(\frac{1}{N}\bm{F}^\top\bm{D}\bm{F}+\frac{\partial^2}{\partial\bm{x}\partial\bm{x}^\top}h(\bm{x})\right)^{-1}$}
\Comment{For the sparse estimators, use $\bm{\chi}^{{\cal L}}$.}
\For{$i=1,\ldots,N$}
\State{$s_i\gets \chi_{ii}$}
\Comment{For sparse estimators, $s_i\gets\chi^{\cal L}_{ii}$ for $i\in{\cal L}$, otherwise  $s_i\gets0$.}
\EndFor
\For{$\mu=1,\ldots,M$}
\State{${s_\theta}_\mu\gets\displaystyle\frac{1}{N}\sum_{i=1}^NF_{\mu i}^2\chi_{ii}$}
\State{${\chi_\theta}_\mu\gets\displaystyle\frac{{s_\theta}_\mu}{1+{s_\theta}_\mu a^{\prime\prime}(\theta_\mu)}$}
\State{${s_{\theta}}_\mu^{\setminus\mu}\gets\displaystyle\frac{{\chi_{\theta}}_\mu}{1-a^{\prime\prime}(\widehat{\theta}_\mu^{\setminus\mu}){\chi_{\theta}}_\mu}$}
% parameter for LOO sample
\State{$\widehat{\theta}_\mu^{\setminus\mu}\gets\widehat{\theta}_\mu-{s_{\theta}}_\mu^{\setminus\mu}(y_\mu-a^\prime(\widehat{\theta}_\mu))$}
% d
\State{$d_\mu\gets\displaystyle\frac{a^{\prime}(\widehat{\theta}_\mu)-a^{\prime}(\widehat{\theta}_\mu^{\setminus\mu})}{\widehat{\theta}_{\mu}-\widehat{\theta}_{\mu}^{\setminus\mu}}
-a^{\prime\prime}(\widehat{\theta}_\mu)$}
\State{${\chi_\theta}_\mu^{\setminus\mu}\gets\displaystyle\frac{{\chi_\theta}_\mu}{1+d_\mu{\chi_\theta}_\mu}$}
\EndFor
\State{$\mathrm{cFV}\gets \displaystyle\frac{1}{M}\sum_{\mu=1}^M
(y_\mu-a^\prime(\widehat{\theta}_\mu^{\setminus\mu}))^2{\chi_\theta}_\mu^{\setminus\mu}$}
\State{$\mathrm{F}\chi\gets-\displaystyle\frac{1}{M}\sum_{\mu=1}^M \left\{a\left(\widehat{\theta}_\mu^{\setminus\mu}+(y_\mu-a^\prime(\widehat{\theta}_\mu^{\setminus\mu})){\chi_\theta}_\mu^{\setminus\mu}
\right)-a(\widehat{\theta}_\mu^{\setminus\mu})
-a^\prime(\widehat{\theta}_\mu^{\setminus\mu})(y_\mu-a^\prime(\widehat{\theta}_\mu^{\setminus\mu})){\chi_\theta}_\mu^{\setminus\mu}
\right\}$} 
\State{$\widehat{\Delta}_{\mathrm{LOOCV}}\gets \mathrm{cFV}+\mathrm{F}\chi$}
\end{algorithmic}
\end{algorithm}

\section{Replica method}
\label{sec:replica}

For a part of predictor matrices, 
the prediction error and generalization gap can be analytically
derived by using replica method.
In this section, we show the analysis for the 
random predictor matrix $\bm{F}$ whose components are 
independently and identically distributed according to the Gaussian distribution.
In such cases, it is shown that the 
typical trajectory of the time evolution of the GAMP,
termed as state evolution,
corresponds to the saddle point equations of the 
replica method
\cite{Krzakala2012}.
The rigorousness of the state evolution for GAMP 
has been proved under i.i.d. Gaussian data \cite{Javanmard2013},
and correlated Gaussian data \cite{Loureiro2022}.

In the replica analysis,
we derive the expectation value of the errors
with respect to ${\cal D}=\{\bm{y},\bm{F}\}$,
to understand the typical property
under the given ${\cal D}$ \cite{zdeborova2016statistical}.
The basis for the analysis is the free energy density, which is defined as
\begin{eqnarray}
\Psi=-\lim_{\beta\to\infty}\frac{1}{N\beta}\mathbb{E}_{{\cal D}}[\ln Z_{\beta}({\cal D})].
\end{eqnarray}
The 
training error relates to the 
free energy density at zero temperature,
and as discussed in sec.\ref{sec:ISCV_and_FV},
the expression of FV by the replica method
can be derived from the partition function
by introducing auxiliary variables.
%the calculation of the typical values of 
%the prediction error defined by
%which corresponds to the expectation of the 
%minimum value of the function $\varphi(\bm{x}|\bm{y},\bm{A})$
%at $\beta\to\infty$.

In the analysis, we consider the same setting as the numerical simulations 
by GAMP summarized in sec.\ref{sec:FV_correct},
where the data $\bm{y}$ is generated by
a certain rule governed by the `true' parameter 
$\bm{\theta}^{(0)}=\bm{F}\bm{x}^{(0)}\slash\sqrt{N}$ with a
`true' regression coefficient $x_i^{(0)}$.
Here, we consider that the true regression coefficient is generated by 
Bernoulli--Gaussian distribution
\begin{eqnarray}
\phi^{(0)}(x^{(0)})=(1-\rho)\delta(x^{(0)})+\frac{\rho}{\sqrt{2\pi\sigma_x^2}}\exp\left(-\frac{(x^{(0)})^2}{2\sigma_x^2}\right),
\end{eqnarray}
where $\rho\in[0,1]$ is the fraction of nonzero components in the true coefficient.
We assume that the model parameter
is subjected to %Gaussian 
additive 
noise $\bm{\varepsilon}\in\mathbb{R}^M$; hence we denote 
$\bm{y}={\cal G}(\bm{\theta}^{(0)}+\bm{\varepsilon})$,
where ${\cal G}$ is an arbitrary function that describe the data generative process.
The expectation of the free energy density
with respect to $\bm{y},\bm{\varepsilon},\bm{F}$ and $\bm{x}^{(0)}$
is implemented by replica method
based on the following identity
\begin{eqnarray}
\mathbb{E}[\ln Z_{\beta}({\cal D})]=
\lim_{n\to 0}\frac{\partial}{\partial n}\mathbb{E}_{\cal D}[Z_{\beta}^n({\cal D})].
\label{eq:replica}
\end{eqnarray}
We focus on the $N\to\infty$ and $M\to\infty$ limit 
which keep $M\slash N=\alpha\sim O(1)$.
For the sake of convenience, we define free entropy density 
for the finite $\beta$ and finite $n$ as
\begin{eqnarray}
\Phi_\beta(n)=\frac{1}{N}\ln \mathbb{E}_{{\cal D}}[Z_\beta^n({\cal D})].
\end{eqnarray}
Assuming $n$ is a positive integer, we express the $n$-th power
by introducing the $n$-replicated system as
\begin{eqnarray}
\nonumber
\Phi_\beta(n)&=\frac{1}{N}\ln\int d\bm{\varepsilon}P_\varepsilon(\bm{\varepsilon})
\int d\bm{F}P_F(\bm{F})\int d\bm{x}^{(0)}
\phi^{(0)}(\bm{x}^{(0)})\\
&\times\prod_{a=1}^n\int d\bm{x}^{(a)}
f^\beta\left({\cal G}(\bm{\theta}^{(0)}+\bm{\varepsilon})|\bm{\theta}^{(a)}\right)\phi^\beta(\bm{x}^{(a)})
\end{eqnarray}
where we set $\bm{\theta}^{(a)}=\frac{1}{\sqrt{N}}\bm{Fx}^{(a)}$.
For implementing the integrals, 
we introduce the subshells with respect to the macroscopic variables
$\{q^{(ab)}\}$ and $\{m^{(a)}\}$
by inserting trivial identities 
\begin{eqnarray}
\nonumber
1&=\int dq^{(ab)}\delta\left(q^{(ab)}-\frac{1}{N}\sum_{i=1}^Nx_{i}^{(a)}x_{i}^{(b)}\right)\\
&=\int dq^{(ab)}d\widehat{q}^{(ab)}\exp\left(-Nq^{(ab)}\widehat{q}^{(ab)}+\widehat{q}^{(ab)}\sum_{i=1}^Nx_i^{(a)}x_i^{(b)}\right)\\
\nonumber
1&=\int dm^{(a)}\delta\left(m^{(a)}-\frac{1}{N}\sum_{i=1}^Nx_{i}^{(0)}x_{i}^{(a)}\right)\\
&=\int dm^{(a)}d\widehat{m}^{(a)}\exp\left(-Nm^{(a)}\widehat{m}^{(a)}+\widehat{m}^{(a)}\sum_{i=1}^Nx_i^{(0)}x_i^{(a)}\right)
\end{eqnarray}
for all pairs of $(a,b)~(a,b\in\bm{n})$ and $a\in\bm{n}$,
where $\bm{n}=\{1,\ldots,n\}$,
and $\{\widehat{q}^{(ab)}\}$ and $\{\widehat{m}^{(a)}\}$
are conjugate variables for the integral expressions of the delta functions.
The expectation with respect to $\bm{F}$,
whose components are independently and identically
distributed according to Gaussian distribution with mean 0 and variance 1,
within the subshells lead to
\begin{eqnarray}
\mathbb{E}_{\bm{F}|q^{(ab)}}[\theta^{(a)}_{\mu}\theta^{(b)}_{\mu}]&=q^{(ab)}\label{eq:theta_corr_q}\\
\mathbb{E}_{\bm{F}|m^{(a)}}[\theta^{(a)}_{\mu}\theta^{(0)}_{\mu}]&=m^{(a)}\label{eq:theta_corr_m}\\
\mathbb{E}_{\bm{F}}[\theta^{(0)}_{\mu}\theta^{(0)}_{\mu}]&=\rho\sigma_x^2.
\label{eq:theta_corr_0}
\end{eqnarray}
%and higher-order correlations can be ignored 
%for sufficiently large $N$
%because of the normalization coefficient $1\slash\sqrt{N}$.
Thus, in the subshell,
the joint distribution of $\widetilde{\bm{\theta}}_\mu\equiv
\{\theta_\mu^{(0)},\theta_\mu^{(1)},\cdots\theta_\mu^{(n)}\}$ is given by
\begin{eqnarray}
P_\theta(\tilde{\bm{\theta}}_\mu|{\cal Q})=\frac{1}{\sqrt{2\pi|{\cal Q}|}}\exp\left(-\frac{1}{2}\widetilde{\bm{\theta}}_\mu^{\top}{\cal Q}^{-1}\widetilde{\bm{\theta}}_\mu\right)
\end{eqnarray}
for any $\mu\in\bm{M}$,
where ${\cal Q}\in\mathbb{R}^{(n+1)\times (n+1)}$
is given by
\begin{eqnarray}
{\cal Q}_{ab}=\left\{\begin{array}{c c}
   q^{(ab)} & a\neq 0~\mbox{and}~b\neq 0\\
   m^{(a)} & a\neq 0~\mbox{and}~b= 0\\
   m^{(b)} & b= 0~\mbox{and}~b\neq 0 \\
   \rho\sigma_x^2 & a=b=0
\end{array}\right..
\end{eqnarray}
Using the joint distribution with respect to $\widetilde{\bm{\theta}}_\mu$,
we obtain
\begin{eqnarray}
\Phi_\beta&(n)=\frac{1}{N}\ln\int \prod_{a\leq b}dq^{(ab)}d\widehat{q}^{(ab)}\prod_{a=1}^ndm^{(a)}d\widehat{m}^{(a)}\\
\nonumber
&\times\exp\left(-N\sum_{a\leq b}q^{(ab)}\widehat{q}^{(ab)}-N\sum_{a=1}^nm^{(a)}\hat{m}^{(a)}\right)\\
\nonumber
&\times\left[\int d\widetilde{\bm{\theta}}\int d\varepsilon P_\varepsilon(\varepsilon)P_\theta(\widetilde{\bm{\theta}}|{\cal Q})
\prod_{a=1}^nf\left({\cal G}(\theta^{(0)}+\varepsilon)|\theta^{(a)}\right)\right]^M\\
\nonumber
&\times\left[\int \!\!d\widetilde{\bm{x}}\!\prod_{a=1}^n\!\phi(x^{(a)})\phi^{(0)}(x^{(0)})
\!\exp\!\!\left(\sum_{a\leq b}\widehat{q}^{(ab)}x^{(a)}x^{(b)}\!+\!\!\sum_{a=1}^n\widehat{m}^{(a)}x^{(0)}x^{(a)}\!\!\right)\!\right]^N,
\end{eqnarray}
where $\widetilde{\bm{x}}=\{x^{(0)},x^{(1)},\cdots x^{(n)}\}$
and 
$P(\varepsilon)$ represents the
distribution of the noise $\varepsilon$.
By applying the saddle point equation,
we obtain
\begin{eqnarray}
\Phi_\beta(n)=\mathop{\mathrm{extr}}_{\{q^{(ab)}\},\{m^{(a)}\},\{\widehat{q}^{(ab)}\},\{\widehat{m}^{(a)}\}}\left\{\Phi_\beta^{(C)}(n)+\alpha\Phi^{(E)}_\beta(n)+\Phi^{(S)}_\beta(n)\right\},
\label{eq:free_entropy_n}
\end{eqnarray}
where
\begin{eqnarray}
\Phi_\beta^{(C)}(n)&=-\sum_{a\leq b}q^{(ab)}\widehat{q}^{(ab)}-\sum_{a=1}^nm^{(a)}\widehat{m}^{(a)}\\
\Phi_\beta^{(E)}(n)&=\ln\int d\varepsilon P_\varepsilon(\varepsilon)\int d\widetilde{\bm{\theta}} P_\theta(\widetilde{\bm{\theta}}|{\cal Q})
\prod_{a=1}^nf^{\beta}\left({\cal G}(\theta^{(0)}+\varepsilon)|\theta^{(a)}\right)\label{eq:Phi_E}\\
\nonumber
\Phi_\beta^{(S)}(n)&=\ln\int d\widetilde{\bm{x}}\phi^{(0)}(x^{(0)})\phi^\beta(\widetilde{\bm{x}}^{(a)})\\
&\hspace{2.0cm}\times\exp\left(\sum_{a\leq b}\widehat{q}^{(ab)}x^{(a)}x^{(b)}
+\sum_{a=1}^n\widehat{m}^{(a)}x^{(0)}x^{(a)}\right).
\end{eqnarray}

\subsection{Replica symmetric assumption}

For further calculations, we introduce 
replica symmetric (RS) assumption as
\begin{eqnarray}
(q^{(ab)},\widehat{q}^{(ab)})&=\left\{ \begin{array}{ll}
\left(Q,-\displaystyle\frac{\widehat{Q}}{2}\right)  &  {\rm if}~a= b\\
(q,\widehat{q})  &  {\rm if}~a\neq b\\
\end{array}, \right.
\label{eq:RS_q}\\
(m^{(a)},\widehat{m}^{(a)})&=(m,\widehat{m})~{\rm for~any}~a.
\label{eq:RS_m}
\end{eqnarray}
RS assumption restricts the extremum of  
(\ref{eq:free_entropy_n}) to be in the space where
(\ref{eq:RS_q}) and (\ref{eq:RS_m}) hold.
Therefore, 
it is always necessary to examine the validity of the solution under the RS
assumptions.
The RS formula at zero-temperature
derived here has been proven to be exact for convex likelihoods in \cite{Loureiro2021},
and  that for Bayes-optimal inference was proven to be rigorous in \cite{Barbier2019}.
The RS assumption is known as to be equivalent with the assumption introduced
in the cavity approach \cite{beyond}.

For the integral of $\theta$ in (\ref{eq:Phi_E}) under the RS assumption, it is convenient to
introduce the transformation 
using independent Gaussian random variables $w^{(a)}~(a\in\bm{n})$,
$z$, and $v$ as
\begin{eqnarray}
\theta^{(a)}=\sqrt{Q-q}w^{(a)}+\sqrt{q}z,~(a\neq 0)\\
\theta^{(0)}=\sqrt{\rho\sigma_x^2-\frac{m^2}{q}}v+\frac{m}{\sqrt{q}}z,
\end{eqnarray}
where this transformation preserves the relationships
(\ref{eq:theta_corr_q})--(\ref{eq:theta_corr_0}).
%then
%\begin{eqnarray}
%\nonumber
%\varphi_{E}(n)&=\ln\int d\varepsilon P_\varepsilon(\varepsilon)\int Dz\int Dv\\
%&\times\left\{\int Dw f\left({\cal G}\left(\sqrt{\sigma_x^2-\frac{m^2}{q}}v+\frac{m}{\sqrt{q}}z,\varepsilon\right)\Big|\sqrt{Q-q}w+\sqrt{q}z\right)\right\}^n
%\end{eqnarray}
%Implementing the integral of $\theta$, 
We denote the functions $\Phi_\beta^{(C)}(n)$, $\Phi_\beta^{(E)}(n)$,
and $\Phi_\beta^{(S)}(n)$
under the RS assumption as $\Phi_\beta^{(C,\mathrm{RS})}(n)$, $\Phi_\beta^{(E,\mathrm{RS})}(n)$,
and $\Phi_\beta^{(S,\mathrm{RS})}(n)$, respectively.
Using them, we obtain
\begin{eqnarray}
\nonumber
\Psi_\beta^{(C,\mathrm{RS})}&\equiv-\lim_{n\to 0}\frac{\partial}{\partial n}\Phi_\beta^{(C,\mathrm{RS})}(n)\\
&=\widehat{m}m-\frac{\widehat{Q}Q+\widehat{q}q}{2}\\
\nonumber
\Psi_\beta^{(E,\mathrm{RS})}&\equiv-\lim_{n\to 0}\frac{\partial}{\partial n}\Phi_\beta^{(E,\mathrm{RS})}(n)\\
&={-\int d\varepsilon P_{\varepsilon}(\varepsilon)\int Dz\int Dv\ln
\widehat{\xi}^{(\mathrm{RS})}_\beta(v,z,\varepsilon)}\\
\nonumber
\Psi_\beta^{(S,\mathrm{RS})}&\equiv-\lim_{n\to 0}\frac{\partial}{\partial n}\Phi_\beta^{(S,\mathrm{RS})}(n)\\
&=-\int dx^{(0)}\phi^{(0)}(x^{(0)})\int Dz\ln\xi^{(\mathrm{RS})}_\beta(z,x^{(0)}),
\end{eqnarray}
and the free energy under the RS assumption is given by
\begin{eqnarray}
\Psi^{(\mathrm{RS})}=\lim_{\beta\to\infty}\frac{1}{\beta}\left\{\Psi_\beta^{(C,\mathrm{RS})}+\alpha\Psi_\beta^{(E,\mathrm{RS})}+\Psi_\beta^{(S,\mathrm{RS})}\right\},
\end{eqnarray}
where we set 
\begin{eqnarray}
\nonumber
&\widehat{\xi}^{(\mathrm{RS})}_\beta(v,z)={\int Dw}\\
&\hspace{1.0cm}{\times f^\beta\left({\cal G}\left(\sqrt{\sigma_{0}^2-\frac{m^2}{q}}v+\frac{m}{\sqrt{q}}z+\varepsilon\right)\Big|\sqrt{Q-q}w+\sqrt{q}z\right)}\\
&\xi^{(\mathrm{RS})}_\beta(z,x^{(0)})=\int dx\phi^\beta(x)
\exp\left(-\frac{\widehat{Q}+\hat{q}}{2}x^2+(\sqrt{\hat{q}}z+\hat{m}x^{(0)})x\right),
\end{eqnarray}
{and $\sigma_0^2=\rho\sigma_x^2$.}
%and $\sigma_T^2=\rho\sigma_x^2+\sigma^2$. 
%Using these terms, the free energy density at $\beta\to\infty$ is obtained by
%\begin{eqnarray}
%\Psi&=-\lim_{\beta\to\infty}\frac{1}{\beta}\left(\Psi_\beta^{(C)}+\Psi_\beta^{(E)}+\Psi_\beta^{(S)}\right).
%\end{eqnarray}

\subsubsection{$\beta\to\infty$}

At $\beta\to\infty$, 
we implement some integrals by introducing the saddle point method.
We scale the variables as $\chi=\beta(Q-q)\sim O(1)$
and $\widehat{Q}+\widehat{q}=\beta\widehat{\Theta}$, 
$\widehat{q}=\beta^2\widehat{\chi}$,
and $\widehat{m}=\beta\widehat{\mu}$.
Then, we obtain
\begin{eqnarray}
\lim_{\beta\to\infty}\frac{1}{\beta}\Psi_\beta^{(C,\mathrm{RS})}=\widehat{\mu}\mu-\!\frac{\widehat{\Theta}Q-\widehat{\chi}\chi}{2}.
\end{eqnarray}
Next, we introduce variable $u=\sqrt{Q-q}w+\sqrt{q}z$ and replace $w$ with it as
$w=(u-\sqrt{q}z)\slash\sqrt{Q-q}$.
Further, by applying the transformation from $(v,z)$ to $(\nu,\zeta)$,
which keeps $dvdz=d\nu d\zeta$, as
\begin{eqnarray}
\sqrt{\sigma_0^2-\frac{m^2}{Q}}v+\frac{m}{\sqrt{Q}}z&=\sigma_0\nu\\
-\frac{m}{\sqrt{Q}}v+\sqrt{\sigma_0^2-\frac{m^2}{Q}}z&=
\sigma_0\zeta,
\end{eqnarray}
hence $\sqrt{Q}z=(m\nu+\sqrt{Q\sigma_0^2-m^2}\zeta)\slash\sigma_0$.
With these variables, we obtain
\begin{eqnarray}
\lim_{\beta\to\infty}\frac{1}{\beta}\ln\widehat{\xi}^{(\mathrm{RS})}_\beta&=\widehat{f}_\xi^{(\mathrm{RS})*}(\nu,\zeta,\varepsilon),\label{eq:xi_hat_beta}
%\lim_{\beta\to\infty}\frac{1}{\beta}\Big[\frac{1}{2}\ln\left(\frac{\beta}{2\pi\chi}\right)\\
%\nonumber
%&+\ln\!\!\int \!du\exp\left(-\frac{\beta(u-(m\nu+\sqrt{Q\sigma_T^2-m^2}\zeta)\sigma_T^{-1})^2}{2\chi}\right)
%f^\beta\left({\cal G}\left(\sigma_T\nu\right)|u\right)\Big]\\
%&=-\frac{(u^*-(m\nu+\sqrt{Q\sigma_T^2-m^2}\zeta)\sigma_T^{-1})^2}{2\chi}
%+\ln f\left({\cal G}\left(\sigma_T\nu\right)|u^*\right),
\end{eqnarray}
where $\widehat{f}_\xi^{(\mathrm{RS})*}(\nu,\zeta,\varepsilon)=\widehat{f}_\xi^{(\mathrm{RS})}(u_{\mathrm{RS}}^*(\nu,\zeta,\varepsilon);\nu,\zeta,\varepsilon)$ with
\begin{eqnarray}
\nonumber
&\widehat{f}_\xi^{(\mathrm{RS})}(u;\nu,\zeta,\varepsilon)\\
&=-\frac{(u-(m\nu+\sqrt{Q\sigma_0^2-m^2}\zeta)(\sigma_0)^{-1})^2}{2\chi}
+\ln f\left({\cal G}\left(\sigma_0\nu+\varepsilon\right)|u\right)\label{eq:f_xi_hat_beta}\\
&u_{\mathrm{RS}}^*(\nu,\zeta,\varepsilon)=\mathop{\mathrm{argmax}}_u\widehat{f}_\xi(u;\nu,\zeta,\varepsilon)
%\left\{-\frac{(u-(m\nu+\!\!\sqrt{Q\sigma_T^2-m^2}\zeta)\sigma_T^{-1})^2}{2\chi}+\ln f\left({\cal G}\left(\sigma_T\nu\right)|u\right)\right\}.
\label{eq:replica_u_saddle}
\end{eqnarray}
%the integral can be replaced with the saddle point $u^*$
%defined by
%\begin{eqnarray}
%u^*=\mathop{\mathrm{argmax}}_u\left\{-\frac{(u-(m\nu+\!\!\sqrt{Q\sigma_T^2-m^2}\zeta)\sigma_T^{-1})^2}{2\chi}+\ln f\left({\cal G}\left(\sigma_T\nu\right)|u\right)\right\}.
%\label{eq:replica_u_saddle}
%\end{eqnarray}
%and hence
%\begin{eqnarray}
%\lim_{\beta\to\infty}\frac{1}{\beta}\ln\widehat{\xi}(y,z)=-\frac{(u^*-(m\nu+\sqrt{Q\sigma_x^2-m^2}\zeta)\sigma_x^{-1})^2}{2\chi}
%+\ln f\left({\cal G}\left(\sigma_x\nu,\varepsilon\right)\Big|u^*\right))
%\end{eqnarray}
In the case of GLM, $u^*(\nu,\zeta)$ should satisfy
\begin{eqnarray}
\frac{u_{\mathrm{RS}}^*-(m\nu+\sqrt{Q\sigma_0^2-m^2}\zeta)\sigma_0^{-1}}{\chi}={\cal G}\left(\sigma_0\nu+\varepsilon\right)-a^\prime(u^*_{\mathrm{RS}}).
\end{eqnarray}

For $\Psi_\beta^{(S,\mathrm{RS})}$,
under the scaling of the parameters at $\beta\to\infty$, 
we obtain
\begin{eqnarray}
\lim_{\beta\to\infty}\frac{1}{\beta}\ln\xi^{(\mathrm{RS})}_\beta(z,x^{(0)})&=f^{(\mathrm{RS})*}_\xi(z,x^{(0)})
%\int dx
%\exp\left\{-\beta\left(\frac{\hat{\Theta}}{2}x^2-(\sqrt{\hat{\chi}}z+\hat{\mu}x^{(0)})x-\ln\phi(x)\right)\right\},
\end{eqnarray}
where $f^{(\mathrm{RS})*}_\xi(z,x^{(0)})=f^{(\mathrm{RS})}_\xi(x_{\mathrm{RS}}^*(z,x^{(0)});z,x^{(0)})$
with
\begin{eqnarray}
f^{(\mathrm{RS})}_\xi(x;z,x^{(0)})&=-\frac{\widehat{\Theta}}{2}x^2+(\sqrt{\widehat{\chi}}z+\widehat{\mu}x^{(0)})x+\ln\phi(x)\\
x_{\mathrm{RS}}^*(z,x^{(0)})&=\mathop{\mathrm{argmax}}_xf^{(\mathrm{RS})}_\xi(x;z,x^{(0)}).
\label{eq:replica_one_body_x}
\end{eqnarray}
%and hence
%\begin{eqnarray}
%\nonumber
%\lim_{\beta\to\infty}\frac{1}{\beta}\Psi_{S}&=
%-\int dx^{(0)}\phi^{(0)}(x^{(0)})\int Dz\\
%&\hspace{1.0cm}\times
%\left(\frac{\hat{\Theta}}{2}{x^*}^2-(\sqrt{\hat{\chi}}z+\hat{\mu}x^{(0)})x^*-\ln\phi(x^*)\right),
%\end{eqnarray}
%hence
%\begin{eqnarray}
%\frac{1}{\beta}\ln\xi(z,x^{(0)})&=
%-\left(\frac{\hat{\Theta}}{2}{x^*}^2-(\sqrt{\hat{\chi}}z+\hat{\mu}x^{(0)})x^*-\ln\phi(x^*)\right)
%\end{eqnarray}
%where $x^*$ is defined by
%\begin{eqnarray}
%x^*=\mathop{\mathrm{argmin}}_x\left(\frac{\widehat{\Theta}}{2}x^2-(\sqrt{\hat{\chi}}z+\hat{\mu}x^{(0)})x-\ln\phi(x)\right).
%\label{eq:replica_one_body_x}
%\end{eqnarray}
In summary, the free energy density under RS assumption is given by
\begin{eqnarray}
\nonumber
\Psi^{(\mathrm{RS})}&=\mathop{\mathrm{extr}}_{\Omega,\widehat{\Omega}}\Big[\widehat{\mu}m-\!\frac{\widehat{\Theta}Q-\widehat{\chi}\chi}{2}-\int dx^{(0)}\phi^{(0)}(x^{(0)})\int Dz f_\xi^{(\mathrm{RS})*}(z,x^{(0)})\\
%\left(\frac{\hat{\Theta}}{2}{x^*}^2-(\sqrt{\hat{\chi}}z+\hat{\mu}x^{(0)})x^*-\ln\phi(x^*)\right)
&\hspace{1.0cm}-\alpha{\int D\zeta  D\nu d\varepsilon P_\varepsilon(\varepsilon)\widehat{f}_\xi^{(\mathrm{RS})*}(\nu,\zeta,\varepsilon)}\Big]
%\left\{\frac{(u^*-(m\nu+\!\!\sqrt{Q\sigma_x^2-m^2}\zeta)\sigma_T^{-1})^2}{2\chi}
%\!-\!\ln f\!\left({\cal G}\left(\sigma_T\nu\right)|u^*\right)\!\right\}\Big],
\end{eqnarray}
where $\Omega=\{\mu,Q,\chi\}$, 
$\widehat{\Omega}=\{\widehat{\mu},\widehat{\Theta},\widehat{\chi}\}$,
and $\mathrm{extr}_{\Omega,\widehat{\Omega}}$
denotes the extremization with respect to $\Omega$ and $\widehat{\Omega}$.

Hereafter, we consider the Gaussian noise with mean 0 and variance $\sigma^2$.
In this case, we can implement the integral with respect to $\varepsilon$,
and the resulting form of (\ref{eq:f_xi_hat_beta}) is given as 
\begin{eqnarray}
\nonumber
&\widehat{f}_\xi^{(\mathrm{RS})}(u;\nu,\zeta)\\
&=-\frac{(u-(m\nu+\sqrt{Q\sigma_T^2-m^2}\zeta)(\sigma_T)^{-1})^2}{2\chi}
+\ln f\left({\cal G}\left(\sigma_T\nu\right)|u\right)
\end{eqnarray}
where $\sigma_T^2=\sigma_0^2+\sigma^2$.
The corresponding maximizer is given as
$\widehat{f}_\xi^{(\mathrm{RS})*}(\nu,\zeta)=\widehat{f}_\xi^{(\mathrm{RS})}(
u_{\mathrm{RS}}^*(\nu,\zeta),\nu,\zeta)$
with $u_{\mathrm{RS}}^*=\mathrm{argmax}_u\widehat{f}_\xi^{(\mathrm{RS})}(u;\nu,\zeta)$.

\subsubsection{Saddle point equations and simplification of the free energy density}

From the extremization conditions of the free energy density,
we obtain the saddle point equations for macroscopic quantities as
\begin{eqnarray}
Q&=\int dx^{(0)}\phi^{(0)}(x^{(0)})\int Dz(x_{\mathrm{RS}}^*(z,x^{(0)}))^2\\
\chi&=\frac{1}{\sqrt{\widehat{\chi}}}\int dx^{(0)}\phi^{(0)}(x^{(0)})\int Dz\frac{\partial x_{\mathrm{RS}}^*(z,x^{(0)})}{\partial z}\\
m&=\int dx^{(0)}\phi^{(0)}(x^{(0)})\int Dzx_{\mathrm{RS}}^*(z,x^{(0)})x^{(0)},
\end{eqnarray}
and for conjugate variables
\begin{eqnarray}
\widehat{\Theta}&=\alpha\int D\zeta D\nu~\frac{a^{\prime\prime}(u_{\mathrm{RS}}^*(\nu,\zeta))}{1+\chi a^{\prime\prime}(u_{\mathrm{RS}}^*(\nu,\zeta))}\label{eq:Q_hat}\\
\widehat{\chi}&=\alpha\int D\zeta D\nu~({\cal G}(\sigma_T\nu,\varepsilon)-a^\prime(u_{\mathrm{RS}}^*(\nu,\zeta)))^2\label{eq:chi_hat}\\
\widehat{\mu}&=\alpha\int D\zeta D\nu 
\frac{{\cal G}^\prime(\sigma_T\nu)}{1+\chi a^{\prime\prime}(u_{\mathrm{RS}}^*(\nu,\zeta))}.\label{eq:m_hat}
\end{eqnarray}
The derivations of (\ref{eq:Q_hat})-(\ref{eq:m_hat})
are shown in \ref{sec:app_saddle}.

Using the saddle point equations,
we can simplify $\Psi_\beta^{(S,\mathrm{RS})}$ and $\Psi_\beta^{(E,\mathrm{RS})}$ as 
\begin{eqnarray}
\frac{1}{\beta}\Psi_\beta^{(S,\mathrm{RS})}&=\frac{\widehat{\Theta}Q}{2}-\widehat{\chi}\chi-\widehat{\mu}m
-\int dx^{(0)}\phi^{(0)}(x^{(0)})\int Dz\ln\phi(x_{\mathrm{RS}}^*(z,x^{(0)}))\\
\frac{1}{\beta}\Psi_\beta^{(E,\mathrm{RS})}&=\frac{\chi\widehat{\chi}}{2}-\alpha\int D\zeta D\nu \ln f({\cal G}(\sigma_T\nu)|u_{\mathrm{RS}}^*(\nu,\zeta)),
\end{eqnarray}
and the free energy density as
\begin{eqnarray}
\nonumber
\Psi^{\mathrm{RS}}&=-\alpha\int D\zeta D\nu \ln f({\cal G}(\sigma_T\nu)|u_{\mathrm{RS}}^*(\nu,\zeta))\\
&\hspace{2.0cm}-\int dx^{(0)}\phi^{(0)}(x^{(0)})\int Dz\ln\phi(x_{\mathrm{RS}}^*(z,x^{(0)})).
\label{eq:F_simplify}
\end{eqnarray}
The first and the second terms of (\ref{eq:F_simplify})
correspond to the expectation of the training error and 
regularization term, respectively,
hence we can obtain the training error from the free energy density 
(\ref{eq:F_simplify}), by discarding the contribution from the penalty term.

\subsubsection{Relationship with GAMP}

The calculation by the replica analysis has 
correspondence with GAMP.
The equations for determining $u^*_{\mathrm{RS}}$ and $x^*_{\mathrm{RS}}$ in replica method,
given by (\ref{eq:replica_u_saddle}) and (\ref{eq:replica_one_body_x}), respectively,
have the equivalent form
with (\ref{eq:theta_max}) and (\ref{eq:GAMP_x_max}), which appear in GAMP.
In table \ref{table:GAMP_replica_correspondence}, we summarize the 
correspondence of the quantities appear in the
replica method and GAMP.
The difference between GAMP and the replica method is that 
GAMP is defined for one realization of data,
and meanwhile replica method considers the typical property with respect to the
data distribution.
For example, as shown in table \ref{table:GAMP_replica_correspondence},
the estimate $\widehat{x}_i$ in GAMP is defined on given data ${\cal D}$,
and has site-dependency.
In the replica method, the estimate $x^*_{\mathrm{RS}}$ is defined on random variables
$z$ and $x^{(0)}$.
These random variables effectively represent the randomness
induced by $\bm{\varepsilon}$, $\bm{F}$, and $\bm{x}^{(0)}$, and the estimates in GAMP 
and the replica method are expected to be statistically equivalent.

\begin{table}[htbp]
\centering
\begin{tabular}{c|c}
\hline
GAMP & Replica (RS) \\
\hline \hline
$\widehat{x}_i$  &  $x_{\mathrm{RS}}^*(z,x^{(0)})$ \\
\hline
$s$ & $\chi$ \\
\hline
$\theta^*_\mu$ & $u_{\mathrm{RS}}^*(\nu,\zeta)$ \\
\hline
${s_\theta}$ & $\chi$ \\
\hline
$\widehat{\theta}_\mu^{\setminus\mu}$ & $(m\nu-\sqrt{Q\sigma_T^2-m^2}\zeta)\sigma_T^{-1}$ \\
\hline
$\Sigma$ & $\widehat{\Theta}^{-1}$ \\
 \hline
$\mathrm{m}_i\Sigma^{-1}$ & $\sqrt{\widehat{\chi}}z+\widehat{\mu}x^{(0)}$\\
  \hline
$\frac{1}{M}\sum_{\mu=1}^M(\check{g}_{\mathrm{out}})_\mu^2$ & $\widehat{\chi}$\\
  \hline
$-\frac{1}{M}\sum_{\mu=1}^M(\check{\partial}_{\hat{\theta}}g_{\mathrm{out}})_\mu$ & $\widehat{\Theta}$\\
\hline
\end{tabular}
\caption{Correspondence between GAMP and replica method under RS assumption.}
\label{table:GAMP_replica_correspondence}
\end{table}

\subsection{Analytical form of extra-sample prediction error}

The training error is naturally derived from the free energy density 
as shown in (\ref{eq:F_simplify}),
but we need additional computation for deriving the extra-sample prediction error.
Following the definition of the extra-sample prediction error,
we consider new predictor $\mathbf{f}_{\mathrm{new}}$ and define
\begin{eqnarray}
\theta^0_{\mathrm{new}}&=\frac{1}{\sqrt{N}}\mathbf{f}_{\mathrm{new}}\bm{x}^{(0)}
\label{eq:theta_new_0}\\
\theta_{\mathrm{new}}&=\frac{1}{\sqrt{N}}\mathbf{f}_{\mathrm{new}}\widehat{\bm{x}}({\cal D}),
\label{eq:theta_new}
\end{eqnarray}
and set the test data as $y_{\mathrm{new}}={\cal G}(\theta_{\mathrm{new}}^0+\varepsilon)$.
The expectation value of the 
extra-sample prediction error is defined by
\begin{eqnarray}
\mathbb{E}_{{\cal D},\mathbf{f}_{\mathrm{new}},\varepsilon}[\mathrm{err}_{\mathrm{pre}}({\cal D})]=-\mathbb{E}_{{\cal D},\mathbf{f}_{\mathrm{new}},\varepsilon}[\ln f(y_{\mathrm{new}}|\theta_{\mathrm{new}}(\mathbf{f}_{\mathrm{new}},\widehat{\bm{x}}({\cal D})))].
\end{eqnarray}
Here, we consider that the each component of 
the new predictor $\mathbf{f}_{\mathrm{new}}$
is distributed according to the Gaussian distribution with mean 0 and variance 1.
For the expectation in the framework of replica method 
under RS assumption, we introduce Gaussian random variables $v$ and $w$
with mean zero and variance 1
and set 
\begin{eqnarray}
\theta^0_{\mathrm{new}}&=\sqrt{\sigma_x^2-\frac{m^2}{Q}}v+\frac{m}{\sqrt{Q}}w\\
\theta_{\mathrm{new}}&=\sqrt{Q}w,
\end{eqnarray}
to satisfy the conditions
\begin{eqnarray}
\mathbb{E}_{\mathbf{f}_{\mathrm{new}}}[\theta^0_{\mathrm{new}}\theta_{\mathrm{new}}]&=m\\
\mathbb{E}_{\mathbf{f}_{\mathrm{new}}}[\theta^0_{\mathrm{new}}\theta_{\mathrm{new}}^{0}]&=\rho\sigma_x^2\\
\mathbb{E}_{\mathbf{f}_{\mathrm{new}}}[\theta_{\mathrm{new}}\theta_{\mathrm{new}}]&=Q.
\end{eqnarray}
Using (\ref{eq:theta_new_0}) and (\ref{eq:theta_new}), 
the extra-sample prediction error for GLM is given by
\begin{eqnarray}
\nonumber
\mathbb{E}_{{\cal D}}[\mathrm{err}_{\mathrm{pre}}^{(\mathrm{in})}({\cal D})]&=-\int DvDw \Big\{{\cal G}\left(\sqrt{\sigma_T^2-\frac{m^2}{Q}}v+\frac{m}{\sqrt{Q}}w\right)\sqrt{Q}w\\
&\hspace{1.0cm}-a(\sqrt{Q}w)+b\left({\cal G}\left(\sqrt{\sigma_T^2-\frac{m^2}{Q}}v+\frac{m}{\sqrt{Q}}w\right)\right)\Big\}.
\end{eqnarray}
For the linear regression, we obtain
\begin{eqnarray}
\mathbb{E}_{{\cal D}}\left[\mathrm{err}_{\mathrm{pre}}^{(\mathrm{ex})}({\cal D})\right]&=
\frac{Q-2m+\sigma_T^2}{2},
\end{eqnarray}
and for the logistic regression, we obtain
\begin{eqnarray}
\nonumber
\mathbb{E}_{{\cal D}}\left[\mathrm{err}_{\mathrm{pre}}^{(\mathrm{ex})}({\cal D})\right]&=-\int Dw\left\{ \frac{\sqrt{Q}}{2}\frac{\partial}{\partial w}\mathrm{erfc}\left(-\frac{\frac{m}{\sqrt{Q}}w}{\sqrt{2(\sigma_T^2-\frac{m^2}{Q})}}\right)-a(\sqrt{Q}w)\right\}\\
&=-\frac{m}{\sqrt{2\pi\sigma_T^2}}+\int Dw a(\sqrt{Q}w).
\end{eqnarray}
The behaviors of the extra-sample 
prediction error and corresponding generalization gap by replica method are shown in 
Figs. \ref{fig:Ridge_pre}-\ref{fig:logi_Gauss}.
The results by replica method coincides with that by GAMP.

\subsection{Analytical forms of estimators for the generalization gap}

\subsubsection{GDF by replica method under RS assumption}
\label{sec:GDF_replica}

The data $y$ is replaced with $\sigma_T\nu$ in the replica analysis, 
hence GDF is defined by the derivative of $u_{\mathrm{RS}}^*(\nu)$ with respect to $\sigma_T\nu$.
From the solution of $u_{\mathrm{RS}}^*(\nu)$ (\ref{eq:replica_u_saddle}),
%we obtain
%\begin{eqnarray}
%\frac{1}{\chi}\frac{\partial u^*}{\partial(\sigma_T\nu)}=1-\frac{\partial u^*}{\partial(\sigma_T\nu)}a^{\prime\prime}(u^*),
%\end{eqnarray}
%Solving this, 
we obtain
\begin{eqnarray}
\frac{\partial u_{\mathrm{RS}}^*(\nu,\zeta)}{\partial(\sigma_T\nu)}=\frac{\chi}{1+\chi a^{\prime\prime}(u_{\mathrm{RS}}^*(\nu,\zeta))},
\end{eqnarray}
and hence the expectation of GDF by replica method is derived as
\begin{eqnarray}
\mathbb{E}_{{\cal D}}[\mathrm{gdf}]=\int D\nu D\zeta \frac{\chi}{1+\chi a^{\prime\prime}(u_{\mathrm{RS}}^*(\nu,\zeta))}.
\end{eqnarray}
This form is consistent with GDF derived by GAMP (\ref{eq:GDF_GAMP}):
${s_\theta}$ in GAMP corresponds to $\chi$ in the replica analysis,
and the summation with respect to $\theta_\mu^*$ is replaced with the 
integral with respect to $u_{\mathrm{RS}}^*(\nu,\zeta)$,
as shown in Table \ref{table:GAMP_replica_correspondence}.
The behaviors of GDF and corresponding in-sample prediction errors are 
shown in Figs.\ref{fig:Ridge_pre}-\ref{fig:logi_Gauss}.
The GDF by replica method coincident with the 
expectation value of GDF calculated by GAMP.

\subsubsection{FV by replica method under RS assumption}

For deriving FV in the framework of the replica method,
we introduce the auxiliary variable $\gamma$,
following the similar discussion with GAMP, as
\begin{eqnarray}
\nonumber
{\cal T}_\gamma^{(\mathrm{RS})}(\gamma)&=\gamma\ln f({\cal G}(\sigma_T\nu)|u^*(\gamma))\\
&\hspace{1.0cm}-\frac{(u_{\gamma}^*(\gamma)-(m\nu+\!\!\sqrt{Q\sigma_x^2-m^2}\zeta)\sigma_T^{-1})^2}{2\chi}\\
\nonumber
u_\gamma^*(\gamma,\nu,\zeta)&=\mathop{\mathrm{argmax}}_{u}\Big\{\gamma\ln f({\cal G}(\sigma_T\nu)|u)\\
&\hspace{1.0cm}-\frac{(u-(m\nu+\!\!\sqrt{Q\sigma_x^2-m^2}\zeta)\sigma_T^{-1})^2}{2\chi}\Big\}.
\end{eqnarray}
Using these functions, the expectation of FV by the replica method is given by
\begin{eqnarray}
\nonumber
\mathbb{E}_{{\cal D}}[\mathrm{FV}]&=
\int D\nu D\zeta\frac{\partial^2}{\partial\gamma^2}{\cal T}_\gamma^{(\mathrm{RS)}}(\gamma)\Big|_{\gamma=1}\\
&=\int D\nu D\zeta\frac{({\cal G}(\sigma_T\nu)-u_{\mathrm{RS}}^*(\nu,\zeta))^2\chi}{1+\chi a^{\prime\prime}(u_{\mathrm{RS}}^*(\nu,\zeta))}.
\label{eq:FV_replica}
\end{eqnarray}
Eq.(\ref{eq:FV_replica}) can be interpreted as the expectation value of FV under GAMP
from the correspondence shown in Tabel \ref{table:GAMP_replica_correspondence}.

\subsection{Instability of RS solution}
\label{sec:Replica_AT}

The RS solutions discussed so far lose 
local stability when subjected to perturbations 
that break the symmetry between replicas in a certain parameter region. 
This phenomenon is known as de Almeida-Thouless (AT) instability \cite{AT} 
and occurs when the following conditions hold;
\begin{eqnarray}
\nonumber
&\alpha\int Dzdx^{(0)}\phi^{(0)}(x^{(0)})
\left(\frac{\partial x_{\mathrm{RS}}^*(z,x^{(0)})}{\partial\sqrt{\widehat{\chi}}z}\right)^2\\
&\times\int D\nu D\zeta \left(
\frac{\partial}{\partial \sqrt{Q-\frac{m^2}{\sigma_T^2}}\zeta}
\left({\cal G}\left(\sigma_T\nu\right)-a^\prime(u_{\mathrm{RS}}^*(\nu,\zeta))\right)\right)^2>1
.
\label{eq:AT_replica}
\end{eqnarray}
The derivation of the AT instability condition is explained in \ref{sec:app_1RSB},
by introducing the solution under the 1 step replica symmetry breaking assumption.
Based on the correspondence between AMP and the RS saddle point 
discussed so far and summarized in Table \ref{table:GAMP_replica_correspondence},
the AT instability condition (\ref{eq:AT_replica})
is coincident with the local instability condition of
the GAMP fixed point.
Similar correspondence has been shown in the analysis for CDMA \cite{Kabashima}. 

\subsection{Specific forms of the saddle point equations}

\subsubsection{Gaussian likelihood}

The saddle point equations for the conjugate variables can be simplified
in case of the Gaussian likelihood.
The maximizer $u^*_{\mathrm{RS}}$ of (\ref{eq:replica_u_saddle}) is given by 
\begin{eqnarray}
u_{\mathrm{RS}}^*(\nu,\zeta)=\frac{\chi\sigma_T\nu+(m\nu+\!\!\sqrt{Q\sigma_T^2-m^2}\zeta)\sigma_T^{-1}}{1+\chi}
\end{eqnarray}
hence we obtain
\begin{eqnarray}
\nonumber
\int D\zeta D\nu\ln\widehat{\xi}_{\beta}^{(\mathrm{RS})}&=
\int D\zeta D\nu\left(-\frac{(\sigma_T\nu-(m\nu+\!\!\sqrt{Q\sigma_T^2-m^2}\zeta)\sigma_T^{-1})^2}{2(1+\chi)}\right)\\
&=-\frac{Q-2m+\sigma_T^2}{2(1+\chi)}.
\end{eqnarray}
Using (\ref{eq:xi_hat_Gauss}), we obtain the saddle point equations
\begin{eqnarray}
\widehat{\Theta}&=\frac{\alpha}{1+\chi}\label{eq:theta_hat_Gauss}\\
\widehat{\chi}&=\frac{\alpha(Q-2m+\sigma_T^2)}{(1+\chi)^2}\label{eq:xi_hat_Gauss}
\\
\widehat{m}&=\frac{\alpha}{1+\chi}.
\end{eqnarray}
Further, the AT instability condition is given by
\begin{eqnarray}
\frac{\alpha}{(1+\chi)^2}\int Dzdx^{(0)}\phi^{(0)}(x^{(0)})
\left(\frac{\partial x_{\mathrm{RS}}^*(z,x^{(0)})}{\partial\sqrt{\widehat{\chi}}z}\right)^2>1
.
\label{eq:AT_replica_Gauss}
\end{eqnarray}

%\subsubsection{Exponential regression}

%\begin{eqnarray}
%\frac{u^*-\sqrt{Q}z}{\chi}=-y-\frac{1}{u^*}
%\end{eqnarray}
%solving this,
%\begin{eqnarray}
%u^*=\frac{(\sqrt{Q}z-\chi y)\pm\sqrt{(\sqrt{Q}z-\chi y)^2-4\chi^2}}{2}
%\end{eqnarray}

\subsubsection{Elastic net penalty}

The estimate $x_{\mathrm{RS}}^*$ (\ref{eq:replica_one_body_x}) for the elastic net penalty
is given by
\begin{eqnarray}
x_{\mathrm{RS}}^*=\frac{\sqrt{\widehat{\chi}}z+\widehat{\mu}x^{(0)}-\mathrm{sgn}(\sqrt{\widehat{\chi}}z+\widehat{\mu}x^{(0)})\lambda_1}{\widehat{\Theta}+\lambda_2}
\mathbb{I}\left(|\sqrt{\widehat{\chi}}z+\widehat{\mu}x^{(0)}|>\lambda_1\right),
\end{eqnarray}
hence the saddle point equations under the 
elastic net penalty are given by
\begin{eqnarray}
\nonumber
Q&=\frac{1}{(\widehat{\Theta}+\lambda_2)^2}\int d\sigma_\rho P_\rho(\sigma_\rho)
\\
&\hspace{1.0cm}\times\left\{\sigma_\rho^2\left(1+2\Theta(\sigma_\rho)^2\right)\mathrm{erfc}\left(\Theta(\sigma_\rho)\right)-\frac{2\sigma_\rho}{\sqrt{\pi}}\Theta(\sigma_\rho)\right\}
\label{eq:EN_saddle_Q}\\
m&=\frac{\rho\sigma_x\widehat{\mu}}{\widehat{\Theta}+\lambda_2}
\mathrm{erfc}\left(\frac{\lambda_1}{\sqrt{2(\widehat{\chi}+\widehat{\mu}^2\sigma_x^2)}}\right)
\label{eq:EN_saddle_m}\\
\chi&=\frac{\widehat{\rho}}{\widehat{\Theta}+\lambda_2},
\label{eq:EN_saddle_chi}
\end{eqnarray}
where
\begin{eqnarray}
\Theta(\sigma_\rho)&=\frac{\lambda_1}{\sqrt{2}\sigma_\rho},\\
\widehat{\rho}&=\int d\sigma_\rho P_\rho(\sigma_\rho)
\mathrm{erfc}\left(\Theta(\sigma_\rho)\right)\\
P_\rho(\sigma_\rho)&=(1-\rho)\delta(\sigma_\rho-\sqrt{\widehat{\chi}})+\rho
\delta(\sigma_\rho-\sqrt{\widehat{\chi}+\widehat{\mu}^2\sigma_x^2}),
\end{eqnarray}
and $\widehat{\rho}$ corresponds 
$\mathbb{E}_{{\cal D}}[||\widehat{\bm{x}}({\cal D})||_0]\slash N$,
namely expected value of the fraction of non-zero component
in the estimate.
For the Gaussian likelihood, the
AT instability condition for the elastic net penalty is given by
\begin{eqnarray}
\frac{\alpha}{\widehat{\rho}}\left(\frac{\chi}{1+\chi}\right)^2>1.
\label{eq:AT_Gauss_en}
\end{eqnarray}
As discussed in sec.\ref{sec:GDF_replica}, 
the term $\chi\slash(1+\chi)$ corresponds to the expectation value of GDF.

\subsubsection{$\ell_2$ penalty}

For the $\ell_2$ penalty at $\rho=1$,
the saddle point equation is given by
\begin{eqnarray}
Q&=\frac{\widehat{\chi}+\widehat{\mu}^2\sigma_x^2}{(\widehat{\Theta}+\lambda)^2}\\
m&=\frac{\widehat{\mu}\sigma_x^2}{\widehat{\Theta}+\lambda}\\
\chi&=\frac{1}{\widehat{\Theta}+\lambda}\label{eq:Ridge_saddle_chi},
\end{eqnarray}
where we set $\lambda_1=0$, $\widehat{\rho}=1$ and $\lambda_2=\lambda$ of 
the saddle point equations for the elastic net penalty (\ref{eq:EN_saddle_Q})-(\ref{eq:EN_saddle_chi}).
For the Gaussian likelihood,
the saddle point equation of $\chi$ can be solved by using 
(\ref{eq:Ridge_saddle_chi}) and (\ref{eq:xi_hat_Gauss}).
In particular for $\lambda=0$ case,
we can solve the variable $\chi$
as 
\begin{eqnarray}
\chi=\frac{1}{\alpha-1}.
\label{eq:chi_ridgeless}
\end{eqnarray}
This expression indicates that $\chi\to\infty$ as $\alpha$ approaches 1 from above.
Using (\ref{eq:chi_ridgeless}), we obtain
\begin{eqnarray}
\mathbb{E}_{{\cal D}}[\mathrm{gdf}]=\frac{\chi}{1+\chi}=\frac{1}{\alpha},
\label{eq:GDF_ridgeless}
\end{eqnarray}
which corresponds to the GF or GDF using the hat matrix.

For the Gaussian likelihood with $\ell_2$ penalty of 
$\lambda=0$ case, we term ridgeless limit, the following theorem 
with respect to the AT instability and the series expansion, which is the basis for the
derivation of WAIC as (\ref{eq:T_expansion}), holds.
\begin{thm}
In the ridgeless limit, the instability of the RS solution 
and the divergence of the series expansion of ${\cal T}(\eta)$ at $\eta=1$
arise at $\alpha<1$.
\end{thm}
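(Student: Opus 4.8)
The plan is to reduce both halves of the statement to the single explicit fact that, in the ridgeless limit, the replica-symmetric order parameter satisfies $\chi = 1/(\alpha-1)$, so that $\chi/(1+\chi) = 1/\alpha$, and then to feed this into the AT instability condition and the radius-of-convergence formula already derived earlier, where everything collapses to deciding whether $1/\alpha$ exceeds $1$. First I would isolate this input as a preliminary step: putting $\lambda_1 = 0$, $\rho = 1$, and $\lambda_2 = \lambda \to 0^+$ into the Gaussian-likelihood, $\ell_2$-penalty saddle point equations (\ref{eq:theta_hat_Gauss})--(\ref{eq:xi_hat_Gauss}) together with (\ref{eq:Ridge_saddle_chi}) reproduces (\ref{eq:chi_ridgeless}) and (\ref{eq:GDF_ridgeless}), i.e. $\chi = 1/(\alpha-1)$ and $\mathbb{E}_{\cal D}[\mathrm{gdf}] = \chi/(1+\chi) = 1/\alpha$, and it also gives $\widehat{\rho} = \mathrm{erfc}(0) = 1$. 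This identity is what I will use on both sides.

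For the instability half I would substitute $\widehat{\rho} = 1$ and $\chi/(1+\chi) = 1/\alpha$ into the Gaussian--elastic-net AT condition (\ref{eq:AT_Gauss_en}), which becomes $\alpha\,(1/\alpha)^2 = 1/\alpha > 1$, i.e. $\alpha < 1$. I would note that the same number comes out of the replica AT form (\ref{eq:AT_replica_Gauss}), using $\partial x_{\mathrm{RS}}^*/\partial(\sqrt{\widehat{\chi}}z) = 1/(\widehat{\Theta}+\lambda) = \chi$ for the $\ell_2$ penalty, and out of the GAMP local-stability criterion (\ref{eq:AT_GAMP_final}) with $\overline{s^2} = \chi^2$ and $\overline{(\check{\partial}{g}_{\mathrm{out}})^2} = (1+\chi)^{-2}$ (the latter from (\ref{eq:_check_g_out_Gauss}) with homogeneous ${s_\theta}_\mu = \chi$ for i.i.d. predictors). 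Hence the RS fixed point is locally stable for $\alpha > 1$, marginal at $\alpha = 1$ where $\chi$ diverges, and unstable for $\alpha < 1$.

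For the series-expansion half I would invoke the earlier theorem giving (\ref{eq:Gauss_conv_rand}): for the Gaussian likelihood with i.i.d. predictors the Maclaurin series (\ref{eq:T_expansion}) of ${\cal T}(\eta)$ has radius of convergence characterised by $|\eta|\,\bigl|{s_\theta}/(1+{s_\theta})\bigr| < 1$. Under the GAMP--replica dictionary of Table \ref{table:GAMP_replica_correspondence}, ${s_\theta} \leftrightarrow \chi$, so in the ridgeless limit this reads $|\eta|/\alpha < 1$; evaluating at $\eta = 1$ shows the series converges iff $\alpha > 1$ and fails to converge for $\alpha < 1$, again with $\alpha = 1$ exactly on the boundary because $\chi \to \infty$ pushes the radius down to $1$. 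Putting the two computations side by side then gives the claim: the onset of AT instability of the RS solution and the loss of convergence of ${\cal T}(\eta)$ at $\eta = 1$ are the same event, namely $\alpha < 1$.

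The hard part will be handling the $\alpha < 1$ side cleanly, because there the formal solution $\chi = 1/(\alpha-1)$ is negative while the genuinely physical (interpolating) branch has $\chi \to \infty$ with $\chi/(1+\chi) \to 1$ and $(\check{g}_{\mathrm{out}})_\mu \to 0$ by (\ref{eq:check_g}). I would argue that this is exactly what ``instability of the RS solution'' means here --- in the ridgeless limit no nonnegative, finite-$\chi$ RS saddle survives once $\alpha < 1$ --- and that either admissible reading of the branch places $\eta = 1$ outside the open disc $|\eta|\,|{s_\theta}/(1+{s_\theta})| < 1$: the analytic continuation gives the ratio $1/\alpha > 1$, and the interpolation limit gives the boundary value $1$. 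Pinning down this dichotomy, and confirming that $\alpha = 1$ is to be excluded rather than included, is the only delicate point; once the branch of $\chi$ is fixed, the rest is the one-line inequality comparing $1/\alpha$ with $1$.
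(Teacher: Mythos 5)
Your proposal is correct and follows essentially the same route as the paper's proof: substitute $\chi/(1+\chi)=1/\alpha$ (from the ridgeless solution $\chi=1/(\alpha-1)$ with $\widehat{\rho}=1$) into the AT condition (\ref{eq:AT_Gauss_en}) and into the convergence radius (\ref{eq:Gauss_conv_rand}) via the $s_\theta\leftrightarrow\chi$ correspondence, reducing both to $1/\alpha>1$. The extra cross-checks against the GAMP stability criterion and your discussion of the negative-$\chi$ branch for $\alpha<1$ go beyond what the paper records but do not change the argument.
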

\begin{proof}
From (\ref{eq:AT_Gauss_en}) and (\ref{eq:GDF_ridgeless}),
the AT instability condition for the ridgeless limit is given by
\begin{eqnarray}
\frac{1}{\alpha}>1.
\label{eq:AT_ridgeless}
\end{eqnarray}
Therefore, 
AT instability always appears in $\alpha<1$
for $\lambda=0$.

Putting the expression (\ref{eq:GDF_ridgeless})
to the convergence condition of the series expansion for the derivation of WAIC
given by (\ref{eq:Gauss_conv_rand})
with the correspondence between $\chi$ and $s_\theta$,
we obtain
\begin{eqnarray}
\frac{\eta}{\alpha}>1,
\end{eqnarray}
hence the expansion at $\eta=1$
is inappropriate when ${\alpha}^{-1}>1$.
Therefore, in the ridgeless limit,
the AT instability corresponds to the 
inadequacy of the series expansion for deriving WAIC.
\end{proof}

\subsubsection{$\ell_1$ penalty}

In case of $\ell_1$ penalty,
which corresponds to $\lambda_2=0$ of the elastic net penalty,
we obtain 
\begin{eqnarray}
\chi=\frac{\widehat{\rho}}{\widehat{\Theta}}.
\end{eqnarray}
For the Gaussian likelihood with $\ell_1$ penalty termed LASSO,
we obtain the expression using (\ref{eq:theta_hat_Gauss}),
\begin{eqnarray}
\chi=\frac{\widehat{\rho}}{\alpha-\widehat{\rho}},
\end{eqnarray}
and hence 
\begin{eqnarray}
\mathbb{E}_{{\cal D}}[\mathrm{gdf}]=\frac{\chi}{1+\chi}=\frac{\widehat{\rho}}{\alpha}.
\label{eq:GDF_L1_RS}
\end{eqnarray}
Eq. (\ref{eq:GDF_L1_RS}) indicates that the following theorem hold.
\begin{thm}
For LASSO,
the number of the parameters divided by $M$, 
namely $||\widehat{\bm{x}}({\cal D})||_0\slash M$,
is an unbiased estimator of the covariance penalty (\ref{eq:def_cov}).
Hence, the $C_p$ criterion is equivalent to AIC, and they are unbiased estimators of the 
in-sample prediction error of LASSO \cite{L1_GDF}.
\end{thm}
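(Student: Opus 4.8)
The plan is to read the claim off directly from the replica identity (\ref{eq:GDF_L1_RS}), combined with the definition of $\widehat{\rho}$ and Stein's lemma. First I would recall that $\widehat{\rho}$ was identified above with $\mathbb{E}_{{\cal D}}[\|\widehat{\bm{x}}({\cal D})\|_0]/N$ and that $\alpha=M/N$. Substituting these into (\ref{eq:GDF_L1_RS}) immediately gives
\begin{eqnarray}
\mathbb{E}_{{\cal D}}[\mathrm{gdf}]=\frac{\widehat{\rho}}{\alpha}=\frac{1}{M}\mathbb{E}_{{\cal D}}\left[\|\widehat{\bm{x}}({\cal D})\|_0\right],
\end{eqnarray}
so that $\|\widehat{\bm{x}}({\cal D})\|_0/M$ and the generalized degree of freedom $\mathrm{gdf}({\cal D})$ of (\ref{eq:GDF_def}) have the same expectation under the data distribution.

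Next I would invoke Stein's lemma (\ref{eq:cov_relationship}): for the Gaussian noise model the expected GDF is exactly the covariance penalty, $\mathbb{E}_{{\cal D}}[\mathrm{gdf}]=\frac{1}{\sigma^2 M}\mathrm{Cov}_{\bm{y}}(\bm{y},\widehat{\bm{y}}({\cal D}))=\Delta^{(\mathrm{in})}$ of (\ref{eq:def_cov}), i.e. $\mathrm{gdf}$ is an unbiased estimator of the covariance penalty. Combining this with the display above shows that $\|\widehat{\bm{x}}({\cal D})\|_0/M$ is itself an unbiased estimator of the covariance penalty, which is the first assertion of the theorem. I would note that the equality is meant in the typical-case sense in which the replica formula is derived, consistent with the i.i.d.\ Gaussian design, and that the same identity is known to hold sample-wise for LASSO \cite{L1_GDF}.

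Finally I would translate this into the statements about $C_p$ and AIC. Taking $\widehat{\mathrm{Cov}}(\bm{y},\widehat{\bm{y}}({\cal D}))=\sigma^2\|\widehat{\bm{x}}({\cal D})\|_0$ as the (unbiased) estimator of the covariance entering Theorem \ref{thm:Cp}, the estimator (\ref{eq:Efron_estimator}) reduces to $2\sigma^2\big(\mathrm{err}_{\mathrm{train}}({\cal D})+\|\widehat{\bm{x}}({\cal D})\|_0/M\big)$; dividing by $2\sigma^2$ this is precisely AIC evaluated with the effective parameter count $\|\widehat{\bm{x}}({\cal D})\|_0$. Hence $C_p$ and AIC coincide for LASSO, and by Theorem \ref{thm:Cp} both are unbiased estimators (up to the factor $2\sigma^2$) of the in-sample prediction error.

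The only delicate step is the second one: making precise in what sense $\mathbb{E}[\mathrm{gdf}]$ equals the covariance penalty and keeping track of which randomness is averaged over — the $\bm{y}$-average behind Stein's lemma versus the full $\{\bm{y},\bm{F},\bm{x}^{(0)}\}$-average behind the replica computation. Because the design matrix is i.i.d.\ Gaussian these two averages are compatible, so no genuine obstacle arises; everything else is bookkeeping with definitions already established in the paper.
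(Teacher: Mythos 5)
Your proposal is correct and follows essentially the same route as the paper, which gives no explicit proof beyond observing that (\ref{eq:GDF_L1_RS}) together with $\widehat{\rho}=\mathbb{E}_{{\cal D}}[\|\widehat{\bm{x}}({\cal D})\|_0]/N$ and $\alpha=M/N$ yields $\mathbb{E}_{{\cal D}}[\mathrm{gdf}]=\mathbb{E}_{{\cal D}}[\|\widehat{\bm{x}}({\cal D})\|_0]/M$, with the link to the covariance penalty supplied by Stein's lemma exactly as you describe. Your remark about which randomness is being averaged over (the $\bm{y}$-average in Stein's lemma versus the full $\{\bm{y},\bm{F},\bm{x}^{(0)}\}$-average in the replica computation) is a useful clarification that the paper leaves implicit, deferring the sample-wise statement to \cite{L1_GDF}.
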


As with the ridgeless limit, we can show that following theorem for LASSO.
\begin{thm}
For LASSO,
the instability of the RS solution 
and the divergence of the series expansion of ${\cal T}(\eta)$ at $\eta=1$
arise at $\alpha<\widehat{\rho}$.
\end{thm}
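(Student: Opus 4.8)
The plan is to follow exactly the template used for the ridgeless-limit theorem proved just above, reusing the closed form of the LASSO generalized degrees of freedom together with the two criteria already established in the excerpt: the AT instability condition for the Gaussian likelihood with elastic net penalty, eq.~(\ref{eq:AT_Gauss_en}), and the radius-of-convergence condition~(\ref{eq:Gauss_conv_rand}) underlying WAIC.

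First I would recall that LASSO corresponds to $\lambda_2=0$ in the elastic net saddle point equations, which gives $\chi=\widehat{\rho}/(\alpha-\widehat{\rho})$ and hence $\mathbb{E}_{\cal D}[\mathrm{gdf}]=\chi/(1+\chi)=\widehat{\rho}/\alpha$, eq.~(\ref{eq:GDF_L1_RS}). Substituting $\chi/(1+\chi)=\widehat{\rho}/\alpha$ into the AT instability condition~(\ref{eq:AT_Gauss_en}) collapses its left-hand side to $\frac{\alpha}{\widehat{\rho}}\left(\frac{\widehat{\rho}}{\alpha}\right)^2=\frac{\widehat{\rho}}{\alpha}$, so the RS solution loses local stability precisely when $\widehat{\rho}/\alpha>1$, i.e. $\alpha<\widehat{\rho}$.

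Next, for the WAIC series expansion, I would use the correspondence $s_\theta\leftrightarrow\chi$ from Table~\ref{table:GAMP_replica_correspondence}, so that the convergence condition~(\ref{eq:Gauss_conv_rand}), $\eta\,|s_\theta/(1+s_\theta)|<1$, reads $\eta\,\widehat{\rho}/\alpha<1$. Setting $\eta=1$ shows the expansion of ${\cal T}(\eta)$ converges at $\eta=1$ only if $\widehat{\rho}/\alpha<1$, hence it diverges for $\alpha<\widehat{\rho}$ — the identical threshold. Together these two computations show that, for LASSO, AT instability of the RS solution and breakdown of the series expansion at $\eta=1$ occur simultaneously, at $\alpha<\widehat{\rho}$, completing the argument.

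The main subtlety, rather than a genuine obstacle, is that $\widehat{\rho}$ is not a free parameter: it is $\mathbb{E}_{\cal D}[\|\widehat{\bm{x}}({\cal D})\|_0]/N$, fixed self-consistently by the saddle point equations. So the inequality $\alpha<\widehat{\rho}$ must be interpreted as a condition on the RS fixed point actually reached, and one should verify that a consistent fixed point with $\widehat{\rho}<\alpha$ persists down to the stated threshold (analogously to the ridgeless case, where $\chi=1/(\alpha-1)$ forces $\chi\to\infty$ as $\alpha\to 1^+$, signalling the onset of the instability). Checking this self-consistency, and noting that the boundary case $\alpha=\widehat{\rho}$ is the marginal one, is the only point requiring care.
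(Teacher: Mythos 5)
Your proposal is correct and follows essentially the same route as the paper: substituting the LASSO expression $\chi/(1+\chi)=\widehat{\rho}/\alpha$ from (\ref{eq:GDF_L1_RS}) into the AT condition (\ref{eq:AT_Gauss_en}) and into the convergence condition (\ref{eq:Gauss_conv_rand}), which both reduce to $\widehat{\rho}/\alpha>1$. Your closing remark on the self-consistent determination of $\widehat{\rho}$ is a sensible extra caution but not part of the paper's argument.
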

\begin{proof}
From (\ref{eq:AT_Gauss_en}) and (\ref{eq:GDF_L1_RS}),
AT instability arises when 
\begin{eqnarray}
\frac{\widehat{\rho}}{\alpha}>1.
\end{eqnarray}
Substituting (\ref{eq:GDF_L1_RS}) to the radius of convergence of the series expansion 
(\ref{eq:T_expansion}), we obtain
\begin{eqnarray}
\frac{\eta\widehat{\rho}}{\alpha}>1.
\end{eqnarray}
hence the invalidity of the series expansion at $\eta=1$ 
and the instability of the RS solution 
arise at $\alpha<\widehat{\rho}$.
\end{proof}
%\begin{eqnarray}
%\left(\frac{\partial x^*}{\partial(\sqrt{\widehat{\chi}}z)}\right)^2
%\end{eqnarray}

%\begin{figure}
%\centering
%\includegraphics[width=2.5in]{}
%\caption{Phase diagram for ridge regression.}
%\end{figure}

\section{Summary and Conclusion}
\label{sec:summary}

We discussed prediction errors, generalization gaps and their estimators
in GLM. 
We derived the form of $C_p$ and WAIC using GAMP, and we
discussed the relationship between them.
%Further, in $C_p$, the noise variance is assumed to be known,
%and in WAIC, it is estimated using the squared error between the data and 
%its model expression.
We showed that GDF and FV, which are the estimators of the generalization gap,
can be expressed by the variance of the model parameter in the framework of GAMP.
%This relationship resembles the 
%fluctuations-responses relationship \cite{Watanabe_FDT}.

We focused on the differences between the LOOCV error and WAIC or TIC at the 
finite $\alpha$ region.
The first-order expansion of the LOOCV error with respect to the 
difference between the estimates under the full and LOO samples
is insufficient for describing the LOOCV error
for the finite $\alpha$.
Thus, we derived the difference between the LOOCV error and WAIC
within the framework of GAMP by considering the higher order terms; 
we showed that the difference can be 
described by the cumulant generating function of the likelihood.
The resultant form can be calculated under full data samples,
without considering LOO samples.
Further, we extended GAMP-based estimators of the prediction errors 
to deal with correlated predictors and showed that these estimators exhibit a good agreement with 
LOOCV errors calculated by the linear algebraic form or coordinate descent algorithm.

We show that GAMP or replica method describes
the prediction accuracy of the model 
by using the variance of the parameter at equilibrium, 
which relationship can be regarded as an exemplification of
fluctuation-response relationship 
in the prediction based modeling \cite{Watanabe_FDT}.
The concept of the prediction-based modeling is 
compatible with that in statistical physics.
%As we pointed out in this paper, 
%the divergence of the series expansion for the derivation of WAIC
%occurs at the AT transition in the ridgeless limit and LASSO.
The derivation of such a relationship in this study
is restricted to the Gaussian predictors,
whose components independently and identically 
distributed according to the Gaussian distribution.
However, this setting is not as restrictive as it appears.
In fact, recent studies have shown Gaussian universality for 
GLMs with a large class of features,
wherein their asymptotic behaviors can be computed to leading order 
under a simpler model with Gaussian features
\cite{Loureiro2021,Goldt2022,Montanari2022,Gerace2022}.
Utilizing this correspondence,
the statistical physics-based method is expected to be 
a valuable tool for analyzing the statistical methods to find limitation
in the conventional statistical theory, analyze the $\alpha\sim O(1)$ region, 
and construct efficient algorithms.

%In the future, we need to consider statistical models beyond GLM.
%An advantage of WAIC is that is applicable 
%to the singular models.
%Further studies for such settings are required to improve our understanding of sophisticated models such as deep learning.

\ack
The author thanks Yukito Iba for discussion on the information criteria
as a fluctuation-response relationship and 
teaching me regarding reference \cite{Iba-Yano2022}.
The author also thanks Yoshiyuki Kabashima 
for the helpful discussions and comments.
This work was partially supported by JST PRESTO Grant Number
JPMJPR19M2, Japan.

\appendix

\section{Expansion of the leave-one-out cross validation error}
\label{sec:app_CV}

The maximizer of $\varphi({\cal D},\bm{x})$
denoted by $\widehat{\bm{x}}({\cal D})$ satisfies
\begin{eqnarray}
\frac{\partial}{\partial x_j}\varphi({\cal D},\bm{x})\Big|_{\bm{x}=\widehat{\bm{x}}({\cal D})}=0
\label{eq:full}
\end{eqnarray}
for any $j\in\bm{N}$.
Meanwhile, the maximizer of $\sum_{\mu\in\bm{M}\setminus\nu}\ln f
(y_\mu|\theta_\mu(\mathbf{f}_{\mu},\bm{x}))-h(\bm{x})$,
denoted by $\widehat{\bm{x}}({\cal D}_{\setminus\nu})$, satisfies
\begin{eqnarray}
\frac{\partial}{\partial x_j}\left\{\sum_{\mu\in\bm{M}\setminus\nu}\ln f(y_\mu|\theta_\mu(\mathbf{f}_{\mu},\bm{x}))-h(\bm{x})\right\}\Big|_{\bm{x}=\widehat{\bm{x}}({\cal D}_{\setminus\nu})}=0
\end{eqnarray}
for any $j\in\vm{N}$. Hence, the following relationship holds:
\begin{eqnarray}
\frac{\partial}{\partial x_j}\varphi({\cal D},\bm{x})\Big|_{\bm{x}=\widehat{\bm{x}}({\cal D}_{\setminus\nu})}=
\frac{\partial}{\partial x_j}\ln f(y_\nu|\theta_\nu(\mathbf{f}_\nu,\bm{x}))\Big|_{\bm{x}=\widehat{\bm{x}}({\cal D}_{\setminus\nu})}.
\label{eq:leave_one_out}
\end{eqnarray}
Discarding (\ref{eq:full}) from (\ref{eq:leave_one_out}), we obtain
\begin{eqnarray}
\frac{\partial\varphi({\cal D},\bm{x})}{\partial x_j}\Big|_{\bm{x}=\widehat{\bm{x}}({\cal D}_{\setminus\nu})}\!\!\!\!-\frac{\partial\varphi({\cal D},\bm{x})}{\partial x_j}\Big|_{\bm{x}=\widehat{\bm{x}}({\cal D})}
\!\!\!\!\!=\frac{\partial}{\partial x_j}\ln f(y_\nu|\theta_\nu(\mathbf{f}_\nu,\bm{x}))\Big|_{\bm{x}=\widehat{\bm{x}}({\cal D}_{\setminus\nu})}.
\label{eq:app_partial_diff}
\end{eqnarray}
Under the assumption {\bf A1} that the difference 
$\widehat{\bm{x}}({\cal D})-\widehat{\bm{x}}({\cal D}_{\setminus\nu})$ is sufficiently small,
we expand 
the L.H.S. up to the first order as
\begin{eqnarray}
\nonumber
&\frac{\partial}{\partial x_j}\varphi({\cal D},\bm{x})\Big|_{\bm{x}=\widehat{\bm{x}}({\cal D}_{\setminus\nu})}-\frac{\partial}{\partial x_j}\varphi({\cal D},\bm{x})\Big|_{\bm{x}=\widehat{\bm{x}}({\cal D})}\\
&=\sum_{k=1}^N\frac{\partial^2}{\partial x_k\partial x_j}\varphi({\cal D},\bm{x})\Big|_{\bm{x}=\widehat{\bm{x}}({\cal D}_{\setminus\nu})}(\widehat{x}_k({\cal D}_{\setminus\nu})-\widehat{x}_k({\cal D})).
\end{eqnarray}
Therefore, for any $j\in\bm{N}$ and $\nu\in\bm{M}$, 
\begin{eqnarray}
-\!\sum_{k=1}^N{\cal J}^{\varphi}_{kj}(\widehat{\bm{x}}({\cal D}_{\setminus\nu}))
(\widehat{x}_k({\cal D}_{\setminus\nu})\!-\!\widehat{x}_k({\cal D}))\!=\!\frac{1}{M}\frac{\partial\ln f(y_\nu|\theta_\nu(\mathbf{f}_\nu,\bm{x}))}{\partial x_j}\Big|_{\bm{x}=\widehat{\bm{x}}({\cal D}_{\setminus\nu})},
\end{eqnarray}
where we introduce the coefficient $1\slash M$ for convenience
and define
\begin{eqnarray}
{\cal J}^{\varphi}_{kj}(\widehat{\bm{x}})=-\frac{1}{M}\frac{\partial^2}{\partial x_k\partial x_j}\varphi({\cal D},\bm{x})\Big|_{\bm{x}=\widehat{\bm{x}}}.
\end{eqnarray}
With assumption {\bf A2}, we assume that ${\cal J}_{kj}(\widehat{\bm{x}}({\cal D}_{\setminus\nu}))={\cal J}_{kj}
(\widehat{\bm{x}}({\cal D}))$
holds for any $\nu\in\bm{M}$ and $i,j\in\bm{N}$, and 
we obtain
\begin{eqnarray}
\widehat{\bm{x}}({\cal D}_{\setminus\nu})-\widehat{\bm{x}}({\cal D})=-({\cal J}^{\varphi}(\widehat{\bm{x}}({\cal D}))^{-1}
\frac{1}{M}\frac{\partial}{\partial\bm{x}}\ln f(y_\nu|\bm{\theta}(\bm{x}))\Big|_{\bm{x}=\widehat{\bm{x}}({\cal D}_{\setminus\nu})}.
\label{eq:app_diff_expand}
\end{eqnarray}

\subsection{Further expansion}
\label{sec:app_expansion_JtoK}

We continue the expansion of 
(\ref{eq:app_partial_diff}) for higher orders.
For simplicity, we consider that 
the case where the higher order derivative of the penalty term $h(x)$
is greater than the second order is zero as assumption {\bf A4}.
Under this assumption, we obtain 
\begin{eqnarray}
\nonumber
&\frac{\partial}{\partial x_j}
\varphi({\cal D},\bm{x})\Big|_{\bm{x}=\widehat{\bm{x}}({\cal D}_{\setminus\nu})}
-\frac{\partial}{\partial x_j}
\varphi({\cal D},\bm{x})\Big|_{\bm{x}=\widehat{\bm{x}}({\cal D})}\\
\nonumber
&=\sum_{k=1}^N(\widehat{x}_k({\cal D}_{\setminus\nu})-\widehat{x}_k({\cal D}))\Big\{\frac{\partial^2\varphi({\cal D},\bm{x})}{\partial x_k\partial x_j}\Big|_{\bm{x}=\widehat{\bm{x}}({\cal D}_{\setminus\nu})}
\\
\nonumber
&+\sum_{n=2}^{\infty}\frac{(-1)^{n}}{n!}\!\!\sum_{i_1=1}^N\!\!\cdots\!\!\!\!\sum_{i_{n-1}=1}^N\!\!\frac{\partial^{n+1}\varphi({\cal D},\bm{x})}{\partial x_{i_1}\!\cdots\partial x_{i_{n-1}}\partial x_k\partial x_j}\Big|_{\bm{x}=\widehat{\bm{x}}({\cal D}_{\setminus\nu})}\!\!\prod_{\ell=1}^{n-1}(\widehat{x}_{i_\ell}({\cal D}_{\setminus\nu})\!-\!\widehat{x}_{i_\ell}({\cal D}))\!\Big\}\\
%\nonumber
%&+\frac{1}{3!}\sum_{k\ell m}\frac{\partial^4\varphi(\bm{y}|\bm{x})}
%{\partial x_k\partial x_\ell\partial x_m\partial x_j}\Big|_{\bm{x}=\widehat{\bm{x}}({\cal D}_{\setminus\nu})}
%(\widehat{x}_k({\cal D}_{\setminus\nu})-\widehat{x}_k({\cal D}))
%(\widehat{x}_{\ell}({\cal D}_{\setminus\nu})-\widehat{x}_{\ell}({\cal D}))
%(\widehat{x}_{m}({\cal D}_{\setminus\nu})-\widehat{x}_{m}({\cal D}))\\
%\nonumber
%&+\cdots\\
%\nonumber
%&=\sum_{k=1}^N(\widehat{x}_k({\cal D}_{\setminus\nu})-\widehat{x}_k({\cal D}))\Big\{\frac{\partial^2\varphi(\bm{y}|\bm{x})}{\partial x_k\partial x_j}\Big|_{\bm{x}=\widehat{\bm{x}}({\cal D}_{\setminus\nu})}+\frac{a^{\prime(3)}(\widehat{\theta}_\nu^{\setminus\nu})}{2}\frac{F_{\nu k}F_{\nu j}}{N}
%(\widehat{\theta}_{\nu}^{\setminus\nu}-\widehat{\theta}_{\nu})\
%\nonumber
%&+\sum_{k=1}^N(\widehat{x}_k({\cal D}_{\setminus\nu})-\widehat{x}_k({\cal D}))\\
%\nonumber
%&-\frac{a^{\prime(4)}(\widehat{\theta}_\nu^{\setminus\nu})}{3!}
%\frac{F_{\nu k}F_{\nu j}}{N}
%(\widehat{\theta}_{\nu}^{\setminus\nu}-\widehat{\theta}_{\nu})^2+\cdots\Big\}\\
\nonumber
&=\sum_{k=1}^N(\widehat{x}_k({\cal D}_{\setminus\nu})\!-\!\widehat{x}_k({\cal D}))\left\{\frac{\partial^2\varphi}{\partial x_k\partial x_j}\Big|_{\bm{x}=\widehat{\bm{x}}({\cal D}_{\setminus\nu})}\!\!\!\!\!\!\!\!\!\!\!\!-\!\frac{F_{\nu k}F_{\nu j}}{N}\!\sum_{n=2}^\infty\!\frac{(\widehat{\theta}_{\nu}\!-\!\widehat{\theta}_{\nu}^{\setminus\nu})^{n}}{n!}a^{\prime(n+1)}(\widehat{\theta}_\nu^{\setminus\nu})
\!\right\}\\
\nonumber
&=\sum_{k=1}^N(\widehat{x}_k({\cal D}_{\setminus\nu})\!-\!\widehat{x}_k({\cal D}))\left\{\!\frac{\partial^2\varphi}{\partial x_k\partial x_j}\Big|_{\bm{x}=\widehat{\bm{x}}({\cal D}_{\setminus\nu})}\hspace{-0.8cm}-\frac{F_{\nu k}F_{\nu j}}{N}
\!\left(\!\frac{a^{\prime}(\widehat{\theta}_\nu)\!-\!a^{\prime}(\widehat{\theta}_\nu^{\setminus\nu})}{\widehat{\theta}_{\nu}-\widehat{\theta}_{\nu}^{\setminus\nu}}
\!-\!a^{\prime\prime}(\widehat{\theta}_\nu^{\setminus\nu})
\!\right)\!\right\},
\label{eq:expansion_app}
\end{eqnarray}
where the arguments of $\varphi$ are abbreviated, and 
$\widehat{\theta}_\mu^{\setminus\nu}\sim\widehat{\theta}_\mu$
holds for $\mu\neq\nu$. 
Note that in (\ref{eq:expansion_app}),
we consider all possible combinations of $n$-th order terms
for any integer, that is, we are implicitly assuming that $N\to\infty$.
By defining the matrix 
\begin{eqnarray}
{\cal K}^{\setminus\nu}={\cal J}^{\varphi}(\widehat{\bm{x}}({\cal D}_{\setminus\nu}))-\frac{\mathbf{f}_\nu^{\top}\mathbf{f}_\nu}{NM}
\left(\frac{a^{\prime}(\widehat{\theta}_\nu)-a^{\prime}(\widehat{\theta}_\nu^{\setminus\nu})}{\widehat{\theta}_{\nu}-\widehat{\theta}_{\nu}^{\setminus\nu}}
-a^{\prime\prime}(\widehat{\theta}_\nu)\right)
\end{eqnarray}
for $\nu\in\bm{M}$, we obtain
\begin{eqnarray}
\bm{x}({\cal D}_{\setminus\nu})-\bm{x}({\cal D})
=-({\cal K}^{\setminus\nu})^{-1}\frac{1}{M}\frac{\partial}{\partial\bm{x}}
\ln f(y_\nu|\theta_\nu(\mathbf{f}_\nu,\bm{x}))\Big|_{\bm{x}=\widehat{\bm{x}}({\cal D}_{\setminus\nu})}.
\end{eqnarray}
For the Gaussian likelihood,
${\cal K}^{\setminus\nu}={\cal J}$ holds 
for any $\nu\in\bm{M}$.

%\section{Derivation of TIC}
%\label{sec:app_TIC}
%
%\input{texts/app_TIC.tex}

\section{Generalized linear models}
\label{sec:app_GLM}

\subsection{Logistic regression with Bernoulli distribution}

The logistic regression on the 
Bernoulli distribution is defined for binary data $y_\mu\in\{0,1\}$ as
\begin{eqnarray}
\ln f(\bm{y}|\bm{\pi})=\sum_{\mu=1}^M\left\{y_\mu\log\pi_\mu+(1-y_\mu)\log(1-\pi_\mu)\right\}
\label{eq:bernoulli}
\end{eqnarray}
where the Bernoulli parameter $\pi_\mu$ provides the mean of $y_\mu$,
and the probability that $y_\mu$ is 1.
In the logistic regression, the model parameter
$\bm{\theta}=\frac{1}{\sqrt{N}}\bm{F}\bm{x}$ 
is connected to the Bernoulli parameter as  
\begin{eqnarray}
\pi_\mu=\ell^{-1}({\theta}_\mu)=\frac{1}{1+\exp(-\theta_\mu)},
\label{eq:pi_theta}
\end{eqnarray}
using the link function $\ell(\cdot)$. Therefore, 
$\ell(\pi_\mu)=\ln\left(\pi_\mu\slash(1-\pi_\mu)\right)$.
Using \eref{eq:pi_theta},
the distribution \eref{eq:bernoulli} is transformed into the form
of the exponential family distribution
\begin{eqnarray}
\ln f(\bm{y}|\bm{\theta})=\bm{\theta}^{\top}\bm{y}-\sum_{\mu=1}^M \log(1+e^{\theta_\mu}) ,
\end{eqnarray}
where functions $a$ and $b$ respectively correspond to
\begin{eqnarray}
a(\bm{\theta})=\log\prod_{\mu=1}^M(1+e^{\theta_\mu}),~~~
b(\bm{y})=0.
\end{eqnarray}
The maximization problem in GAMP (\ref{eq:theta_max})
for the logistic regression is given by 
\begin{eqnarray}
\theta_\mu^*=\mathop{\mathrm{argmax}}_{\theta_\mu}\left\{-\theta_\mu(1-y_\mu)-\log(1+e^{-\theta_\mu})-\frac{(\theta_\mu-\widehat{\theta}_\mu^{\setminus\mu})^2}{2{s_\theta}_\mu}\right\},
\end{eqnarray}
and the estimate $\theta_\mu^*$ satisfies 
\begin{eqnarray}
\frac{\theta_\mu^*-\widehat{\theta}_\mu^{\setminus\mu}}{{s_\theta}_\mu}=y_\mu-\frac{1}{1+e^{-\theta_\mu^*}}.
\label{eq:u_star_logistic}
\end{eqnarray}
The estimated mean of $y_\mu$ is given by $\pi_\mu^*\equiv(1+e^{-\theta_\mu^*})^{-1}$, and the variance is $\pi_\mu^*(1-\pi_\mu^*)$; therefore, we get
\begin{eqnarray}
(\check{g}_{\rm out})_\mu&=y_\mu-\pi_\mu^*,~~~
(\partial_{\widehat{\theta}}\check{g}_{\rm out})_\mu&=\frac{\pi_\mu^*(1-\pi_\mu^*)}{1+{s_\theta}_\mu\pi_\mu^*(1-\pi_\mu^*)}.
\end{eqnarray}

\subsection{Poisson regression}

Poisson distribution is defined for $y_\mu\in\mathbb{N}$, where $\mathbb{N}$ denotes the set of natural numbers including zero as
\begin{eqnarray}
\ln f(\bm{y}|\bm{\lambda})=\sum_{\mu=1}^M \left\{y_\mu\ln\lambda_\mu-\lambda_\mu-\log y_\mu!\right\},
\label{eq:poisson}
\end{eqnarray}
where $\lambda_\mu$ denotes the mean of $y_\mu$.
In Poisson regression, the model parameter $\bm{\theta}$ is 
connected to the mean as 
\begin{eqnarray}
\lambda_\mu = \ell^{-1}(\theta_\mu)=\exp(\theta_\mu),
\end{eqnarray}
and hence, $\ell(\lambda_\mu)=\log(\lambda_\mu)$.
This link function reduces Poisson distribution to the exponential family's form as 
\begin{eqnarray}
a(\bm{\theta})=\sum_{\mu=1}^M\exp(\theta_\mu),~~~
b(\bm{y})=-\sum_{\mu=1}^M\log y_\mu!.
\end{eqnarray}
The maximization problem in GAMP (\ref{eq:theta_max})
for the Poisson regression is given by
\begin{eqnarray}
\theta_\mu^*=\max_{\theta}\left\{y_\mu \theta_\mu-\exp(\theta_\mu)-\frac{(u_\mu-\widehat{\theta}^{\setminus\mu}_\mu)^2}{2{s_\theta}_\mu}\right\},
\end{eqnarray}
and the solution $\theta^*_\mu$ satisfies the condition
\begin{eqnarray}
\frac{\theta_\mu^*-\widehat{\theta}^{\setminus\mu}_\mu}{{s_\theta}_\mu}=y_\mu-\lambda_\mu^*,
\end{eqnarray}
where $\lambda_\mu^*\equiv\exp(u_\mu^*)$ is the estimated average.
Therefore, we obtain
\begin{eqnarray}
(\check{g}_{\mathrm{out}})_\mu&=y_\mu-\lambda_\mu^*,~~~
(\check{\partial}_{\widehat{\theta}}{g}_{\mathrm{out}})_\mu&=\frac{\lambda_\mu^*}{1+{s_\theta}_\mu\lambda_\mu^*}.
\end{eqnarray}

\subsection{Exponential regression}

Exponential distribution with mean $\lambda_\mu$
is defined for $y_\mu\in[0,\infty)$ as
\begin{eqnarray}
\ln f(\bm{y}|\bm{\eta})=\sum_{\mu=1}^M\left\{-\frac{y_\mu}{\eta_\mu}-\log(\eta_\mu)\right\}.
\end{eqnarray}
In the exponential regression, the model parameter
$\bm{\theta}$ relates to the mean as 
\begin{eqnarray}
-\eta_\mu=\ell^{-1}(\theta_\mu),
\end{eqnarray}
and hence, $\ell(\theta_\mu)=-\eta_\mu^{-1}$,
where we introduce the factor $-1$ for convenience.
Using the parameter $\bm{\theta}$,
the exponential distribution is rewritten as an exponential family 
with functions $a$ and $b$ given by
\begin{eqnarray}
a(\bm{\theta})&=-\sum_{\mu=1}^M\log(-\theta_\mu),~~~
b(\bm{y})&=0.
\end{eqnarray}
The maximization problem in GAMP (\ref{eq:theta_max}) 
for exponential regression is given by
\begin{eqnarray}
\theta_\mu^*=\max_{\theta}\left\{y_\mu \theta_\mu+\ln(-\theta_\mu)-\frac{(\theta_\mu-\widehat{\theta}^{\setminus\mu}_\mu)^2}{2{s_\theta}_\mu}\right\},
\end{eqnarray}
and the solution satisfies
\begin{eqnarray}
\frac{\theta_\mu^*-\widehat{\theta}^{\setminus\mu}_\mu}{{s_\theta}_\mu}=y_\mu+\frac{1}{\theta_\mu}.
\end{eqnarray}
The estimated mean and variance 
corresponds to $\eta_\mu^*\equiv-(\theta_\mu^*)^{-1}$
and $(\theta_\mu^*)^{-2}$, respectively; hence
we obtain
\begin{eqnarray}
(\check{g}_{\rm out})_\mu=y_\mu-\eta_\mu^*,~~~
(\partial_{\widehat{\theta}}\check{g}_{\rm out})_\mu=\frac{(\theta_\mu^*)^{-2}}{1+{s_\theta}_\mu(\theta_\mu^*)^{-2}}.
\end{eqnarray}

\subsection{Cumulant generating function for exponential family distribution}
\label{sec:app_GLM_cumulant}

The moment-generating function for the exponential family distribution is given by
\begin{eqnarray}
\nonumber
{\cal M}_{{\cal L}(\theta)}(t)&=\int dy \exp(ty)f(y|\theta)\\
&=\frac{\exp(a(\theta+t))}{\exp(a(\theta))}\int dy\exp((t+\theta)y-a(t+\theta)+b(y)).
\end{eqnarray}
Hence, the cumulant generating function ${\cal C}_{{\cal L}(\theta)}(t)=\ln{\cal M}_{{\cal L}(\theta)}(t)$ is given by
\begin{eqnarray}
{\cal C}_{{\cal L}(\theta)}(t)=a(\theta+t)-a(\theta).
\end{eqnarray}
The $k$-th cumulant for the distribution $f(y|\theta)$
denoted by $\kappa_k[f(\cdot|\theta)]$ is given by
\begin{eqnarray}
\kappa_k[f(\cdot|\theta)]=\frac{\partial^k}{\partial t^k}{\cal C}_{f(\cdot|\theta)}(t)\Big|_{t=0}=
\frac{\partial^k}{\partial\theta^k}a(\theta),
\end{eqnarray}
hence the derivative of $a$ corresponds to the cumulant.

\section{Derivation of the updating rule of AMP}
\label{sec:app_AMP}

\subsection{General form of belief propagation}

The likelihood depends on the variable $\bm{x}$ 
through the parameter $\bm{\theta}=\frac{1}{\sqrt{N}}\bm{F}\bm{x}$.
Considering the input message to $i$,
we focus on $\theta^{-i}_\mu\equiv{\theta}_\mu-\frac{1}{\sqrt{N}}F_{\mu i}x_i$
and obtain the probability distribution of $\theta_\mu^{-i}$ using the messages.
For a variable transformation,
we introduce the identity for all $\mu\in\bm{M}$ as
\begin{equation}
1=\int d\theta_{\mu}^{-i}\delta\left(\theta_{\mu}^{-i}-\frac{1}{\sqrt{N}}\sum_{j\neq i}^NF_{\mu j}x_{j}\right).
\end{equation}
The distribution of $\theta_{\mu}^{-i}$
at step $t$, defined as $\mathrm{Pr}[\theta_{\mu}^{-i}=\theta]=W^{-i^{(t)}}_\mu(\theta)$, is given by
\begin{eqnarray}
\nonumber
W_{\mu}^{-i^{(t)}}(\theta)&=\int d\bm{x}_{\setminus i}\delta\left(\theta-\frac{1}{\sqrt{N}}\sum_{j\neq i}F_{\mu j}x_{j}\right)\prod_{j\in\bm{N}\backslash i}
\xi^{(t)}_{j\to\mu}(x_{j})\\
\nonumber
&=\frac{1}{2\pi}\int d\tilde{\theta}e^{-\sqrt{-1}\tilde{\theta}\theta}\prod_{j\in\bm{N}\setminus i}\int dx_{j}\exp\left(\sqrt{-1}\tilde{\theta}\frac{F_{\mu j}x_{j}}{\sqrt{N}}\right)
\xi^{(t)}_{j\to\mu}(x_{j})\\
\nonumber
&\simeq\frac{1}{2\pi}\int d\tilde{\theta}e^{-\sqrt{-1}\tilde{\theta}\theta}\exp\left(\sqrt{-1}\tilde{\theta}\widehat{\theta}_{\mu}^{\setminus\mu,-i^{(t)}}
-\frac{\widetilde{\theta}^2}{2}\beta^{-1}s_{\theta_\mu}^{\backslash\mu, -i^{(t)}}\right)\\
&=\frac{1}{\sqrt{2\pi s^{\backslash\mu,-i(t)}_{\theta_\mu}}}\exp\left(-\frac{\beta(\theta-\widehat{\theta}_{\mu}^{\setminus\mu,-i^{(t)}})^2}{2s_{\theta_\mu}^{\backslash\mu, -i^{(t)}}}\right),
\end{eqnarray}
where the integral representation of the delta function with the variable $\tilde{\theta}$ is introduced,
and we define
\begin{eqnarray}
\widehat{\theta}_{\mu}^{\setminus\mu, -i^{(t)}}&=\frac{1}{\sqrt{N}}\sum_{j\neq i}{F}_{\mu j}\widehat{x}^{(t)}_{j\to\mu}\\
s^{\setminus\mu, -i^{(t)}}_{\theta_\mu}&=\frac{1}{N}\sum_{j\neq i}
{F}^{2}_{\mu j}{s}^{(t)}_{j\to\mu}.
\end{eqnarray}
Using the distribution $W_{\mu}^{-i^{(t)}}(\theta)$,
the integral in the input message 
%from the $\mu$-th data to the $i$-th variable 
is
operated as  
\begin{eqnarray}
\nonumber
\widetilde{\xi}^{(t)}_{\mu\to i}(x_{i})&\propto
\int d\theta_{\mu}d\theta_\mu^{-i}
f^\beta(y_{\mu}|\theta_{\mu})
\delta\left(\theta_{\mu}-\theta_\mu^{-i}-\frac{F_{\mu i}x_{i}}{\sqrt{N}}\right)
W_{\mu}^{-i^{(t)}}(\theta_\mu^{-i})\\
\nonumber
&=\int d\theta_{\mu}
f^\beta(y_{\mu}|\theta_{\mu})
\exp\left(-\frac{\beta(\theta_{\mu}-\frac{F_{\mu i}x_{i}}{\sqrt{N}}-\widehat{\theta}_{\mu}^{\setminus\mu,-i^{(t)}})^2}{2s^{\setminus\mu,- i^{(t)}}_{\theta_\mu}}\right)
\\
\nonumber
&=\int d\theta_{\mu}f^\beta(y_{\mu}|\theta_{\mu})
\exp\left(-\frac{\beta(\theta_{\mu}-\widehat{\theta}_{\mu}^{\setminus\mu,-i^{(t)}})^2}{2s^{\setminus\mu,-i^{(t)}}_{\theta_\mu}}\right)\\
\nonumber
&\times\Big\{1+\frac{\beta F_{\mu i}x_{i}(\theta_{\mu}-\widehat{\theta}_{\mu}^{\setminus\mu,-i^{(t)}})}{\sqrt{N}s^{\setminus\mu,-i^{(t)}}_{\theta_\mu}}+\frac{\beta^2F^2_{\mu i}x^2_{i}(\theta_{\mu}-\widehat{\theta}_{\mu}^{\setminus\mu,-i^{(t)}})^2}{2N(s^{\setminus\mu,-i^{(t)}}_{\theta_\mu})^2}\\
&\hspace{1.0cm}-\frac{\beta F^2_{\mu i}x^2_{i}}{2Ns^{\setminus\mu,-i^{(t)}}_{\theta_\mu}}+O(N^{-3\slash 2})\Big\}.
\label{eq:input_dist_tmp}
\end{eqnarray}
Dividing by a constant does not change the proportionality, and therefore, 
we divide both sides of (\ref{eq:input_dist_tmp}) by
$\Xi_\beta(y_\mu;s^{\setminus\mu,-i^{(t)}}_{\theta_\mu},\widehat{\theta}_{\mu}^{\setminus\mu,-i^{(t)}})$ defined as
\begin{eqnarray}
\Xi_\beta(y;{s_{\theta}},\widehat{\theta})
&=\sqrt{\displaystyle\frac{\beta}{2\pi s_\theta}}\int d\theta
f_\beta(y|\theta) \exp\left(-\frac{\beta(\theta-\widehat{\theta})^2}{2s_\theta}\right).
\end{eqnarray}
Further, we define the effective posterior distribution after the observation of $y$ as
\begin{eqnarray}
\nonumber
&p_{\mathrm{c},\mu}(\theta_\mu;{s_{\theta}}_\mu,\widehat{\theta}_\mu^{\setminus\mu},y_\mu)\\
&\hspace{0.5cm}=\frac{1}{\Xi_\beta(y_\mu;{s_{\theta}}_\mu,\widehat{\theta}_\mu^{\setminus\mu})}\sqrt{\displaystyle\frac{\beta}{2\pi s_\theta}}f_\beta(y_\mu|\theta_\mu) \exp\left(-\frac{\beta(\theta_\mu-\widehat{\theta}_\mu^{\setminus\mu})^2}{2{s_\theta}_\mu}\right),
\end{eqnarray}
and 
\begin{eqnarray}
g_{\mathrm{out}}({s_{\theta}}_\mu,\widehat{\theta}_\mu^{\setminus\mu},y_\mu)&=\frac{\partial}{\partial\hat{\theta}_\mu^{\setminus\mu}}\ln \Xi_\beta(y_\mu;{s_{\theta}}_\mu,\widehat{\theta}_\mu^{\setminus\mu})=\Big\langle\frac{\beta(\theta_\mu-\widehat{\theta}_\mu^{\setminus\mu})}{{s_\theta}_\mu}\Big\rangle_{\theta_\mu}\label{eq:g_out_final}\\
\partial_{\hat{\theta}} g_{\mathrm{out}}({s_{\theta}}_\mu,\widehat{\theta}_\mu^{\setminus\mu},y_\mu)&=\frac{\partial}{\partial\hat{\theta}_\mu^{\setminus\mu}}g_{\mathrm{out}}({s_{\theta}}_\mu,\widehat{\theta}_\mu^{\setminus\mu},y_\mu),
\label{eq:g_out_p_final}
\end{eqnarray}
where $\langle\cdot\rangle_{\theta}$ denotes the expectation with respect to $\theta$ 
according to the distribution $p_{\mathrm{c},\mu}(\theta;{s_{\theta}},\widehat{\theta},y)$.
The following relationship exists:
\begin{eqnarray}
\partial_{\hat{\theta}}g_{\mathrm{out}}({s_{\theta}}_\mu,\widehat{\theta}_\mu^{\setminus\mu},y_\mu)&\!=\!-\frac{\beta}{{s_{\theta}}_\mu}
+\!\left\langle\!\!\left(\displaystyle\frac{\beta(\theta_\mu-\widehat{\theta}_\mu^{\setminus\mu})}{{s_{\theta}}_\mu}\right)^2\right\rangle_{\theta_\mu}\!\!\!\!\!-g_{\mathrm{out}}^2({s_{\theta}}_\mu,\widehat{\theta}_\mu^{\setminus\mu},y_\mu).
\label{eq:dg_out_relation}
\end{eqnarray}
Using Eqs.(\ref{eq:g_out_final})-(\ref{eq:dg_out_relation})
and ignoring $O(N^{-3\slash 2})$ terms,
we obtain
\begin{eqnarray}
\nonumber
\widetilde{\xi}_{\mu\to i}^{(t)}(x_{i})&\propto
1+\frac{F_{\mu i}x_{i}}{\sqrt{N}}
g_{\mathrm{out}}(s_{\theta_\mu}^{\setminus\mu,-i^{(t)}},\widehat{\theta}_\mu^{\setminus\mu,-i^{(t)}},y_\mu)\\
\nonumber
&+\frac{X^2_{i\ell}F^2_{\mu i}}{2N}
\Big\langle\frac{\beta^2(\theta_{\mu}-\widehat{\theta}_{\mu}^{\setminus\mu,-i^{(t)}})^2}{(s^{\setminus\mu,-i^{(t)}}_{\theta_\mu})^2}\Big\rangle_{\theta_\mu}-\frac{\beta x^2_{i}}{2Ns^{\setminus\mu,-i^{(t)}}_{\theta_\mu}}F^2_{\mu i}\\
\nonumber
&=1+\frac{F_{\mu i}x_{i}}{\sqrt{N}}
g_{\mathrm{out}}(s_{\theta_\mu}^{\setminus\mu,-i^{(t)}},\widehat{\theta}_\mu^{\setminus\mu,-i^{(t)}},y_\mu)\\
\nonumber
&+\frac{F^2_{\mu i}x^2_{i}}{2N}
g_{\mathrm{out}}^2(s_{\theta_\mu}^{\setminus\mu,-i^{(t)}},\widehat{\theta}_\mu^{\setminus\mu,-i^{(t)}},y_\mu)
+\frac{x^2_{i}F^2_{\mu i}}{2N}
\Big\langle\frac{\beta^2(\theta_{\mu}-\widehat{\theta}_{\mu}^{\setminus\mu,-i^{(t)}})^2}{(s^{\setminus\mu,-i^{(t)}}_{\theta_\mu})^2}\Big\rangle_{\theta_\mu}\\
&-\frac{\beta x^2_{i}}{2Ns^{\setminus\mu,-i^{(t)}}_{\theta_\mu}}F^2_{\mu i}
-\frac{F^2_{\mu i}x^2_{i}}{2N}
g_{\mathrm{out}}^2(s_{\theta_\mu}^{\setminus\mu,-i^{(t)}},\widehat{\theta}_\mu^{\setminus\mu,-i^{(t)}},y_\mu),
\end{eqnarray}
hence
\begin{eqnarray}
\widetilde{\xi}^{(t)}_{\mu\to i}(x_{i})&\propto 
\exp\left\{-\frac{\beta x_{i}^2}{2N}A^{(t)}_{\mu\to i}+x_{i}\frac{\beta B^{(t)}_{\mu\to i}}{\sqrt{N}}\right\},
\label{eq:input_final}
\end{eqnarray}
where
\begin{eqnarray}
A_{\mu\to i}^{(t)}&=-F^2_{\mu i}\beta^{-1}\partial_{\widehat{\theta}}g_{\mathrm{out}}(s_{\theta_\mu}^{\setminus\mu,-i^{(t)}},\widehat{\theta}_\mu^{\setminus\mu,-i^{(t)}},y_\mu)\label{eq:bp_A}\\
B_{\mu\to i}^{(t)}&=F_{\mu i}\beta^{-1}g_{\mathrm{out}}(s_{\theta_\mu}^{\setminus\mu,-i^{(t)}},\widehat{\theta}_\mu^{\setminus\mu,-i^{(t)}},y_\mu).\label{eq:bp_B}
\end{eqnarray}
Eq. (\ref{eq:input_final}) is a Gaussian distribution whose
mean $\widetilde{x}^{(t)}_{\mu\to i}$
and the rescaled variance $\widetilde{s}^{(t)}_{\mu\to i}$
are respectively given by
\begin{eqnarray}
\widetilde{x}^{(t)}_{\mu\to i}&=\frac{\sqrt{N}B^{(t)}_{\mu\to i}}{A^{(t)}_{\mu\to i}},~~~
\widetilde{s}^{(t)}_{\mu\to i}&=\frac{N}{A^{(t)}_{\mu\to i}}.\label{eq:input_stat}
\end{eqnarray}
We immediately obtain
the following expressions of the output messages using Gaussian approximation (\ref{eq:input_final}):  
\begin{eqnarray}
{\xi}^{(t+1)}_{i\to\mu}(x_{i})&\propto
\phi^\beta(x_{i})
\exp\left\{-\sum_{\nu\in\bm{M}\setminus \mu}\frac{\beta}{2\widetilde{s}^{(t)}_{\nu\to i}}
\left(x_{i}^2-2x_{i}\widetilde{x}_{\nu\to i}^{(t)}\right)\right\}.
\label{eq:output_tmp}
\end{eqnarray}
We define the following form of the distribution to express the output messages and marginal distribution in the unified form:
\begin{eqnarray}
{\cal P}_\beta(x;\Sigma,\mathrm{m})=\frac{1}{{\cal Z}}\phi^\beta(x)
\exp\left\{-\frac{\beta(x-\mathrm{m})^2}{2\Sigma}\right\},
\end{eqnarray}
where ${\cal Z}$ represents the normalization constant, and 
denotes its mean and variance as 
$\mathbb{M}_\beta(\Sigma,\mathrm{m})$ and $\beta^{-1}\mathbb{V}_\beta(\Sigma,\mathrm{m})$, 
respectively.
Transforming (\ref{eq:output_tmp}),
the mean and rescaled variance of the 
output message,
denoted by $\widehat{x}_{i\to\nu}$ and
${s}_{i\to\nu}$, respectively,
are given by
\begin{eqnarray}
\widehat{x}_{i\to\nu}^{(t+1)}=\mathbb{M}_\beta
\left({\Sigma}^{(t)}_{i\to\nu},\mathrm{m}^{(t)}_{i\to\nu}\right),~~~
{s}^{(t+1)}_{i\to\nu}=\beta\mathbb{V}_{\beta}
\left({\Sigma}^{(t)}_{i\to\nu},\mathrm{m}^{(t)}_{i\to\nu}\right),\label{eq:output_stat}
\end{eqnarray}
where
\begin{eqnarray}
{\Sigma}^{(t)}_{i\to\mu}=\frac{1}{\sum_{\nu\in\bm{M}\backslash \mu}
(\widetilde{s}^{(t)}_{\nu\to i})^{-1}},~~~
\mathrm{m}^{(t)}_{i\to\mu}=\frac
{\sum_{\nu\in\bm{M}\backslash \mu}(\widetilde{s}^{(t)}_{\nu\to i})^{-1}
\widetilde{x}^{(t)}_{\nu\to i}}{\sum_{\nu\in\bm{M}\backslash \mu}(\widetilde{s}^{(t)}_{\nu\to i})^{-1}}.\label{eq:output_macro_param}
\end{eqnarray}
Similarly, marginal distributions at time $t$ is given by
\begin{eqnarray}
p_{\beta,i}^{(t)}(x_{i})&={\cal P}_{\beta}(x_{i};{\Sigma}_{i}^{(t)},
\mathrm{m}_{i}^{(t)}),
\end{eqnarray}
where
\begin{eqnarray}
{\Sigma}^{(t)}_{i}=\frac{1}{\sum_{\nu=1}^M(\widetilde{s}^{(t)}_{\nu \to i})^{-1}},~~~
\mathrm{m}^{(t)}_{i}=\frac
{\sum_{\nu=1}^M(\widetilde{s}^{(t)}_{\nu\to i})^{-1}\widetilde{x}^{(t)}_{\nu \to i}}{\sum_{\nu=1}^M(\widetilde{s}^{(t)}_{\nu\to i})^{-1}},
\end{eqnarray}
and the estimate and rescaled variance are given by
\begin{eqnarray}
\widehat{x}_{i}^{(t)}=\mathbb{M}_\beta({\Sigma}_{i}^{(t)},\mathrm{m}_{i}^{(t)}),~~~
s_i^{(t)}=\beta\mathbb{V}_\beta({\Sigma}_{i}^{(t)},\mathrm{m}_{i}^{(t)}).
\end{eqnarray}

\subsection{TAP form of GAMP}

An iteration step of BP comprises the updating of 
$2MN$ messages following 
(\ref{eq:bp_A})-(\ref{eq:input_stat}), (\ref{eq:output_stat}), and 
(\ref{eq:output_macro_param}) for $i=1,\cdots,N$ and 
$\mu=1,\cdots,M$. 
Computational cost per iteration is $O(M^2\times N+M\times N^2)$, and therefore, the 
application of BP to large systems are not effective.
However, BP equations can be rewritten in terms of $M+N$ messages 
instead of $2MN$ messages under the assumption 
that matrix $\bm{F}$ is not sparse and all elements scale as $O(1)$.
The BP with reduced messages are known as 
Thouless--Anderson--Palmer (TAP) equations \cite{TAP} 
in the study of the statistical physics for spin glasses. 
The TAP form is equivalent to the BP equations at a sufficiently large $N$, 
and this is called the GAMP algorithm \cite{DMM}. 
The TAP form results in a significant reduction of computational complexity to 
$O(MN)$ per iteration.

First, we ignore $\frac{1}{N}F_{\mu i}^2s_{i\to\mu}\sim O(N^{-1})$ for a sufficiently large system; hence $
{s}_{\theta_\mu}^{\setminus\mu, -i^{(t)}}={s}^{\setminus\mu^{(t)}}_{\theta_\mu}$.
Further, using the relationship
\begin{eqnarray}
\widehat{\theta}_{\mu}^{\setminus\mu^{(t)}}&=\frac{1}{\sqrt{N}}\sum_{j=1}^NF_{\mu j}\widehat{x}_{j\to\mu}^{(t)}
=\widehat{\theta}_{\mu}^{\setminus\mu,-i^{(t)}}+\frac{1}{\sqrt{N}}F_{\mu i}\widehat{x}_{i\to\mu}^{(t)},
\end{eqnarray}
and applying Taylor expansion with respect to $O(N^{-1\slash 2})$ terms,
we obtain
\begin{eqnarray}
g_{\mathrm{out}}(s_{\theta_\mu}^{\setminus\mu,-i^{(t)}},\widehat{\theta}_\mu^{\setminus\mu,-i^{(t)}},y_\mu)&=g_{\mathrm{out}}(s_{\theta_\mu}^{\setminus\mu^{(t)}},\widehat{\theta}_\mu^{\setminus\mu^{(t)}},y_\mu)\label{eq:g_out_Taylor}\\
\nonumber
&-\partial_{\hat{\theta}}g_{\mathrm{out}}({s_{\theta}}^{\setminus\mu^{(t)}}_{\mu},\widehat{\theta}_{\mu}^{\setminus\mu^{(t)}},y_{\mu})\frac{F_{\mu i}\widehat{x}_{i\to\mu}^{(t)}}{\sqrt{N}}+O(N^{-1}).
\end{eqnarray}
For the sake of simplicity, we denote
$(g_{\mathrm{out}}^{(t)})_{\mu}=g_{\mathrm{out}}(s_{\theta_\mu}^{\setminus\mu^{(t)}},\widehat{\theta}_\mu^{\setminus\mu^{(t)}},y_\mu)$
and $(\partial_{\hat{\theta}}g_{\mathrm{out}}^{(t)})_{\mu}=\partial_{\hat{\theta}}g_{\mathrm{out}}(s_{\theta_\mu}^{\setminus\mu^{(t)}},\widehat{\theta}_\mu^{\setminus\mu^{(t)}},y_\mu)$.

From (\ref{eq:g_out_Taylor}), we obtain
\begin{eqnarray}
\mathbb{M}(\Sigma^{(t)}_{i},\mathrm{m}^{(t)}_{i})&=\mathbb{M}(\Sigma^{(t)}_{i\to\mu},\mathrm{m}^{(t)}_{i\to\mu})\\
\nonumber
&+
\frac{\partial}{\partial m^{(t)}_{i\to\mu}}\mathbb{M}(\Sigma^{(t)}_{i\to\mu},\mathrm{m}^{(t)}_{i\to\mu})
\frac{1}{\sqrt{N}}\Sigma^{(t)}_{i\to\mu}B_{\mu\to i}^{(t)}+O(N^{-1})\\
\nonumber
&=\mathbb{M}(\Sigma^{(t)}_{i\to\mu},\mathrm{m}^{(t)}_{i\to\mu})+
\beta \mathbb{V}(\Sigma^{(t)}_{i},\mathrm{m}^{(t)}_{i})\frac{F_{\mu i}}{\sqrt{N}}\beta^{-1}(g_{\mathrm{out}}^{(t)})_{\mu}+O(N^{-1}),
\end{eqnarray}
where we use the relationship (\ref{eq:VtoM})
and ${\Sigma}_{i}^{(t)}={\Sigma}_{i\to\mu}^{(t)}+O(N^{-1})$.
Immediately, we get
\begin{eqnarray}
\widehat{x}_{i\to\mu}^{(t+1)}=\widehat{x}_{i}^{(t+1)}-\frac{1}{\sqrt{N}}{s}_{i}^{(t+1)}F_{\mu i}\beta^{-1}g_{\mathrm{out}}(s_{\theta_\mu}^{\setminus\mu^{(t)}},\widehat{\theta}_\mu^{\setminus\mu^{(t)}},y_\mu),
\end{eqnarray}
and $\widehat{x}_{i}^{(t+1)}$ and $s_i^{(t+1)}$ are obtained from 
$\mathbb{M}_\beta(\Sigma_i^{(t)},\mathrm{m}_i^{(t)})$ and 
$\beta\mathbb{V}_\beta(\Sigma_i^{(t)},\mathrm{m}_i^{(t)})$ where
\begin{eqnarray}
\mathrm{m}_{i}^{(t)}&={\Sigma}_{i}^{(t)}\sum_{\mu=1}^M\left\{
\frac{1}{\sqrt{N}}F_{\mu i}\beta^{-1}(g_{\mathrm{out}}^{(t)})_{\mu}
-\frac{1}{N}F_{\mu i}^{2}\widehat{x}_{i}^{(t)}
\beta^{-1}(\partial_{\widehat{\theta}} g_{\mathrm{out}}^{(t)})_{\mu}\right\}\\
\frac{1}{{\Sigma}_{i}^{(t)}}&=-\frac{1}{N}\sum_{\mu=1}^MF^{2}_{\mu i}\beta^{-1}(\partial_{\widehat{\theta}}g_{\mathrm{out}}^{(t)})_{\mu}.
\end{eqnarray}
Further, at a sufficiently large $N$,
the relationship between the mean and variance of the parameter $\theta_\mu$
in the $\mu$-cavity and full system are presented by
\begin{eqnarray}
\widehat{\theta}_{\mu}^{\setminus\mu^{(t)}}&=
\widehat{\theta}_\mu^{(t)}-\beta^{-1}(g_{\mathrm{out}}^{(t-1)})_{\mu}{s_\theta}_{\mu}^{\setminus\mu^{(t)}}
\label{eq:theta_hat_app}\\
{s_\theta}_{\mu}^{\setminus\mu^{(t)}}&={s_\theta}_{\mu}^{(t)}=\frac{1}{N}
\sum_{j=1}^N
F^{2}_{\mu j}{s}^{(t)}_{j}.
\end{eqnarray}

%\section{AMP results for various distributions}
%\label{sec:app_distributions}
%\input{texts/app_distributions}

\section{Coordinate descent algorithm}
\label{sec:app_CD}

Coordinate descent (CD) is an algorithm that can optimise each variable
successively where other variables are fixed.
The CD algorithm is efficient for penalised regression problems, and 
partially for non-convex optimization problems \cite{Breheny,Sakata-Xu}.

\subsection{Gaussian likelihood with penalty}

We focus on the variable $x_i$ and optimise it 
under the fixed variables $x_j=\widehat{x}_j$ where $j\in\bm{N}\setminus i$.
For Gaussian likelihood,
we rewrite the function to be minimised as 
\begin{eqnarray}
\varphi&=-\frac{1}{2}\sum_{\mu=1}^M
\!\left\{-2\frac{y_\mu}{\sqrt{N}}F_{\mu i}x_i
\!+\!\frac{1}{N}F_{\mu i}^2x_i^2\!+\!\frac{2}{\sqrt{N}}F_{\mu i}x_i\widehat{\theta}_\mu^{-i}\!\right\}\!-\!h(x_i)\!+\!\mathrm{const.},
\label{eq:app_CD_i}
\end{eqnarray}
where we set 
$\widehat{\theta}_\mu^{-i}=\frac{1}{\sqrt{N}}\sum_{j\neq i}F_{\mu j}\widehat{x}_j$.
In the case of the elastic net penalty,
the maximiser of (\ref{eq:app_CD_i}) is given by
\begin{eqnarray}
\widehat{x}_i=\frac{b_i-\lambda_1\mathrm{sgn}(b_i)}{\frac{1}{N}\sum_{\mu=1}^MF_{\mu i}^2+\lambda_2}\mathbb{I}\left(|b_i|>\lambda_1\right)
\end{eqnarray}
where
\begin{eqnarray}
b_i=\frac{1}{\sqrt{N}}\sum_{\mu=1}^M(y_\mu-\widehat{\theta}_\mu^{-i}) F_{\mu i}.
\end{eqnarray}
One can obtain the estimate by repeating this procedure for $i\in\bm{N}$
sufficient time steps.

\subsection{Coordinate descent for GLM}
\label{sec:app_CD_GLM}

For general likelihood,
the proximal-Newton iterative approach can be combined with the CD algorithm where the likelihood is repeatedly approximated by a quadratic function
\cite{Lee2014,sparse_book}.
We obtain the following by expanding the log-likelihood up to the second order around the estimate at step $t$, denoted by $\widehat{\bm{x}}^{(t)}$: 
\begin{eqnarray}
\nonumber
&\ln f(\bm{y}|\bm{\theta}(\bm{F},\bm{x}))\\
\nonumber
&\simeq\ln f(\bm{y}|\bm{\theta}(\bm{F},\widehat{\bm{x}}^{(t)}))
+\sum_{i=1}^N\frac{\partial}{\partial x_i}\ln f(\bm{y}|\bm{\theta}(\bm{F},\bm{x}))\Big|_{\bm{x}=\widehat{\bm{x}}^{(t)}}(x_i-\widehat{x}_i^{(t)})\\
\nonumber
&+\frac{1}{2}\sum_{ij}\frac{\partial^2}{\partial x_i\partial x_j}\ln f(\bm{y}|\bm{\theta}(\bm{x}))\Big|_{\bm{x}=\widehat{\bm{x}}^{(t)}}(x_i-\widehat{x}_i^{(t)})(x_j-\widehat{x}_j^{(t)})\\
\nonumber
&=\ln f(\bm{y}|\bm{\theta}(\bm{F},\widehat{\bm{x}}^{(t)}))
+\frac{1}{\sqrt{N}}\sum_{i=1}^N\sum_{\mu=1}^M F_{\mu i}(y_\mu-a^\prime(\theta_\mu^{(t)}))(x_i-\widehat{x}_i^{(t)})\\
&-\frac{1}{2N}\sum_{ij}\sum_{\mu=1}^MF_{\mu i}F_{\mu j}a^{\prime\prime}(\theta_\mu^{(t)})(x_i-\widehat{x}_i^{(t)})(x_j-\widehat{x}_j^{(t)}).
\end{eqnarray}
Hence, the penalized likelihood is approximated by
\begin{eqnarray}
\nonumber
\widetilde{\varphi}^{(t)}(\bm{x})&=-\frac{1}{2}(\bm{x}-\widehat{\bm{x}}^{(t)}-{\bm{W}^{(t)}}^{-1}\bm{R}^{(t)})^{\top}\bm{W}(\bm{x}-\widehat{\bm{x}}^{(t)}-{\bm{W}^{(t)}}^{-1}\bm{R}^{(t)})\\
&\hspace{1.0cm}-h(\bm{x})+\mathrm{const.},
\label{eq:CD_approximation}
\end{eqnarray}
where
\begin{eqnarray}
\bm{R}^{(t)}&=\frac{1}{\sqrt{N}}\bm{F}^{\top}(\bm{y}-a^\prime(\bm{\theta}(\widehat{\bm{x}}^{(t)})))\\
\bm{W}^{(t)}&=\frac{1}{N}\bm{F}^{\top}\bm{D}(\bm{\theta}(\widehat{\bm{x}}^{(t)}))\bm{F},
\end{eqnarray}
and $\bm{D}(\bm{\theta})\in\mathbb{R}^{M\times M}$ is the diagonal matrix
whose $(\mu,\mu)$ component corresponds to $a^{\prime\prime}(\theta_\mu(\widehat{\bm{x}}^{(t)}))$.
We solve (\ref{eq:CD_approximation}) using the CD algorithm.
When the penalty is $\ell_2$ norm,
the solution is given by
\begin{eqnarray}
\hat{\bm{x}}^{(t+1)}=(\bm{W}^{(t)}+\lambda\bm{I}_N)^{-1}\bm{W}^{(t)}(\widehat{\bm{x}}^{(t)}+
{\bm{W}^{(t)}}^{-1}\bm{R}^{(t)}).
\end{eqnarray}

\section{Replica method}

\subsection{Derivation of the saddle point equations for conjugate variables}
\label{sec:app_saddle}

From the extremization condition of the free energy density,
we obtain saddle point equations as
\begin{eqnarray}
\widehat{\Theta}&=-2\alpha\int D\zeta D\nu\frac{\partial}{\partial Q}
\widehat{f}_\xi^{(\mathrm{RS})*}(\nu,\zeta)\\
\widehat{\chi}&=2\alpha\int D\zeta D\nu\frac{\partial}{\partial \chi}
\widehat{f}_\xi^{(\mathrm{RS})*}(\nu,\zeta)\\
\widehat{\mu}&=\alpha\int D\zeta D\nu\frac{\partial}{\partial m}
\widehat{f}_\xi^{(\mathrm{RS})*}(\nu,\zeta),
\end{eqnarray}
where
\begin{eqnarray}
\nonumber
\widehat{f}_\xi^{(\mathrm{RS})*}(\nu,\zeta)&=-\frac{1}{2\chi}
\left(u_{\mathrm{RS}}^*(\nu,\zeta)-\frac{m\nu+\sqrt{Q\sigma_T^2-m^2}\zeta}{\sigma_T}\right)^2\\
&\hspace{3.0cm}+\ln f\left({\cal G}\left(\sigma_T\nu\right)|u_{\mathrm{RS}}^*(\nu,\zeta)\right)
\end{eqnarray}
and in case of GLM, $u_{\mathrm{RS}}^*(\nu,\zeta)$ satisfies
\begin{eqnarray}
\frac{u_{\mathrm{RS}}^*(\nu,\zeta)-(m\nu+\sqrt{Q\sigma_T^2-m^2}\zeta)\sigma_T^{-1}}{\chi}={\cal G}\left(\sigma_T\nu\right)-a^\prime(u_{\mathrm{RS}}^*(\nu,\zeta)).
\label{eq_app:u_GLM}
\end{eqnarray}
For the derivation of the saddle point equations,
it is convenient to derive the expression of 
$\partial u_{\mathrm{RS}}^*(\nu,\zeta)\slash\partial \zeta$ and $\partial u_{\mathrm{RS}}^*(\nu,\zeta)\slash\partial \nu$. 
From \eref{eq_app:u_GLM}, $\partial u_{\mathrm{RS}}^*(\nu,\zeta)\slash\partial \zeta$
satisfies
\begin{eqnarray}
\frac{1}{\chi}\left(\frac{\partial u_{\mathrm{RS}}^*(\nu,\zeta)}{\partial\zeta}-\sqrt{\frac{Q\sigma_T^2-m^2}{\sigma_T^2}}\right)=-\frac{\partial u_{\mathrm{RS}}^*(\nu,\zeta)}{\partial \zeta}a^{\prime\prime}(u_{\mathrm{RS}}^*(\nu,\zeta)).
\end{eqnarray}
Solving this, we obtain
\begin{eqnarray}
\frac{\partial u_{\mathrm{RS}}^*(\nu,\zeta)}{\partial \zeta}=\sqrt{\frac{Q\sigma_T^2-m^2}{\sigma_T^2}}\frac{1}{1+\chi a^{\prime\prime}(u_{\mathrm{RS}}^*(\nu,\zeta))}.
\label{eq_app:dudzeta}
\end{eqnarray}
Similarly, $\partial u^*(\nu,\zeta)\slash\partial \nu$
satisfies
\begin{eqnarray}
\frac{1}{\chi}\left(\frac{\partial u_{\mathrm{RS}}^*(\nu,\zeta)}{\partial \nu}-\frac{m}{\sigma_T}\right)=\frac{\partial {\cal G}(\sigma_T\nu,\varepsilon)}{\partial\nu}-\frac{\partial u_{\mathrm{RS}}^*(\nu,\zeta)}{\partial \nu}a^{\prime\prime}(u_{\mathrm{RS}}^*(\nu,\zeta)),
\end{eqnarray}
and solving this, we obtain
\begin{eqnarray}
\frac{\partial u_{\mathrm{RS}}^*(\nu,\zeta)}{\partial \nu}=\frac{1}{1+\chi a^{\prime\prime}(u_{\mathrm{RS}}^*(\nu,\zeta))}
\left(\chi\frac{\partial{\cal G}(\sigma_T\nu,\varepsilon)}{\partial\nu}+\frac{m}{\sigma_T}\right).
\label{eq_app:dudnu}
\end{eqnarray}

Applying (\ref{eq_app:dudzeta}) to the saddle point equation of $\widehat{\Theta}$, 
\begin{eqnarray}
\nonumber
\int D\zeta\frac{\partial}{\partial Q}&\widehat{f}_\xi^{(\mathrm{RS})*}(\nu,\zeta)\\
\nonumber
&=\int D\zeta\frac{\zeta\sigma_T}{2\chi\sqrt{Q\sigma_T^2-m^2}}
\left(u_{\mathrm{RS}}^*(\nu,\zeta)-\frac{m\nu+\sqrt{Q\sigma_T^2-m^2}\zeta}{\sigma_T}\right)\\
\nonumber
&=\frac{\sigma_T}{2\chi\sqrt{Q\sigma_T^2-m^2}}\int D\zeta
\left(\frac{\partial u_{\mathrm{RS}}^*(\nu,\zeta)}{\partial\zeta}-\frac{\sqrt{Q\sigma_T^2-m^2}}{\sigma_T}\right)\\
&=\frac{1}{2\chi}\int D\zeta\left(\frac{1}{1+\chi a^{\prime\prime}(u_{\mathrm{RS}}^*(\nu,\zeta))}-1\right),
\end{eqnarray}
hence we obtain
\begin{eqnarray}
\widehat{\Theta}=\alpha\int D\zeta D\nu
\frac{a^{\prime\prime}(u_{\mathrm{RS}}^*(\nu,\zeta))}{1+\chi a^{\prime\prime}(u_{\mathrm{RS}}^*(\nu,\zeta)))}.
\end{eqnarray}
Next, for the saddle point equation of $\widehat{\chi}$,
\begin{eqnarray}
\frac{\partial}{\partial\chi}\widehat{f}_\xi^{(\mathrm{RS})*}(\nu,\zeta)=\frac{1}{\chi^2}\left(u_{\mathrm{RS}}^*(\nu,\zeta)-\frac{m\nu+\sqrt{Q\sigma_T^2-m^2}\zeta}{\sigma_T}\right)^2,
\end{eqnarray}
and using (\ref{eq_app:u_GLM}), we obtain
\begin{eqnarray}
\widehat{\chi}=\alpha\int D\zeta D\nu
\left({\cal G}\left(\sigma_T\nu\right)-a^\prime(u_{\mathrm{RS}}^*(\nu,\zeta))\right)^2.
\end{eqnarray}
For the saddle point equation of $\widehat{\mu}$, we obtain
\begin{eqnarray}
\nonumber
&\int D\zeta D\nu\frac{\partial}{\partial m}\widehat{f}_\xi^{(\mathrm{RS})*}(\nu,\zeta)\\
\nonumber
&=\frac{1}{\chi\sigma_T}\int D\zeta D\nu
\left(\nu-\frac{m\zeta}{\sqrt{Q\sigma_T^2-m^2}}\right)
\left(u_{\mathrm{RS}}^*-\frac{m\nu+\sqrt{Q\sigma_T^2-m^2}\zeta}{\sigma_T}\right)\\
\nonumber
&=\frac{1}{\chi\sigma_x}\int D\zeta D\nu
\Big\{\frac{\partial u_{\mathrm{RS}}^*(\nu,\zeta)}{\partial\nu}-\frac{m}{\sigma_T}\\
&\hspace{1.5cm}-\frac{m}{\sqrt{Q\sigma_T^2-m^2}}\left(\frac{\partial u_{\mathrm{RS}}^*(\nu,\zeta)}{\partial\zeta}-\frac{\sqrt{Q\sigma_T^2-m^2}}{\sigma_T}
\right)\Big\},
\label{eq_app:mh_tmp}
\end{eqnarray}
and substituting (\ref{eq_app:dudzeta}) and (\ref{eq_app:dudnu}) into 
(\ref{eq_app:mh_tmp}), the following equation is obtained
\begin{eqnarray}
\widehat{\mu}=\alpha\int D\zeta D\nu
\frac{1}{1+\chi a^{\prime\prime}(u_{\mathrm{RS}}^*(\nu,\zeta))}\frac{\partial{\cal G}(\sigma_T\nu)}{\partial(\sigma_T\nu)}.
\end{eqnarray}

\subsection{Simplified calculation for Gaussian noise}

When the likelihood under consideration is the Gaussian distribution,
the calculation can be simplified.
We define 
\begin{eqnarray}
u^{(a)}_\mu=\frac{1}{\sqrt{N}}\sum_{i=1}^NF_{\mu i}(x_i^{(0)}-x_i^{(a)})+\varepsilon_\mu
\end{eqnarray}
for $\mu\in\bm{M}$ and $a=1,\ldots,n$. 
These variables satisfy
\begin{eqnarray}
\mathbb{E}_{\bm{y}|q^{(ab)}}\left[u_\mu^{(a)}u_\mu^{(b)}\right]=
q^{(ab)}-m^{(a)}-m^{(b)}+\sigma_T^2
\end{eqnarray}
under the subshell, hence their joint distribution is given by
\begin{eqnarray}
P_{\bm{u}}(\widetilde{\bm{u}}|{\cal V})=\frac{1}{\sqrt{(2\pi)^n|{\cal V}|}}\exp\left(-\frac{1}{2}\widetilde{\bm{u}}{\cal V}^{-1}\widetilde{\bm{u}}^{\top}\right).
\end{eqnarray}
Combining with the likelihood,
we obtain
\begin{eqnarray}
\Phi_\beta^{(E)}(n)=-\frac{1}{2}\ln|\bm{I}_n+\beta{\cal V}|.
\end{eqnarray}
Under the RS assumption,
the diagonal components and offdiagonal components of ${\cal V}$,
denoted by ${\cal V}_{\mathrm{d}}$ and ${\cal V}_{\mathrm{nd}}$, respectively,
are given by
\begin{eqnarray}
{\cal V}_{\mathrm{d}}&=Q-2m+\sigma_T^2\\
{\cal V}_{\mathrm{nd}}&=q-2m+\sigma_T^2.
\end{eqnarray}
Calculating the matrix determinant, we obtain
\begin{eqnarray}
\Psi_\beta^{(E,\mathrm{RS})}(n)\!=\!-\frac{\alpha}{2}
\left[\ln\left\{1+\frac{n\beta(q-2m+\sigma_T^2)}{1+\beta(Q-q)}\right\}
+n\ln\left\{1+\beta(Q-q)\right\}\!\right].
\end{eqnarray}
Taking $n\to 0$ limit and $\beta\to\infty$ limit,
the free energy density for the Gaussian likelihood is derived as
\begin{eqnarray}
\Psi=-\widehat{\mu}m+\frac{\widehat{\Theta}Q-\widehat{\chi}\chi}{2}
-\frac{\alpha(Q-2m+\sigma_T^2)}{2(1+\chi)}+
\frac{\widehat{\chi}+\widehat{\mu}^2\sigma_x^2}{2(\widehat{\Theta}+\lambda)}.
\end{eqnarray}

\subsection{1RSB assumption and AT instability}
\label{sec:app_1RSB}

The local stability of the RS solution against a symmetry breaking perturbation is
examined by constructing the one-step replica symmetry breaking (1RSB) solution.
We introduce the 1RSB assumption with respect to the saddle point as
\begin{eqnarray}
(q^{(ab)},\widehat{q}^{(ab)})&=\left\{ \begin{array}{ll}
\left(Q,-\displaystyle\frac{\widehat{Q}}{2}\right)  &  {\rm if}~a= b\\
(q_1,\widehat{q}_1)  &  {\rm if}~a\neq b,~B(a)=B(b)\\
(q_0,\widehat{q}_0)  &  {\rm if}~B(a)\neq B(b)\\
\end{array}, \right.
\label{eq:1RSB_q}\\
(m^{(a)},\widehat{m}^{(a)})&=(m,\widehat{m})~{\rm for~any}~a,
\label{eq:1RSB_m}
\end{eqnarray}
where $n$ replicas are separated into $n\slash\ell$ 
blocks that contain $\ell$ replicas. The index of the
block that includes the $a$-th replica is denoted by $B(a)$.
For the integral of $\theta$ under the 1RSB assumption,
we set 
\begin{eqnarray}
\theta^{(a)}&=\sqrt{Q-q_1}w^{(a)}+\sqrt{\Delta_q}z_1^{(B(a))}+\sqrt{q_0}z_0\\
\theta^{(0)}&=\sqrt{\sigma_T-\frac{m^2}{q_0}}v+\frac{m}{\sqrt{q_0}}z_0,
\end{eqnarray}
where $w^{(a)}~(a\in\bm{n})$, $z_1^{(B)}~(B\in\{1,\ldots,n\slash\ell\})$, $v$, and $z_0$
are the independent Gaussian random variables
with mean 0 and variance 1.
Utilizing these expressions,
we obtain
\begin{eqnarray}
\Psi_\beta^{(C,\mathrm{1RSB})}&=-\widehat{m}m+\frac{\widehat{Q}Q+\widehat{q}_1q_1}{2}
-\frac{\ell(\widehat{q}_1q_1-\widehat{q}_0q_0)}{2}\\
\Psi_\beta^{(E,\mathrm{1RSB})}&=\frac{1}{\ell}\int Dz_0Dv\ln\widehat{\xi}_\beta^{(\mathrm{RSB})}
(v,z_0)\\
\Psi_\beta^{(S,\mathrm{1RSB})}&=\frac{1}{\ell}\int Dz_0dx^{(0)}\phi^{(0)}(x^{(0)})\ln\xi_\beta^{(\mathrm{RSB})}(z_0,x^{(0)}),
\end{eqnarray}
where we set 
\begin{eqnarray}
&\widehat{\xi}_\beta^{(\mathrm{RSB})}(v,z_0)=\int Dz_1\left\{\!\int Dw f^\beta\!\!\left({\cal G}\left(\!\sqrt{\sigma_{T}^2\!-\!\frac{m^2}{q_0}}v\!+\!\frac{m}{\sqrt{q_0}}z\!\right)\!\Big|u\!\right)\!\!\right\}^{\ell}\\
&\xi_\beta^{(\mathrm{RSB})}(z_0,x^{(0)})=\int Dz_1\\
\nonumber
&\times\left\{\int dx\phi^\beta(x)
\exp\left(-\frac{\widehat{Q}+\widehat{q}_1}{2}x^2+
(\sqrt{\widehat{q}_1-\widehat{q}_0}z_1
+\sqrt{\hat{q}_0}z_0+\hat{m}x^{(0)})x\right)\right\}^{\ell},
\end{eqnarray} 
and $u=\sqrt{Q-q_1}w+\sqrt{q_1-q_0}z_1+\sqrt{q_0}z_0$.
Using $u$, we apply the expression
%We introduce the variable $u=\sqrt{Q-q_1}w+\sqrt{q_1-q_0}z_1+\sqrt{q_0}z_0$,
%and hence 
$w=(u-\sqrt{q_1-q_0}z_1-\sqrt{q_0}z_0)\slash\sqrt{Q-q_1}$,
and further, we apply following transformation from $(v,z)$ to
$(\nu,\zeta)$ that keeps $dvdz=d\nu d\zeta$ as
\begin{eqnarray}
\sqrt{\sigma_T^2-\frac{m^2}{q_0}}v+\frac{m}{\sqrt{q_0}}z&=\sigma_T\nu\\
-\frac{m}{\sqrt{q_0}}v+\sqrt{\sigma_T^2-\frac{m^2}{q_0}}z&=
\sigma_T\zeta,
\end{eqnarray}
hence $\sqrt{q_0}z=(m\nu+\sqrt{q_0\sigma_T^2-m^2}\zeta)\sigma_T^{-1}$.
At $\beta\to\infty$, the variables scale as 
$\beta(Q-q_1)=\chi$, 
$\beta\widehat{\Theta}=\widehat{Q}+\widehat{q}_1$,
$\beta^2\widehat{\chi}_1=\widehat{q}_1$,
$\beta^2\widehat{\chi}_0=\widehat{q}_0$,
$\beta\widehat{\mu}=\widehat{m}$,
and $\widetilde{\ell}=\beta\ell$.
With these scaled variables, we obtain
\begin{eqnarray}
\widehat{\xi}_{\beta}^{\mathrm{(RSB)}}(\nu,\zeta)=\int Dz_1\exp\left(\widetilde{\ell}\widehat{f}_\xi^{\mathrm{(RSB)}*}(z_1,\nu,\zeta)\right),
\end{eqnarray}
where we set $\Delta_q=q_1-q_0$, 
$\widehat{f}_\xi^{\mathrm{(RSB)}*}(z_1,\nu,\zeta)=\widehat{f}_\xi^{(\mathrm{RSB})}(u^*_{\mathrm{(RSB)}};z_1,\nu,\zeta)$
with
\begin{eqnarray}
\nonumber
\widehat{f}_\xi^{(\mathrm{RSB})}(u;z_1,\nu,\zeta)&=
\frac{-(u-\sqrt{\Delta_q}z_1-(m\nu+\sqrt{q_0\sigma_T^2-m^2}\zeta)\sigma_T^{-1})^2}{2\chi}\\
&\hspace{5cm}+\ln f\left({\cal G}\left(\sigma_T\nu\right)|u\right).\\
u_{\mathrm{RSB}}^*(z_1,\nu,\zeta)&=\mathop{\mathrm{argmax}}_u\widehat{f}^{\mathrm{(RSB)}}_\xi(u;z_1,\nu,\zeta).
\end{eqnarray}
In case of GLM, $u^*_{\mathrm{RSB}}$ satisfies 
\begin{eqnarray}
\frac{u_{\mathrm{RSB}}^*-\sqrt{\Delta_q}z_1-(m\nu+\sqrt{q_0\sigma_T^2-m^2}\zeta)\sigma_T^{-1}}{\chi}={\cal G}\left(\sigma_T\nu\right)-a^\prime(u^*_{\mathrm{RSB}}).
\end{eqnarray}
%For the convenience,
%\begin{eqnarray}
%\frac{\partial u^*}{\partial z_1}=\frac{\sqrt{\Delta_q}}{1+\chi a^{\prime\prime}(u^*)}
%\end{eqnarray}

Next, by applying the saddle point method to $\xi_\beta^{\mathrm{RSB}}$, we obtain
\begin{eqnarray}
\xi_\beta^{\mathrm{(RSB)}}(z_0,x^{(0)})
=\int Dz_1\exp\left(\widetilde{\ell}f_\xi^{\mathrm{(RSB)}*}(z_1,z_0,x^{(0)})\right),
\end{eqnarray}
where we set $\widehat{\Delta}_q=\widehat{q}_1-\widehat{q}_0$, 
and
$f_\xi^*(z_1,z_0,x^{(0)})=f_\xi^{(\mathrm{RSB})}(x^*_{\mathrm{RSB}};z_1,z_0,x^{(0)})$
with
\begin{eqnarray}
f_\xi^{(\mathrm{RSB})}(x;z_1,z_0,x^{(0)})&=-\frac{\hat{\Theta}}{2}{x}^2+\!(\sqrt{\hat{\Delta}_q}z_1+\!\sqrt{\hat{\chi}_0}z+\!\hat{\mu}x^{(0)})x+\!\ln\phi(x)\\
x_{\mathrm{RSB}}^*(z_1,z_0,x^{(0)})&=\mathop{\mathrm{argmax}}_x
f_\xi^{(\mathrm{RSB})}(x;z_1,z_0,x^{(0)}).
\label{eq:1RSB_one_body_x}
\end{eqnarray}
In summary,
the free energy density under the 1RSB assumption is given by
\begin{eqnarray}
\nonumber
\Psi^{(\mathrm{1RSB})}&=\mathop{\mathrm{extr}}_{\Omega,\widehat{\Omega}}
\Big[\widehat{\mu}m-\frac{\widehat{\Theta}Q-\widehat{\chi}_1\chi}{2}
+\frac{\widetilde{\ell}(\widehat{\chi}_1Q-\widehat{\chi}_0q_0)}{2}\\
\nonumber
&-\frac{1}{\widetilde{\ell}}\int dx^{(0)}\phi^{(0)}(x^{(0)})Dz_0\ln\xi_\beta^{(\mathrm{RSB})}(z_0,x^{(0)})\\
&-\frac{\alpha}{\widetilde{\ell}}\int D\nu D\zeta\ln\widehat{\xi}_\beta^{(\mathrm{RSB})}(\nu,\zeta)\Big],
\end{eqnarray}
where $\Omega=\{Q,\chi,q_0,m\}$ and $\widehat{\Omega}=\{\widehat{\Theta},\widehat{\chi}_1,\widehat{\chi}_0,\widehat{\mu}\}$.
The saddle point equations are derived as
\begin{eqnarray}
Q&=\int dx^{(0)}\phi^{(0)}Dz_0\langle {x^*_{\mathrm{RSB}}}^2\rangle_{\mathrm{RSB}}\\
\chi&=\frac{1}{\sqrt{\widehat{\Delta}_q}}\int dx^{(0)}\phi^{(0)}Dz_0\left\langle \frac{\partial x^*_{\mathrm{RSB}}}{\partial z_1}\right\rangle_{\mathrm{RSB}}\\
q_0&=\int Dz_0dx^{(0)}\phi^{(0)}(x^{(0)})\langle {x^*_{\mathrm{RSB}}}\rangle_{\mathrm{RSB}}^2\\
m&=\int Dz_0dx^{(0)}\phi^{(0)}(x^{(0)})x^{(0)}\langle {x^*_{\mathrm{RSB}}}\rangle_{\mathrm{RSB}}\\
\widehat{\Theta}&=\alpha\int D\nu D\zeta\left\langle\frac{a^{\prime\prime}(u^*_{\mathrm{RSB}})}{1+\chi a^{\prime\prime}(u^*_{\mathrm{RSB}})}\right\rangle_{\widehat{\mathrm{RSB}}}\\
\widehat{\chi}_1&=\alpha\int D\nu D\zeta\left\langle\left({\cal G}\left(\sigma_T\nu\right)-a^\prime(u^*_{\mathrm{RSB}})\right)^2\right\rangle_{\widehat{\mathrm{RSB}}}\\
\widehat{\chi}_0&=\alpha\int D\nu D\zeta\left\langle {\cal G}\left(\sigma_T\nu\right)-a^\prime(u^*_{\mathrm{RSB}})\right\rangle^2_{\widehat{\mathrm{RSB}}}\\
\widehat{\mu}&=\alpha\int D\nu D\zeta \frac{\partial{\cal G}(\sigma_T\nu)}{\partial (\sigma_T\nu)}\left\langle\frac{1}{1+\chi a^{\prime\prime}(u^*_{\mathrm{RSB}})}\right\rangle_{\widehat{\mathrm{RSB}}},
\end{eqnarray}
where we define
\begin{eqnarray}
\langle\varphi(z_1)\rangle_{\mathrm{RSB}}&=\frac{\int Dz_1\varphi(z_1)\exp(\widetilde{\ell}f_\xi^{(\mathrm{RSB})*})}{\int Dz_1\exp(\widetilde{\ell}f_\xi^{(\mathrm{RSB})*})} \\
\langle \varphi(z_1) \rangle_{\widehat{\mathrm{RSB}}}&=\frac{\int Dz_1 \varphi(z_1)\exp(\widetilde{\ell}\widehat{f}_\xi^{(\mathrm{RSB})*})}{\int Dz_1 \exp(\widetilde{\ell}\widehat{f}_\xi^{(\mathrm{RSB})*})}.
\end{eqnarray}
When $\Delta_q=\widehat{\Delta}_q=0$,
the 1RSB solution is reduced to the RS solution. 
Hence, the stability of the RS solution is examined by the
stability analysis of the 1RSB solution of $\Delta_q=\widehat{\Delta}_q=0$. 
We apply Taylor expansion to them by assuming that they are
sufficiently small. 

For the expansion of $\Delta_q$ around $\widehat{\Delta}_q=0$
and that of $\widehat{\Delta}_q$ around $\Delta_q$,
$x^*_{\mathrm{RSB}}$ and $u^*_{\mathrm{RSB}}$
for finite $\beta$ are useful. We denote them as 
\begin{eqnarray}
\langle x\rangle=\frac{\int dx x\exp(-\beta f_\xi^{(\mathrm{RSB})}(x;z_1,z_0,x^{(0)}))}{\int dx\exp(-\beta f_\xi^{(\mathrm{RSB})}(x;z_1,z_0,x^{(0)}))}\label{eq:app_x_finite}\\
\langle u\rangle=\frac{\int du u\exp(-\beta \widehat{f}_\xi^{(\mathrm{RSB})}(u;z_1,\nu,\zeta))}{\int du \exp(-\beta \widehat{f}_\xi^{(\mathrm{RSB})}(u;z_1,\nu,\zeta))},\label{eq:app_u_finite}
\end{eqnarray}
respectively.
Using eqs. (\ref{eq:app_x_finite}) and (\ref{eq:app_u_finite}), 
we obtain
\begin{eqnarray}
\Delta_q&=\int Dzdx^{(0)}\phi^{(0)}(x^{(0)})\beta^2\left(\langle x^2 \rangle-\langle x\rangle^2\right)^2\widehat{\Delta}_q\\
\widehat{\Delta}_q&=\alpha\int D\nu D\zeta \beta^2\left(\left\langle({\cal G}\left(\sigma_T\nu\right)-a^\prime(u))^2\right \rangle-\langle {\cal G}\left(\sigma_T\nu\right)-a^\prime(u)\rangle^2\right)^2{\Delta}_q.
\end{eqnarray}
These expression corresponds to 
\begin{eqnarray}
\Delta_q&=\int Dzdx^{(0)}\phi^{(0)}(x^{(0)})
\left(\frac{\partial x^*_{\mathrm{RSB}}}{\partial\sqrt{\widehat{\chi}_0}z}\right)^2\widehat{\Delta}_q\\
\widehat{\Delta}_q&=\alpha\int D\nu D\zeta \left(
\frac{\partial}{\partial \sqrt{q_0-\frac{m^2}{\sigma_T^2}}\zeta}
\left({\cal G}\left(\sigma_T\nu\right)-a^\prime(u^*_{\mathrm{RSB}})\right)\right)^2{\Delta}_q,
\end{eqnarray}
hence 
the instability condition of $\Delta_q=\widehat{\Delta}_q=0$
is given by (\ref{eq:AT_replica}).
%\begin{eqnarray}
%\nonumber
%&\int Dzdx^{(0)}\phi^{(0)}(x^{(0)})
%\left(\frac{\partial x^*_{\mathrm{RSB}}}{\partial\sqrt{\widehat{\chi}_0}z}\right)^2\\
%&\times\int D\nu D\zeta \left(
%\frac{\partial}{\partial \sqrt{q_0-\frac{m^2}{\sigma_T^2}}\zeta}
%\left({\cal G}\left(\sigma_T\nu\right)-a^\prime(u^*_{\mathrm{RSB}})\right)\right)^2>1
%\end{eqnarray}
%From the correspondence between GAMP and the replica method,
%it can be shown that the local instability condition of GAMP
%is equivalent to the instability condition of RS assumption.

\section*{References}
\providecommand{\newblock}{}

\end{document}